\documentclass[twoside,11pt]{article}
\usepackage{fix-cm}
\usepackage{microtype}

\usepackage{booktabs}
\usepackage{xcolor}
\usepackage{colortbl}
\usepackage[normalem]{ulem} 
\usepackage{longtable}
\usepackage{makecell}
\usepackage{nicefrac}
\usepackage{amsmath,amssymb,mathtools}
\usepackage{amsthm}
\usepackage{array} 
\usepackage{algorithm}
\usepackage{algorithmic} 
\usepackage{graphicx}  
\usepackage{tabularx}
\usepackage{multirow}
\usepackage{enumitem}
\usepackage{siunitx}
\usepackage{subcaption} 
\usepackage[T1]{fontenc}
\usepackage{lmodern}

\usepackage{tikz}
\usetikzlibrary{arrows.meta,positioning,calc}
\usepackage{pgfplots}
\pgfplotsset{compat=1.18}

\usepackage[preprint]{jmlr2e}
\usepackage{etoc}
\etocdepthtag.toc{mtchapter}
\etocsettagdepth{mtchapter}{subsection}
\etocsettagdepth{mtappendix}{none}
\usepackage[textsize=tiny]{todonotes}

\newcommand{\keybox}[1]{%
  \begin{center}
  \setlength{\fboxsep}{6pt}%
  \setlength{\fboxrule}{0.4pt}%
  \fbox{\parbox{0.95\linewidth}{\small\itshape #1}}
  \end{center}
}

\newtheorem{assumption}[theorem]{Assumption}
\newtheorem{corollary*}{Corollary*}

\usepackage{lastpage}
\jmlrheading{1}{2026}{1-\pageref{LastPage}}{5/26}{---}{25-0000}{Yi Xie, Zhanke Zhou, Chentao Cao, Bo Liu, Bo Han}
\ShortHeadings{DICE: Entropy-Regularized Equilibrium Selection}{Yi Xie, Zhanke Zhou, Chentao Cao, Bo Liu, Bo Han}
\firstpageno{1}

\begin{document}

\title{DICE: Entropy-Regularized Equilibrium Selection for Stable Multi-Agent LLM Coordination}

\author{\name Yi Xie \thanks{These authors contributed equally to this work.}
       \email yix@arizona.edu \\
       \addr Department of Electrical \& Computer Engineering\\
       University of Arizona\\
       Tucson, AZ 85721-0104, USA
       \AND
       \name Zhanke Zhou $^*$
       \email cszkzhou@comp.hkbu.edu.hk \\
       \addr Department of Computer Science \\
       Hong Kong Baptist University\\
       Kowloon Tong, Hong Kong SAR
        \AND
       \name Chentao Cao \email csctcao@comp.hkbu.edu.hk \\
       \addr Department of Computer Science \\
       Hong Kong Baptist University\\
       Kowloon Tong, Hong Kong SAR
        \AND
       \name Bo Liu \email boliu@arizona.edu \\
       \addr Department of Electrical \& Computer Engineering\\
       University of Arizona\\
       Tucson, AZ 85721-0104, USA
        \AND
       \name Bo Han \email bhanml@comp.hkbu.edu.hk \\
       \addr Department of Computer Science \\
       Hong Kong Baptist University\\
       Kowloon Tong, Hong Kong SAR
        }

\editor{My editor}

\maketitle

\vspace{-10pt}
\begin{abstract}
Multi-agent large language model (LLM) systems often fail to reliably outperform a single strong model equipped with best-of-$N$ sampling. We argue that a core source of this instability is \emph{ill-posed equilibrium selection}: current systems specify what information agents share, but not which coordination convention should be selected. We formalize a broad class of such systems as discounted incomplete-information Markov games and show that two common pathologies, oscillation between competing conventions and drift across them, can both induce unstable learning and linear Bayesian regret. To obtain a well-posed target, we introduce the Heterogeneous Quantal Response Equilibrium (HQRE), an entropy-regularized equilibrium concept with agent- and state-dependent temperatures. Under a monotonicity condition, HQRE is unique, admits linearly convergent mirror updates, and yields bounded Bayesian regret; the same condition yields rollout-measurable stability diagnostics. We instantiate this objective in two algorithms: DICE-PC, which coordinates frozen models through prompt-control actions, and DICE-FT, which performs parameter-efficient mirror fine-tuning. Across eleven benchmarks in four domains, DICE improves accuracy-cost trade-offs over strong within-class baselines; on reasoning and planning tasks, DICE-PC improves by 4.3 percentage points on average and DICE-FT by 8.5 points.
\end{abstract}

\begin{keywords}
multi-agent large language models, equilibrium selection, entropy regularization, quantal response equilibrium, incomplete-information Markov games
\end{keywords}

\section{Introduction}
\label{sec:intro}

\begin{figure}[t!]
\centering
\includegraphics[width=\textwidth]{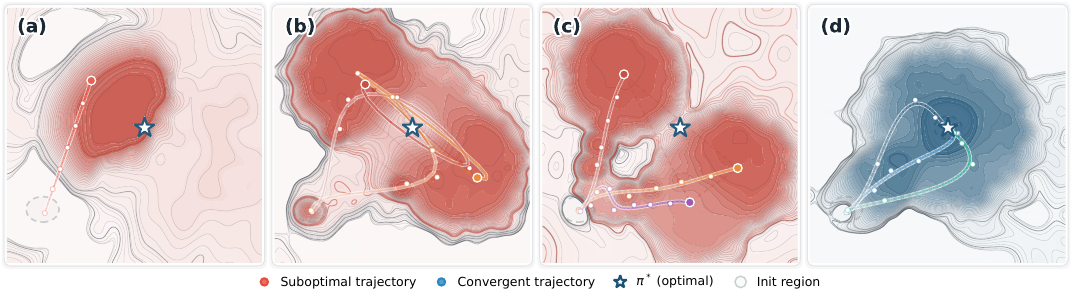}
\vspace{-14pt}
\caption{
\textbf{Loss landscapes under different coordination paradigms.}
More agents create more possible ways to coordinate.
Without an explicit selection rule, the system can saturate at a
single-agent ceiling (a), oscillate between competing conventions (b),
or drift across conventions during learning (c).
DICE turns the implicit choice of convention into an explicit one,
collapsing the landscape to a single basin (d).
Shading encodes loss (darker $=$ lower); the dashed ellipse marks the
shared initialization region and the star marks the reference policy
$\pi^\star$.
Landscapes are schematic; the underlying phenomena are formalized in
Section~\ref{sec:dice_framework}.
}
\label{fig:landscape}
\vspace{-18pt}
\end{figure}

Multi-agent LLM systems promise greater capability by distributing work across specialized roles such as planning, solving, critiquing, and verification~\citep{liu2024groupdebate,hong2024metagpt}. While adding more agents expands parallel exploration, it also increases system complexity, particularly in multi-agent coordination. In practice, small changes in prompts, sampling, or context truncation can push the same system into completely different joint behaviors. One agent may follow a divide-and-conquer protocol, while another acts under a debate-and-merge protocol. A coordination convention is a consistent method of coordination in which no agent benefits from unilaterally deviating. Each convention is reasonable on its own, but mixing them results in incoherent behavior and severe system instability.

This instability helps explain a recurring empirical finding:
multi-agent designs often fail to reliably outperform a single strong
model equipped with best-of-$N$ sampling and lightweight
verification~\citep{kim2025scalingagents,cemri2025fail}.
The issue is not only what information agents exchange, but whether
the system specifies a stable rule for choosing among multiple
self-consistent ways of coordinating\citep{liu2024statistical, shi2025fundamental}.
Current architectures usually address the first question through
shared transcripts, judges, summarizers, or routing heuristics.
They rarely address the second. 

Our central claim is that \emph{ill-posed equilibrium selection} is a
core source of instability in multi-agent LLM systems.
Existing methods typically address what information becomes public
among agents, but they do not address which coordination convention
should be selected, or why that convention should remain stable under
perturbations.
We therefore separate two concerns that are usually conflated:
\emph{information structure}, which governs what agents can observe,
and \emph{coordination mechanism}, which governs how a particular
convention is selected and maintained.
To study both within a single framework, we model a broad class of
multi-agent architectures as a discounted incomplete-information
Markov game (IIMG) with a shared public stream and private agent
histories.
Figure~\ref{fig:landscape} illustrates how this distinction plays out
across common coordination paradigms.

A single strong model with best-of-$N$ sampling (panel~a) induces an
effectively unimodal landscape: selection is stable, but performance
saturates at a ceiling set by the capacity of one agent.
Adding multiple agents through message passing or debate (panel~b)
raises this ceiling by pooling capacity, yet the resulting landscape
typically contains competing basins, so learning oscillates between
conventions rather than committing to either.
Letting multiple agents learn together without an explicit selection
rule (panel~c) introduces a further failure mode: the realized
convention can drift across training, so the system never settles.
Both oscillation and drift yield linear Bayesian regret.

These failures share a common root cause.
The underlying game admits several self-consistent joint strategies,
and nothing in the system selects which one should be realized.
HQRE regularization (panel~d) resolves this by implementing an
explicit equilibrium-selection mechanism, yielding a single dominant
basin and restoring convergence toward a selected convention.
Panels (a) through (c) thus illustrate three increasingly severe
coordination failures, while panel (d) shows that a single
regularization mechanism, which we introduce next, resolves all three
by providing a unique equilibrium target.

We call this mechanism the \emph{Heterogeneous Quantal Response
Equilibrium} (HQRE): an entropy-regularized equilibrium concept with
agent- and state-dependent temperatures.
The resulting method, DICE (\textbf{D}iscounted
\textbf{I}ncomplete-information \textbf{C}oordination via
\textbf{E}ntropy), performs entropy-regularized equilibrium selection
within the IIMG framework.
Unlike a generic exploration bonus, the regularization here plays a
structural role: it makes the coordination game well-posed, so that
equilibrium selection admits a unique solution rather than a set of
competing ones.

Under a monotonicity condition in which regularization curvature
dominates cross-agent coupling, HQRE is unique and explicit mirror
updates converge linearly, yielding Bayesian regret that is uniformly
bounded in the horizon; we defer the precise statements and the
supporting smoothness assumptions to Section~\ref{sec:hqre_theory}.
Because the monotonicity condition can be conservative at LLM scale,
we treat it primarily as a design principle.
It motivates the regularization structure and yields
rollout-measurable stability diagnostics that we track throughout our
evaluation.

We instantiate the HQRE objective in two complementary deployment
regimes.
DICE-PC learns distributions over bounded prompt-control actions to
coordinate frozen execution models, preserving pretrained
generalization while stabilizing equilibrium selection.
DICE-FT performs parameter-efficient mirror fine-tuning under
differentiated token-type regularization, providing additional gains
on coordination-heavy tasks at the cost of greater optimization
difficulty.
We evaluate both on eleven benchmarks spanning reasoning, planning,
active information gathering, and multi-agent coordination, with
stability diagnostics reported alongside accuracy and token cost.

The main contributions of this paper are summarized as follows:
\begin{itemize}[leftmargin=*]
\item \textbf{Formalization.}
  We formalize multi-agent LLM coordination as a discounted IIMG, explicitly
  separating information structure from coordination mechanism, and
  identify well-posed equilibrium selection as a critical but previously
  unaddressed design dimension.
\item \textbf{Failure-mode analysis.}
  Within the IIMG framework, we prove that debate-style defensive mixing
  and equilibrium multiplicity can each produce linear Bayesian regret,
  identifying ill-posed equilibrium selection as the common root cause
  and establishing that richer communication alone is neither necessary
  nor sufficient for stable coordination.
\item \textbf{Regularized equilibrium theory.}
  We introduce HQRE as an entropy-regularized equilibrium concept with
  the following guarantees under a monotonicity condition:
  (i)~provable uniqueness of the equilibrium,
  (ii)~linear convergence of explicit KL-mirror updates under standard
  field-Lipschitz and local-quadratic assumptions,
  (iii)~uniformly bounded Bayesian regret, and
  (iv)~rollout-measurable stability diagnostics derived from the theory.
\item \textbf{Algorithms and evaluation.}
  We instantiate the HQRE objective as DICE-PC and DICE-FT.
  Validation on eleven benchmarks across four domains demonstrates
  improved accuracy--cost trade-offs and reduced instability relative to
  single-model, debate-style, and ensembling baselines under matched
  deployment constraints.
 DICE-PC achieves +4.3~pp average improvement over prompt-control baselines and DICE-FT achieves +8.5~pp over fine-tuning baselines across six reasoning and planning benchmarks (Section~\ref{sec:experiments}).

\end{itemize}

Beyond the specific algorithms, this work introduces three
methodological contributions to multi-agent system that are reusable
beyond the LLM coordination setting.
First, the IIMG formalization provides a reusable modeling tool: any
multi-agent system with private information and a shared public stream
can be embedded in this framework (Lemma~\ref{lem:protocol_embedding}),
enabling diagnosis of coordination failures through the coupling
constant $L_c$ and the monotonicity margin.
Second, the inexact mirror recurrence
(Corollary*~\ref{cor:approx_convergence}) is a general-purpose bridge
theorem for connecting exact variational-inequality convergence to
practical deep-learning optimization with clipping, sampling, and
function approximation---applicable whenever mirror descent is used
with implementable surrogates.
Third, the four-component error decomposition
(Proposition~\ref{prop:delta_bound}) establishes a diagnostic
methodology: by isolating clipping, sampling, representation, and
optimization errors, it provides a template for error attribution in
any approximate policy-optimization pipeline.

\textbf{Paper organization.}
The remainder of the paper is organized as follows.
Section~\ref{sec:relatedwork} surveys related work.
Section~\ref{sec:dice_framework} formalizes the IIMG framework and identifies failure modes of unregularized coordination.
Section~\ref{sec:hqre_theory} introduces HQRE and establishes uniqueness, convergence, and regret guarantees.
Section~\ref{sec:algorithm} presents the DICE-PC and DICE-FT algorithms.
Section~\ref{sec:experiments} provides experimental evaluation, and Section~\ref{sec:conclusion} concludes.

\textbf{Relation to the conference version.}
This manuscript is a substantially extended version of our previous work
published at \textit{ICML 2025} \citep{yi2025debate}. The conference version
introduced ECON and used Bayesian Nash equilibrium (BNE) as a principled
target for belief-driven multi-agent LLM coordination. The present manuscript
identifies a limitation of that unregularized target: BNE existence alone
does not make coordination well posed when multiple self-consistent
equilibria coexist. 
The main extensions are summarized in Section~\ref{sec:further_discussion}, including (1) a discounted
incomplete-information Markov game formulation with explicit protocol
embedding and failure-mode analysis of ill-posed BNE selection, (2) a new
entropy-regularized HQRE theory with uniqueness, KL-mirror convergence,
zero-temperature connection to BNE, and bounded-regret guarantees, and
(3) two deployable algorithms, DICE-PC and DICE-FT, supported by an
inexact mirror recurrence, a field-error decomposition, and broader
experimental validation across 11 benchmarks in 4 domains.

\section{Related Work}\label{sec:relatedwork}

\paragraph{Single-model reasoning.}
Single-model reasoning methods improve exploration within a single policy but
cannot address instability arising from coexisting coordination conventions.
Chain-of-thought prompting~\citep{wei2022chain} and zero-shot reasoning
instructions~\citep{kojima2022large} elicit intermediate computations, while
self-consistency~\citep{wang2023self} and tree-based exploration
methods~\citep{yao2024tree} improve robustness by aggregating or searching over
multiple candidate traces.
A complementary line of work automates or schedules prompts to match the task
structure, including least-to-most prompting~\citep{zhou2022least},
complexity-aware prompting~\citep{fu2022complexity}, automatic chain-of-thought
discovery~\citep{zhang2023automatic}, contrastive prompting~\citep{chia2023contrastive}, inception prompting~\citep{li2023deepinception}, and tool-augmented prompting~\citep{zhou2025alphaapollo, lu2023dynamic}.
These approaches are limited to improving exploration or verification within a single model.

\paragraph{Multi-agent coordination protocols.}
Multi-agent LLM coordination protocols increase communication bandwidth but
typically lack a formal equilibrium-selection target that guarantees stability.
Debate and message-passing frameworks~\citep{du2023debate,li2023camel} aim to
improve solutions via critique and refinement.
System-oriented agent frameworks coordinate multiple LLMs with tools, memory,
and communication, such as MetaGPT~\citep{hong2024metagpt},
AutoGen~\citep{wu2024autogen}, and Lumos\citep{yin2024agent}.
Other protocols facilitate information exchange or consensus building across
agents~\citep{yin2023exchange,lan2024stance,yuan2024evoagent, xie2025acorn, xie2025heuristics}.
Recent evaluations, however, report that multi-agent systems do not reliably
outperform strong single-model baselines and can be sensitive to prompts and
sampling~\citep{kim2025scalingagents,cemri2025fail}.
Our work is aligned with these findings and focuses on a specific missing design
dimension: making equilibrium selection well-posed rather than increasing
communication volume~\citep{xie2026sat, xie2026teamtr}.
The scaling analysis of \citet{kim2025scalingagents} is particularly relevant: their finding that multi-agent systems often fail to outperform single-model baselines at larger scales aligns with our coordination-vs-capacity regime analysis (Section~\ref{subsec:ablations_sensitivity}), and our IIMG framework provides a formal explanation through the coupling constant $L_c$.
Similarly, the failure patterns cataloged by \citet{cemri2025fail}, including inconsistent outputs under prompt perturbation, sensitivity to agent ordering, and degradation under scaling, correspond to the equilibrium multiplicity and defensive mixing failures formalized in Sections~\ref{subsec:multiplicity}-\ref{subsec:mad}.

\paragraph{Aggregation and adjudication.}
Explicit aggregation and adjudication mechanisms specify how outputs are combined but do not provide convergence or uniqueness guarantees for the resulting joint policy.
For example, ChatEval uses referee agents for structured evaluations~\citep{chan2023chateval},
and RECONCILE aggregates a round-table discussion with confidence-weighted
consensus~\citep{chen2024reconcile}.
These systems specify \emph{what information is shared} and how outputs are
combined, but typically do not provide a formal selection target that guarantees
stability under perturbations.
In contrast, we formulate a unified incomplete-information Markov game
view and introduce an entropy-regularized equilibrium concept.
This concept turns equilibrium selection into a well-posed fixed-point
problem with uniqueness and convergence guarantees under verifiable
coupling conditions.

\paragraph{Multi-agent RL formalisms.}
Standard multi-agent RL formalisms typically address scalability and credit assignment but not equilibrium selection.
From a reinforcement learning perspective, decentralized partially observable
models such as DEC-POMDPs~\citep{bernstein2002complexity} and Markov games under
centralized training with decentralized execution (CTDE) are standard
formalisms for cooperative multi-agent control~\citep{lowe2017multi,lanctot2017unified,zhang2021multi, xiao2023insample}.
Value decomposition and monotone mixing networks~\citep{sunehag2017value,rashid2020monotonic,wang2020qplex, graves2023off}
provide scalable CTDE recipes by aligning individual action-values with a global
objective. Our algorithms use monotone critics and mixing consistent with these CTDE
principles, while our main contribution targets a different bottleneck that is
salient in open-ended LLM coordination: equilibrium multiplicity and selection
instability.

\paragraph{Regularized game theory.}
Our equilibrium concept builds on regularized game theory and first-order
methods for monotone variational inequalities.
Quantal Response Equilibrium (QRE) models bounded rationality via logit
responses~\citep{mckelvey1995quantal,mckelvey1998quantal}, and classical
conditions for existence and uniqueness under (strong) monotonicity trace back
to early game-theoretic analyses~\citep{rosen1965existence}.
Mirror-descent-style methods for monotone problems yield convergence guarantees
in appropriate Bregman geometries~\citep{nemirovski2009robust}, and entropy
regularization connects policy optimization to maximum-entropy control and
inference views~\citep{sutton2018reinforcement,levine2018reinforcement}.
We extend these principles to a heterogeneous entropy-regularized equilibrium
for discounted incomplete-information Markov games with shared public streams,
and leverage the resulting uniqueness and KL-mirror convergence to obtain
stability diagnostics and bounded Bayesian regret guarantees.

\paragraph{Distinction from single-agent entropy regularization.}
Before proceeding to the formal framework, we clarify a potential
source of confusion.
A natural question is how HQRE relates to single-agent maximum-entropy
methods, which also add entropy to the objective.
Single-agent maximum-entropy RL (e.g., SAC~\citep{haarnoja2018soft})
uses entropy regularization primarily for exploration and robustness within a
single decision-maker.
In contrast, the entropy term in HQRE serves an equilibrium-selection role:
it converts set-valued multi-agent best responses into a unique fixed point,
a function that has no analogue in the single-agent setting where the optimal policy is generically unique.

\begin{table}[t!]
\centering
\small
\caption{Qualitative comparison of coordination approaches.}
\label{tab:related_comparison}
\setlength{\tabcolsep}{5pt}
\begin{tabular}{@{} l c c c @{}}
\toprule
\textbf{Approach} & \textbf{Equilibrium guarantee} & \textbf{Convergence} & \textbf{Communication} \\
\midrule
Debate / MAD         & None             & Oscillation risk   & Full transcript    \\
Ensemble / SC        & None (aggregation only) & N/A (no learning)  & Output voting \\
CTDE (QMIX, etc.)   & Nash (set-valued)& Empirical          & Latent mixing  \\
\textbf{DICE (ours)} & \textbf{Unique HQRE}    & \textbf{Linear}    & Public stream  \\
\bottomrule
\end{tabular}
\end{table}

\section{Coordination Framework and Bayesian Regret under Incomplete Information}
\label{sec:dice_framework}

We treat multi-agent LLM coordination as an equilibrium phenomenon in an
incomplete-information game.
We use ``equilibrium'' in formal and theoretical passages
(Sections~\ref{sec:dice_framework}--\ref{sec:algorithm}) and
``coordination convention'' in informal and motivational passages
(introduction, experiment discussion, conclusion); both refer to a
self-consistent joint policy from which no agent benefits by
unilaterally deviating.
The public stream~$y^t$ captures all publicly committed
coordination-relevant information, ranging from rich shared chat and
judge-feedback protocols to communication-light designs.
This viewpoint separates the \emph{information structure} of a protocol
from the \emph{mechanism that yields stability}: explicit information
exchange may be present or absent; stable coordination requires a
well-posed equilibrium selection mechanism.

\begin{table}[t!]
\centering
\caption{Summary of results in Section~\ref{sec:dice_framework}.}
\label{tab:dice_sec2_results}
\small
\begin{tabular}{p{4.1cm}p{6.0cm}p{3.3cm}}
\toprule
\textbf{Result} & \textbf{Guarantee} & \textbf{Representative condition} \\
\midrule
IIMG formulation (\S\ref{subsec:notation})
& Models coordination as private histories + shared public stream
& (A1)--(A5) \\[3pt]

Explicit-communication protocols (\S\ref{subsec:notation})
& Covers shared-chat, blackboard, coordinator-mediated, and no-exchange
  regimes
& Public commitment of coordination-relevant information \\[3pt]

BNE existence (\S\ref{subsec:bne})
& Existence of a behavioral BNE in the discounted IIMG
& (A1)--(A4) \\[3pt]

Rate-dependent Bayesian regret (\S\ref{subsec:bne})
& If $\mathbb{E}[\bar\Delta_k]\le C_\Delta\, k^{-\alpha}$, then
$\mathrm{Regret}(T)\le \frac{2 N R_{\max} C_\Delta}{(1-\gamma)^2}
\sum_{k=1}^T k^{-\alpha}$
& Performance difference + TV bound \\[3pt]

Multiplicity $\Rightarrow$ linear regret (\S\ref{subsec:multiplicity})
& Unstable selection can yield
  $\mathrm{Regret}(T)=\Omega(T)$
& Two equilibria + persistent mis-selection \\[3pt]

Debate-style dynamics $\Rightarrow$ linear regret (\S\ref{subsec:mad})
& Persistent defensive mixing implies
  $\mathrm{Regret}_{\mathrm{MAD}}(T)=\Omega(T)$
& Defensive mixing + nondegenerate gap \\

\bottomrule
\end{tabular}
\end{table}

\subsection{IIMG model and explicit-communication protocols}
\label{subsec:notation}

There are $N$ agents indexed by
$\mathcal{N}=\{1,\dots,N\}$.
Before the formal definition, the key modeling idea is as follows:
agents share a public stream $y^t$ (e.g., shared chat or judge
feedback) and maintain private histories $h_i^t$ (e.g., scratchpads
or retrieved documents).
A discounted IIMG is specified by the following components (we explain
each in turn):
\[
\big\langle \mathcal N,\ \mathcal S,\ \mathcal A,\ \mathcal O,\
\mathcal Y,\ \mathcal P,\ \Omega,\ G,\ (\phi_i)_{i\in\mathcal N},\
\gamma,\ \mu_0 \big\rangle.
\]
Here, $\mathcal S$, $\mathcal A$, $\gamma$, and $\mu_0$ are
standard Markov-game components familiar from stochastic games and
Dec-POMDPs\citep{amato2019modeling, konidaris2009skill, konidaris2016constructing}.
The key additions for incomplete-information coordination are the
public-stream space $\mathcal Y$, the public-stream map $G$, the
private-observation kernel $\Omega$, and the reward decomposition
$\phi_i$.
In the table below, standard components are listed first, followed by
the IIMG-specific ones.

\noindent
\begin{tabular*}{\textwidth}{@{\extracolsep{\fill}}llll@{}}
\toprule
\multicolumn{4}{@{}l}{\textit{Standard Markov-game components}} \\
$\mathcal N$ & Agent set $\{1,\dots,N\}$ & $\mathcal S$ & Latent state space \\
$\mathcal A=\prod_i\mathcal A_i$ & Joint action space & $\mathcal O=\prod_i\mathcal O_i$ & Joint private-obs space \\
$\mathcal P$ & Transition kernel $\mathcal S\times\mathcal A\to\Delta(\mathcal S)$ & $\gamma$ & Discount factor $\in(0,1)$ \\
$\mu_0$ & Initial state distribution & & \\
\midrule
\multicolumn{4}{@{}l}{\textit{Novel IIMG-specific components}} \\
$\mathcal Y$ & Public-stream space & $G$ & Public-stream map $\mathcal A\to\mathcal Y$ \\
$\Omega$ & Observation kernel $\mathcal S\times\mathcal A\to\Delta(\mathcal O)$ & $\phi_i$ & Agent $i$'s reward component \\
\bottomrule
\end{tabular*}

The function $G$ determines what becomes common knowledge among agents;
varying $G$ captures a spectrum from no communication ($G$ constant) to
full transparency ($G$ bijective).
Agent~$i$ receives instantaneous reward
\[
R_i(s,\mathbf a)\ :=\ \phi_i\!\big(s,\,G(\mathbf a)\big),
\]
with discount factor $\gamma\in(0,1)$ and initial state
$s^0\sim \mu_0$.
At each time $t\ge 0$, agents simultaneously choose a
(possibly structured) macro-action $\mathbf a^t$, producing the public
stream token $y^t=G(\mathbf a^t)$.
The state then transitions to
$s^{t+1}\sim\mathcal P(\cdot\mid s^t,\mathbf a^t)$, and each agent
receives a private observation
$o_i^{t+1}\sim \Omega(\cdot\mid s^{t+1},\mathbf a^t)$.
This action-only $G$ is a convenient normal form; state-dependent or
within-round sequencing effects can be represented by augmenting
$\mathcal S$ and refining the time index.

\paragraph{Interpretation in MA-LLM coordination.}
In MA-LLM systems, the latent state $s^t$ aggregates the task instance,
environment/tool state, and any hidden ground truth needed to define rewards.
An action $a_i^t$ corresponds to the agent's round-$t$ committed act under the
protocol (e.g., a publicly posted message, a one-shot candidate, a tool call,
or a bounded decoding/prompt-control choice).
The public stream token $y^t$ records the protocol's publicly committed record
that becomes common knowledge (e.g., shared transcript segments, judge feedback,
or a coordinator's aggregated outcome), while the private observation $o_i^t$
captures agent-local information not revealed in $y^t$ (private scratchpad,
retrieved documents, tool traces, etc.).
The discount $\gamma$ also models attenuation of far-future influence under
context limits and truncation by downweighting long-range effects.

\begin{definition}[Explicit-communication protocol]
\label{def:explicit_comm}
A multi-agent coordination architecture is an
\emph{explicit-communication protocol} if there exists a finite
public-stream space $\mathcal Y$ and a publicly observed process
$\{y^t\}_{t\ge 0}$ such that each agent~$i$ can be modeled by a
behavioral policy $\pi_i(\cdot\mid h_i^t,t)$ with information state
\[
h_i^t=(o_i^0,\ a_i^0,\ y^0,\ r_i^0,\ \ldots,\ a_i^{t-1},\
y^{t-1},\ r_i^{t-1},\ o_i^t),
\qquad h_i^0=(o_i^0),
\]
and any private message available only to agent~$i$ is treated as part
of $o_i^t$.
The public stream is the only coordination-relevant information shared
across agents; no agent observes another agent's private history beyond
what is revealed in $\{y^u\}_{u\le t}$.
\end{definition}
\noindent In words, agent~$i$'s information state at time~$t$ groups
four types of data: private observations $(o_i^0, \ldots, o_i^t)$,
own actions $(a_i^0, \ldots, a_i^{t-1})$, public-stream tokens
$(y^0, \ldots, y^{t-1})$, and received rewards
$(r_i^0, \ldots, r_i^{t-1})$.

\begin{lemma}[Protocol embedding]\label{lem:protocol_embedding}
Any explicit communication protocol in
Definition~\ref{def:explicit_comm} can be represented as a discounted
IIMG by augmenting the state and observation kernels so that, for any
joint behavioral policy $\boldsymbol\pi$, the induced trajectory
distribution over $(s^t,y^t,\mathbf o^t,\mathbf a^t)_{t\ge 0}$ and
each discounted return $U_i(\boldsymbol\pi)$ are preserved.
Consequently, equilibrium notions and Bayesian regret computed from
Eq.~\eqref{eq:unreg_return}--Eq.~\eqref{eq:bayes_regret_def} are invariant
under this embedding.
\end{lemma}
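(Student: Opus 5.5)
The proof is a state-augmentation construction followed by an induction on time that matches finite-horizon marginals between the protocol and the IIMG. Given an explicit-communication protocol with public process $\{y^t\}$, private observations, and rewards, I define an augmented latent state
\[
\tilde s^t := \big(s^t,\ y^{0:t-1},\ \mathbf o^{0:t},\ \mathbf a^{0:t-1},\ \mathbf r^{0:t-1}\big).
\]
This records everything the protocol's dynamics could depend on, including any within-round sequencing or hidden protocol variables such as coordinator internal state. Because $\mathcal Y$ is finite and histories grow countably, $\tilde{\mathcal S}$ remains a standard Borel space, so (A1)--(A5)-type measurability is inherited. The augmented components are as follows.
\begin{itemize}[leftmargin=*]
\item \emph{Transition kernel.} $\tilde{\mathcal P}(\cdot\mid \tilde s^t,\mathbf a^t)$ samples $s^{t+1}$ from the protocol's law given the full augmented state and appends $(y^t,\mathbf a^t,\mathbf r^t)$.
\item \emph{Observation kernel.} $\tilde\Omega$ reads off $o_i^{t+1}$ from $\tilde s^{t+1}$. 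It also delivers $r_i^t$ as part of the private observation, so that $h_i^t$ in Definition~\ref{def:explicit_comm} is reproduced exactly as agent $i$'s IIMG information state.
\item \emph{Public-stream map.} Where $y^t$ depends on more than $\mathbf a^t$ (state-dependence, randomness, sequencing), I use the remark after the IIMG definition. A private random seed and the relevant state coordinates are folded into the committed macro-action encoding, or the time index is refined into sub-rounds, so that $y^t=\tilde G(\tilde{\mathbf a}^t)$ is a deterministic function.
\item \emph{Reward components.} Each $\phi_i$ is defined on $(\tilde s, \tilde G(\mathbf a))$ to return the protocol reward. Since $r_i^t$ is already stored in $\tilde s^{t+1}$, I shift rewards by one index or store the realized reward in the state, keeping the form $R_i=\phi_i(\tilde s,G(\mathbf a))$.
\end{itemize}

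Next I show that trajectory laws coincide. Fix any joint behavioral policy $\boldsymbol\pi$, with $\pi_i(\cdot\mid h_i^t,t)$ measurable in $h_i^t$. By induction on $t$, the joint law of $(s^u,y^u,\mathbf o^u,\mathbf a^u)_{u\le t}$ is the same under the protocol and under the IIMG. At each step the conditional law of $\mathbf a^t$ is $\prod_i\pi_i(\cdot\mid h_i^t,t)$ in both models, because the information states agree and actions are conditionally independent given them. The conditional law of $(y^t,s^{t+1},\mathbf o^{t+1})$ agrees by construction of $\tilde G,\tilde{\mathcal P},\tilde\Omega$. The Ionescu--Tulcea theorem then extends agreement of all finite-dimensional marginals to the infinite-trajectory law.

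Returns follow immediately. Rewards are bounded by $R_{\max}$ and discounted by $\gamma<1$, so $U_i(\boldsymbol\pi)=\mathbb E\big[\sum_t\gamma^t r_i^t\big]$ is an integral of a bounded measurable functional of the trajectory. It is therefore equal in both models.

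Invariance of the downstream notions follows from a bijection between policy sets. Every policy in the protocol is a policy in the IIMG and conversely, because the information states are identical. Unilateral deviations correspond one-to-one, so the BNE conditions $U_i(\pi_i,\boldsymbol\pi_{-i})\ge U_i(\pi_i',\boldsymbol\pi_{-i})$ transfer verbatim. Bayesian regret, defined through $U_i$ and expectations over priors on instances, is a functional of the same quantities and is also preserved.

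The main obstacle is the second bullet together with the public-stream step: forcing the protocol's possibly stochastic, state- and order-dependent public record into the action-only deterministic form $y^t=G(\mathbf a^t)$ without leaking private information. Folding randomness into actions could enlarge other agents' observable information. My first attempt is to place only the \emph{already publicly committed} content into the macro-action's public component, and to keep all seeds and state coordinates in agent-private or latent coordinates, refining time into sub-rounds whenever a public token depends on another agent's earlier within-round act. Under this construction the augmented system reveals in $\{y^u\}$ exactly what the protocol reveals, as Definition~\ref{def:explicit_comm} requires.
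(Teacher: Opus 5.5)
\textbf{Where you match the paper, and where you improve on it.} Your skeleton is the paper's. You enlarge the latent state until the protocol's one-step dynamics are Markov in $(s,\mathbf a)$, and choose $\mathcal P,\Omega,G,\phi_i$ to reproduce its one-step conditionals. You then conclude that the trajectory laws and each $U_i$ coincide, and you obtain invariance of BNE and regret because behavioral policies condition on the same information states.

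On rewards you are more careful than the paper. The paper sets $R_i(s,\mathbf a)=\mathbb E[r_i^t\mid s^t=s,\mathbf a^t=\mathbf a]$ and $\phi_i(s,G(\mathbf a)):=R_i(s,\mathbf a)$. That matches the realized rewards entering $h_i^t$ only when rewards are deterministic. It is also well defined only when $R_i$ depends on $\mathbf a$ through $G(\mathbf a)$ alone. Routing the realized $r_i^t$ through $o_i^{t+1}$ and reading it off $\tilde s^{t+1}$ one step later fixes both problems. As written, however, the one-index shift replaces $U_i$ by $\gamma U_i$. Set $\phi_i(\tilde s^{t+1},\cdot):=r_i^t/\gamma$ (and $0$ at $t=0$), accepting the reward bound $R_{\max}/\gamma$.

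\textbf{The gap: the public-stream step.} Folding a seed or latent state coordinates into an agent's macro-action fails for two reasons. First, a behavioral policy is measurable in $h_i^t$, so the agent cannot emit coordinates it does not observe. Second, any coordinate it does control can be chosen strategically. For example, an agent whose action is meant to carry the seed that randomizes a judge's verdict can bias that seed. Then $\Pi_i$ in the IIMG strictly contains the protocol's policies, unilateral deviations no longer correspond one-to-one, and your BNE-invariance step breaks. Your fallback, keeping seeds and state coordinates latent, leaves $y^t$ state-dependent, so it is not of the form $G(\mathbf a^t)$.

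The paper's device is different. It models the publicly visible module that emits $y^t$ (coordinator, judge) as one of the agents. The state that module needs can be delivered through its private observation, so $y^t$ becomes a deterministic function of the joint action. For this not to alter the equilibrium conditions or $W$, that player must be treated as non-strategic: zero reward, with its policy fixed to the module's protocol behavior.

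An alternative keeps $\mathcal N$ unchanged. Broadcast the stochastic or state-dependent part of $y^t$ as an identical component of every $o_i^{t+1}$ via $\Omega(\cdot\mid s^{t+1},\mathbf a^t)$, and let $G$ carry only the action-determined part. In either case, refining time into sub-rounds also requires a per-sub-step discount (e.g.\ $\gamma^{1/m}$, with rewards placed once per round) to leave $U_i$ unchanged.
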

\noindent A constructive embedding is provided in the Appendix.
Intuitively, the embedding augments the state to track the protocol's
internal bookkeeping, so any trajectory distribution under the
protocol is faithfully reproduced in the IIMG.

\paragraph{DICE within the explicit-communication framework.}
DICE instantiates Definition~\ref{def:explicit_comm} with a minimal
public stream that records the coordinator's context, together with the
aggregated outcome.
DICE's equilibrium selection mechanism, not communication volume, is
what provides stability.

\paragraph{Behavioral policies.}
A (possibly nonstationary) behavioral policy $\pi_i$ maps information
states to action distributions,
$\pi_i(\cdot\mid h_i^t,t)\in\Delta(\mathcal A_i)$.
We write $\boldsymbol\pi=(\pi_1,\ldots,\pi_N)$ and
$\Pi=\prod_{i=1}^N \Pi_i$.

\paragraph{Returns and Bayesian regret.}
Given $\boldsymbol\pi$, agent~$i$'s discounted return and social welfare
(the sum of all agents' returns) are
\begin{equation}
U_i(\boldsymbol\pi)
= \mathbb{E}_{\boldsymbol\pi}\Big[\sum_{t=0}^\infty \gamma^t
  R_i(s^t,\mathbf a^t)\Big],
\qquad
W(\boldsymbol\pi)=\sum_{i=1}^N U_i(\boldsymbol\pi).
\label{eq:unreg_return}
\end{equation}
We measure performance over \emph{learning iterations}
$k=1,2,\ldots$ (distinct from environment time~$t$) via Bayesian regret
relative to a reference joint policy $\boldsymbol\pi^\star$:
\begin{equation}
\mathrm{Regret}(T)
:=\mathbb{E}\Big[\sum_{k=1}^T
  \big(W(\boldsymbol\pi^\star)-W(\boldsymbol\pi^k)\big)\Big].
\label{eq:bayes_regret_def}
\end{equation}

\paragraph{Occupancies and policy geometry.}
To translate convergence in policy space into welfare bounds, we need a
sensitivity lemma that relates policy changes to occupancy changes.
We now introduce the necessary occupancy measures.
The discounted state occupancy is
\[
d^{\boldsymbol\pi}(s)\ :=\ (1-\gamma)\sum_{t\ge 0}\gamma^t\,
\Pr_{\boldsymbol\pi}(s^t=s).
\]
Let $h^t=(h_1^t,\ldots,h_N^t)$ denote the joint history. The
discounted joint history occupancy is
\[
\mathsf H_{\mathrm{disc}}^{\boldsymbol\pi}(h^t,t)\ :=\
(1-\gamma)\gamma^t\ \mathsf{Law}(h^t\mid \boldsymbol\pi).
\]
We measure worst-case deviations of joint action distributions using the
joint-policy TV metric, denoted $\rho_\Pi$ to distinguish it from both
the occupancy measure $d^{\boldsymbol\pi}$ and the reward components
$\phi_i$:
\begin{equation}
\rho_{\Pi}(\boldsymbol\pi,\boldsymbol\pi')
:=\sup_{(h^t,t)} \mathrm{TV}\Big(
  \textstyle\bigotimes_{i=1}^N \pi_i(\cdot\mid h_i^t,t),\
  \textstyle\bigotimes_{i=1}^N \pi_i'(\cdot\mid h_i^t,t)\Big).
\label{eq:policy_metric}
\end{equation}
Occupancy sensitivity holds with
$C_{\mathrm{occ}}=\frac{2\gamma}{1-\gamma}$
(by a standard simulation lemma; a derivation is included in the Appendix):
\[
\|d^{\boldsymbol\pi}-d^{\tilde{\boldsymbol\pi}}\|_1
\le C_{\mathrm{occ}}\,
\rho_{\Pi}(\boldsymbol\pi,\tilde{\boldsymbol\pi}).
\]

\paragraph{Standing assumptions.}
We invoke the following standard conditions only when required.
Each theorem statement specifies which assumptions are needed.
\begin{itemize}[leftmargin=1.25em,topsep=2pt,itemsep=1pt]
\item[(A1)] Finite spaces: $\mathcal S$, each $\mathcal A_i$, and each
  $\mathcal O_i$ are finite.
\item[(A2)] Bounded rewards: $|\phi_i|\le R_{\max}$ for all~$i$, and a
  policy-independent $\mu_0$.
\item[(A3)] Measurable behavioral policies (measurable in countable
  coordinates, i.e., the policy is a well-defined function of the
  history).
\item[(A4)] Perfect recall (behavioral strategies are w.l.o.g.\ under
  truncation/discounting).
\item[(A5)] Limited feedback: at time~$t$, agent~$i$ observes only
  $(o_i^t,y^t,r_i^t)$ and not other agents' actions.
\end{itemize}
\noindent Assumptions (A1)--(A4) are standard in the Dec-POMDP and
stochastic-game literature; (A5) is specific to this work and captures
the information asymmetry inherent in MA-LLM coordination, where agents
share information only through the public stream.
(A1) ensures that the best-response map is well-defined on a compact
domain; (A2) guarantees bounded returns; (A3)--(A4) provide the
measurability needed for behavioral strategies.

\subsection{Unregularized equilibria and regret control}
\label{subsec:bne}

With the IIMG framework and standing assumptions in place, we now
characterize the regret consequences of operating without an
equilibrium-selection mechanism.
Before introducing regularization, we establish that unregularized
games admit equilibria (Theorem~\ref{thm:bne_existence}) but provide no
mechanism to select among them: the game does not specify \emph{which}
equilibrium open-ended learning will select.
This gap is the root cause of the instability identified in the
introduction, and it alone is sufficient for regret to accumulate, even
when the underlying game admits equilibria with good welfare.

\begin{theorem}[Compactness and continuity of the policy space]
\label{thm:topology}
Under (A1)--(A2), the joint behavioral policy space $\Pi$ is nonempty,
convex, compact, and metrizable under the product topology.
Moreover, each payoff $U_i:\Pi\to\mathbb{R}$ defined in
Eq.~\eqref{eq:unreg_return} is continuous.
\end{theorem}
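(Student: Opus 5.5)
The plan is to identify $\Pi$ with a countable product of finite-dimensional simplices and then read off the claimed properties, using Tychonoff for compactness and a truncation argument for continuity of $U_i$.

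\textbf{Product representation.} Under (A1) every information state $h_i^t$ is a finite tuple over finite alphabets. Hence, for each $t$, the set $\mathcal H_i^t$ of realizable agent-$i$ histories is finite, and the index set $\mathcal I_i=\bigcup_{t\ge0}\mathcal H_i^t\times\{t\}$ is countable. A behavioral policy $\pi_i$ is exactly an element of $\prod_{(h,t)\in\mathcal I_i}\Delta(\mathcal A_i)$, and (A3) is automatic since any such assignment is a function of the history. So
\[
\Pi=\prod_{i=1}^N\prod_{(h,t)\in\mathcal I_i}\Delta(\mathcal A_i),
\]
a countable product of nonempty, convex, compact subsets of $\mathbb R^{|\mathcal A_i|}$.

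\textbf{Topological properties.} Nonemptiness is immediate, for instance from the uniform policy. Convexity holds coordinatewise, so $\Pi$ is convex in the ambient vector space $\prod\mathbb R^{|\mathcal A_i|}$. Tychonoff's theorem gives compactness in the product topology. For metrizability I enumerate $\mathcal I=\{(i,h,t)\}$ as $j=1,2,\dots$ and set $\mathrm{dist}(\boldsymbol\pi,\boldsymbol\pi')=\sum_j 2^{-j}\|\pi_j-\pi'_j\|_1$. This metric induces the product topology because it is a countable product of metric spaces. The metric $\rho_\Pi$ of Eq.~\eqref{eq:policy_metric} is \emph{not} the right metric here, since its sup over infinitely many histories gives a finer topology.

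\textbf{Continuity of $U_i$.} This is the main step. I fix $\varepsilon>0$ and choose $T$ with $\frac{\gamma^T R_{\max}}{1-\gamma}<\varepsilon/4$, using (A2). Then $U_i=U_i^{\le T}+\text{tail}$, where the tail is at most $\varepsilon/4$ in absolute value uniformly over $\Pi$. The truncated return $U_i^{\le T}(\boldsymbol\pi)=\sum_{t<T}\gamma^t\,\mathbb E_{\boldsymbol\pi}[R_i(s^t,\mathbf a^t)]$ is a finite sum over finitely many trajectories $(s^u,\mathbf o^u,\mathbf a^u)_{u<T}$, which is finite by (A1). Each trajectory probability is a product of $\mu_0$, the kernels $\mathcal P$ and $\Omega$, and the finitely many coordinates $\pi_i(a_i^u\mid h_i^u,u)$ with $u<T$. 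Each term is therefore a polynomial in finitely many coordinates, and $U_i^{\le T}$ depends continuously on only finitely many coordinates of the product. Hence $U_i^{\le T}$ is continuous in the product topology. Given $\boldsymbol\pi$, I pick a product-open neighborhood on which $|U_i^{\le T}(\boldsymbol\pi)-U_i^{\le T}(\boldsymbol\pi')|<\varepsilon/2$; on it, $|U_i(\boldsymbol\pi)-U_i(\boldsymbol\pi')|<\varepsilon$. Equivalently, $U_i$ is a uniform limit of continuous functions.

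\textbf{Expected obstacle.} The only delicate points are bookkeeping. First, the history index set must be countable, so that we get metrizability rather than mere compactness; this rests on (A1) and on the finite horizon at each $t$. Second, the policy-independence of $\mu_0$ in (A2) keeps the trajectory law polynomial in the policy coordinates. Perfect recall (A4) is not needed for this result.
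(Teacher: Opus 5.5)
Your proposal is correct and follows the paper's proof essentially step for step: Tychonoff on the countable product of simplices, the standard weighted product metric for metrizability, and continuity of $U_i$ as a uniform limit of truncated payoffs that depend on only finitely many coordinates. Your additional remarks are accurate refinements of the paper's sketch rather than a different route. These are the explicit polynomial form of the truncated payoff, the fact that (A4) is unnecessary, and the observation that $\rho_\Pi$ induces a strictly finer topology than the product topology and so is not the metric under which compactness holds.
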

\noindent Proof is deferred to Appendix~\ref{app:topology_proof}.
Compactness ensures that best-response sequences have convergent
subsequences; continuity ensures that limits of approximate equilibria are exact equilibria.
Both properties are reused in the existence proof for the regularized equilibrium.

\paragraph{Bayesian Nash equilibrium (BNE).}
A joint policy $\boldsymbol\pi^\star$ is a BNE if no agent can improve its return by unilateral deviation:
\[
\pi_i^\star \in \arg\max_{\pi_i\in\Pi_i}
U_i(\pi_i,\boldsymbol\pi_{-i}^\star),
\qquad \forall i\in\mathcal N.
\]

\begin{theorem}[Existence of BNE]\label{thm:bne_existence}
Under assumptions (A1)--(A4), the discounted IIMG admits a behavioral
BNE.
\end{theorem}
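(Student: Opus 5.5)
We will obtain the behavioral BNE from a Kakutani--Glicksberg--Fan fixed-point argument applied to the joint best-response correspondence on $\Pi$, reusing the structural facts of Theorem~\ref{thm:topology}. By that result, under (A1)--(A2) the space $\Pi=\prod_i\Pi_i$ is a nonempty, convex, compact, metrizable subset of a locally convex space: a countable product of simplices $\Delta(\mathcal A_i)$ indexed by the countably many pairs $(h_i^t,t)$, with the product topology. Each $U_i$ is continuous on $\Pi$. We define $\mathrm{BR}_i(\boldsymbol\pi_{-i}):=\arg\max_{\pi_i\in\Pi_i}U_i(\pi_i,\boldsymbol\pi_{-i})$ and $\mathrm{BR}(\boldsymbol\pi):=\prod_i\mathrm{BR}_i(\boldsymbol\pi_{-i})$. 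A fixed point $\boldsymbol\pi^\star\in\mathrm{BR}(\boldsymbol\pi^\star)$ is by definition a BNE.

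\emph{Nonemptiness and closed graph.} $\mathrm{BR}_i(\boldsymbol\pi_{-i})$ is nonempty because $U_i(\cdot,\boldsymbol\pi_{-i})$ is continuous on the compact set $\Pi_i$. For the closed graph, we argue by Berge's maximum theorem. Since $U_i$ is jointly continuous and the constraint set $\Pi_i$ does not depend on $\boldsymbol\pi_{-i}$, the argmax correspondence is upper hemicontinuous with compact values. Metrizability lets us verify this with sequences. Take $\boldsymbol\pi^n\to\boldsymbol\pi$ and $\pi_i^n\in\mathrm{BR}_i(\boldsymbol\pi^n_{-i})$ with $\pi_i^n\to\pi_i$. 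Passing to the limit in $U_i(\pi_i^n,\boldsymbol\pi^n_{-i})\ge U_i(\pi_i',\boldsymbol\pi^n_{-i})$ for every fixed $\pi_i'$ gives $\pi_i\in\mathrm{BR}_i(\boldsymbol\pi_{-i})$.

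\emph{Convexity of best-response sets.} This is the main obstacle, because $U_i$ is not concave in behavioral policy parameters. We will handle it through perfect recall (A4) via Kuhn's theorem, or more directly via a dynamic-programming characterization. Fix $\boldsymbol\pi_{-i}$. Agent~$i$ then faces a single-agent POMDP whose belief-augmented state is its information state $h_i^t$; perfect recall makes this well defined, since the conditional law of $(s^t,h_{-i}^t)$ given $h_i^t$ does not depend on $\pi_i$. Let $Q_i^\star(h_i^t,t,a_i)$ be the optimal action-value of this POMDP, which is bounded by (A2). A behavioral policy $\pi_i$ is then optimal iff at every information state reached with positive probability under $(\pi_i,\boldsymbol\pi_{-i})$ it puts mass only on $\arg\max_{a_i}Q_i^\star$.

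This characterization does not directly give convexity, because reachability depends on $\pi_i$. We therefore use the stronger observation that the set of optimal policies equals the set of policies whose occupancy-weighted Bellman-gap integral vanishes. We then pass to the realization-plan (sequence-form) representation, which perfect recall makes equivalent to behavioral policies and in which $U_i$ is linear. In sequence form the argmax set is convex. Its preimage in behavioral coordinates need not be convex, so we instead run the fixed-point argument on the space of realization plans. That space is convex and compact, is a countable product of polytopes under truncation, and has the product topology. Continuity of $U_i$ carries over from Theorem~\ref{thm:topology} through the continuous surjection from behavioral policies to realization plans. We then map the fixed point back to a behavioral profile by Kuhn's theorem; this preserves payoffs and hence the equilibrium property.

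\emph{Conclusion.} With the correspondence nonempty, convex-valued, and closed-graph on a convex compact subset of a locally convex Hausdorff space, Glicksberg's theorem yields a fixed point. Kuhn's theorem then returns a behavioral BNE. The discount $\gamma<1$ makes the tail beyond horizon $T$ contribute at most $\gamma^T R_{\max}/(1-\gamma)$. This justifies the countable-product and truncation steps and explains the phrase ``w.l.o.g.\ under truncation/discounting'' in (A4).
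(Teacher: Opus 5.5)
Your proof is correct, but it takes a different route from the paper. The paper never has to establish convexity of best responses. It truncates the game at horizon $H$ and obtains a behavioral equilibrium $\boldsymbol\pi^{(H)}$ of each finite truncated game from Nash's theorem plus Kuhn's theorem; this is where (A4) enters. It then extends each $\boldsymbol\pi^{(H)}$ arbitrarily and extracts a convergent subsequence in the compact space $\Pi$ of Theorem~\ref{thm:topology}. A profitable deviation at the limit would, by continuity of $U_i$ and the uniform tail bound $|U_i-U_i^{(\le H)}|\le\gamma^{H+1}R_{\max}/(1-\gamma)$, remain profitable in the $H_m$-truncated game for large $m$, which contradicts the equilibrium property there.

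You instead apply Fan--Glicksberg directly to the infinite game. You correctly note that behavioral best-response sets need not be convex, and you repair this by moving to realization plans. There $U_i(\cdot,x_{-i})$ is affine, and the plan space is a closed convex subset of $[0,1]^{\mathbb N}$ cut out by countably many linear flow constraints. That makes it an inverse limit of polytopes, not a product of them as you wrote.

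What each route buys:
\begin{itemize}
\item The paper's route is more economical. All fixed-point and convexity issues are delegated to classical finite-game theorems, and only compactness, continuity and the tail bound are needed.
\item Your route avoids any limiting argument and makes the convexity obstruction explicit. The cost is an infinite-dimensional Kakutani-type theorem and a sequence form on a countably infinite tree.
\end{itemize}

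Three points to tighten:
\begin{itemize}
\item Justify continuity of the payoff on plan space explicitly, since a continuous surjection alone would not give it. It holds because the realization map $\Pi\to X$ is a continuous surjection from a compact space onto a Hausdorff space, hence a closed quotient map. In addition, $U_i$ is constant on its fibres, because trajectory laws depend on $\boldsymbol\pi$ only through realization weights.
\item Mapping the fixed point back to behavior uses the ratio construction $\pi_i(a\mid h)=x_i(h,a)/x_i(\mathrm{parent}(h))$, chosen arbitrarily when the denominator vanishes. This is not Kuhn's mixed-to-behavioral theorem.
\item The POMDP/Bellman-gap detour can be dropped once you switch to sequence form.
\end{itemize}
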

\noindent The proof uses finite-horizon truncations and compactness; details are deferred to
Appendix~\ref{app:bne_existence}.
Existence alone does not tell us which equilibrium learning will
select; the next subsections show that multiplicity of BNE is the
core obstacle.

\paragraph{Regret transfer from convergence to a fixed equilibrium.}
Fix a reference BNE $\boldsymbol\pi^\star$ and define the
occupancy-averaged deviation
\[
\bar\Delta_k
:=\mathbb{E}_{(h^t,t)\sim
  \mathsf H_{\mathrm{disc}}^{\boldsymbol\pi^\star}}
\Big[\mathrm{TV}\big(\boldsymbol\pi^k(\cdot\mid h^t,t),
  \boldsymbol\pi^\star(\cdot\mid h^t,t)\big)\Big].
\]
Intuitively, $\bar\Delta_k$ measures how far the current policy is from
the reference equilibrium, averaged over visited states.
The following corollary shows that the rate at which $\bar\Delta_k$
decays directly controls regret.
\begin{corollary}[Rate-dependent Bayesian regret]\label{cor:regret}
If $\mathbb{E}[\bar\Delta_k]\le C_\Delta\,k^{-\alpha}$ for some
$C_\Delta>0$ and $\alpha>0$, then for all $T\ge 1$,
\[
\mathrm{Regret}(T)
\le \frac{2 N R_{\max} C_\Delta}{(1-\gamma)^2}\sum_{k=1}^T k^{-\alpha}.
\]
In particular, for $\alpha=\tfrac12$ this yields
$\mathrm{Regret}(T)=O\!\big(\frac{N\sqrt{T}}{(1-\gamma)^2}\big)$.
\end{corollary}
\noindent The bound follows from a performance-difference argument
combined with TV control; details are deferred to Appendix~\ref{app:welfare_sensitivity}.
Corollary~\ref{cor:regret} is informative only when learning converges
to a \emph{single} equilibrium so that $\bar\Delta_k\to 0$ for some
fixed $\boldsymbol\pi^\star$.
The next two subsections show that this condition can fail through two
distinct mechanisms, selection instability and persistent defensive
mixing, regardless of how rich the public stream is.

\subsection{Linear regret under equilibrium multiplicity}
\label{subsec:multiplicity}

When a game admits multiple equilibria with different welfare,
learning that does not settle on a single mode incurs a welfare gap
whenever it selects a lower-welfare equilibrium.
If this mis-selection persists with non-vanishing probability, regret
can accumulate linearly, even when every iterate is itself an equilibrium.

\begin{proposition}[Equilibrium multiplicity with persistent mis-selection can induce linear
Bayesian regret]\label{prop:multi_linear}

Assume the unregularized game admits two equilibria
$\boldsymbol\pi^{(1)}$ and $\boldsymbol\pi^{(2)}$ such that
$W(\boldsymbol\pi^{(1)})-W(\boldsymbol\pi^{(2)})=\Delta_W>0$.
Let $\{\boldsymbol\pi^k\}_{k\ge 1}$ be the iterates produced by a learning
procedure and suppose $\boldsymbol\pi^k\in\{\boldsymbol\pi^{(1)},\boldsymbol\pi^{(2)}\}$
for all $k$.
If there exist $p_0>0$ and $k_0$ such that for all $k\ge k_0$,
\[
\Pr\!\big(\boldsymbol\pi^k=\boldsymbol\pi^{(2)}\big)\ \ge\ p_0,
\]
then the Bayesian regret relative to $\boldsymbol\pi^{(1)}$ is linear:
\[
\mathrm{Regret}(T)\ \ge\ p_0\,\Delta_W\, (T-k_0+1)\ =\ \Omega(\Delta_W\,T).
\]
More generally, the same conclusion (up to constants) holds if
$\Pr\!\big(W(\boldsymbol\pi^k)\le W(\boldsymbol\pi^{(1)})-\Delta_W/2\big)\ge p_0$
for all $k\ge k_0$.

\end{proposition}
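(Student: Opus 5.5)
The argument is a direct lower bound on the definition of Bayesian regret in Eq.~\eqref{eq:bayes_regret_def} with reference policy $\boldsymbol\pi^\star=\boldsymbol\pi^{(1)}$; no occupancy or performance-difference machinery is needed. First, the per-iterate welfare gap $W(\boldsymbol\pi^{(1)})-W(\boldsymbol\pi^k)$ is a random variable taking only the values $0$ (when $\boldsymbol\pi^k=\boldsymbol\pi^{(1)}$) and $\Delta_W$ (when $\boldsymbol\pi^k=\boldsymbol\pi^{(2)}$). Hence
\[
\mathbb{E}\big[W(\boldsymbol\pi^{(1)})-W(\boldsymbol\pi^k)\big]
=\Delta_W\,\Pr\big(\boldsymbol\pi^k=\boldsymbol\pi^{(2)}\big).
\]
Second, by linearity of expectation the regret is the sum of these per-iterate terms. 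Every term is nonnegative because $\Delta_W>0$. So we may drop the terms with $k<k_0$ and apply the hypothesis $\Pr(\boldsymbol\pi^k=\boldsymbol\pi^{(2)})\ge p_0$ to each of the remaining $T-k_0+1$ terms, giving $\mathrm{Regret}(T)\ge p_0\Delta_W(T-k_0+1)$. For $T\ge 2k_0$ this is at least $p_0\Delta_W T/2$, which is $\Omega(\Delta_W T)$. Theorem~\ref{thm:topology} together with (A2) guarantees that each $W(\boldsymbol\pi^k)$ is finite and bounded by $NR_{\max}/(1-\gamma)$, so all the expectations are well defined.

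For the general statement, the iterates are no longer confined to $\{\boldsymbol\pi^{(1)},\boldsymbol\pi^{(2)}\}$, and nonnegativity of each term has to be argued separately. Here I would take $\boldsymbol\pi^{(1)}$ to be welfare-maximal over the set of policies the procedure can produce, which matches the intended reading of $\boldsymbol\pi^\star$ as the reference optimum. Then $X_k:=W(\boldsymbol\pi^{(1)})-W(\boldsymbol\pi^k)\ge 0$ almost surely. Writing $E_k$ for the event $\{W(\boldsymbol\pi^k)\le W(\boldsymbol\pi^{(1)})-\Delta_W/2\}$, the pointwise bound $X_k\ge (\Delta_W/2)\mathbf 1_{E_k}$ gives $\mathbb{E}[X_k]\ge p_0\Delta_W/2$ for all $k\ge k_0$. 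Summing over $k$ then yields $\mathrm{Regret}(T)\ge \tfrac{p_0\Delta_W}{2}(T-k_0+1)$.

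The main obstacle is this nonnegativity step in the general case. If $\boldsymbol\pi^{(1)}$ is not welfare-maximal, negative gaps on the complement of $E_k$ could cancel the positive contribution. Without maximality, I would fall back on the crude bound $X_k\ge -2NR_{\max}/(1-\gamma)$ on the complement of $E_k$, which comes from the boundedness in (A2). That fallback yields linear regret only under an additional condition relating $p_0\Delta_W$ to this worst-case loss. I would therefore state the maximality assumption explicitly as the meaning of ``up to constants.''
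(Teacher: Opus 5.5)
Your argument for the main claim is the paper's proof: the per-iterate gap takes only the values $0$ and $\Delta_W$, so its expectation is $\Delta_W\Pr(\boldsymbol\pi^k=\boldsymbol\pi^{(2)})\ge p_0\Delta_W$ for $k\ge k_0$, and summing over $k=k_0,\dots,T$ gives $p_0\Delta_W(T-k_0+1)$ (the paper drops the earlier, nonnegative terms without comment, which you justify explicitly). The paper gives no proof of the ``more generally'' clause, and your caveat is correct: without something like welfare-maximality of $\boldsymbol\pi^{(1)}$, or at least $\mathbb E[X_k\mathbf 1_{E_k^c}]\ge 0$, iterates whose welfare exceeds $W(\boldsymbol\pi^{(1)})$ off the event $E_k$ can cancel the $p_0\Delta_W/2$ contribution, so the extra assumption you add is genuinely needed.
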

\noindent Proof is deferred to Appendix~\ref{app:multi_linear}.
The assumption that iterates lie exactly in
$\{\boldsymbol\pi^{(1)},\boldsymbol\pi^{(2)}\}$ is illustrative to
the reader; the general statement (final two lines of the proposition)
is the practically relevant claim, as the general condition applies
whenever learning persistently visits a low-welfare region with
bounded-away probability.
Proposition~\ref{prop:multi_linear} is diagnostic:
it formalizes how the empirically observed persistent mis-selection
in multi-agent LLM systems (Section~\ref{subsec:text_hanabi_margin})
translates to linear regret, motivating explicit equilibrium
selection.

\subsection{Debate-style dynamics and persistent regret}
\label{subsec:mad}

Even rich explicit communication cannot prevent persistent welfare gaps
when strategic uncertainty forces agents into defensive randomization,
and regret accumulates linearly.

\paragraph{Mechanism and link to selection.}
Debate-style protocols often induce an adversarial evaluation loop (e.g., through
judge selection or counter-argument pressure), so agents hedge against being
exploited by maintaining randomized defenses.
As a concrete illustration, consider a two-agent debate where agent~1
proposes a solution and agent~2 critiques it.
If agent~1 commits to a deterministic strategy, agent~2 can craft a
targeted counter-argument; anticipating this, agent~1 randomizes over
alternative proposals, and agent~2 likewise randomizes its critique style.
The result is a mixed equilibrium where both agents hedge rather than
committing to the welfare-maximizing coordinated strategy.
In the IIMG view, best-response behavior remains effectively set-valued near saddle-like regions.
Because both agents hedge, learning can stabilize in defensive mixing rather than committing to a high-welfare equilibrium, yielding a non-vanishing per-iteration welfare gap.

\begin{lemma}[Persistent suboptimality under defensive mixing]
\label{lem:debate_suboptimality_main}
Consider a debate-style multi-agent setting and fix an agent~$i$.
Let $s_k$ denote the (random) environment state encountered at learning
iteration $k$ (e.g., the initial state of the $k$-th episode).
Under persistent mixing and nondegenerate action-value gaps
(Appendix~\ref{appendix:comparison_debate}), there exist
$\delta_{\min}>0$ and $k_0$ such that for all $k\ge k_0$,
\[
\mathbb{E}\!\left[\max_{a_i}
  Q_i^{\star}(s_k,a_i,a_{-i}^k)
  \ -\ Q_i^{\star}(s_k,a_i^k,a_{-i}^k)\right]
\ \ge\ \delta_{\min}.
\]
Here $Q_i^{\star}(s, a_i, a_{-i})$ is agent~$i$'s action-value
function under the opponents' current policy.
\end{lemma}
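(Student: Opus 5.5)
The plan is to make precise the two appendix conditions that the statement refers to, and then show that the claimed lower bound follows from them by a conditioning argument. I will formalize them as follows.

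\emph{Persistent mixing:} there exist $\eta>0$ and $k_0$ such that for all $k\ge k_0$, with probability at least $p_0>0$ over the state $s_k$, the learner's policy $\pi_i^k(\cdot\mid s_k)$ puts mass at least $\eta$ on actions that are not $Q_i^\star$-maximizers. Here $Q_i^\star$ is computed against the opponents' realized actions $a_{-i}^k$.

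\emph{Nondegenerate gaps:} there is $\delta_0>0$ such that, for every $s$ in the support of the states encountered and every $a_{-i}$, each non-maximizing action has $Q_i^\star$ at least $\delta_0$ below the maximum.

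Under (A1) there are finitely many states and joint actions. So the gap condition is just the statement that the minimum positive gap over a finite set is strictly positive.

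The argument then has three steps.

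\textbf{Step 1.} Define the pointwise gap
\[
g_k:=\max_{a_i}Q_i^\star(s_k,a_i,a_{-i}^k)-Q_i^\star(s_k,a_i^k,a_{-i}^k).
\]
Note that $g_k\ge 0$ almost surely, since it is a maximum minus one of the values it maximizes over.

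\textbf{Step 2.} Condition on the iterate-$k$ history $\mathcal F_k$, meaning the current policies and the state $s_k$. Sample $a_i^k\sim\pi_i^k(\cdot\mid s_k)$ independently of $a_{-i}^k$; this independence holds because actions are chosen simultaneously. This gives
\[
\mathbb E[g_k\mid\mathcal F_k,a_{-i}^k]\ \ge\ \delta_0\,\pi_i^k\big(\{a_i:\text{suboptimal}\}\mid s_k\big).
\]

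\textbf{Step 3.} Take total expectation and use the persistent-mixing event. For all $k\ge k_0$ this yields $\mathbb E[g_k]\ge p_0\,\eta\,\delta_0=:\delta_{\min}>0$.

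The main obstacle is making the ``persistent mixing'' hypothesis nonvacuous and compatible with the definition of $Q_i^\star$. The issue is that $Q_i^\star$ is taken ``under the opponents' current policy,'' so the set of maximizers itself moves with $k$. Mixing that merely randomizes among tied maximizers would give zero gap.

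I would handle this by stating the hypothesis relative to the maximizer set at each $(s_k,a_{-i}^k)$, so the mass that must stay off the argmax is measured against the current maximizers. Then $\delta_0$ is obtained as a uniform minimum over the finite set of possible $Q$-tables. If the opponents' policies do not range over a finite set, I would instead assume the gap bound directly, or use compactness of $\Pi$ (Theorem~\ref{thm:topology}) together with continuity of $Q_i^\star$ in $\boldsymbol\pi_{-i}$ on the closure of the visited policies. To keep ties from making the gap vanish in that limit, this also requires the gaps to be bounded away from zero on that closure.

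A final remark links the lemma to linear regret. Summing $\delta_{\min}$ over $k\ge k_0$ and applying the performance-difference lemma gives the $\mathrm{Regret}_{\mathrm{MAD}}(T)=\Omega(T)$ claim. That is beyond the statement itself.
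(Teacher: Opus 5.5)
Your proof is correct under the hypotheses you state, and it uses the same argument as the paper. You condition on $(s_k,a_{-i}^k)$ and bound the conditional expected gap from below by a gap size times the mass $\pi_i^k(\cdot\mid s_k)$ puts on gapped actions. Off the good event you use $g_k\ge 0$, and then you take total expectation.

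Where you differ is in how the two hypotheses are formalized, and this bears directly on the obstacle you flagged.

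\textbf{The paper's version.} Mixing is uniform and deterministic: $\pi_i^k(a_i\mid s_k)\ge p_{\min}$ for \emph{every} action and every instance. The randomness goes into the gap condition: with probability at least $q_{\mathrm{gap}}$, the spread $\max_{a_i}Q_i^\star-\min_{a_i}Q_i^\star$ is at least $\Delta_{\mathrm{gap}}$. The proof then charges only the single worst action $a_i^-$, which full support forces to be played with probability at least $p_{\min}$. This gives $\delta_{\min}=p_{\min}\Delta_{\mathrm{gap}}q_{\mathrm{gap}}$. Because only the worst action is charged, ties among maximizers and a moving argmax set do not matter. Consequently no uniform margin over all non-maximizers is needed, and neither is any finiteness or compactness/closure argument.

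\textbf{Your version.} You require mass at least $\eta$ off the argmax set on an event of probability at least $p_0$, plus a uniform margin $\delta_0$ for every non-maximizer. The margin is the price. The gain is that you do not need full support, so your hypothesis also covers learners that persistently commit to a suboptimal action rather than randomizing over all actions.

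Your remark that the margin is automatic under (A1) holds when $Q_i^\star$ is a fixed table. That is exactly how the paper's appendix reads it: the one-step payoff of the debate instance under a fixed reference environment. This sits awkwardly with the main-text phrase ``under the opponents' current policy,'' but the appendix reading is the one the proof uses.

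You also make explicit that $a_i^k$ and $a_{-i}^k$ are conditionally independent given $s_k$; the paper's proof uses this without stating it. In debate protocols where one agent responds to another's message, this is a genuine modeling assumption rather than a free fact.
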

\noindent Proof is deferred to
Appendix~\ref{appendix:comparison_debate}.

\begin{corollary}[Linear Bayesian regret for debate-style dynamics]
\label{cor:mad_linear}

Under the conditions of Lemma~\ref{lem:debate_suboptimality_main} and
standard $O(1/\sqrt{k})$ estimation errors, debate-style dynamics incur
linear Bayesian regret:
\[
\mathrm{Regret}_{\mathrm{MAD}}(T)=\Omega(T).
\]
\end{corollary}
\noindent Proof is deferred to Appendix~\ref{appendix:comparison_debate}.
Like Proposition~\ref{prop:multi_linear}, this result is diagnostic: it formalizes the regret consequence of empirically observed defensive mixing (Figure~\ref{fig:panel_2x2}a).
Sections~\ref{subsec:multiplicity} and \ref{subsec:mad} establish
that ill-posed equilibrium selection is the common root cause of
both failure modes.
Section~\ref{sec:hqre_theory} addresses this through entropy
regularization.

\keybox{
\textbf{Implication.}
Sublinear Bayesian regret requires convergence to a single equilibrium.
Equilibrium multiplicity and debate-induced defensive mixing each prevent convergence, leading to $\Omega(T)$ regret.}

\section{Regularized Equilibrium Theory: HQRE Uniqueness and Convergence}
\label{sec:hqre_theory}

\begin{table}[t!]
\centering
\caption{Summary of HQRE guarantees in Sec.~\ref{sec:hqre_theory}.}
\label{tab:core_results_hqre}
\small
\begin{tabular}{p{4.0cm}p{6.2cm}p{3.2cm}}
\toprule
\textbf{Result} & \textbf{Guarantee}
& \textbf{Representative condition} \\
\midrule
HQRE existence (\S\ref{subsec:hqre})
& Existence of HQRE in discounted IIMG
& $\alpha_{\min}>0$ \\[3pt]

Regularization gap bound (\S\ref{subsec:hqre})
& $J_i^{\boldsymbol\alpha}(\boldsymbol\pi)-U_i(\boldsymbol\pi)$ is
  uniformly bounded
& Entropy bound on $\Delta(\mathcal A_i)$ \\[3pt]

HQRE uniqueness (\S\ref{subsec:hqre})
& Unique HQRE in the strongly monotone regime
& $\alpha_{\min} > L_c$ \\[3pt]

Explicit KL-mirror linear convergence (\S\ref{subsec:hqre})
& Linear convergence of explicit mirror steps in KL geometry
& $\alpha_{\min} > L_c$ plus field Lipschitzness and local quadratic KL geometry \\[3pt]

Zero-temperature limit (\S\ref{subsec:hqre})
& HQRE $\rightarrow$ unregularized equilibria as $\alpha\downarrow 0$
& $\alpha_{\max}\downarrow 0$ \\[3pt]

Bounded Bayesian regret (\S\ref{subsec:hqre})
& Linear KL convergence implies Bayesian regret bounded uniformly
  in~$T$
& $\alpha_{\min} > L_c$ \\[3pt]

Hierarchical scalability (\S\ref{subsec:hier_theory})
& Unique Local--Global HQRE with structure-dependent stability margin
& $\mu_{\mathrm{hier}} > 0$ \\
\bottomrule
\end{tabular}
\end{table}

The essential results in this section are:
Theorem~\ref{thm:hqre_unique_mono} (HQRE uniqueness under strong
monotonicity, plus linear convergence of explicit KL-mirror steps under
additional regularity), and Corollary~\ref{cor:hqre_regret_bounded}
(bounded Bayesian regret from the linear-rate guarantee).
Lemma~\ref{lem:zero_temp_full} (zero-temperature limit connecting HQRE
to unregularized BNE) is useful context but not required for the
algorithm sections.
The hierarchical extension
(Section~\ref{subsec:hier_theory}) can be consulted selectively.
The key methodological advance in this section is the monotonicity
criterion itself: it provides a single verifiable condition that
simultaneously guarantees uniqueness and convergence, replacing the
ad hoc stability checks prevalent in multi-agent LLM practice.

\subsection{HQRE: existence, uniqueness, and convergence}
\label{subsec:hqre}

We introduce an entropy-regularized equilibrium concept that
converts set-valued best responses into
single-valued maps, yielding uniqueness and convergence guarantees.
We regularize each agent's objective by adding an entropy bonus
weighted by a temperature parameter $\alpha_{i,t}$.
Let $\mathcal H(p):=-\sum_a p(a)\log p(a)$ denote Shannon entropy and
fix a heterogeneous temperature schedule
$\boldsymbol\alpha=\{\alpha_{i,t}(h_i^t)\}$.
The entropy-regularized objective is
\begin{equation}
J_i^{\boldsymbol\alpha}(\boldsymbol\pi)
:=\mathbb{E}_{\boldsymbol\pi}\Big[\sum_{t\ge 0}\gamma^t
  \big(R_i(s^t,\mathbf a^t)
  +\alpha_{i,t}(h_i^t)\,\mathcal H(\pi_i(\cdot\mid h_i^t,t))
  \big)\Big].
\label{eq:J_i_def}
\end{equation}
Write $\alpha_{\min}:=\inf_{i,t,h_i^t}\alpha_{i,t}(h_i^t)$ and
$\alpha_{\max}:=\sup_{i,t,h_i^t}\alpha_{i,t}(h_i^t)$.
These extremes bound the range of regularization strength across all
agents, states, and times.

\begin{lemma}[Pointwise regularization gap]
\label{lem:reg_gap_pointwise}
For any joint policy $\boldsymbol\pi\in\Pi$ and any agent~$i$,
\[
0 \ \le\ J_i^{\boldsymbol\alpha}(\boldsymbol\pi)-U_i(\boldsymbol\pi)
\ \le\ \frac{\alpha_{\max}\log|\mathcal A_i|}{1-\gamma}.
\]
Consequently,
\[
0 \ \le\ \sum_{i=1}^N J_i^{\boldsymbol\alpha}(\boldsymbol\pi)
  -W(\boldsymbol\pi)
\ \le\ \frac{\alpha_{\max}}{1-\gamma}\sum_{i=1}^N\log|\mathcal A_i|.
\]
\end{lemma}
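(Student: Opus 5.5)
The plan is to subtract the two objectives term by term. By Eq.~\eqref{eq:J_i_def} and Eq.~\eqref{eq:unreg_return}, both are expectations under the same trajectory law induced by $\boldsymbol\pi$. By (A2), the reward series is absolutely summable, since $|R_i|\le R_{\max}$ and $\gamma<1$. The entropy series has nonnegative terms, so Tonelli's theorem applies to it. Linearity of expectation then gives
\[
J_i^{\boldsymbol\alpha}(\boldsymbol\pi)-U_i(\boldsymbol\pi)
=\mathbb{E}_{\boldsymbol\pi}\Big[\sum_{t\ge0}\gamma^t\,\alpha_{i,t}(h_i^t)\,\mathcal H\big(\pi_i(\cdot\mid h_i^t,t)\big)\Big].
\]
The rest of the argument only needs to bound this single series from both sides.

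For the lower bound, note that each summand is a product of nonnegative factors. The temperatures satisfy $\alpha_{i,t}\ge \alpha_{\min}$, and the convention throughout is that temperatures are nonnegative. Shannon entropy is nonnegative on $\Delta(\mathcal A_i)$. Hence the difference is $\ge 0$.

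For the upper bound, use $\alpha_{i,t}(h_i^t)\le\alpha_{\max}$ together with the standard maximum-entropy bound $\mathcal H(p)\le\log|\mathcal A_i|$ for any $p\in\Delta(\mathcal A_i)$. The latter follows from Jensen's inequality applied to the concave $\log$, or equivalently from $\mathrm{KL}(p\,\|\,\mathrm{Unif})\ge0$; finiteness of $\mathcal A_i$ comes from (A1). These give the pointwise bound $\gamma^t\alpha_{\max}\log|\mathcal A_i|$ on each summand, uniformly over histories. Summing the geometric series gives $\alpha_{\max}\log|\mathcal A_i|/(1-\gamma)$.

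The welfare statement then follows by summing the per-agent inequality over $i\in\mathcal N$ and using $W=\sum_i U_i$.

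The only step needing care, which I expect to be the main (mild) obstacle, is justifying the exchange of expectation and infinite sum over potentially nonstationary, history-dependent policies. I would handle this by appealing to the measurability assumption (A3) and to the product-topology construction of Theorem~\ref{thm:topology}, which make the trajectory measure well defined. With that in place, the dominated/monotone convergence argument above goes through.
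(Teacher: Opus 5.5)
Your proof is correct and matches the paper's argument. Isolate the discounted entropy series $\mathbb{E}_{\boldsymbol\pi}\big[\sum_{t\ge0}\gamma^t\alpha_{i,t}(h_i^t)\,\mathcal H(\pi_i(\cdot\mid h_i^t,t))\big]$, use nonnegativity of $\alpha_{i,t}$ and $\mathcal H$ for the lower bound, use $\alpha_{i,t}\le\alpha_{\max}$ and $\mathcal H\le\log|\mathcal A_i|$ with the geometric series for the upper bound, then sum over agents. Your Tonelli/dominated-convergence justification for splitting the expectation is slightly more careful than the paper, which simply writes the difference ``by definition.''
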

\noindent The bound follows directly from
$\mathcal H(\pi_i(\cdot\mid h_i^t,t))\le \log|\mathcal A_i|$; a proof is deferred to the Appendix.
Given $\boldsymbol\pi$, let
$Q_i^{\boldsymbol\pi}(h_i^t,a_i,t)
:=\mathbb{E}_{\boldsymbol\pi}\big[\sum_{s\ge t}\gamma^{s-t}
R_i(s^s,\mathbf a^s)\mid h_i^t,\,a_i^t=a_i\big]$
denote agent~$i$'s action-value function under $\boldsymbol\pi$.
The logit best response at each information state is defined by
\begin{equation}
\big(\mathcal B_i^{\boldsymbol\alpha}(\boldsymbol\pi)\big)
  (a_i\mid h_i^t,t)
:=\frac{\exp\!\big(Q_i^{\boldsymbol\pi}(h_i^t,a_i,t)
  /\alpha_{i,t}(h_i^t)\big)}
{\sum_{a_i'}\exp\!\big(Q_i^{\boldsymbol\pi}(h_i^t,a_i',t)
  /\alpha_{i,t}(h_i^t)\big)}.
\label{eq:logit_def}
\end{equation}
Let $\mathcal B^{\boldsymbol\alpha}(\boldsymbol\pi)
:=(\mathcal B_1^{\boldsymbol\alpha}(\boldsymbol\pi),\ldots,
\mathcal B_N^{\boldsymbol\alpha}(\boldsymbol\pi))$.
A \emph{heterogeneous quantal response equilibrium (HQRE)} is a fixed
point $\boldsymbol\pi^\star
=\mathcal B^{\boldsymbol\alpha}(\boldsymbol\pi^\star)$.
At an HQRE, every agent plays a softmax distribution over actions
weighted by their expected value, and all agents' value estimates are
mutually consistent.
The temperature controls how sharply each agent concentrates on
high-value actions: lower temperatures yield near-deterministic
play, while higher temperatures maintain exploratory mixing.
The key property is, unlike unregularized best responses, the
logit map is single-valued and continuous, enabling the
uniqueness and convergence results as follows.

\begin{theorem}[Existence of HQRE]\label{thm:hqre_existence}
Under (A1)--(A4) and $\alpha_{\min}>0$, an HQRE exists.
\end{theorem}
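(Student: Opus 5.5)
The plan is to realize an HQRE as a fixed point of the logit map $\mathcal B^{\boldsymbol\alpha}$ on the policy space $\Pi$ via Schauder--Tychonoff, mirroring the BNE existence argument of Theorem~\ref{thm:bne_existence}. By Theorem~\ref{thm:topology}, under (A1)--(A2) the set $\Pi$ is a nonempty, convex, compact, metrizable subset of the locally convex space $\prod_{(i,h_i^t,t)}\mathbb{R}^{\mathcal A_i}$ with the product topology. The policy space is infinite-dimensional because histories grow with $t$. Tychonoff's fixed-point theorem, the locally convex version of Schauder, therefore applies once we show that $\mathcal B^{\boldsymbol\alpha}:\Pi\to\Pi$ is well defined and continuous in the product topology. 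Well-definedness is immediate. Each $Q_i^{\boldsymbol\pi}(h_i^t,a_i,t)$ is bounded by $R_{\max}/(1-\gamma)$ under (A2), and $\alpha_{i,t}(h_i^t)\ge\alpha_{\min}>0$. The softmax in Eq.~\eqref{eq:logit_def} is therefore a genuine, full-support distribution at every information state. Measurability in the sense of (A3) holds coordinatewise, and perfect recall (A4) justifies conditioning on $h_i^t$.

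The key step is continuity. Because the product topology is coordinatewise, it suffices to show that for each fixed $(i,h_i^t,t,a_i)$ the map $\boldsymbol\pi\mapsto Q_i^{\boldsymbol\pi}(h_i^t,a_i,t)$ is continuous. The softmax with temperature bounded below by $\alpha_{\min}$ is Lipschitz in the $Q$-vector, so continuity then passes to $\mathcal B^{\boldsymbol\alpha}$. I would prove continuity of $Q$ by truncation, as in the proof of continuity of $U_i$ in Theorem~\ref{thm:topology}. The tail beyond horizon $t+H$ contributes at most $\gamma^H R_{\max}/(1-\gamma)$, uniformly in $\boldsymbol\pi$. The truncated value is a finite sum over finitely many histories (A1), and each term is a polynomial in finitely many policy coordinates. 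It is therefore continuous, and a uniform limit of continuous functions is continuous.

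The main obstacle is that $Q_i^{\boldsymbol\pi}(h_i^t,\cdot)$ is a conditional expectation given $h_i^t$. It involves Bayes' rule: agent $i$'s posterior over $s^t$ and the other agents' private histories. That posterior is a ratio whose denominator $\Pr_{\boldsymbol\pi}(h_i^t)$ can vanish, and at such points continuity may fail. I would try two fixes, in this order. The first is to note that $\mathcal B^{\boldsymbol\alpha}$ maps into strictly positive policies, and to restrict to the compact convex subset $\Pi_\epsilon\subset\Pi$ of policies with all probabilities at least $\epsilon$. Here $\epsilon$ is a uniform lower bound on logit probabilities, which is available only if $|\mathcal A_i|$ is finite and $Q$ is bounded by $R_{\max}/(1-\gamma)$. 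On $\Pi_\epsilon$, every history reachable under (A1)--(A2) and the model kernels has probability bounded away from zero along policy coordinates. Zero-probability histories would then come only from the environment kernels $\mathcal P,\Omega,\mu_0$, which are policy-independent. For such histories I would fix an arbitrary continuous convention, for example the uniform prior. The second fix, if needed, is to define $Q$ at unreachable histories via the policy-independent conditional kernel, which keeps the map continuous.

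Once continuity on $\Pi_\epsilon$ is established, $\mathcal B^{\boldsymbol\alpha}(\Pi_\epsilon)\subseteq\Pi_\epsilon$ and Tychonoff's theorem yields $\boldsymbol\pi^\star=\mathcal B^{\boldsymbol\alpha}(\boldsymbol\pi^\star)$, which is an HQRE. As a backup, if infinite-horizon continuity proves delicate, I would reuse the truncation scheme from Appendix~\ref{app:bne_existence}. Brouwer gives a fixed point in each finite-horizon game with finitely many histories. Compactness (Theorem~\ref{thm:topology}) gives a convergent subsequence of these fixed points, and the uniform $\gamma^H$ tail bound shows that the limit is a fixed point of the infinite-horizon logit map.
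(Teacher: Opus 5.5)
Your proof is correct. It has the same skeleton as the paper's: a Schauder-type fixed point of $\mathcal B^{\boldsymbol\alpha}$ on a compact convex policy space. The continuity step, however, is genuinely different.

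\textbf{The paper's route.} The paper proves continuity of $\mathcal B^{\boldsymbol\alpha}$ (Lemma~\ref{lem:logit_single_cont}) by composing the softmax with the $\rho_\Pi$-Lipschitz bound on $Q$ from Lemma~\ref{lem:q_cont}. It then applies Schauder on all of $\Pi$. This is short and reuses the quantitative lemma behind the contraction criterion (Theorem~\ref{thm:unique_hqre_joint}). As written, though, it has two loose ends:
\begin{enumerate}
\item It mixes topologies. $\rho_\Pi$ is a supremum over infinitely many information states, so it is strictly finer than the product topology in which $\Pi$ is compact, and $\Pi$ is not $\rho_\Pi$-compact. $\rho_\Pi$-Lipschitzness therefore does not by itself give the continuity Schauder needs.
\item The coupling in Lemma~\ref{lem:q_cont} starts both processes from a common configuration consistent with $h_i^t$. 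It thus does not account for the fact that the Bayesian posterior given $h_i^t$ depends on the opponents' past play.
\end{enumerate}

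\textbf{Your route.} You prove coordinatewise continuity of $Q_i^{\boldsymbol\pi}(h_i^t,a_i,t)$ in the product topology, using finite-horizon truncation together with the uniform $\gamma^H$ tail. You treat the posterior explicitly as a ratio whose denominator is positive on $\Pi_\epsilon$. This handles both points above. The restriction to $\Pi_\epsilon$ is legitimate because $\mathcal B^{\boldsymbol\alpha}(\Pi)\subseteq\Pi_\epsilon$ with $\epsilon_i=|\mathcal A_i|^{-1}\exp\bigl(-2R_{\max}/((1-\gamma)\alpha_{\min})\bigr)$. The cost is working on $\Pi_\epsilon$ rather than $\Pi$, plus fixing a convention at histories unreachable under every policy; the paper's definition of $Q_i^{\boldsymbol\pi}$ is silent there anyway.

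\textbf{Two small points.}
\begin{itemize}
\item In the write-up, present the truncated $Q$ as a ratio of polynomials in finitely many coordinates from the outset, rather than first calling it a polynomial.
\item Drop the backup argument (limits of finite-horizon fixed points). It is unnecessary, and it would run into the same posterior-continuity issue at the limit.
\end{itemize}
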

\noindent The proof applies Schauder's fixed-point theorem using
compactness of $\Pi$ and continuity of
$\mathcal B^{\boldsymbol\alpha}$; details are deferred to
Appendix~\ref{app:hqre_existence}.
Existence guarantees that HQRE is a meaningful solution concept; the
next theorem addresses uniqueness.
The following result shows that if the temperature is large enough to
dominate the worst-case cross-agent Q-value sensitivity $C_Q$, the
logit map is a contraction and the HQRE is unique.
The threshold scales with~$N$ because this criterion treats all
agent interactions uniformly. Formally:

\begin{theorem}[Contraction criterion]\label{thm:unique_hqre_joint}
For any $\boldsymbol\pi,\tilde{\boldsymbol\pi}\in\Pi$,
\[
\rho_{\Pi}\big(\mathcal B^{\boldsymbol\alpha}(\boldsymbol\pi),
  \mathcal B^{\boldsymbol\alpha}(\tilde{\boldsymbol\pi})\big)
\le \frac{N C_Q}{2\alpha_{\min}}\,
\rho_{\Pi}(\boldsymbol\pi,\tilde{\boldsymbol\pi}),
\]
where $C_Q\le\frac{(2-\gamma)R_{\max}}{(1-\gamma)^2}$ bounds the
sensitivity of agent~$i$'s Q-values to opponents' policy changes in
TV distance (see Lemma~\ref{lem:q_cont} for the formal statement).
Hence, if $\alpha_{\min}>\frac{N C_Q}{2}$, the HQRE is unique and the
synchronous iteration
$\boldsymbol\pi^{k+1}=\mathcal B^{\boldsymbol\alpha}(\boldsymbol\pi^k)$
converges geometrically.
\end{theorem}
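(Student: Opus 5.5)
The argument is local: I will bound each agent's softmax change at a fixed information state, then aggregate over agents. Fix an information state $(h^t,t)$ and an agent $i$. The logit map $\mathcal B_i^{\boldsymbol\alpha}(\boldsymbol\pi)(\cdot\mid h_i^t,t)$ is the softmax of the vector $Q_i^{\boldsymbol\pi}(h_i^t,\cdot,t)/\alpha_{i,t}(h_i^t)$.

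The first ingredient is the standard Lipschitz property of softmax from $\ell_\infty$ logits to total variation. For logits $x,\tilde x$, one has $\mathrm{TV}(\mathrm{softmax}(x),\mathrm{softmax}(\tilde x))\le \tfrac12\|x-\tilde x\|_\infty$. The constant $\tfrac12$ is where the $2$ in the denominator comes from. I will obtain it by interpolating $x_\lambda=\tilde x+\lambda(x-\tilde x)$ and noting that the Jacobian of softmax is $\mathrm{diag}(p)-pp^\top$. Its $\ell_\infty\to\ell_1$ norm is $\sup_{\|v\|_\infty\le1}\sum_a p_a|v_a-\bar v_p|\le 1$, which gives $\|\cdot\|_1\le\|x-\tilde x\|_\infty$, i.e.\ TV $\le\tfrac12\|x-\tilde x\|_\infty$.

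Dividing by the temperature then gives
\[
\mathrm{TV}\big(\mathcal B_i(\boldsymbol\pi),\mathcal B_i(\tilde{\boldsymbol\pi})\big)\le \frac{1}{2\alpha_{\min}}\sup_{a_i}\big|Q_i^{\boldsymbol\pi}(h_i^t,a_i,t)-Q_i^{\tilde{\boldsymbol\pi}}(h_i^t,a_i,t)\big|.
\]
I will invoke Lemma~\ref{lem:q_cont}, the Q-continuity lemma the statement refers to, to bound the right-hand supremum by $C_Q\,\rho_\Pi(\boldsymbol\pi,\tilde{\boldsymbol\pi})$. The stated value $C_Q\le (2-\gamma)R_{\max}/(1-\gamma)^2$ should come out of that lemma via a simulation-lemma argument: the immediate reward changes by at most $2R_{\max}\rho_\Pi$ in TV, and the continuation is controlled by occupancy sensitivity with $C_{\mathrm{occ}}=2\gamma/(1-\gamma)$ times $R_{\max}/(1-\gamma)$. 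Summing these gives a constant of exactly that order. I will take it as given.

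Next I aggregate over agents. The joint logit response is a product measure $\bigotimes_i\mathcal B_i$, and TV is subadditive over products: $\mathrm{TV}(\otimes p_i,\otimes q_i)\le\sum_i\mathrm{TV}(p_i,q_i)$, proved by a hybrid or coupling argument. This yields $N C_Q/(2\alpha_{\min})\cdot\rho_\Pi$ at each $(h^t,t)$, and taking the supremum over $(h^t,t)$ gives the displayed inequality. When $\alpha_{\min}>NC_Q/2$, $\mathcal B^{\boldsymbol\alpha}$ is a strict contraction in $\rho_\Pi$.

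For uniqueness and geometric convergence, I need $(\Pi,\rho_\Pi)$ to be complete. It is a sup-metric over finitely valued simplex-valued functions by (A1), so it is a closed subset of a bounded-function space, and Banach's fixed-point theorem gives a unique fixed point. The iterates satisfy $\rho_\Pi(\boldsymbol\pi^k,\boldsymbol\pi^\star)\le (NC_Q/2\alpha_{\min})^k\rho_\Pi(\boldsymbol\pi^0,\boldsymbol\pi^\star)$. Existence is also consistent with Theorem~\ref{thm:hqre_existence}.

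The main obstacle is the Q-sensitivity step. Under (A5), $Q_i$ conditions on agent $i$'s private history $h_i^t$, so changing the opponents' policies also changes the posterior over $(s^t,h_{-i}^t)$ given $h_i^t$, not just the forward dynamics. My plan is to treat $\rho_\Pi$ as a supremum over joint histories, so the forward perturbation is controlled uniformly. I will either absorb the belief shift into the constant in Lemma~\ref{lem:q_cont}, or note that the lemma is stated with conditioning on the joint history and then averaged, which preserves the sup bound. If the posterior shift cannot be controlled, the fallback is to define $Q_i$ conditional on the full $h^t$, where the simulation lemma applies cleanly.
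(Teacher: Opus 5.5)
Your chain of steps is exactly the paper's: the $\tfrac{1}{2\alpha}$ softmax bound via the Jacobian $\mathrm{Diag}(r)-rr^\top$ (Lemma~\ref{lem:logit_lip}), Lemma~\ref{lem:q_cont} for the $Q$-gap, product-TV subadditivity (Lemma~\ref{lem:product_tv}), a supremum over $(h^t,t)$, and Banach. Your completeness remark is also fine, since $\max_i\sup_{(h_i^t,t)}\mathrm{TV}(\pi_i,\pi_i')\le\rho_\Pi(\boldsymbol\pi,\boldsymbol\pi')\le\sum_i\sup_{(h_i^t,t)}\mathrm{TV}(\pi_i,\pi_i')$. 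A minor correction on the constant: the coupling argument yields $C_Q=2R_{\max}/(1-\gamma)^2$. That is what Lemma~\ref{lem:q_cont} actually states, and it is larger than the quoted $(2-\gamma)R_{\max}/(1-\gamma)^2$. Your sum $2R_{\max}+\frac{2\gamma}{1-\gamma}\cdot\frac{R_{\max}}{1-\gamma}$ does not give that value either, so the inequality is only available with the lemma's constant.

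The serious issue is the obstacle you flagged. It is a genuine gap, and your remedies do not close it. The paper's proof of Lemma~\ref{lem:q_cont} has the same hole: it couples two trajectories started from ``the same time-$t$ joint configuration consistent with $h_i^t$''. Write $x=(s^t,h_{-i}^t)$, let $\beta^{\boldsymbol\pi}(x\mid h_i^t)$ be agent $i$'s posterior, and let $\bar Q^{\boldsymbol\pi}(x,a_i)$ be the value from the full configuration. Then
\[
Q_i^{\boldsymbol\pi}(h_i^t,a_i,t)-Q_i^{\tilde{\boldsymbol\pi}}(h_i^t,a_i,t)
=\sum_{x}\beta^{\boldsymbol\pi}(x\mid h_i^t)\,\big[\bar Q^{\boldsymbol\pi}-\bar Q^{\tilde{\boldsymbol\pi}}\big](x,a_i)
+\sum_{x}\big[\beta^{\boldsymbol\pi}-\beta^{\tilde{\boldsymbol\pi}}\big](x\mid h_i^t)\,\bar Q^{\tilde{\boldsymbol\pi}}(x,a_i).
\]
The coupling bounds only the first sum. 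Averaging does not preserve the sup bound, because the averaging weights differ between the two policies. The second sum cannot be absorbed into any constant either: $\beta^{\boldsymbol\pi}$ is a ratio of likelihoods containing opponents' action probabilities, and these can change by a constant factor while their TV changes by only $\delta$.

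Concretely, let agent~2 privately observe a uniform type $\theta\in\{1,2\}$. At $t=0$ it publicly plays $b$ with probability $\delta$ or $2\delta$ (for $\theta=1,2$) under $\boldsymbol\pi$, and $2\delta$ or $\delta$ under $\tilde{\boldsymbol\pi}$, with all else equal. At $t=1$, agent~1 receives reward $1$ iff $a_1^1=\theta$. Then $\rho_\Pi(\boldsymbol\pi,\tilde{\boldsymbol\pi})=\delta$, yet after $y^0=b$ one has $Q_1^{\boldsymbol\pi}(h_1^1,\cdot,1)=(\tfrac{1}{3},\tfrac{2}{3})$ and $Q_1^{\tilde{\boldsymbol\pi}}(h_1^1,\cdot,1)=(\tfrac{2}{3},\tfrac{1}{3})$. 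The two logit responses therefore differ in TV by $\tanh(1/(6\alpha))$ for every $\delta$, and letting $\delta\to 0$ violates the displayed inequality for every $\alpha_{\min}$.

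Your fallback of conditioning $Q_i$ on the full $h^t$ makes $\mathcal B_i$ depend on $h_{-i}^t$. It then no longer maps into $\Pi_i$, and its fixed points are not HQRE in the sense of Eq.~\eqref{eq:logit_def}. A correct version needs an extra hypothesis that makes $\beta^{\boldsymbol\pi}(\cdot\mid h_i^t)$ policy-independent. Examples are one-shot stages, where the posterior is the prior conditional, or information structures in which $h_i^t$ determines $(s^t,h_{-i}^t)$. Under such a hypothesis your argument, and the paper's, goes through verbatim.
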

\noindent Proof is deferred to Appendix~\ref{app:hqre_contraction}.
Theorem~\ref{thm:hqre_unique_mono} below provides a sharper criterion
that is used in all subsequent results.
Theorem~\ref{thm:unique_hqre_joint} provides a clean sufficient
condition but can be loose, because the contraction condition treats all
cross-agent interactions uniformly (hence the factor of $N$).
We now present a sharper monotonicity criterion that accounts for the
actual strength of coupling between agents.
In loosely coupled games, monotonicity imposes a much weaker requirement
on the temperature.

\paragraph{Weighted KL geometry.}
Intuitively, we weight each information state by how often it is visited
under the target policy.
This means that convergence guarantees focus on states that matter for
performance, rather than worst-case states that may never be reached.
Let $\{w_i(h_i^t,t)\}$ be positive weights with
$\sum_{(h_i^t,t)} w_i(h_i^t,t)=1$ and define the weighted block norm
\[
\|\boldsymbol\pi-\boldsymbol\pi'\|_{1,2;w}^2
:=\sum_{i=1}^N\sum_{(h_i^t,t)} w_i(h_i^t,t)\,
\big\|\pi_i(\cdot\mid h_i^t,t)-\pi_i'(\cdot\mid h_i^t,t)\big\|_1^2,
\]
and the weighted KL--Bregman divergence $D_\Psi$ as follows:
\begin{align}
\Psi(\boldsymbol\pi)
&:= \sum_{i=1}^N\sum_{(h_i^t,t)} w_i(h_i^t,t)\,\alpha_{i,t}(h_i^t)
\sum_{a_i\in\mathcal A_i}\pi_i(a_i\mid h_i^t,t)
  \log \pi_i(a_i\mid h_i^t,t),
\label{eq:psi_def}\\
D_\Psi(\boldsymbol\pi,\boldsymbol\pi')
&:= \Psi(\boldsymbol\pi)-\Psi(\boldsymbol\pi')
-\Big\langle \nabla\Psi(\boldsymbol\pi'),\,
  \boldsymbol\pi-\boldsymbol\pi' \Big\rangle \nonumber\\
&= \sum_{i=1}^N\sum_{(h_i^t,t)} w_i(h_i^t,t)\,\alpha_{i,t}(h_i^t)\,
\mathrm{KL}\!\Big(\pi_i(\cdot\mid h_i^t,t)
  \ \Big\|\ \pi_i'(\cdot\mid h_i^t,t)\Big).
\label{eq:weighted_bregman}
\end{align}
In words, $D_\Psi$ is a temperature- and occupancy-weighted KL divergence between joint policies.
We define $L_c$ as the tightest constant such that the
unregularized operator is $L_c$-hypomonotone in this geometry, measuring the worst-case cross-agent coupling:
\begin{equation}
\label{eq:Lc_def_main}
-\sum_{i=1}^N\big\langle
  \nabla_{\pi_i}U_i(\boldsymbol\pi)
  -\nabla_{\pi_i}U_i(\boldsymbol\pi'),\,
  \pi_i-\pi_i'\big\rangle
\ \ge\ -L_c\,\|\boldsymbol\pi-\boldsymbol\pi'\|_{1,2;w}^2,
\qquad \forall \boldsymbol\pi,\boldsymbol\pi'\in\Pi.
\end{equation}
When $L_c = 0$ the game is fully cooperative; larger $L_c$ indicates stronger strategic coupling.
A computable upper bound is in Appendix~\ref{app:hqre_mono};
Section~\ref{subsec:toy_2x2} illustrates the computation for a
concrete $2\times 2$ game.
Then, we have
\begin{equation}
\label{eq:F_reg_def}
F_{\mathrm{reg}}(\boldsymbol\pi)
:=\big(-\nabla_{\pi_1}J_1^{\boldsymbol\alpha}(\boldsymbol\pi),\ldots,
-\nabla_{\pi_N}J_N^{\boldsymbol\alpha}(\boldsymbol\pi)\big).
\end{equation}

\begin{theorem}[Monotonicity criterion: uniqueness and linear convergence]
\label{thm:hqre_unique_mono}

Let $\mu := \alpha_{\min}-L_c$.
For all $\boldsymbol\pi,\boldsymbol\pi'\in\Pi$,
\[
\big\langle F_{\mathrm{reg}}(\boldsymbol\pi)
  -F_{\mathrm{reg}}(\boldsymbol\pi'),\,
  \boldsymbol\pi-\boldsymbol\pi' \big\rangle
\ge \mu\,
\|\boldsymbol\pi-\boldsymbol\pi'\|_{1,2;w}^2.
\]
If $\mu>0$, the HQRE is unique.
Assume in addition that:
\begin{enumerate}[leftmargin=2em,topsep=2pt,itemsep=1pt]
\item[(i)] $F_{\mathrm{reg}}$ is $L_F$-Lipschitz in
$\|\cdot\|_{1,2;w}$;
\item[(ii)] the weighted KL geometry admits a local quadratic upper bound
on a neighborhood containing $\boldsymbol\pi^\star$ and the iterates,
namely
\[
\frac{\alpha_{\min}}{2}\|\boldsymbol\pi-\boldsymbol\pi'\|_{1,2;w}^2
\le D_\Psi(\boldsymbol\pi,\boldsymbol\pi')
\le \frac{L_\Psi}{2}\|\boldsymbol\pi-\boldsymbol\pi'\|_{1,2;w}^2.
\]
\end{enumerate}
A sufficient condition for (ii) is that the iterate region stays in the
$\nu_{\min}$-interior of each local simplex, in which case one may take
$L_\Psi = \alpha_{\max}/\nu_{\min}$; see
Appendix~\ref{app:linear_rate_proof}.

Consider the explicit mirror step
\begin{equation}
\label{eq:hqre_explicit_mirror_step}
\boldsymbol\pi^{k+1}
=\arg\min_{\tilde{\boldsymbol\pi}\in\Pi}
\Big\{\langle F_{\mathrm{reg}}(\boldsymbol\pi^k),
\tilde{\boldsymbol\pi}-\boldsymbol\pi^k\rangle
+\tfrac{1}{\eta}D_\Psi(\tilde{\boldsymbol\pi},\boldsymbol\pi^k)\Big\}.
\end{equation}
If
\[
0<\eta\le
\min\Big\{\frac{\mu\,\alpha_{\min}^2}{L_F^2L_\Psi},\,
\frac{\alpha_{\min}}{L_F}\Big\},
\]
then
\begin{equation}
\label{eq:kl_mirror_rate}
D_\Psi(\boldsymbol\pi^\star,\boldsymbol\pi^{k+1})
\le \rho\,D_\Psi(\boldsymbol\pi^\star,\boldsymbol\pi^k),
\qquad
\rho:=1-\eta\,\frac{\mu}{L_\Psi}\in(0,1).
\end{equation}
\end{theorem}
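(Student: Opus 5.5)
The proof has three parts: strong monotonicity, uniqueness, and the linear rate.

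\emph{Strong monotonicity.} Split $F_{\mathrm{reg}}=F_U+F_H$. Here $F_U=(-\nabla_{\pi_i}U_i)_i$ is the unregularized field. $F_H$ is the entropy part, which in the paper's geometry is $\nabla\Psi$ with $\Psi$ from Eq.~\eqref{eq:psi_def}. This identification requires the agent's entropy term to be linear-in-occupancy at weight $w_i$, so I will treat $w_i$ as the occupancy weights under which the per-agent gradients are taken and cross-agent occupancy effects are absorbed into $F_U$. Then
\[
\langle F_H(\boldsymbol\pi)-F_H(\boldsymbol\pi'),\boldsymbol\pi-\boldsymbol\pi'\rangle
=D_\Psi(\boldsymbol\pi,\boldsymbol\pi')+D_\Psi(\boldsymbol\pi',\boldsymbol\pi).
\]
Each KL term satisfies Pinsker's inequality, $\mathrm{KL}(p\|q)\ge\frac12\|p-q\|_1^2$. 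Summing the two weighted terms gives at least $\alpha_{\min}\|\boldsymbol\pi-\boldsymbol\pi'\|_{1,2;w}^2$. The $F_U$ part is bounded below by $-L_c\|\cdot\|^2_{1,2;w}$ directly from the definition \eqref{eq:Lc_def_main}. Adding the two bounds yields the constant $\mu=\alpha_{\min}-L_c$.

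\emph{Uniqueness.} An HQRE is a fixed point of the logit map. Since the entropy makes each local maximization strictly concave with interior optimum, such a fixed point is exactly a solution of the variational inequality $\langle F_{\mathrm{reg}}(\boldsymbol\pi^\star),\boldsymbol\pi-\boldsymbol\pi^\star\rangle\ge0$ for all $\boldsymbol\pi\in\Pi$. Given two solutions, apply each VI at the other and add. This gives $\langle F_{\mathrm{reg}}(\boldsymbol\pi^\star)-F_{\mathrm{reg}}(\boldsymbol\pi^{\star\prime}),\boldsymbol\pi^\star-\boldsymbol\pi^{\star\prime}\rangle\le0$. Strong monotonicity with $\mu>0$ then forces $\boldsymbol\pi^\star=\boldsymbol\pi^{\star\prime}$. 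Existence is supplied by Theorem~\ref{thm:hqre_existence}.

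\emph{Linear rate.} This is the standard mirror-prox-free one-step analysis.
\begin{enumerate}[leftmargin=2em]
\item The optimality condition of \eqref{eq:hqre_explicit_mirror_step}, combined with the three-point identity, gives for any $\boldsymbol\pi$:
\[
\eta\langle F_{\mathrm{reg}}(\boldsymbol\pi^k),\boldsymbol\pi^{k+1}-\boldsymbol\pi\rangle
\le D_\Psi(\boldsymbol\pi,\boldsymbol\pi^k)-D_\Psi(\boldsymbol\pi,\boldsymbol\pi^{k+1})-D_\Psi(\boldsymbol\pi^{k+1},\boldsymbol\pi^k).
\]
Take $\boldsymbol\pi=\boldsymbol\pi^\star$.
\item Write $F(\boldsymbol\pi^k)=F(\boldsymbol\pi^{k+1})+[F(\boldsymbol\pi^k)-F(\boldsymbol\pi^{k+1})]$. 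Use the VI at $\boldsymbol\pi^\star$ together with strong monotonicity to get $\langle F(\boldsymbol\pi^{k+1}),\boldsymbol\pi^{k+1}-\boldsymbol\pi^\star\rangle\ge\mu\|\boldsymbol\pi^{k+1}-\boldsymbol\pi^\star\|^2$.
\item Bound the cross term using Lipschitzness (i) and Young's inequality by $\eta L_F\|\boldsymbol\pi^k-\boldsymbol\pi^{k+1}\|\,\|\boldsymbol\pi^{k+1}-\boldsymbol\pi^\star\|$. Absorb the part in $\|\boldsymbol\pi^k-\boldsymbol\pi^{k+1}\|^2$ into $D_\Psi(\boldsymbol\pi^{k+1},\boldsymbol\pi^k)\ge\frac{\alpha_{\min}}2\|\cdot\|^2$, which is the lower half of (ii). This absorption is where the step-size conditions $\eta\le\alpha_{\min}/L_F$ and $\eta\le\mu\alpha_{\min}^2/(L_F^2L_\Psi)$ enter: the leftover multiple of $\|\boldsymbol\pi^{k+1}-\boldsymbol\pi^\star\|^2$ must be at most half of $\eta\mu$.
\item Convert the remaining $\eta\mu'\|\boldsymbol\pi^{k+1}-\boldsymbol\pi^\star\|^2$ into $\eta\frac{\mu}{L_\Psi}D_\Psi(\boldsymbol\pi^\star,\boldsymbol\pi^{k+1})$ using the upper half of (ii). Rearranging gives $(1+c\eta\mu/L_\Psi)D_{k+1}\le D_k$, where $D_k:=D_\Psi(\boldsymbol\pi^\star,\boldsymbol\pi^k)$. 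Since $(1+x)^{-1}\le 1-x/2$ for small $x$, or after re-tuning constants, this gives $D_{k+1}\le(1-\eta\mu/L_\Psi)D_k$.
\end{enumerate}

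\textbf{Main obstacle.} The hardest step is the constant bookkeeping in steps 3 and 4, so that the contraction factor comes out exactly as $\rho=1-\eta\mu/L_\Psi$ rather than with a lost factor of $2$. I would first try splitting Young's inequality asymmetrically, using weight $\alpha_{\min}/\eta$ on the increment term, so that the $\|\boldsymbol\pi^k-\boldsymbol\pi^{k+1}\|^2$ term cancels exactly. The leftover would then be $\eta^2L_F^2/\alpha_{\min}\cdot\|\boldsymbol\pi^{k+1}-\boldsymbol\pi^\star\|^2$, which the step-size bound keeps below the strong-monotonicity gain.

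A second subtlety concerns the identification $F_H=\nabla\Psi$ in the first part. If the occupancy dependence of the entropy term does not fit the fixed weights $w$, I would fold the discrepancy into $L_c$. This is consistent with $L_c$ being defined as the tightest hypomonotonicity constant.

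For the sufficient condition for (ii): on the $\nu_{\min}$-interior, the Hessian of $p\log p$ is $\mathrm{diag}(1/p)\preceq\nu_{\min}^{-1}I$. Combined with $\|\cdot\|_2\le\|\cdot\|_1$, this gives $L_\Psi=\alpha_{\max}/\nu_{\min}$.
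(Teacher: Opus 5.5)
Your strong-monotonicity and uniqueness arguments match the paper's: two-sided entropy curvature plus the definition of $L_c$, then the VI characterization plus strong monotonicity. Your caveat about identifying the entropy part of $F_{\mathrm{reg}}$ with $\nabla\Psi$ at fixed weights $w$ applies equally to the paper's Step 2, which makes that identification without comment. Note, however, that ``folding the discrepancy into $L_c$'' would change the definition \eqref{eq:Lc_def_main}, which involves only the $U_i$.

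The linear-rate part has a genuine gap in its last step. You apply strong monotonicity between $\boldsymbol\pi^{k+1}$ and $\boldsymbol\pi^\star$, so the upper bound in (ii) has to be applied to $D_{k+1}:=D_\Psi(\boldsymbol\pi^\star,\boldsymbol\pi^{k+1})$. What you actually obtain is $(1+x)D_{k+1}\le D_k$ with $x=\eta\mu/L_\Psi$. But $(1+x)^{-1}\ge 1-x$ for every $x\ge 0$, since $(1+x)(1-x)\le 1$, so this is strictly weaker than the claimed $\rho=1-x$. Your inequality $(1+x)^{-1}\le 1-x/2$ only yields $1-\eta\mu/(2L_\Psi)$, which is exactly the lost factor of $2$ you worried about. ``Re-tuning constants'' is not available, because the statement fixes both the step-size range and $\rho$.

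A sharper Young split does not rescue this route. Optimizing over $b=\|\boldsymbol\pi^{k+1}-\boldsymbol\pi^k\|_{1,2;w}$ gives at best $\bigl(1+2x-\eta^2L_F^2/(\alpha_{\min}L_\Psi)\bigr)D_{k+1}\le D_k$. This implies $D_{k+1}\le(1-x)D_k$ only under an extra smallness condition, for instance $x\le(1-r)/(2-r)$ with $r=\alpha_{\min}/L_\Psi$. The stated range of $\eta$ allows $x$ up to $(\mu r/L_F)^2$, so the condition fails when $r$ and $\mu/L_F$ are both close to $1$.

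The paper avoids this by anchoring strong monotonicity at $\boldsymbol\pi^k$. After the three-point inequality and the VI at $\boldsymbol\pi^\star$, it writes $\langle F_{\mathrm{reg}}(\boldsymbol\pi^k),\boldsymbol\pi^{k+1}-\boldsymbol\pi^\star\rangle\ge\langle F_{\mathrm{reg}}(\boldsymbol\pi^k)-F_{\mathrm{reg}}(\boldsymbol\pi^\star),\boldsymbol\pi^k-\boldsymbol\pi^\star\rangle+\langle F_{\mathrm{reg}}(\boldsymbol\pi^k)-F_{\mathrm{reg}}(\boldsymbol\pi^\star),\boldsymbol\pi^{k+1}-\boldsymbol\pi^k\rangle$. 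With $a=\|\boldsymbol\pi^k-\boldsymbol\pi^\star\|_{1,2;w}$, this is at least $\mu a^2-L_Fab$. Maximizing $\eta L_Fab-\tfrac{\alpha_{\min}}{2}b^2$ over $b$ gives $\tfrac{\eta^2L_F^2}{2\alpha_{\min}}a^2$. This is at most $\tfrac{\eta\mu}{2}a^2$, using $\eta\le\mu\alpha_{\min}^2/(L_F^2L_\Psi)$ and $\alpha_{\min}\le L_\Psi$. The quadratic upper bound is then applied to $D_k$ itself, $a^2\ge 2D_k/L_\Psi$. The contraction appears in subtractive form, $D_{k+1}\le D_k-\tfrac{\eta\mu}{L_\Psi}D_k$, which gives exactly $\rho=1-\eta\mu/L_\Psi$.
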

\noindent The uniqueness claim uses only strong monotonicity of
$F_{\mathrm{reg}}$. The linear-rate claim adds the standard
field-Lipschitz and local-quadratic assumptions needed for explicit
mirror descent in KL geometry. The proof is deferred to
Appendix~\ref{app:hqre_mono} and Appendix~\ref{app:linear_rate_proof}.

\noindent\emph{Remark.}
The additional assumptions for the linear-rate part are not implied by
monotonicity alone. In particular, the upper quadratic control of
$D_\Psi$ is local for entropy / KL mirror maps and requires an iterate
region that stays away from the boundary of the simplex. Practically,
our DICE-FT implementation enforces this regime through positive
temperatures, KL trust regions, and step rejection; see
Remark~\ref{rem:kl_quadratic}.

\begin{lemma}[Zero-temperature limit]\label{lem:zero_temp_full}
Let $\boldsymbol\alpha^{(n)}$ be temperature schedules\footnote{The superscript $(n)$ indexes the sequence of temperature schedules, not exponentiation.} with
$\alpha_{\max}^{(n)}
:=\sup_{i,t,h_i^t}\alpha_{i,t}^{(n)}(h_i^t)\downarrow 0$, and let
$\boldsymbol\pi^{(n)}$ be an HQRE under $\boldsymbol\alpha^{(n)}$.
Any accumulation point of $\{\boldsymbol\pi^{(n)}\}$ is a BNE of the
unregularized game.
\end{lemma}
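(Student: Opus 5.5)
The argument is a standard upper-hemicontinuity passage to the limit, using compactness and continuity from Theorem~\ref{thm:topology} together with the uniform regularization gap of Lemma~\ref{lem:reg_gap_pointwise}.

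Let $\bar{\boldsymbol\pi}$ be an accumulation point. By Theorem~\ref{thm:topology}, $\Pi$ is compact and metrizable, so I pass to a subsequence (not relabeled) with $\boldsymbol\pi^{(n)}\to\bar{\boldsymbol\pi}$ in the product topology.

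The first step is to recast the HQRE fixed-point condition variationally. At an HQRE, each $\pi_i^{(n)}$ is the soft best response to $\boldsymbol\pi^{(n)}_{-i}$. By the standard soft Bellman/Gibbs variational principle applied at every information state $(h_i^t,t)$, this is equivalent to
\[
\pi_i^{(n)}\in\arg\max_{\pi_i\in\Pi_i} J_i^{\boldsymbol\alpha^{(n)}}\big(\pi_i,\boldsymbol\pi^{(n)}_{-i}\big).
\]
To justify this I would check that the logit map in Eq.~\eqref{eq:logit_def}, with $Q$ taken as the soft action-value, satisfies the soft optimality equations of agent $i$'s single-agent POMDP against fixed opponents. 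That POMDP is induced on $h_i^t$ thanks to perfect recall (A4). If the paper's $Q_i^{\boldsymbol\pi}$ is literally the unregularized $Q$, I would instead prove $\varepsilon$-optimality directly: the Gibbs distribution maximizes $\langle Q,p\rangle+\alpha\mathcal H(p)$ at each state, so it loses at most $\alpha_{\max}\log|\mathcal A_i|$ per step relative to the greedy action. A performance-difference summation then gives $U_i(\pi_i,\boldsymbol\pi_{-i}^{(n)})-U_i(\boldsymbol\pi^{(n)})\le \alpha^{(n)}_{\max}\log|\mathcal A_i|/(1-\gamma)$.

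In the main route, for any deviation $\pi_i\in\Pi_i$ I combine optimality in $J$ with Lemma~\ref{lem:reg_gap_pointwise}:
\[
U_i(\pi_i,\boldsymbol\pi^{(n)}_{-i})\le J_i^{\boldsymbol\alpha^{(n)}}(\pi_i,\boldsymbol\pi^{(n)}_{-i})\le J_i^{\boldsymbol\alpha^{(n)}}(\boldsymbol\pi^{(n)})\le U_i(\boldsymbol\pi^{(n)})+\frac{\alpha^{(n)}_{\max}\log|\mathcal A_i|}{1-\gamma}.
\]
So in both routes $\boldsymbol\pi^{(n)}$ is an $\varepsilon_n$-BNE with $\varepsilon_n=\alpha^{(n)}_{\max}\log|\mathcal A_i|/(1-\gamma)\to0$.

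Next I take limits. The product topology makes $(\pi_i,\boldsymbol\pi^{(n)}_{-i})\to(\pi_i,\bar{\boldsymbol\pi}_{-i})$, and continuity of $U_i$ (Theorem~\ref{thm:topology}) gives
\[
U_i(\pi_i,\bar{\boldsymbol\pi}_{-i})\le U_i(\bar{\boldsymbol\pi})\qquad\text{for every }\pi_i\in\Pi_i .
\]
Hence $\bar\pi_i$ is a best response for each $i$, and $\bar{\boldsymbol\pi}$ is a BNE.

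The main obstacle is the first step: rigorously identifying the HQRE fixed point with a maximizer of $J_i$, or with an $\varepsilon$-optimal policy, in the infinite-horizon history-dependent setting. I would handle it through agent $i$'s induced single-agent decision problem on histories against fixed $\boldsymbol\pi_{-i}$. I would apply the per-state Gibbs inequality $\max_p\{\langle Q,p\rangle+\alpha\mathcal H(p)\}=\alpha\log\sum e^{Q/\alpha}$ and telescope using the discounted performance-difference lemma. The tail is controlled by $\gamma^t$ and bounded rewards (A2), so the infinite-horizon sums converge. Joint continuity of $U_i$ is already available from Theorem~\ref{thm:topology}, so the limit step is routine.
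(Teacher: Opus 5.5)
Your proof is correct, but only your fallback route applies to the paper's definition, and that route is exactly the paper's proof. In Eq.~\eqref{eq:logit_def} the logit map uses the \emph{unregularized} $Q_i^{\boldsymbol\pi}$. For horizons longer than one step, an HQRE is therefore generally not a maximizer of $J_i^{\boldsymbol\alpha}(\cdot,\boldsymbol\pi_{-i})$. The maximizer of $J_i^{\boldsymbol\alpha}$ is a softmax of the soft action-values, which also credit future entropy bonuses. Those differ from $Q_i^{\boldsymbol\pi}$ whenever different actions lead to continuations with different entropy. So the step $J_i^{\boldsymbol\alpha^{(n)}}(\pi_i,\boldsymbol\pi^{(n)}_{-i})\le J_i^{\boldsymbol\alpha^{(n)}}(\boldsymbol\pi^{(n)})$ in your main route is not available as stated. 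Proposition~\ref{prop:elbo_equiv} equates the two notions only by reading the $Q$ in the logit map as the soft action-value of the fixed-opponent problem, so citing it does not close the gap.

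Your fallback argument is the paper's proof:
\begin{enumerate}
\item The per-state softmax gap $\langle q-p,x\rangle\le\alpha\log d$ (Lemma~\ref{lem:lse_gap}) bounds the one-step loss at every information state.
\item Summing through the one-player performance-difference identity (Lemma~\ref{lem:perf_diff}) shows that $\boldsymbol\pi^{(n)}$ is an $\alpha^{(n)}_{\max}\log|\mathcal A_i|/(1-\gamma)$-BNE.
\item Continuity of $U_i$ (Theorem~\ref{thm:topology}) passes this to the limit.
\end{enumerate}
You should present the fallback as the primary argument. Your $J_i$-optimality route combined with Lemma~\ref{lem:reg_gap_pointwise} is a valid, slightly shorter alternative only if HQRE is redefined as a Nash equilibrium of the regularized game.
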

\noindent Proof is deferred to Appendix~\ref{app:zero_temp}.

\begin{corollary}[Bounded Bayesian regret under linear KL convergence]
\label{cor:hqre_regret_bounded}

Fix a unique HQRE $\boldsymbol\pi^\star$ under $\boldsymbol\alpha$ in
the regime $\alpha_{\min}>L_c$, and define (note that
$\bar\Delta_k$ is now measured relative to the unique HQRE rather than
an arbitrary BNE as in Corollary~\ref{cor:regret})
\[
\bar\Delta_k
:=\mathbb{E}_{(h^t,t)\sim
  \mathsf H_{\mathrm{disc}}^{\boldsymbol\pi^\star}}
\Big[\mathrm{TV}\big(\boldsymbol\pi^k(\cdot\mid h^t,t),
  \boldsymbol\pi^\star(\cdot\mid h^t,t)\big)\Big].
\]
Suppose an algorithm produces iterates
$\{\boldsymbol\pi^k\}_{k\ge 1}$ such that
$\mathbb{E}[D_\Psi(\boldsymbol\pi^\star,\boldsymbol\pi^k)]
\le C_0\rho^k
$ for some $C_0>0$ and $\rho\in(0,1)$.
Then there exists a constant $C_{\mathrm{Pins}}$ (depending only on the
block weights and action spaces) such that
$\mathbb{E}[\bar\Delta_k]
\le C_{\mathrm{Pins}}\sqrt{C_0}\,\rho^{k/2}$, and the Bayesian regret
relative to $\boldsymbol\pi^\star$ is bounded uniformly in~$T$:
\[
\mathrm{Regret}(T)
\le \frac{2 N R_{\max} C_{\mathrm{Pins}}\sqrt{C_0}}{(1-\gamma)^2}
  \sum_{k=1}^T \rho^{k/2}
\le \frac{2 N R_{\max} C_{\mathrm{Pins}}\sqrt{C_0}}{(1-\gamma)^2}
  \cdot \frac{\sqrt{\rho}}{1-\sqrt{\rho}}.
\]
\end{corollary}
\noindent The Pinsker-type control and the bound follow from
Corollary~\ref{cor:regret} and are deferred to the Appendix.
\footnote{The constant $C_{\mathrm{Pins}}$ scales as
$O\!\big(\sqrt{\sum_i |\mathcal A_i|/w_{\min}}\big)$, where
$w_{\min}:=\min_{i,(h_i^t,t)}w_i(h_i^t,t)$; the bound remains
non-vacuous when action spaces are moderate and occupancy weights are
bounded away from zero.}
Together, Theorem~\ref{thm:hqre_unique_mono} and
Corollary~\ref{cor:hqre_regret_bounded} close the regret gap identified
in Section~\ref{sec:dice_framework}: regularization provides a unique target, fast convergence toward it, and bounded regret along the way.
\keybox{\textbf{Main result.}
Entropy regularization converts set-valued best responses into a single-valued logit map, yielding an explicit equilibrium-selection mechanism.
In the strongly monotone regime $\alpha_{\min}>L_c$, the selected HQRE is unique (Theorem~\ref{thm:hqre_unique_mono}); under the additional field-Lipschitz and local-quadratic assumptions stated there, explicit KL-mirror updates converge linearly (Eq.~\ref{eq:kl_mirror_rate}), and Bayesian regret is bounded uniformly in the horizon $T$ (Corollary~\ref{cor:hqre_regret_bounded}).}

\subsection{Hierarchical coordination via Local--Global HQRE}
\label{subsec:hier_theory}

The contraction threshold in Theorem~\ref{thm:unique_hqre_joint} scales
with $N$ and can be conservative for large agent populations.
A Local--Global hierarchy groups agents into clusters, each internally coordinated
via a local HQRE, with a global controller selecting between clusters, yielding stability margins governed by structural coupling rather than the raw agent count.
Let $\mathcal C=\{C_1,\ldots,C_K\}$ be a partition of agents into
$K$~clusters.
Let $z^t$ denote a finite public context variable that is measurable
with respect to the public stream $\{y^u\}_{u\le t}$, and write
$\mathcal Z$ for its range.
Within each cluster, a local coordinator aggregates
context-conditional values
\begin{equation}
\bar Q_{\mathrm{loc},c}(z,\mathbf a_{C_c},t)
:= \sum_{i\in C_c} w_i^{(c)}(z)\,
\bar Q_i^{\boldsymbol\pi}(z,a_i,t)\ +\ b_c(z),
\label{eq:hier_local}
\end{equation}
where
$\bar Q_i^{\boldsymbol\pi}(z,a_i,t)
:=\mathbb E[Q_i^{\boldsymbol\pi}(h_i^t,a_i,t)\mid z^t=z]$.
A global coordinator combines cluster values through a gating policy
$g:\mathcal Z\to\Delta^K$:
\begin{equation}
Q_{\mathrm{tot}}(z,\mathbf a,t)
:= \sum_{c=1}^K g_c(z)\,
\bar Q_{\mathrm{loc},c}(z,\mathbf a_{C_c},t)\ +\ b_0(z).
\label{eq:hier_global}
\end{equation}
We regularize execution agents with $\boldsymbol\alpha$ and the global
coordinator with temperature $\tau_g>0$.
Formal definitions and the composite KL geometry are provided in
Appendix~\ref{app:hier}.

\begin{theorem}[Hierarchical existence]\label{thm:hier_existence}
Under (A1)--(A2), $\alpha_{\min}>0$, and $\tau_g>0$, a Local--Global
HQRE exists.
\end{theorem}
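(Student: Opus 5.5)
The plan is to mirror the proof of Theorem~\ref{thm:hqre_existence}: write the Local--Global HQRE as a fixed point of a single-valued, continuous map on a compact convex set, and apply a fixed-point theorem. The first step is to fix the domain. We take the product $\Pi\times\mathcal G$, where $\mathcal G:=(\Delta^K)^{\mathcal Z\times\mathbb N}$ is the space of (time-indexed) gating policies $g(\cdot\mid z,t)$.

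By Theorem~\ref{thm:topology}, $\Pi$ is nonempty, convex, compact and metrizable under (A1)--(A2). The same holds for $\mathcal G$ by Tychonoff, since $\mathcal Z$ is finite and the index set is countable. Hence the product is a nonempty convex compact subset of a locally convex space, namely a countable product of Euclidean spaces. This is the setting for the Schauder--Tychonoff fixed-point theorem; alternatively, one can truncate to finite horizons, apply Brouwer, and pass to the limit exactly as in Theorem~\ref{thm:bne_existence}.

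The second step defines the joint response map $\Phi(\boldsymbol\pi,g)=(\mathcal B^{\boldsymbol\alpha}_{g}(\boldsymbol\pi),\,\mathcal B^{\tau_g}(\boldsymbol\pi,g))$ with two components:
\begin{itemize}[leftmargin=1.25em,topsep=2pt,itemsep=1pt]
\item The execution component is the logit response \eqref{eq:logit_def}, taken with respect to agent $i$'s action-values in the composite game. These values may depend on $g$ through the cluster-level rewards/bias terms of Appendix~\ref{app:hier}, but in any case they are bounded by $R_{\max}/(1-\gamma)$.
\item The coordinator component is the softmax $g(c\mid z,t)\propto\exp\big(\bar Q_{\mathrm{loc},c}(z,\mathbf a_{C_c},t)/\tau_g\big)$. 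Here the local values \eqref{eq:hier_local} are evaluated in expectation under $\boldsymbol\pi$, conditioned on the public context $z$.
\end{itemize}
A fixed point of $\Phi$ is precisely a Local--Global HQRE. Because $\alpha_{\min}>0$ and $\tau_g>0$, both components are single-valued and take values in the interior of the relevant simplices.

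The third step, which is the main obstacle, is continuity of $\Phi$ in the product topology. Continuity of $(\boldsymbol\pi,g)\mapsto Q_i$ on each coordinate follows as in Theorem~\ref{thm:topology}. A finite-horizon truncation at depth $H$ is a polynomial, hence continuous, function of finitely many policy coordinates. The tail is at most $\gamma^H R_{\max}/(1-\gamma)$, and this bound is uniform, so the limit is continuous. Softmax with a strictly positive temperature is smooth, so composing with it preserves continuity.

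The delicate quantity is the conditional expectation $\bar Q_i^{\boldsymbol\pi}(z,a_i,t)=\mathbb E[Q_i^{\boldsymbol\pi}\mid z^t=z]$. It is a ratio whose denominator is $\Pr_{\boldsymbol\pi}(z^t=z)$, and this probability may vanish. I would handle it in two ways:
\begin{itemize}[leftmargin=1.25em,topsep=2pt,itemsep=1pt]
\item Observe that the logit/softmax responses keep every action probability strictly positive. If $\mu_0$ and the kernels make every context reachable, then on the image of $\Phi$ the denominator is continuous and positive, and $\Phi$ is continuous.
\item Otherwise, fix the standard convention of assigning a fixed prior (e.g.\ the uniform mixture) at zero-probability contexts. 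Then perturb with an $\epsilon$-mixture $\boldsymbol\pi_\epsilon=(1-\epsilon)\boldsymbol\pi+\epsilon\,\mathrm{Unif}$, which makes the denominators bounded below. This yields fixed points $(\boldsymbol\pi_\epsilon,g_\epsilon)$.
\end{itemize}
To finish the perturbative route, extract a convergent subsequence by compactness as $\epsilon\downarrow0$. Then verify the fixed-point equations in the limit, at contexts reached with positive probability, using the same upper-hemicontinuity argument as in Theorem~\ref{thm:bne_existence}. At unreached contexts the equations hold by the convention. Schauder (or Kakutani in the limit) then delivers the Local--Global HQRE.
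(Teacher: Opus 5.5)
Your skeleton is the paper's. The appendix proof takes the compact convex set $\Pi_{\mathrm{clu}}\times\mathcal G$ and the single-valued map $\mathcal T(\boldsymbol\pi,g)=(\mathcal B^{\boldsymbol\alpha}(\boldsymbol\pi),g^\star(\boldsymbol\pi))$, and applies Schauder. It disposes of gate continuity in one line, as ``a composition of continuous maps.''

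Two differences in setup are cosmetic:
\begin{itemize}
\item The paper's gate is time-invariant, $g:\mathcal Z\to\Delta^K$, and is scored by the geometric-time average $\widetilde Q^{\boldsymbol\pi}_{\mathrm{loc},c}(z)$. You should match that definition rather than index the gate by $t$.
\item The paper's execution block ignores $g$, so $\mathcal T$ is triangular. Existence then reduces to finding a flat HQRE $\boldsymbol\pi^\star$ (the argument of Theorem~\ref{thm:hqre_existence}) and setting $g:=g^\star(\boldsymbol\pi^\star)$. Your $g$-dependent execution values are the more general case, where a genuine joint fixed point is needed.
\end{itemize}

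The substantive difference is that you actually examine the continuity claim, and you are right to. $\bar Q_i^{\boldsymbol\pi}(z,a_i,t)$ is a ratio with denominator $\Pr_{\boldsymbol\pi}(z^t=z)$, and the paper never addresses this.

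Your repair is more roundabout than needed, and your first route adds a reachability hypothesis the theorem does not make. A simpler fix:
\begin{itemize}
\item Since $|Q_i^{\boldsymbol\pi}|\le R_{\max}/(1-\gamma)$ and $\alpha_{\min}>0$, every logit output satisfies $\pi_i(a\mid h_i^t,t)\ge\nu:=\min_i|\mathcal A_i|^{-1}e^{-2R_{\max}/((1-\gamma)\alpha_{\min})}$.
\item Apply Schauder--Tychonoff on the compact convex set $\Pi_\nu\times\mathcal G$, where $\Pi_\nu$ is the set of policies with all action probabilities at least $\nu$. Positivity ``on the image of $\Phi$'' only helps once you restrict the \emph{domain} to such a set containing the image.
\item On $\Pi_\nu$ every policy has full support. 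So for each $(z,t)$, the polynomial $\Pr_{\boldsymbol\pi}(z^t=z)$ is either identically zero, where your convention applies and is constant, or bounded away from zero. The ratio is therefore continuous.
\end{itemize}
This needs no reachability assumption, no $\epsilon$-perturbation, and no limit argument.

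The same fix is needed one level down, a point you skip, as does the paper's Lemma~\ref{lem:q_cont}. $Q_i^{\boldsymbol\pi}(h_i^t,a_i,t)$ conditions on $h_i^t$, so its finite-horizon truncation is not a polynomial as you state. It is a ratio whose denominator $\Pr_{\boldsymbol\pi}(h_i^t)$ depends on the opponents' policies. Restricting to $\Pi_\nu$ repairs this in the same way.

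The uniform floor $\nu$ is also what silently makes your $\epsilon$-route close. The perturbed fixed points are logit outputs, so their limit stays in $\Pi_\nu$, and no context can become unreached only in the limit. Your step ``at unreached contexts the equations hold by the convention'' relies on exactly this fact without stating it.
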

\noindent Proof is deferred to Appendix~\ref{app:hier_exist}.

\begin{proposition}[Structural coupling bound]
\label{prop:hier_coupling_bound}
The hierarchical coupling constant can be bounded as
\[
L_{\mathrm{hier}}
\le W_{\max}^{\mathrm{loc}}\,C_Q
+ C_{\mathrm{occ}}\big(W_{\max}^{\mathrm{loc}}
  L_g^{\mathrm{dist}} R_{\max}
  + L_w^{\mathrm{dist}} R_{\max}\big),
\]
where $W_{\max}^{\mathrm{loc}}$ bounds local mixing weights and
$C_{\mathrm{occ}},C_Q$ are as in Sec.~\ref{subsec:notation}.
\end{proposition}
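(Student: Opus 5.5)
Since $L_{\mathrm{hier}}$ is defined as the tightest hypomonotonicity constant of the hierarchical unregularized operator in the composite KL geometry (analogous to $L_c$ in Eq.~\eqref{eq:Lc_def_main}), the plan is to bound the cross term
\[
-\sum_{i}\big\langle \nabla_{\pi_i}U^{\mathrm{hier}}_i(\boldsymbol\pi)-\nabla_{\pi_i}U^{\mathrm{hier}}_i(\boldsymbol\pi'),\,\pi_i-\pi_i'\big\rangle
\]
directly. We bound it by a Lipschitz estimate on the gradient field and then apply Cauchy--Schwarz in $\|\cdot\|_{1,2;w}$. The gradient of agent~$i$'s hierarchical objective at context $z$ has the form $g_{c(i)}(z)\,w_i^{(c)}(z)\,\bar Q_i^{\boldsymbol\pi}(z,\cdot,t)$, weighted by the context occupancy. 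The difference of two such gradients therefore splits, by adding and subtracting, into three pieces.

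\textbf{First piece: value changes with weights and occupancy fixed.} Here only $\bar Q_i^{\boldsymbol\pi}$ varies. By Lemma~\ref{lem:q_cont}, averaged over $z$, $\bar Q_i$ changes by at most $C_Q$ times the TV change in opponents' policies. The local weights contribute the factor $W_{\max}^{\mathrm{loc}}$, and the gate weights lie in the simplex, so summing over clusters costs nothing. This gives the term $W_{\max}^{\mathrm{loc}}C_Q$.

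\textbf{Second piece: gate changes.} Here $g$ varies while the values are held fixed. I will assume, as in the appendix definitions, that the gate's effective dependence on the distribution of contexts is $L_g^{\mathrm{dist}}$-Lipschitz with respect to changes in the context law. The context law is a marginal of the discounted occupancy, so the simulation lemma gives $\|d^{\boldsymbol\pi}-d^{\boldsymbol\pi'}\|_1\le C_{\mathrm{occ}}\rho_\Pi$. The values multiplying the gate change are bounded by $R_{\max}$ in the normalized scale used by the paper, and they are further multiplied by local weights at most $W_{\max}^{\mathrm{loc}}$. This yields $C_{\mathrm{occ}}W_{\max}^{\mathrm{loc}}L_g^{\mathrm{dist}}R_{\max}$.

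\textbf{Third piece: local-weight changes.} The argument is the same, with $w^{(c)}$ varying and $L_w^{\mathrm{dist}}$ in place of $L_g^{\mathrm{dist}}$, giving $C_{\mathrm{occ}}L_w^{\mathrm{dist}}R_{\max}$. The bias terms $b_c,b_0$ do not depend on $a_i$, so they vanish from the policy gradient.

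\textbf{Main obstacle: passing from TV to the weighted norm.} The estimates above are in terms of $\rho_\Pi$, which is a sup-TV quantity, whereas the hypomonotonicity inequality is stated in the weighted $\|\cdot\|_{1,2;w}$ norm. I would work blockwise: each gradient block is paired with $\pi_i-\pi_i'$ at the same information state, with the same weights $w_i$. Then Hölder ($\ell_\infty$ against $\ell_1$ on each simplex) combined with Cauchy--Schwarz over the weights gives a product of norms that is bounded by the squared weighted norm. This requires the Lipschitz constants to be defined in the matching weighted-TV sense, and I expect the appendix definitions of $L_g^{\mathrm{dist}}$ and $L_w^{\mathrm{dist}}$, together with the choice of $w$ equal to the occupancy, to be set up exactly for this. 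Summing the three contributions then gives the stated bound on $L_{\mathrm{hier}}$.
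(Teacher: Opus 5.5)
Your three-piece split is the paper's argument. The value piece uses Lemma~\ref{lem:q_cont} with Assumption~\ref{ass:Wloc} to get $W_{\max}^{\mathrm{loc}}C_Q$. The gate and local-weight pieces route the change of context law through Lemma~\ref{lem:occ_sens}, $\|d^{\boldsymbol\pi}-d^{\boldsymbol\pi'}\|_1\le C_{\mathrm{occ}}\,\rho_\Pi(\boldsymbol\pi,\boldsymbol\pi')$, and then through the two constants of Assumption~\ref{ass:dist_lip}, with the multiplied values bounded by $R_{\max}$. You share two pieces of looseness with the paper. First, the conditional law of $h_i^t$ given $z^t=z$ itself depends on $\boldsymbol\pi$, so $\bar Q_i^{\boldsymbol\pi}$ is not $C_Q$-Lipschitz by Lemma~\ref{lem:q_cont} alone. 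Second, the values are of size $R_{\max}/(1-\gamma)$ rather than $R_{\max}$. You depart from the paper only in your final step, and that step does not work as you envisage. The paper performs no conversion to a weighted norm at all. It collects the three contributions as multiples of $\rho_\Pi(\boldsymbol\pi,\boldsymbol\pi')$ and takes that $\rho_\Pi$-modulus of the cross-block variation as $L_{\mathrm{hier}}$, even though Theorem~\ref{thm:hier_unique} later uses $L_{\mathrm{hier}}$ in the composite geometry.

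Your expectation that the constants are set up for a free passage to $\|\cdot\|_{1,2;w}$ is not borne out. $C_Q$ and $C_{\mathrm{occ}}$ come from worst-case couplings and are constants with respect to the sup-TV metric $\rho_\Pi$, which ranges over all syntactically valid histories. $L_g^{\mathrm{dist}}$ and $L_w^{\mathrm{dist}}$ are stated against $\|d^{\boldsymbol\pi}-d^{\boldsymbol\pi'}\|_1$, which the paper controls only through $\rho_\Pi$; nothing is defined in a weighted-TV sense. Turning a $\rho_\Pi$-Lipschitz bound on the gradient blocks into a hypomonotonicity bound in $\|\cdot\|_{1,2;w}$ is exactly the computation in Proposition~\ref{prop:Lc_bound}. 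There, Lemmas~\ref{lem:sup_to_block} and~\ref{lem:product_tv} give $\rho_\Pi(\boldsymbol\pi,\boldsymbol\pi')\le \frac{\sqrt N}{2\sqrt{w_{\min}}}\|\boldsymbol\pi-\boldsymbol\pi'\|_{1,2;w}$, and the blockwise H\"older/Cauchy--Schwarz step costs another $\sqrt N$. Your route therefore yields the stated constant multiplied by a factor of order $N/(2\sqrt{w_{\min}})$, not the stated constant. Taking $w$ equal to the occupancy does not rescue this. $C_Q$ bounds the gradient block at an information state by the worst policy discrepancy over all continuation histories, including ones the occupancy weights arbitrarily little. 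You would have to re-derive Lemmas~\ref{lem:q_cont} and~\ref{lem:occ_sens} with distribution-mismatch factors. So you have two options. You can drop the conversion and, as the paper does, read $L_{\mathrm{hier}}$ as the $\rho_\Pi$-modulus, in which case your three pieces already finish the proof. Or you can keep the weighted-norm definition and accept the extra $N/\sqrt{w_{\min}}$-type factor.
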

\noindent Proof is deferred to Appendix~\ref{app:hier_coupling}.

Let $L_c^{(c)}$ denote the within-cluster coupling constant and define
local margins $\mu_c:=\alpha_{\min}-L_c^{(c)}$ and gate margin
$\mu_g:=\tau_g$.
The hierarchical stability margin is
\begin{equation}
\mu_{\mathrm{hier}}
:= \min_{c\in\{1,\ldots,K\}} \mu_c\ +\ \mu_g\ -\ L_{\mathrm{hier}}.
\label{eq:mu_hier}
\end{equation}

\begin{theorem}[Hierarchical uniqueness and linear convergence]
\label{thm:hier_unique}
If $\mu_{\mathrm{hier}}>0$, then the Local--Global HQRE is unique.
Moreover, assume the hierarchical field is Lipschitz in the composite
block norm and that the composite KL geometry admits a local quadratic
upper bound on the iterate region, with constant
$L_\Psi^{\mathrm{hier}}$.
Then blockwise explicit mirror steps converge linearly in
$D_{\Psi}^{\mathrm{hier}}$ with rate
\[
\rho_{\mathrm{hier}} = 1-\eta_{\mathrm{hier}}
\frac{\mu_{\mathrm{hier}}}{L_\Psi^{\mathrm{hier}}}
\in(0,1)
\]
for sufficiently small $\eta_{\mathrm{hier}}$.
\end{theorem}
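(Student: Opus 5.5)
The plan is to reduce Theorem~\ref{thm:hier_unique} to the flat argument of Theorem~\ref{thm:hqre_unique_mono}, applied to a composite variational inequality. The strategy is to show that the hierarchical regularized field is $\mu_{\mathrm{hier}}$-strongly monotone in the composite block norm, and then reuse the uniqueness and explicit-mirror arguments verbatim.

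\textbf{Step 1: the composite field.} Stack the execution blocks $(\boldsymbol\pi_{C_1},\ldots,\boldsymbol\pi_{C_K})$ and the gate $g$ into a single variable $\boldsymbol\xi=(\boldsymbol\pi,g)$. The block norm is $\|\boldsymbol\xi-\boldsymbol\xi'\|^2=\|\boldsymbol\pi-\boldsymbol\pi'\|_{1,2;w}^2+\|g-g'\|_{1,2;w_g}^2$. The Bregman divergence is $D_\Psi^{\mathrm{hier}}=D_\Psi+\tau_g\,\mathrm{KL}$-weighted over contexts, as set up in Appendix~\ref{app:hier}. Define $F_{\mathrm{hier}}(\boldsymbol\xi)$ as the concatenation of three pieces:
\begin{itemize}
\item the negative gradients of the entropy-regularized local objectives;
\item the negative gradient of the gate's regularized objective $\sum_c g_c\bar Q_{\mathrm{loc},c}+\tau_g\mathcal H(g)$;
\item any cross terms arising through $Q_{\mathrm{tot}}$.
\end{itemize}
Local--Global HQRE are exactly the solutions of the associated VI on the compact convex product set. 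This is the same identification used for Theorem~\ref{thm:hier_existence}.

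\textbf{Step 2: strong monotonicity.} Split $\langle F_{\mathrm{hier}}(\boldsymbol\xi)-F_{\mathrm{hier}}(\boldsymbol\xi'),\boldsymbol\xi-\boldsymbol\xi'\rangle$ into three parts.

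\emph{(a) Within-cluster diagonal blocks.} Apply the monotonicity inequality of Theorem~\ref{thm:hqre_unique_mono} cluster by cluster. The entropy curvature gives $\alpha_{\min}$, and the within-cluster hypomonotonicity costs $L_c^{(c)}$, so each cluster contributes at least $\mu_c\|\boldsymbol\pi_{C_c}-\boldsymbol\pi'_{C_c}\|^2\ge\min_c\mu_c\|\cdot\|^2$.

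\emph{(b) Gate diagonal block.} The entropy term is $\tau_g$-strongly convex in $\|\cdot\|_1^2$ by Pinsker, and the linear part of the objective in $g$ contributes no curvature. This block therefore contributes $\mu_g\|g-g'\|^2$.

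\emph{(c) Off-diagonal blocks.} These are cluster--cluster and gate--cluster interactions, induced through shared occupancies and through the dependence of $\bar Q_{\mathrm{loc},c}$ on the weights $w^{(c)}$ and on $g$. The plan is to bound them in absolute value by $L_{\mathrm{hier}}\|\boldsymbol\xi-\boldsymbol\xi'\|^2$, using Proposition~\ref{prop:hier_coupling_bound}. Concretely, each cross term is a Q-difference or an occupancy difference. It is controlled by $C_Q$ and $C_{\mathrm{occ}}$ via Lemma~\ref{lem:q_cont} and the simulation lemma, then combined with Cauchy--Schwarz and $2ab\le a^2+b^2$.

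Summing (a)--(c) and using $\min_c\mu_c\|\boldsymbol\pi-\boldsymbol\pi'\|^2+\mu_g\|g-g'\|^2\ge\min(\min_c\mu_c,\mu_g)\|\boldsymbol\xi-\boldsymbol\xi'\|^2$ gives strong monotonicity. I expect this combination to be the main obstacle. The margin is defined as the \emph{sum} $\min_c\mu_c+\mu_g-L_{\mathrm{hier}}$ rather than a minimum, so the naive bound only yields modulus $\min(\min_c\mu_c,\mu_g)-L_{\mathrm{hier}}$. To recover $\mu_{\mathrm{hier}}$ I would first try to exploit the block structure. The coupling enters only through off-diagonal blocks, so a $2\times2$ block-matrix argument (execution vs.\ gate) applies. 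The relevant quadratic form is $\begin{pmatrix}m_1 & -L/2\\ -L/2 & m_2\end{pmatrix}$, and it is positive definite when $m_1m_2>L^2/4$. This condition is implied when $m_1+m_2>L$ and the diagonal entries are positive, after suitably rescaling the composite weights $w_g$ versus $w$. Failing that, I would state the margin as interpreted in Appendix~\ref{app:hier}, where the composite geometry is chosen so that the weighting makes the sum form valid.

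\textbf{Step 3: conclude uniqueness and the rate.} Uniqueness follows immediately. If $\boldsymbol\xi^\star$ and $\boldsymbol\xi^{\star\star}$ both solve the VI, adding the two VI inequalities gives $\mu_{\mathrm{hier}}\|\boldsymbol\xi^\star-\boldsymbol\xi^{\star\star}\|^2\le0$. For the rate, repeat the explicit mirror-step analysis of Appendix~\ref{app:linear_rate_proof} with $D_\Psi^{\mathrm{hier}}$. The steps are:
\begin{itemize}
\item apply the three-point identity to the blockwise prox step, which is separable across blocks so the analysis is unchanged;
\item use strong monotonicity at $\boldsymbol\xi^\star$;
\item absorb the $\eta^2L_F^2\|\cdot\|^2$ term using the lower quadratic bound with constant $\min(\alpha_{\min},\tau_g)$;
\item convert norms to $D_\Psi^{\mathrm{hier}}$ via the assumed upper bound $L_\Psi^{\mathrm{hier}}$.
\end{itemize}
For $\eta_{\mathrm{hier}}$ below the analogous threshold, this gives $D^{\mathrm{hier}}_\Psi(\boldsymbol\xi^\star,\boldsymbol\xi^{k+1})\le\rho_{\mathrm{hier}}D^{\mathrm{hier}}_\Psi(\boldsymbol\xi^\star,\boldsymbol\xi^k)$.
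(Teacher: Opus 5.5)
Your outline is the same as the paper's: diagonal margins $\mu_c$ and $\mu_g=\tau_g$, a cross-block constant $L_{\mathrm{hier}}$, then the flat VI-uniqueness and mirror-step arguments. The obstacle you flag is real, but your repair does not work, and nothing can close the gap.

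The paper's proof in Appendix~\ref{app:hier_mono_linear} simply asserts $\langle F_{\mathrm{hier}}(\mathbf z)-F_{\mathrm{hier}}(\mathbf z'),\mathbf z-\mathbf z'\rangle\ge(\min_c\mu_c+\tau_g-L_{\mathrm{hier}})\|\mathbf z-\mathbf z'\|_{\mathrm{hier}}^2$. That is exactly the summation you distrust. The composite norm of Appendix~\ref{app:hier} is the unweighted sum $\|\boldsymbol\pi-\boldsymbol\pi'\|_{1,2;w}^2+\sum_z\|g(\cdot\mid z)-g'(\cdot\mid z)\|_1^2$, so there is no weighting there to fall back on.

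Your $2\times2$ argument fails on its own terms:
\begin{itemize}
\item With $m_1,m_2>0$, the condition $m_1+m_2>L$ does not give $m_1m_2>L^2/4$; take $m_1=0.01$, $m_2=1$, $L=0.5$.
\item Rescaling the gate weight by $\lambda$ multiplies both $m_1m_2$ and $L^2/4$ by $1/\lambda$, so positive definiteness is unaffected.
\item $\mu_{\mathrm{hier}}>0$ does not even force $m_1=\min_c\mu_c>0$.
\item When the form is positive definite, its modulus is its smallest eigenvalue, which is at most $\min(m_1,m_2)$, never $m_1+m_2-L$.
\end{itemize}
More directly, on directions with $g=g'$ the pairing reduces to the execution block, which does not involve $\tau_g$. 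So no composite norm can yield a modulus that grows with $\tau_g$.

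The statement itself is false with the sum-form margin. In the proof of Theorem~\ref{thm:hier_existence}, a Local--Global HQRE is a fixed point of $\mathcal T(\boldsymbol\pi,g)=(\mathcal B^{\boldsymbol\alpha}(\boldsymbol\pi),g^\star(\boldsymbol\pi))$. Its execution part is therefore a flat HQRE, and the gate is then determined, so Local--Global HQRE are in bijection with flat HQRE.

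Now consider how each quantity depends on $\tau_g$. The execution field does not depend on $g$, and the gate field depends on $\boldsymbol\pi$ only through $\widetilde Q^{\boldsymbol\pi}_{\mathrm{loc},c}$. Hence $\min_c\mu_c$ and the cross-block constant do not grow with $\tau_g$; neither does the bound of Proposition~\ref{prop:hier_coupling_bound}, whose $L_g^{\mathrm{dist}}$ only decreases in $\tau_g$. By contrast, $\mu_g=\tau_g$ grows without bound.

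For a concrete counterexample, take the $2\times2$ coordination game of Section~\ref{subsec:toy_2x2} at $\alpha<\alpha^*\approx0.261$, which has three HQRE, and choose $\tau_g$ large. Then $\mu_{\mathrm{hier}}>0$, yet there are three Local--Global HQRE. Each is an interior VI solution and hence a fixed point of the exact mirror step, so the rate claim fails too: an iteration started at one of them never moves.

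What your Steps 2--3 actually prove is uniqueness and the rate $1-\eta\,\mu'/L_\Psi^{\mathrm{hier}}$, with $\mu':=\min\{\min_c\mu_c,\tau_g\}-L_{\mathrm{hier}}>0$. Alternatively, use $\lambda_{\min}$ of the block form when $m_1,m_2>0$ and $m_1m_2>L^2/4$. The hypothesis of the theorem must be strengthened accordingly.
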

\noindent The composite geometry and proof are deferred to
Appendix~\ref{app:hier_mono_linear} and
Appendix~\ref{app:hier_linear_proof}. As in the flat case, uniqueness
uses strong monotonicity alone, while the linear-rate claim requires the
hierarchical analogues of field Lipschitzness and local quadratic KL
geometry. Section~\ref{sec:algorithm} instantiates these results as DICE-PC and DICE-FT.

\section{Algorithms: DICE-PC and DICE-FT}
\label{sec:algorithm}

This section presents two algorithms that bridge the gap between the exact KL-mirror theory above and practical LLM-scale optimization, targeting the
entropy-regularized HQRE objective of Sec.~\ref{sec:hqre_theory} under
different deployment constraints.
Throughout, $\boldsymbol\pi = (\pi_1, \ldots, \pi_N)$ denotes the
joint policy and $\pi_i$ denotes agent~$i$'s individual policy.
\emph{DICE-PC} coordinates frozen execution LLMs by learning
distributions over prompt-control actions, while \emph{DICE-FT}
fine-tunes token-level policies via clipped KL-prox mirror updates that
approximate the policy-space KL-mirror dynamics of
Theorem~\ref{thm:hqre_unique_mono}.
Both are implemented as coordinator-mediated instances of
Definition~\ref{def:explicit_comm}.
Readers primarily interested in experimental results may proceed directly to Section~\ref{sec:experiments}.

The essential results in this section are:
Corollary*~\ref{cor:approx_convergence} (inexact KL-mirror recurrence for DICE-FT),
Proposition~\ref{prop:delta_bound} (one-step field-error budget), and the two algorithms (Algorithms~\ref{alg:dice_pc}--\ref{alg:dice_ft}).
The error decomposition, empirical diagnostics, and sensitivity discussion provide supporting evidence and can be consulted selectively.

At each environment step~$t$, a Coordinator publishes a context
message~$m^t$ and aggregates one-shot candidates into the public
outcome~$y^t$.
We take the public stream token to be the augmented record
$\widetilde y^t=(m^t,y^t)$, so execution agents share only
$\{\widetilde y^u\}_{u\le t}$ and never observe each other's private
trajectories.
DICE-PC applies the finite-action HQRE response induced by
Eq.~\eqref{eq:logit_def} on learned candidate sets, whereas DICE-FT
approximates policy-space KL-prox mirror steps in the entropic geometry
induced by the HQRE objective Eq.~\eqref{eq:J_i_def}.
The HQRE temperature enters as an explicit entropy weight, while a KL
trust region controls the approximation error from parameterization
and finite-sample estimation.

\paragraph{Remark (Theory--algorithm gap for DICE-FT).}
The exact convergence guarantee of Theorem~\ref{thm:hqre_unique_mono}
applies to explicit mirror steps driven by the exact regularized field
$F_{\mathrm{reg}}(\boldsymbol\pi^k)$.
DICE-FT replaces that field with an implementable surrogate
\[
\widehat F^k = F_{\mathrm{reg}}(\boldsymbol\pi^k) + e^k,
\qquad
 e^k = e^k_{\mathrm{clip}} + e^k_{\mathrm{samp}} + e^k_{\mathrm{repr}} + e^k_{\mathrm{opt}},
\]
where the four terms correspond respectively to clipping, finite-sample
estimation, representation or critic approximation, and the residual gap
between the actual finite-step parameter update and the idealized
policy-space mirror step.
The correct object to bound is therefore not a KL distance to the
best-response map $\mathcal B^{\boldsymbol\alpha}$ but the dual-norm size
of the field perturbation $e^k$, which can then be propagated through an
inexact mirror recurrence.

\begin{corollary*}[Inexact KL-mirror recurrence for DICE-FT]
\label{cor:approx_convergence}
Assume the conditions of Theorem~\ref{thm:hqre_unique_mono} hold and let
$\boldsymbol\pi^\star$ denote the unique HQRE.
Model one DICE-FT update as inducing an iterate satisfying
\begin{equation}
\label{eq:inexact_mirror_main}
\boldsymbol\pi^{k+1}
=\arg\min_{\tilde{\boldsymbol\pi}\in\Pi}
\Big\{\langle F_{\mathrm{reg}}(\boldsymbol\pi^k)+e^k,
\tilde{\boldsymbol\pi}-\boldsymbol\pi^k\rangle
+\tfrac{1}{\eta}D_\Psi(\tilde{\boldsymbol\pi},\boldsymbol\pi^k)\Big\}.
\end{equation}
If
\[
0<\eta\le
\min\Big\{\frac{\mu\,\alpha_{\min}^2}{L_F^2L_\Psi},\,
\frac{\alpha_{\min}}{L_F},\,
\frac{\alpha_{\min}\mu}{4L_F^2}\Big\},
\qquad \mu:=\alpha_{\min}-L_c,
\]
then
\begin{equation}
\label{eq:inexact_main_recurrence}
D_\Psi(\boldsymbol\pi^\star,\boldsymbol\pi^{k+1})
\le \rho\,D_\Psi(\boldsymbol\pi^\star,\boldsymbol\pi^k)
+ C_{\mathrm{err}}\,\|e^k\|_\ast^2,
\end{equation}
where
\[
\rho := 1-\eta\,\frac{\mu}{L_\Psi}\in(0,1),
\qquad
C_{\mathrm{err}} := \frac{\eta}{\mu} + \frac{\eta^2}{\alpha_{\min}}.
\]
Consequently, if
$\mathbb E[\|e^k\|_\ast^2\mid \mathcal F_k]\le \bar{\mathcal E}_k$,
then
\[
\mathbb E[D_\Psi(\boldsymbol\pi^\star,\boldsymbol\pi^{k})]
\le \rho^k D_\Psi(\boldsymbol\pi^\star,\boldsymbol\pi^0)
+ C_{\mathrm{err}}\sum_{j=0}^{k-1}\rho^{k-1-j}
\mathbb E[\bar{\mathcal E}_j].
\]
If in addition $\sup_k\mathbb E[\bar{\mathcal E}_k]\le \bar{\mathcal E}$,
then
\[
\limsup_{k\to\infty}
\mathbb E[D_\Psi(\boldsymbol\pi^\star,\boldsymbol\pi^k)]
\le \frac{C_{\mathrm{err}}}{1-\rho}\,\bar{\mathcal E}.
\]
\end{corollary*}

\noindent The proof is deferred to
Appendix~\ref{app:linear_rate_proof}.
The key point is that the practical algorithm is analyzed through a
field-error recurrence in dual norm rather than through any triangle
inequality in KL space.

\begin{proposition}[One-step field-error budget for DICE-FT]
\label{prop:delta_bound}
Write the implementable field error in
Eq.~\eqref{eq:inexact_mirror_main} as
$e^k=e^k_{\mathrm{clip}}+e^k_{\mathrm{samp}}+e^k_{\mathrm{repr}}+e^k_{\mathrm{opt}}$.
Assume:
\begin{enumerate}[leftmargin=2em,topsep=2pt,itemsep=1pt]
\item (Bounded update features) the per-sample score / update feature
satisfies $\|s(\tau)\|_\ast\le G_{\max}$ almost surely.
\item (Clip-overshoot moment) defining
\[
\Delta_{\mathrm{clip}}^k
:= \mathbb E\Big[(\hat r(\tau)-\mathrm{clip}_{\varepsilon}(\hat r(\tau)))^2
\,\Big|\,\mathcal F_k\Big],
\]
we have
$\mathbb E[\|e_{\mathrm{clip}}^k\|_\ast^2\mid \mathcal F_k]
\le G_{\max}^2\Delta_{\mathrm{clip}}^k$.
\item (Finite-sample estimation)
$\mathbb E[\|e_{\mathrm{samp}}^k\|_\ast^2\mid \mathcal F_k]
\le \sigma_g^2/G$.
\item (Monotone-mixer transfer) the representation / critic
approximation error obeys
\[
\mathbb E[\|e_{\mathrm{repr}}^k\|_\ast^2\mid \mathcal F_k]
\le C_{\mathrm{mix}}^2\,(L_{\mathrm{repr}}^k)^2,
\]
where $L_{\mathrm{repr}}^k$ is a measurable per-agent value-approximation
proxy and $C_{\mathrm{mix}}$ is the local transfer constant induced by
the nonnegative mixer in Eq.~\eqref{eq:dice_mono_mixer}; see
Appendix~\ref{app:repr_bound}.
\item (Optimization / parameterization residual)
\[
\mathbb E[\|e_{\mathrm{opt}}^k\|_\ast^2\mid \mathcal F_k]
\le (L_{\mathrm{opt}}^k)^2,
\]
where $L_{\mathrm{opt}}^k$ is any measurable proxy for the residual gap
between the actual finite-step neural update and the idealized
policy-space mirror step.
\end{enumerate}
Then
\begin{equation}
\label{eq:prop2_field_budget}
\mathbb E[\|e^k\|_\ast^2\mid \mathcal F_k]
\le 4\Big(
G_{\max}^2\Delta_{\mathrm{clip}}^k
+\frac{\sigma_g^2}{G}
+C_{\mathrm{mix}}^2\,(L_{\mathrm{repr}}^k)^2
+(L_{\mathrm{opt}}^k)^2
\Big).
\end{equation}
If, in addition,
$\Delta_{\mathrm{clip}}^k\le p_{\mathrm{clip}}^k c_{\mathrm{ov}}^2\varepsilon^2$
for some overshoot constant $c_{\mathrm{ov}}$, then the clipping
contribution is $O(p_{\mathrm{clip}}^k\varepsilon^2)$.
\end{proposition}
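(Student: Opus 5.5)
The bound follows from a convexity argument applied to the dual norm, followed by term-by-term substitution of the assumed moment bounds. Only the triangle inequality and Jensen's inequality are needed; no game structure enters.

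\emph{Step 1 (squared triangle inequality).} For any vectors $x_1,\dots,x_4$ in the primal space, the triangle inequality for $\|\cdot\|_\ast$ gives
\[
\Big\|\sum_{j=1}^4 x_j\Big\|_\ast^2\le\Big(\sum_{j=1}^4\|x_j\|_\ast\Big)^2 .
\]
Applying Cauchy--Schwarz to the four scalars $\|x_j\|_\ast$ (equivalently, convexity of $u\mapsto u^2$) yields
\[
\Big(\sum_{j=1}^4\|x_j\|_\ast\Big)^2\le 4\sum_{j=1}^4\|x_j\|_\ast^2 .
\]
With $x_1=e^k_{\mathrm{clip}}$, $x_2=e^k_{\mathrm{samp}}$, $x_3=e^k_{\mathrm{repr}}$ and $x_4=e^k_{\mathrm{opt}}$, this gives the pointwise (almost sure) inequality
\[
\|e^k\|_\ast^2\le 4\big(\|e^k_{\mathrm{clip}}\|_\ast^2+\|e^k_{\mathrm{samp}}\|_\ast^2+\|e^k_{\mathrm{repr}}\|_\ast^2+\|e^k_{\mathrm{opt}}\|_\ast^2\big).
\]
No independence or orthogonality between the components is required, which is why the constant is $4$ rather than $1$.

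\emph{Step 2 (conditional expectation).} Take $\mathbb E[\cdot\mid\mathcal F_k]$ of both sides. Linearity and monotonicity of conditional expectation preserve the inequality. Then substitute assumptions 2--5 for the four terms:
\begin{itemize}
\item $G_{\max}^2\Delta^k_{\mathrm{clip}}$ for the clipping term;
\item $\sigma_g^2/G$ for the sampling term;
\item $C_{\mathrm{mix}}^2(L^k_{\mathrm{repr}})^2$ for the representation term;
\item $(L^k_{\mathrm{opt}})^2$ for the optimization term.
\end{itemize}
This is exactly Eq.~\eqref{eq:prop2_field_budget}. Assumption 1 is used only implicitly, through assumption 2. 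One could also re-derive the clipping bound from assumption 1 by writing $e^k_{\mathrm{clip}}=\mathbb E[(\hat r-\mathrm{clip}_\varepsilon(\hat r))s(\tau)\mid\mathcal F_k]$ and applying Jensen's inequality together with $\|s\|_\ast\le G_{\max}$.

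\emph{Step 3 (clipping refinement).} Under the extra hypothesis $\Delta^k_{\mathrm{clip}}\le p^k_{\mathrm{clip}}c_{\mathrm{ov}}^2\varepsilon^2$, the clipping contribution is at most
\[
4G_{\max}^2c_{\mathrm{ov}}^2\,p^k_{\mathrm{clip}}\varepsilon^2=O(p^k_{\mathrm{clip}}\varepsilon^2),
\]
with the constant absorbing $G_{\max}$ and $c_{\mathrm{ov}}$.

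The only point requiring care is measurability. The proxies $L^k_{\mathrm{repr}}$ and $L^k_{\mathrm{opt}}$, and $\Delta^k_{\mathrm{clip}}$, must be $\mathcal F_k$-measurable (or at least integrable) so that the conditional bounds combine. The statement assumes this, so I do not expect a genuine obstacle.
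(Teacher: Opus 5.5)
Your proposal is correct and follows the paper's proof: expand with $\|a+b+c+d\|_\ast^2\le 4(\|a\|_\ast^2+\cdots+\|d\|_\ast^2)$, take $\mathbb E[\cdot\mid\mathcal F_k]$, substitute the four assumed moment bounds, and then insert $\Delta_{\mathrm{clip}}^k\le p_{\mathrm{clip}}^k c_{\mathrm{ov}}^2\varepsilon^2$ into the clipping term. Your optional Jensen re-derivation of the clipping bound from $\|s(\tau)\|_\ast\le G_{\max}$ matches the paper's informal justification of assumption 2, but it is not needed, since that bound is a hypothesis of the statement.
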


\noindent\emph{Proof sketch.}
Decompose $e^k$ in dual norm using
$\|a+b+c+d\|_\ast^2 \le 4(\|a\|_\ast^2 + \cdots + \|d\|_\ast^2)$
and bound each component via the stated assumptions.
Full derivation in Appendix~\ref{app:prop2_details}. \qed

Combining Corollary*~\ref{cor:approx_convergence} and
Proposition~\ref{prop:delta_bound} yields the practical DICE-FT
recurrence with error budget $\mathcal{E}_k$ given by the
right-hand side of Eq.~\eqref{eq:prop2_field_budget}.

\paragraph{Empirical error diagnostics.}
We therefore track the measurable components entering
Eq.~\eqref{eq:prop2_field_budget}: the clip-overshoot moment
$\Delta_{\mathrm{clip}}^k$, the sampling term $\sigma_g^2/G$, and two
approximation proxies $L_{\mathrm{repr}}^k$ and $L_{\mathrm{opt}}^k$.
These diagnostics support the qualitative prediction of the inexact
recurrence without collapsing the theory to a single post-hoc scalar.
The field-error budget predicts that halving $G$ from 8 to 4 roughly doubles the sampling contribution, consistent with the observed performance degradation at $G{=}4$ in our ablations (Appendix~\ref{additional_experiment}).

\paragraph{Sensitivity to group size.}
Proposition~\ref{prop:delta_bound} isolates the only explicit
$G$-dependence in the sampling term $\sigma_g^2/G$.
This predicts diminishing returns from increasing $G$ once sampling
noise is no longer the dominant term in the error budget.
Our group-size ablations are consistent with this pattern and identify
$G{=}8$ as the best cost--accuracy trade-off in our setting; see
Appendix~\ref{additional_experiment}.

We control these gaps empirically through the KL diagnostics defined
below ($\widehat{\mathrm{KL}}_{\mathrm{old}}$ and
$\widehat{\mathrm{KL}}_{\mathrm{ref}}$), which serve as the
rollout-measurable stability proxies described in the introduction.
Section~\ref{sec:experiments} reports these diagnostics alongside
end-task performance.

\begin{table}[t!]
\centering
\small
\caption{Architectural Comparison: DICE-PC vs. DICE-FT.}
\label{tab:dice_comparison}
\setlength{\tabcolsep}{8pt}
\begin{tabularx}{\textwidth}{@{} l >{\raggedright\arraybackslash}X >{\raggedright\arraybackslash}X @{}}
\toprule
\textbf{Feature} & \textbf{DICE-PC (Prompt-Control)} & \textbf{DICE-FT (Fine-Tuning)} \\
\midrule
\textbf{Action Space} & Finite sets $\mathcal{U}$ (Prompt-based) & Full vocab $\mathcal{V}$ (Token-based) \\
\textbf{Learned Comps.} & Belief / Controller / Critic / Mixer & LoRA Policy / Critic / Mixer \\
\textbf{LLM Weights} & \textbf{Frozen} (No gradient) & \textbf{Fine-tuned} (Gradient updates) \\
\midrule
\textbf{Equilibrium} & \multicolumn{2}{c}{Targets HQRE (Sec.~\ref{subsec:hqre}) entropy-regularized fixed point} \\
\midrule
\textbf{Updates} & Logit best response on $\mathcal{U}_{i,t}$ & Clipped KL-prox mirror update \\
\textbf{Deployment} & Lightweight inference overlay & Standalone fine-tuned model \\
\bottomrule
\end{tabularx}
\end{table}

\subsection{Shared components: beliefs, critics, and value aggregation}
\label{subsec:dice_template}

Both algorithms share three components: (i)~a belief encoder that
compresses histories, (ii)~a monotone mixer that aggregates per-agent
values, and (iii)~a soft baseline for variance reduction.
We define each in turn.

\paragraph{Beliefs and public context.}
Agents in the IIMG of Sec.~\ref{subsec:notation} condition on
information states $h_i^t$ that include private observations and the
public stream.
Because $h_i^t$ grows with the interaction horizon, we compress it into
a fixed-dimensional belief embedding
$b_i^t = B_i(h_i^t;\theta^B)\in\mathbb{R}^{d_B}$
via a learned encoder that consumes the agent's private history
(including the public stream text $\widetilde y^{\le t}$).
Architectural details are given in Appendix~\ref{app:belief_encoder_detail}.

In addition, we compute a public context feature
$z^t = Z(\widetilde y^{\le t})$ that is measurable with respect to
$\sigma(\{\widetilde y^u\}_{u\le t})$.
The feature $z^t$ is used only by centralized training components
(critics and mixers); at deployment, agents require only $b_i^t$ and
$\widetilde y^{\le t}$.

\paragraph{Monotone mixing.}
To connect to the Local--Global theory in
Sec.~\ref{subsec:hier_theory}, we maintain per-agent action-values and
aggregate them through a monotone mixer,
\begin{equation}
\label{eq:dice_mono_mixer}
Q_{\mathrm{tot}}(z^t,\mathbf a^t,t)
= f_{\mathrm{mix}}\!\Big(z^t,\
  \{Q_i(b_i^t,a_i^t,t)\}_{i=1}^N;\ \phi_{\mathrm{mix}}\Big),
\qquad
\frac{\partial f_{\mathrm{mix}}}{\partial Q_i}\ \ge\ 0
\ \ \forall i,
\end{equation}
which is the flat ($K{=}1$) instance of the hierarchical aggregation in
Eq.~\eqref{eq:hier_global}.

\paragraph{HQRE-shaped policies and soft baselines.}
Given action-values and a temperature schedule $\alpha>0$, the HQRE
logit best response has the form

\[
\pi_i^{\mathrm{soft}}(a_i\mid b_i^t,t)\ \propto\
\exp\!\Big(Q_i(b_i^t,a_i,t)/\alpha_{i,t}\Big),
\]
matching Eq.~\eqref{eq:logit_def} with the information state $h_i^t$
replaced by its belief embedding $b_i^t$.

When the action set is finite, the corresponding soft baseline at a
belief state is
\[
V_i^{\mathrm{soft}}(b_i^t,t)
:= \alpha_{i,t}\log\sum_{a_i\in\mathcal A_i}
\exp\!\Big(\tfrac{1}{\alpha_{i,t}}Q_i(b_i^t,a_i,t)\Big),
\]
and the entropy-regularized advantage for a sampled action is
$Q_i - V_i^{\mathrm{soft}}$.
DICE-PC uses this baseline exactly on a finite candidate set, while
DICE-FT learns a soft baseline head as a tractable approximation of the
same log-sum-exp normalizer over the token vocabulary.

\paragraph{KL-prox mirror updates in practice (DICE-FT).}
For DICE-FT, the idealized policy-space iteration in
Theorem~\ref{thm:hqre_unique_mono} is the explicit mirror step
Eq.~\eqref{eq:hqre_explicit_mirror_step} driven by the exact field
$F_{\mathrm{reg}}(\boldsymbol\pi^k)$.
In practice we replace this field with a clipped stochastic surrogate
computed from token log-probabilities and enforce a KL trust region to a
frozen snapshot policy $\pi^{\mathrm{old}}$ (the current policy frozen at
the start of each update iteration).
This makes the update implementable with standard deep learning
primitives: rollouts store token log-probabilities under
$\pi^{\mathrm{old}}$, the current policy computes new log-probabilities on
the same tokens, importance ratios and KL are computed
directly, and early stopping enforces a per-update KL target.

\subsection{DICE-PC: prompt-control coordination}
\label{subsec:dice_pc}

DICE-PC coordinates frozen execution LLMs by learning distributions
over prompt-control actions rather than exchanging intermediate
deliberation trajectories.

At each environment step~$t$, the Coordinator publishes a
context~$m^t$ and each execution agent generates a one-shot candidate;
the Coordinator then aggregates candidates into the public
outcome~$y^t$ via a benchmark-dependent rule (best-of-$N$ for reasoning, majority vote for planning, critic scoring for active reasoning, concatenation for interactive tasks).
Each execution agent selects a bounded prompt-control action
$u_i^t\in\mathcal{U}$, parameterizing decode-time behavior
of a frozen LLM (temperature, nucleus threshold, repetition penalty, tool-access flag).
Since $\mathcal U$ is high-dimensional, DICE-PC performs policy
improvement on a small finite candidate set
$\mathcal U_{i,t}\subset\mathcal U$ induced by a controller
$\mu_i(b_i^t;\theta_i^u)$ with quantization and anchors,
preserving the logit best-response structure of
Eq.~\eqref{eq:logit_def}.

\paragraph{Sampling and rewards.}
Given $b_i^t$ and $\mathcal U_{i,t}$, the agent evaluates
$Q_i(b_i^t,u,t)$ for $u\in\mathcal U_{i,t}$, forms the HQRE logit
best response

\[
\pi_i(u\mid b_i^t,t)\ :=\
\frac{\exp\!\big(Q_i(b_i^t,u,t)/\alpha_{i,t}\big)}
     {\sum_{u'\in\mathcal U_{i,t}}
      \exp\!\big(Q_i(b_i^t,u',t)/\alpha_{i,t}\big)},
\qquad u\in\mathcal U_{i,t},
\]
samples $u_i^t\sim \pi_i(\cdot\mid b_i^t,t)$, and decodes a one-shot
candidate with the frozen LLM under~$u_i^t$.
Rewards are bounded per-agent scalars $r_i^t$ with
$|r_i^t|\le R_{\max}$, consistent with assumption~(A2); their
task-specific decomposition is described in
Sec.~\ref{sec:experiments}.


\paragraph{Critic updates and policy improvement.}
When learning is enabled, we update critics and the mixer by regression
on replayed transitions.
For one-turn tasks ($H{=}1$), we regress $Q_i(b_i^0,u_i^0,0)$ to the
observed return~$r_i^0$ (with optional normalization); for multi-step
tasks we use a discounted return-to-go or TD target.
Policy improvement is then implicit: at each visited belief state,
sampling uses the updated logit best response
$\pi_i(\cdot\mid b_i^t,t)$ induced by $Q_i$ and $\alpha_{i,t}$, which
directly instantiates the regularized best-response map
$\mathcal{B}_i^{\boldsymbol\alpha}$ of Eq.~\eqref{eq:logit_def}.

We keep $\alpha_{i,t}$ bounded away from zero to ensure well-posed
selection; the zero-temperature limit (Lemma~\ref{lem:zero_temp_full})
remains a theoretical connection to the unregularized game, not an
algorithmic target.
Algorithm~\ref{alg:dice_pc} summarizes the procedure.

\begin{algorithm}[t]
\caption{DICE-PC: prompt-control coordination via HQRE logit
best response}

\label{alg:dice_pc}
\begin{algorithmic}[1]
\REQUIRE Frozen execution LLMs $\{\mathrm{LLM}_i\}$; Coordinator;
replay buffer $\mathcal D$; encoder / controller / critic / mixer
parameters
\FOR{episode / query}
  \FOR{$t=0,1,\ldots,H-1$}
    \STATE Coordinator produces context $m^t$; broadcast to all agents
    \FOR{each execution agent $i$ (in parallel)}
      \STATE Update $h_i^t$; compute belief
        $b_i^t=B_i(h_i^t)$; form candidate set
        $\mathcal U_{i,t}$ around $\mu_i(b_i^t)$
      \STATE Evaluate $Q_i(b_i^t,u,t)$ for
        $u\in\mathcal U_{i,t}$; sample
        $u_i^t\sim \pi_i(\cdot\mid b_i^t,t)$
      \STATE Decode frozen $\mathrm{LLM}_i$ under control
        $u_i^t$ to obtain candidate $z_i^t$
    \ENDFOR
    \STATE Coordinator aggregates candidates into the outcome
      $y^t$; public stream token is
      $\widetilde y^t=(m^t,y^t)$
    \STATE Compute rewards $\{r_i^t\}$; store
      $(h^t,\mathbf u^t,\widetilde y^t,\mathbf r^t)$ in
      $\mathcal D$
    \IF{learning enabled}
      \STATE Update critics $\{Q_i\}$ and mixer
        $f_{\mathrm{mix}}$ by regression on minibatches
        from $\mathcal D$
      \STATE Update belief / controller encoders by
        backprop through the critic losses
    \ENDIF
  \ENDFOR
\ENDFOR
\end{algorithmic}
\end{algorithm}

\subsection{DICE-FT: HQRE-oriented mirror fine-tuning}
\label{subsec:dice_ft}

While DICE-PC coordinates frozen models through prompt control, DICE-FT
directly adjusts the models' token-level generation probabilities.
It does so using clipped KL-prox mirror updates toward the HQRE fixed
point.
The remainder of this subsection describes four components in order:
(1)~the heterogeneous temperature instantiation,
(2)~critic targets and advantage computation,
(3)~the clipped actor loss with entropy regularization, and
(4)~KL diagnostics for stability monitoring.
Readers primarily interested in experimental results may skip to
Section~\ref{sec:experiments} after reading the summary box
at the end of this section.

Each environment step~$t$ corresponds to one agent output (e.g., a
complete response).
Within that step, the agent makes $L_i$ sequential token decisions
indexed by $\ell=0,\ldots,L_i-1$.
The HQRE temperature $\alpha_{i,t,\ell}$, extending the
environment-step temperature $\alpha_{i,t}$ to a per-token granularity,
applies at each token position within each step, consistent with the
IIMG modeling in Sec.~\ref{subsec:notation}.

\paragraph{Trainable components and stored quantities.}
For each agent~$i$ we train an actor
$\pi_{\theta_i}(\cdot\mid b_i^t,\ell)$ (implemented with
parameter-efficient adapters such as LoRA), a token-action critic
$Q_{\psi_i}(b_i^t,x,\ell)$ evaluated only on sampled tokens, and a
soft baseline head
$V_{\nu_i}^{\mathrm{soft}}(z^t,b_i^t,\ell)$.
We also train a shared monotone mixer as a nonnegative linear map,
\begin{equation}
\label{eq:diceft_linear_mixer}
Q_{\mathrm{tot}}(z^t,\mathbf x_{\ell+1},\ell)
=\sum_{i=1}^N w_i^{\mathrm{mix}}(z^t)\,
Q_i(b_i^t,x_{i,\ell+1},\ell)+b^{\mathrm{mix}}(z^t),
\qquad w_i^{\mathrm{mix}}(z)\ge 0,
\end{equation}
which matches the flat instance of the hierarchical aggregation in
Eq.~\eqref{eq:hier_global}.

At each update iteration, we roll out a frozen snapshot policy
$\pi^{\mathrm{old}}$ and store, for every sampled token, its
log-probability under $\pi^{\mathrm{old}}$, under the current policy,
and under a fixed reference policy $\pi_{\mathrm{ref}}$.
These stored log-probabilities make the importance ratios, entropy
terms, and KL diagnostics directly computable.

\paragraph{Heterogeneous temperatures.}
We instantiate the agent- and state-dependent temperature $\alpha_{i,t}(h_i^t)$ of Eq.~\eqref{eq:J_i_def} as a two-level position-dependent schedule $\alpha_{i,t,\ell} \in \{\alpha_{\mathrm{lo}}, \alpha_{\mathrm{hi}}\}$.
The level switches based on whether the predictive entropy of a frozen reference model at the current prefix exceeds a threshold~$h_0$.
This schedule uses only information available at generation time; the minimum temperature $\alpha_{\min}=\alpha_{\mathrm{lo}}>0$ satisfies the theory's requirement.
Full specification of the threshold and two-level values is in Appendix~\ref{app:error_diagnostics_detail}.

\paragraph{Training objective overview.}
The training objective combines three components in a compositional pipeline:
(1)~critic regression losses that estimate per-token action values from rollout returns;
(2)~advantages that weight these estimates by the mixer coefficients and add a group-relative control variate for variance reduction; and
(3)~a clipped actor loss that performs the approximate KL-mirror step using the computed advantages and an explicit entropy bonus matching the HQRE temperature.
Each component feeds into the next, and we describe them in order below.

For a sampled rollout~$g$ with agent output length $L_i^{(g)}$, we
define a per-token scalar regression target by distributing the
terminal return across tokens,
\[
G_{i,\ell}^{(g)} := \frac{r_i^{(g)}}{\max\{L_i^{(g)},1\}},
\qquad \ell=0,\ldots,L_i^{(g)}-1,
\]
and mask tokens beyond a maximum generation budget to avoid unstable
gradients from truncated outputs.
This uniform allocation is intentionally simple: it avoids the need for
per-token reward models but may under-weight early tokens that determine
the solution structure.
We leave token-level credit assignment to future work.

\paragraph{Component 1: Critic regression losses.}
We train per-agent critics $Q_{\psi_i}$ and soft baselines $V_{\nu_i}^{\mathrm{soft}}$ by standard squared-error regression on the per-token return targets $G_{i,\ell}^{(g)}$; the mixer is trained jointly through its dependence on the same targets.
The explicit loss equations are given in Appendix~\ref{app:critic_loss_detail}.
\paragraph{Component 2: Advantage computation.}
The weighted entropy-regularized advantage for each sampled token is

\begin{equation}
\label{eq:diceft_weighted_adv}
A_i^{(w)}(z^t,b_i^t,x_{i,\ell+1},\ell)
:= w_i^{\mathrm{mix}}(z^t)\Big(
  Q_{\psi_i}(b_i^t,x_{i,\ell+1},\ell)
  -V_{\nu_i}^{\mathrm{soft}}(z^t,b_i^t,\ell)\Big).
\end{equation}

\paragraph{Group-relative control variates.}
We augment advantages with a group-relative control variate ($\lambda_{\mathrm{grp}}{=}0.5$) computed from $G$ rollouts via leave-one-out standardization:
\begin{equation}
\label{eq:diceft_adv_combined}
\widetilde A_{i}^{(g,\ell)} \ :=\
A_i^{(w)}(z^{(g)},b_i^{(g)},x_{i,\ell+1}^{(g)},\ell)
\ +\ \lambda_{\mathrm{grp}}\,\widehat A_{\mathrm{grp}}^{(g)}.
\end{equation}
The critic-based advantage in Eq.~\eqref{eq:diceft_weighted_adv} remains the primary signal; the group-relative term serves only for variance reduction (full derivation in Appendix~\ref{app:error_diagnostics_detail}).

\paragraph{Component 3: Clipped actor loss.}
The actor update follows a PPO-style clipped surrogate, augmented with the
HQRE entropy term.
Let $\rho_{i}^{(g,\ell)}$ be the importance ratio between the current
policy and the frozen snapshot policy at the sampled token,
\[
\rho_{i}^{(g,\ell)}
=\exp\!\Big(\log \pi_{\theta_i}(x_{i,\ell+1}^{(g)}\mid b_i^{(g)},\ell)
-\log \pi^{\mathrm{old}}_{\theta_i}(x_{i,\ell+1}^{(g)}
  \mid b_i^{(g)},\ell)\Big).
\]
We update each actor by maximizing the clipped near-prox surrogate with
an explicit entropy weight matching the HQRE temperature,

\begin{align}
\mathcal{L}_{i}^{\mathrm{actor}}(\theta_i)
&= \mathbb{E}_{g,\ell}\Big[
\min\big(\rho_{i}^{(g,\ell)}\,\widetilde A_{i}^{(g,\ell)},\
\mathrm{clip}(\rho_{i}^{(g,\ell)},
  1\!-\!\varepsilon_\ell,1\!+\!\varepsilon_\ell)\,
  \widetilde A_{i}^{(g,\ell)}\big)
\Big]
\nonumber
\\
& + \mathbb{E}_{g,\ell}\Big[
  \alpha_{i,t,\ell}\,
  \mathcal H(\pi_{\theta_i}(\cdot\mid b_i^{(g)},\ell))\Big],
\label{eq:diceft_actor_loss}
\end{align}
where $\varepsilon_\ell$ can be position-dependent (in our experiments,
$\varepsilon_\ell = 0.2$ for all positions).
\paragraph{KL diagnostics as stability proxies.}
We control the approximation error through
two KL diagnostics:
$\widehat{\mathrm{KL}}_{\mathrm{old}}$ (per-update snapshot KL triggering early stopping) and
$\widehat{\mathrm{KL}}_{\mathrm{ref}}$ (cumulative drift from the pretrained model, with rejected updates exceeding budget $\beta_{\mathrm{KL}}$).
In experiments, $\widehat{\mathrm{KL}}_{\mathrm{old}} > 0.05$ reliably signals overshooting and $\widehat{\mathrm{KL}}_{\mathrm{ref}} > 0.3$ signals drift risking exit from the stable convergence regime; we recommend these as conservative thresholds.
These serve as empirical proxies for the monotonicity margin $\alpha_{\min}-L_c$ (Theorem~\ref{thm:hqre_unique_mono}).

\paragraph{Deployment.}
At deployment, the centralized critic and mixer are discarded, and each
agent runs its fine-tuned policy using only its private history and the
public stream.
DICE-PC can additionally be applied as a lightweight inference overlay
on top of DICE-FT to further stabilize selection without modifying
fine-tuned weights.

Algorithm~\ref{alg:dice_ft} summarizes the training procedure.

\begin{algorithm}[t]
\caption{DICE-FT: token-level KL-prox mirror fine-tuning with a
monotone critic}

\label{alg:dice_ft}
\begin{algorithmic}[1]
\REQUIRE Training prompts; reference policy $\pi_{\mathrm{ref}}$;
actor / belief / critic / mixer parameters; group size $G$;
schedules $\{\alpha_{i,t,\ell},\varepsilon_\ell\}$; KL targets and
budgets
\FOR{update iteration $k=1,2,\ldots$}
  \STATE Freeze current actors as $\pi^{\mathrm{old}}$
  \STATE Sample a minibatch of prompts; for each prompt sample $G$
    joint rollouts under $\pi^{\mathrm{old}}$
  \STATE For each rollout, store sampled tokens and
    log-probabilities under $\pi^{\mathrm{old}}$ and
    $\pi_{\mathrm{ref}}$; compute rewards and group returns
  \STATE Compute $\widehat A_{\mathrm{grp}}$ by leave-one-out
    standardization; build per-token targets $G_{i,\ell}$ with
    masking
  \STATE \textit{// Critic update:} Update critics
    $\{Q_{\psi_i},V_{\nu_i}^{\mathrm{soft}}\}$ and mixer by
    minimizing $\sum_i(\mathcal L_{Q_i}+\mathcal L_{V_i})$
  \STATE \textit{// Advantage computation:} Compute per-token advantages
    $\widetilde A_{i}^{(g,\ell)}$ via
    Eq.~\eqref{eq:diceft_adv_combined} and temperatures
    $\alpha_{i,t,\ell}$
  \STATE \textit{// Actor update with early stopping:} Update actors by maximizing
    Eq.~\eqref{eq:diceft_actor_loss} with early stopping when
    $\widehat{\mathrm{KL}}_{\mathrm{old}}$ exceeds target
  \STATE \textit{// KL safeguard:} If
    $\widehat{\mathrm{KL}}_{\mathrm{ref}}$ exceeds budget, reject
    the update and optionally increase $G$ for high-variance prompts
    in the next iteration
\ENDFOR
\end{algorithmic}
\end{algorithm}

\keybox{\textbf{Algorithm summary.}
DICE-PC learns distributions over bounded prompt-control actions to coordinate frozen LLMs, while DICE-FT fine-tunes token-level policies via clipped KL-prox mirror updates.
Both target the HQRE fixed point using monotone critics, heterogeneous temperatures, and KL diagnostics that serve as practical stability proxies.}

\section{Experiments}
\label{sec:experiments}

This section evaluates DICE-PC and DICE-FT against single-model, debate-style, and ensembling baselines.
We organize experiments to mirror the causal chain underlying DICE:
mechanism validation of HQRE-based equilibrium selection and its deployable diagnostics (Sections~\ref{subsec:mechanism_validation}--\ref{subsec:convergence_diagnostics}),
end-task evaluation across eleven benchmarks in four domains under bounded public-stream budgets (Sections~\ref{subsec:endtask_reasoning_planning}--\ref{subsec:endtask_active_coord}),
and communication efficiency and sensitivity analysis (Sections~\ref{subsec:efficiency_summary}--\ref{subsec:ablations_sensitivity}).
Table~\ref{tab:exp_claim_map} maps each claim to its supporting experiment.

\subsection{Overview and Experimental Protocol}
\label{subsec:exp_setup}

All experiments share a common protocol to ensure fair comparison.
Unless explicitly noted, each DICE instance uses one Coordinator and
three Executors under the coordinator-mediated protocol described in
Section~\ref{sec:algorithm}.
The Coordinator broadcasts a single global strategy message per round
(capped at 70 tokens), Executors generate one-shot candidates
independently, and the Coordinator aggregates candidates into the public
outcome.
This design fixes the public-stream footprint per step.

We evaluate two deployment variants throughout.
\textit{DICE-PC} keeps Executor LLM weights frozen and learns only a lightweight decision layer comprising belief encoders, value networks, mixers, and prompt-control heads using HQRE-shaped mirror updates.
\textit{DICE-FT} additionally fine-tunes Executors via parameter-efficient adapters under the same HQRE mirror objective.
Both variants employ the heterogeneous temperature schedule and differentiated training constraints described in Section~\ref{subsec:dice_ft}.

We evaluate on four domains:
(1)~reasoning (AIME24/25, MATH-500~\citep{hendrycks2021math}, ZebraLogic~\citep{lin2025zebralogic}, AutoLogic~\citep{zhu2025autologi});
(2)~constrained planning (PlanBench~\citep{valmeekam2023planbench}, TravelPlanner);
(3)~active reasoning requiring iterative information gathering (ARBench)~\citep{zhou2025active}; and
(4)~multi-agent coordination (\textsc{LLM-Coordination-Bench}~\citep{agashe2023evaluating}).
This selection covers both one-step episodes and multi-turn interactive settings where discounting and incomplete information play central roles.
The per-agent reward $r_i^t$ is computed from benchmark outputs as
follows: AIME and MATH-500 use binary correctness (1 if the final
answer matches the ground truth, 0 otherwise); ZebraLogic and AutoLogic
use binary correctness of the logical assignment; PlanBench uses binary
plan validity; TravelPlanner uses the fraction of hard constraints
satisfied; ARBench uses the benchmark's native scoring function;
Overcooked-AI uses the environment's shaped reward; and Hanabi uses the
game score normalized to $[0,1]$.
Our primary configurations use Qwen3-4B and Qwen3-8B as Executors,
with Qwen3-32B and GPT-4-Turbo as scale references.%
\footnote{We append ``-Inst'' for instruct-tuned variants and ``-Base''
for base (unaligned) checkpoints. ``Qwen3-A3B'' denotes the 30B MoE
variant with 3B active parameters.}
We use open-source model families (Qwen, LLaMA) as primary backbones and include closed-source references where public numbers are available, marking them clearly in all tables.
Because closed-source models use different evaluation pipelines (prompt formats, sampling strategies, tool access), their entries serve as contextual reference points rather than controlled comparisons.

Baselines fall into three categories: prompt-only inference methods (zero-/few-shot CoT, Self-Consistency, ToT, debate variants), single-agent fine-tuning (PPO/GRPO/DAPO-style updates), and multi-agent fine-tuning under matched parameter budgets.
We group results by method class in all tables to enable fair within-class comparisons that isolate coordination effects from backbone differences.

\subsubsection{Training Details}
DICE-PC trains belief encoders, value networks, mixers, and
prompt-control heads using Adam (learning rate $3\times10^{-4}$,
$\beta_1{=}0.9$, $\beta_2{=}0.999$) with a replay buffer of 10K
transitions and minibatch size~256.
DICE-FT uses the same optimizer with learning rate $1\times10^{-5}$
for LoRA adapters (rank~16, $\alpha_{\mathrm{LoRA}}{=}32$) and
$3\times10^{-4}$ for critic and mixer heads.
The default group size is $G{=}8$ rollouts per prompt, the clip
parameter is $\varepsilon_\ell{=}0.2$, and the heterogeneous
temperature schedule uses $\alpha_{\mathrm{hi}}{=}0.1$,
$\alpha_{\mathrm{lo}}{=}0.01$, with entropy threshold $h_0{=}1.5$
nats.
Training runs for 200 iterations on reasoning benchmarks and 500
iterations on interactive benchmarks, with ABR-based early stopping
(the Average Best-Response gap, defined in
Section~\ref{subsec:abr_stopping}; threshold~0.1, used uniformly across all benchmarks without per-task tuning).
All experiments were conducted on 4$\times$A100-80GB GPUs.
Approximate wall-clock training times (GPU-hours) by benchmark class:
\textsc{Text-Hanabi} (decision layer only), ${\sim}2$;
reasoning benchmarks (AIME, MATH-500, ZebraLogic, AutoLogic), ${\sim}6$ for DICE-PC and ${\sim}18$ for DICE-FT;
planning (PlanBench, TravelPlanner), ${\sim}8$ for DICE-PC and ${\sim}22$ for DICE-FT;
active reasoning (ARBench), ${\sim}10$ for DICE-PC;
interactive (Overcooked-AI, Hanabi, CollabEscape/Capture), ${\sim}12$ for DICE-PC and ${\sim}24$ for DICE-FT.

\subsubsection{Monotonicity Verification at Scale}
The theoretical guarantees of Theorem~\ref{thm:hqre_unique_mono} require
the monotonicity condition $\alpha_{\min}>L_c$.
We verify this condition directly only in the \textsc{Text-Hanabi}
experiments (Section~\ref{subsec:text_hanabi_margin}), where the
lightweight decision layer makes online estimation of $L_c$ tractable.
For the large-scale reasoning and planning benchmarks (Tables~\ref{tab:overall}--\ref{tab:travel-planner-results}), estimating $L_c$ would require
computing second-order cross-agent gradient information through the full
LLM, which is computationally prohibitive at current scale.
Instead, we rely on the KL diagnostics
$\widehat{\mathrm{KL}}_{\mathrm{old}}$ and
$\widehat{\mathrm{KL}}_{\mathrm{ref}}$ as indirect stability proxies:
controlled KL drift is necessary but not sufficient for the
monotonicity condition to hold, so the diagnostics provide evidence of
stability without formally certifying uniqueness.
Developing scalable, data-driven estimators for $L_c$ at LLM scale
is an important open problem (Section~\ref{sec:conclusion}).

For reproducibility, we report both performance and communication cost across all tasks.
Token accounting includes total generated tokens from all agents, message rounds per episode, and per-turn token rates where applicable.
Training uses standard data sets with contamination audits to ensure no evaluation instances appear in training data.
We report pass@$K$ and avg@$K$ metrics with benchmark-specific sampling counts, averaging over five random seeds unless stated otherwise.

For interactive coordination environments with native multi-agent interfaces, we follow the benchmark's turn-taking or simultaneous-action protocol and treat the environment-visible trajectory log as the public stream.
\textsc{Text-Hanabi} is used for mechanism diagnostics under a two-agent interface; its details are specified in Section~\ref{subsec:text_hanabi_setup}.

\subsubsection{Reproducibility}
All hyperparameters are specified above (optimizer, learning rates, LoRA
rank and $\alpha$, group size $G$, clip parameter $\varepsilon$,
temperature schedule, and early-stopping threshold).
We report means over 5 random seeds (seeds 0--4) for all DICE entries and include
standard deviations where they exceed 0.5\%.
Each seed controls the random number generator for decision-layer weight initialization (or LoRA adapter initialization for DICE-FT), training data shuffling, and rollout sampling.
Anonymous code, configuration files, and evaluation scripts are included in the supplementary material for review.

\begin{table}[t!]
\centering
\caption{Experiment-to-claim map.  Each theoretical claim is paired with the section and key evidence that supports it.}
\label{tab:exp_claim_map}
\small
\setlength\tabcolsep{4pt}
\begin{tabular}{@{}p{4.4cm}p{2.6cm}p{6.0cm}@{}}
\toprule
\textbf{Claim} & \textbf{Section} & \textbf{Key evidence} \\
\midrule
HQRE temperature selects among multiple equilibria
  & \ref{subsec:toy_2x2}
  & Fig.~\ref{fig:panel_2x2} \\
\addlinespace[2pt]
Uniqueness margins predict stability under incomplete information
  & \ref{subsec:text_hanabi_margin}
  & Fig.~\ref{fig:uniqueness_convergence} \\
\addlinespace[2pt]
Debate locks into defensive mixing with linear regret
  & \ref{subsec:text_hanabi_margin}
  & Fig.~\ref{fig:defensive_mixing}(a,\,c) \\
\addlinespace[2pt]
HQRE achieves bounded Bayesian regret
  & \ref{subsec:text_hanabi_margin}
  & Fig.~\ref{fig:defensive_mixing}(b,\,c) \\
\addlinespace[2pt]
Practical updates follow KL--mirror geometry
  & \ref{subsec:convergence_diagnostics}
  & Fig.~\ref{fig:mirror_geometry} \\
\addlinespace[2pt]
ABR provides a reliable stopping criterion
  & \ref{subsec:convergence_diagnostics}
  & Fig.~\ref{fig:abr_bound} \\
\addlinespace[2pt]
Hierarchical decomposition improves scalability margins
  & \ref{subsec:hier_text_hanabi}
  & Fig.~\ref{fig:hierarchical_scaling} \\
\addlinespace[2pt]
End-task gains under bounded public streams
  & \ref{subsec:endtask_reasoning_planning}--\ref{subsec:endtask_active_coord}
  & Tables~\ref{tab:overall},\,\ref{tab:travel-planner-results}; Fig.~\ref{fig:arbench}; Tables~\ref{tab:overcooked_summary}--\ref{tab:hanabi_summary} \\
\addlinespace[2pt]
Communication efficiency under deployment budgets
  & \ref{subsec:efficiency_summary}
  & Tables~\ref{tab:overcooked_summary},\,\ref{tab:collab_summary} \\
\bottomrule
\end{tabular}
\end{table}

\subsection{Mechanism Validation: Equilibrium Selection}
\label{subsec:mechanism_validation}

We validate HQRE-based equilibrium selection in a closed-form coordination game and then test whether theory-derived uniqueness margins remain predictive under partial observability, long horizons, and function approximation.

\subsubsection{Closed-form verification in a minimal coordination game}
\label{subsec:toy_2x2}

Consider a symmetric $2\times2$ coordination game parameterized by
$\varepsilon\in(0,1)$, with payoffs $u(L,L)=1$,
$u(R,R)=1-\varepsilon$, and $u(L,R)=u(R,L)=0$.
We set $\varepsilon=0.3$ (so $u(R,R)=0.7$).
The game admits two pure Nash equilibria and one unstable mixed equilibrium.
Under symmetric HQRE, the logit response becomes a contraction when $\alpha$ exceeds a conservative sufficient bound of $(2-\varepsilon)/4 \approx 0.425$, obtained from Theorem~\ref{thm:unique_hqre_joint}'s contraction criterion applied to this game.
We numerically compute all fixed points across temperatures to characterize the actual phase transition.

\begin{figure}[t!]
\centering
\includegraphics[width=\textwidth]{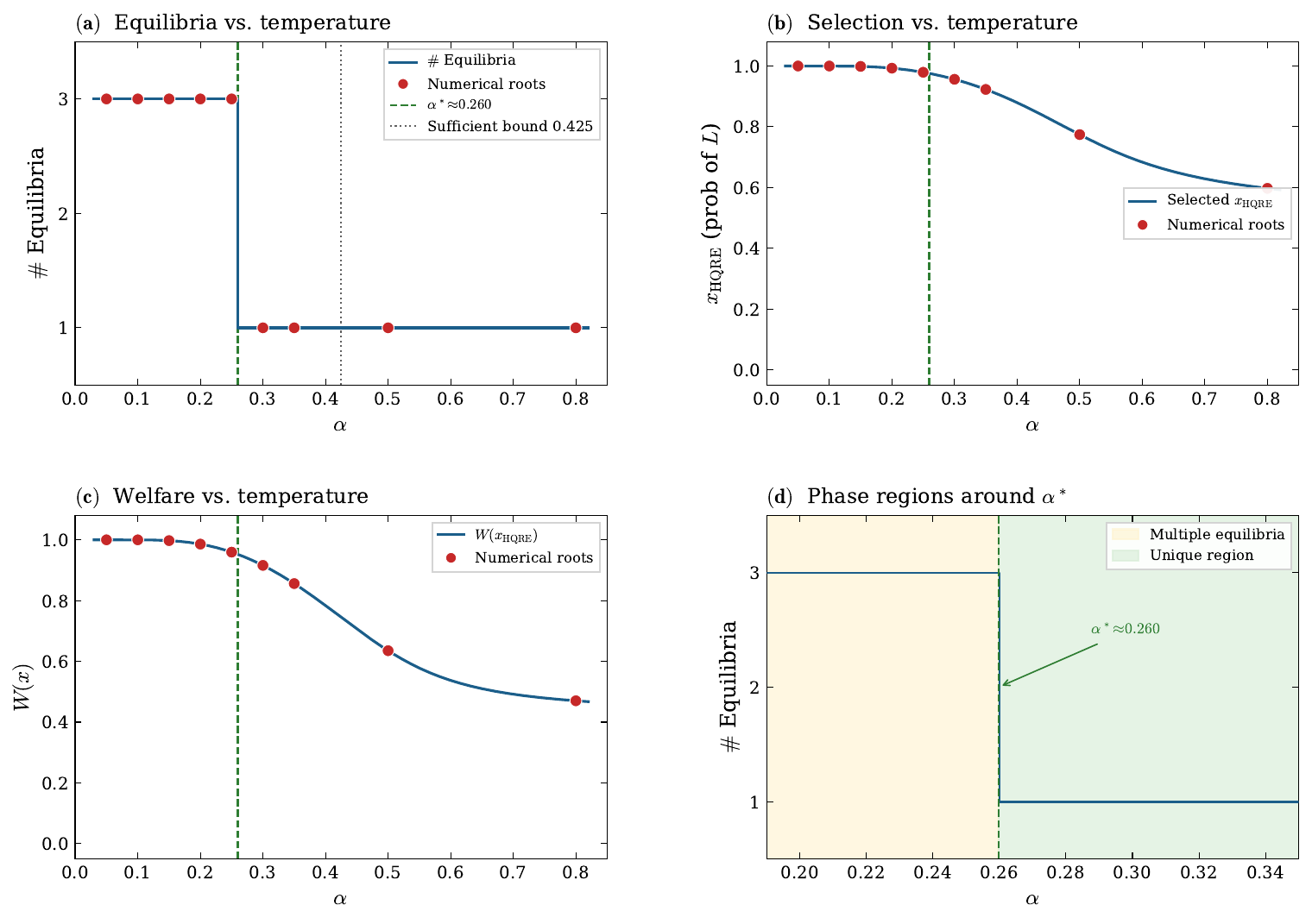}
\caption{HQRE in a $2\times 2$ coordination game ($\varepsilon = 0.3$).
(a)~Number of equilibria versus temperature~$\alpha$ shows a phase change from three equilibria to a unique equilibrium; the empirical boundary occurs at $\alpha^* \approx 0.261$ (green dashed), tighter than the sufficient bound $0.425$ (gray dotted).
(b)~Selected symmetric equilibrium $x_{\text{HQRE}}$ decreases with $\alpha$ due to increased mixing.
(c)~Welfare $W(x)$ declines correspondingly as entropy rises.
(d)~Phase diagram marking multiple-equilibria (yellow) and unique (green) regions around $\alpha^*$.}
\label{fig:panel_2x2}
\vspace{-14pt}
\end{figure}

Figure~\ref{fig:panel_2x2} confirms a sharp transition from three equilibria to a unique equilibrium at $\alpha^*\approx 0.261$, well below the conservative bound.
As $\alpha$ increases, the selected HQRE moves toward greater mixing and welfare decreases as entropy rises, illustrating the regularization--performance trade-off.
In the opposite limit $\alpha\to 0$, HQRE strategies approach pure Nash equilibria, consistent with Lemma~\ref{lem:zero_temp_full}.
The closed-form analysis confirms that HQRE temperature controls equilibrium multiplicity in principle; we next test whether these predictions persist under partial observability, long horizons, and function approximation.

\subsubsection{\textsc{Text-Hanabi} setup}
\label{subsec:text_hanabi_setup}

\textsc{Text-Hanabi} is a two-agent Dec-POMDP with $\gamma=0.99$ and horizon $T=200$.
Both agents are frozen \textsc{Qwen3-4B} language models.
To isolate equilibrium-selection effects from backbone changes, learning is restricted to a lightweight decision layer (approximately 12M parameters per agent) that selects high-level actions and bounded decoding controls.
The decision layer consists of a two-layer Transformer belief encoder
(4 heads, hidden dimension~256), an MLP value head (two hidden layers
of 512 units), and a linear action controller, together with the belief
and value components used by the HQRE updates.
We follow the benchmark's native interaction protocol and treat the environment-visible trajectory as the public stream for the IIMG abstraction.
Token accounting, temperature schedules, and stopping rules follow the default experimental setup described in Section~\ref{subsec:exp_setup}.

\subsubsection{Uniqueness margins in \textsc{Text-Hanabi}}
\label{subsec:text_hanabi_margin}

We test whether the uniqueness margin from Theorem~\ref{thm:hqre_unique_mono} remains predictive in a structured cooperative game with partial observability and long horizons.
To isolate equilibrium-selection effects from backbone changes, both agents are frozen and learning is restricted to the decision layer and critics.
We compare three learning dynamics: unregularized best-response learning, debate-style message passing (MAD protocol with three rounds per step), and HQRE-based coordination (both homogeneous and heterogeneous temperatures).

Our central diagnostic is the estimated uniqueness margin $\mu_k = \alpha_{\min,k} - \widehat{L}_{c,k}$ (Section~\ref{subsec:hqre}), where $\widehat{L}_{c,k}$ is an online estimate of the coupling constant.
When $\mu_k>0$, Theorem~\ref{thm:hqre_unique_mono} predicts a unique HQRE and linear-rate convergence of KL--mirror updates.

\begin{figure}[t!]
\centering
\includegraphics[width=\textwidth]{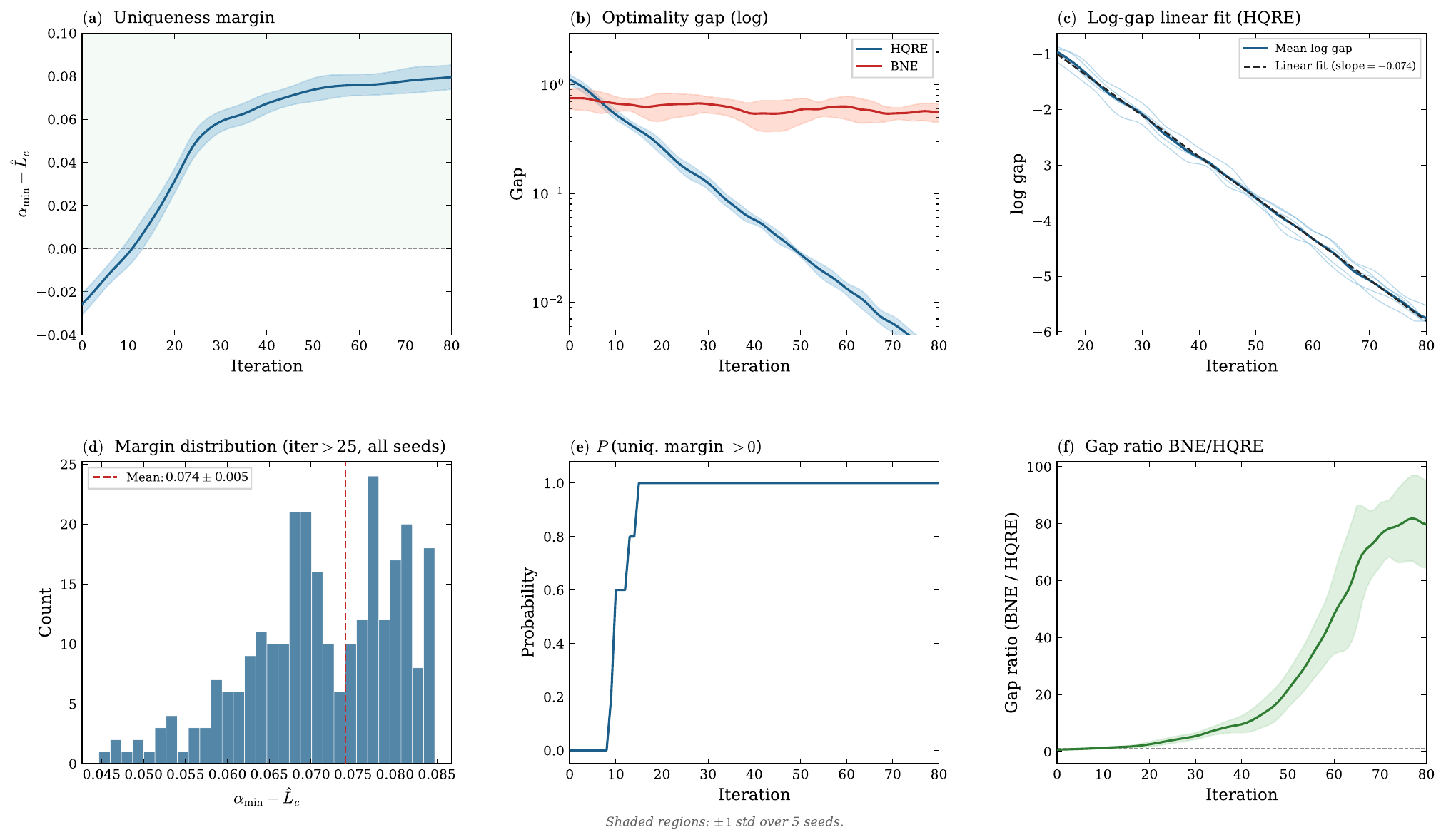}
\caption{
Uniqueness margin and linear-rate convergence in \textsc{Text-Hanabi}.
\textbf{(a)} The uniqueness margin $\alpha_{\min} - \widehat{L}_c$ becomes positive and stabilizes.
\textbf{(b)} Optimality gap: HQRE decays exponentially, whereas BNE plateaus.
\textbf{(c)} A linear fit in log-space is consistent with the predicted linear convergence regime of explicit KL-mirror theory.
\textbf{(d)} Distribution of the margin after convergence.
\textbf{(e)} The probability of a positive margin approaches 1.0.
\textbf{(f)} The gap ratio exceeds $20\times$ by iteration~80.
Shaded regions indicate $\pm 1$ std over 5 seeds.
}
\label{fig:uniqueness_convergence}
\vspace{-20pt}
\end{figure}

Figure~\ref{fig:uniqueness_convergence} shows that the margin becomes positive after roughly 20~iterations and stabilizes at $0.074\pm0.005$, providing an explicit stability certificate for the learned policy.
Unregularized best-response learning plateaus with oscillatory behavior characteristic of equilibrium switching (Proposition~\ref{prop:multi_linear}).
In contrast, HQRE-based learning exhibits exponential decay of the optimality-gap proxy.
By iteration~80, the gap ratio exceeds $20\times$ and the probability of maintaining a positive margin reaches~1.0 across seeds.

\subsubsection{Debate Dynamics and Regret Accumulation}

The uniqueness-margin analysis explains \emph{why} unregularized learning fails (equilibrium multiplicity), but the theoretical framework also predicts a distinct failure mode for debate: persistent defensive mixing that maintains a positive welfare-gap floor even after convergence (Lemma~\ref{lem:debate_suboptimality_main}).
To test both predictions simultaneously and verify the bounded-regret guarantee of Corollary~\ref{cor:hqre_regret_bounded}, we measure three additional diagnostics in the same \textsc{Text-Hanabi} setup.

\begin{figure}[t!]
\centering
\includegraphics[width=\textwidth]{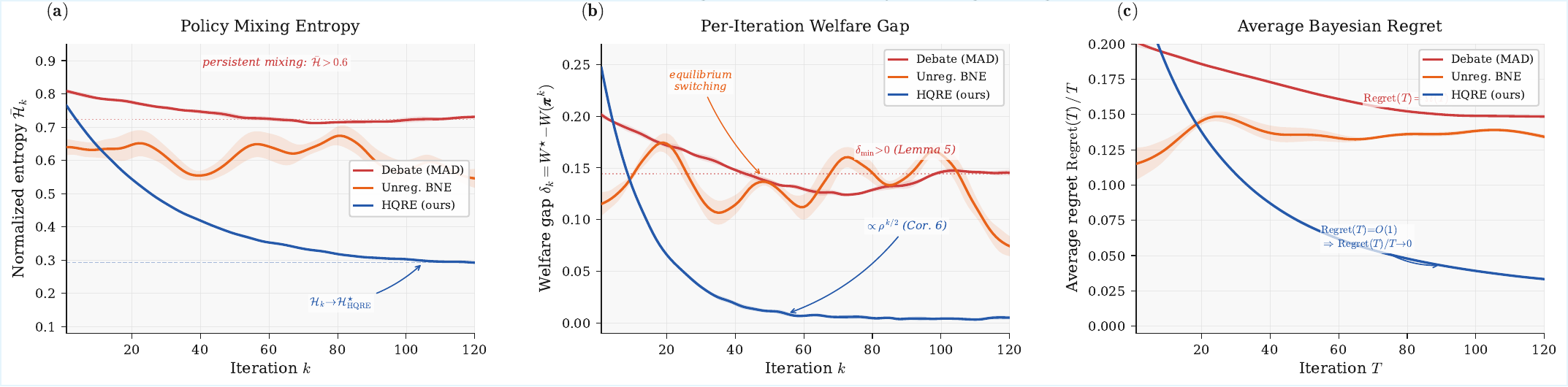}
\caption{Defensive mixing persistence and Bayesian regret diagnostics in \textsc{Text-Hanabi}.
(a)~Normalized policy mixing entropy $\bar{\mathcal{H}}_k$ averaged over visited information states.
Debate maintains $\bar{\mathcal{H}} > 0.6$ throughout training (persistent defensive mixing, consistent with Lemma~\ref{lem:debate_suboptimality_main}), while HQRE decays monotonically to the unique equilibrium entropy level $\mathcal{H}^\star_{\mathrm{HQRE}}$.
(b)~Per-iteration welfare gap $\delta_k = W^\star - W(\boldsymbol{\pi}^k)$.
Debate exhibits a positive floor $\delta_{\min} > 0$; unregularized BNE shows recurrent drift spikes from equilibrium switching; HQRE decays at rate $\rho^{k/2}$ consistent with Corollary~\ref{cor:hqre_regret_bounded}.
(c)~Average Bayesian regret $\mathrm{Regret}(T)/T$.
Debate and unregularized BNE converge to positive constants (linear cumulative regret), confirming Proposition~\ref{prop:multi_linear} and Corollary~\ref{cor:mad_linear}.
HQRE converges to zero (bounded cumulative regret, Corollary~\ref{cor:hqre_regret_bounded}).
Shaded regions: $\pm 1$ std over 5~seeds.}
\label{fig:defensive_mixing}
\vspace{-14pt}
\end{figure}

Figure~\ref{fig:defensive_mixing}(a) confirms that debate-style dynamics maintain elevated mixing entropy ($\bar{\mathcal{H}} > 0.6$) throughout training, consistent with persistent defensive mixing.
This stands in contrast to unregularized BNE, whose entropy oscillates due to equilibrium switching, and to HQRE, whose entropy decays monotonically toward the unique equilibrium level.
The distinction is operationally meaningful: debate agents \emph{converge} (to a mixed strategy), but to the wrong target, whereas BNE agents \emph{fail to converge} at all.

Figure~\ref{fig:defensive_mixing}(b--c) translates these mixing patterns into welfare and regret.
The debate welfare gap stabilizes at a positive floor $\delta_{\min} \approx 0.14$, yielding linear cumulative regret; unregularized BNE accumulates regret at a similar average rate despite its oscillatory trajectory.
Under HQRE, the welfare gap decays exponentially and the average Bayesian regret $\mathrm{Regret}(T)/T$ converges to zero, confirming the $O(1)$ cumulative regret bound of Corollary~\ref{cor:hqre_regret_bounded}.
Together with the uniqueness-margin results, these diagnostics provide empirical support for the three negative results (single-model capacity ceiling via Figure~\ref{fig:uniqueness_convergence}(a), debate defensive mixing via panel~(a), and multiplicity-induced drift via Figure~\ref{fig:uniqueness_convergence}) and the positive HQRE convergence and regret guarantees.

\subsection{Convergence Diagnostics and Deployment Criteria}
\label{subsec:convergence_diagnostics}

Beyond uniqueness, deployment requires diagnostics that are measurable without access to best responses and robust under function approximation.
We validate three practical questions under the \textsc{Text-Hanabi} setup: whether practical updates follow KL--mirror geometry, how to determine convergence for early stopping, and whether a Local--Global hierarchy improves stability margins as interaction structure becomes more complex.

\subsubsection{Mirror-geometry fidelity}
\label{subsec:mirror_fidelity}

For each iteration in \textsc{Text-Hanabi}, we compute the mirror-descent improvement lower bound from measured KL step sizes and per-token temperatures, and compare it to realized improvements from held-out rollouts.
If practical updates approximate the ideal KL--mirror step, realized improvements should match or exceed the theoretical lower bounds.

\begin{figure}[t!]
\centering
\includegraphics[width=\textwidth]{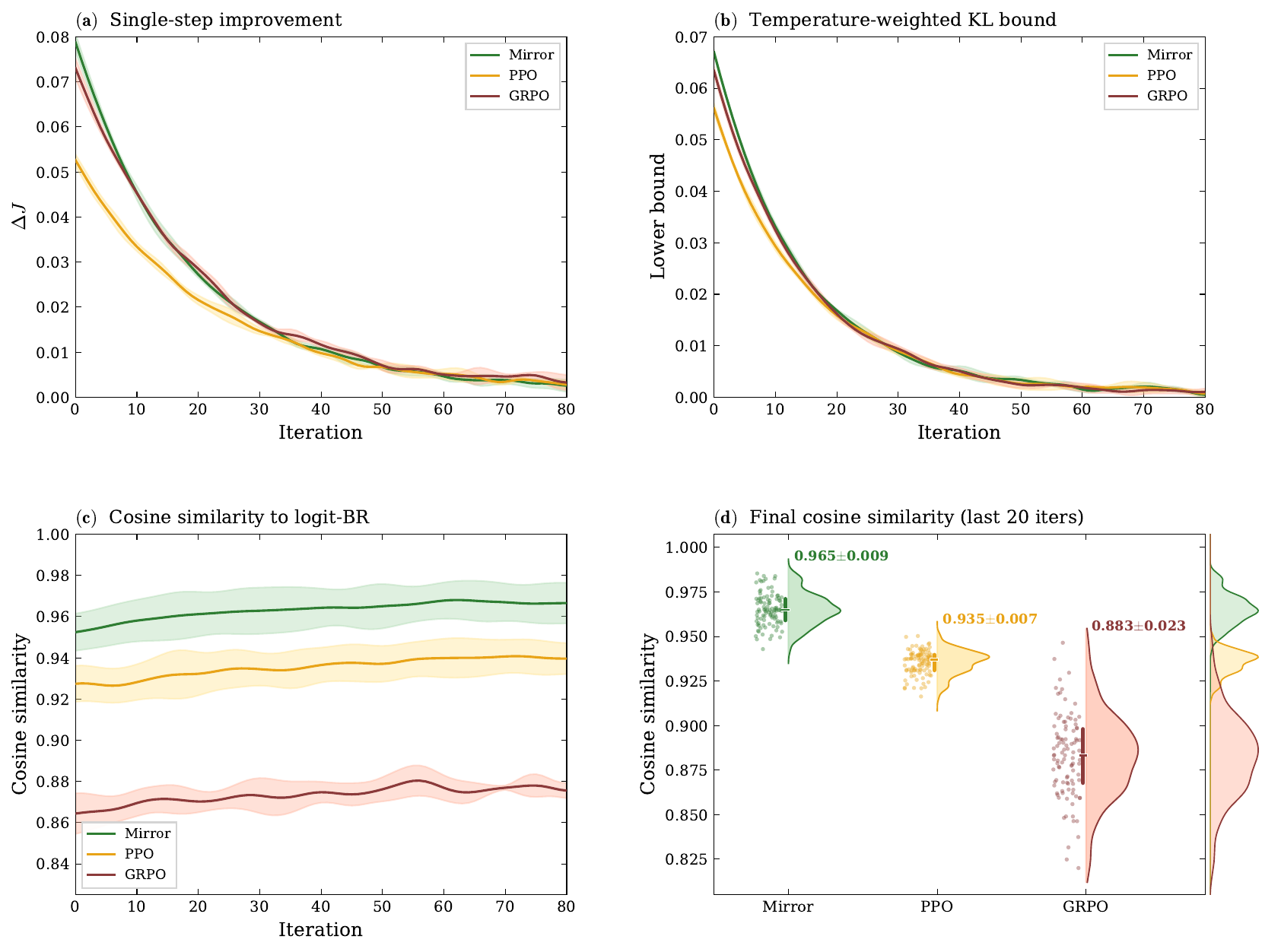}
\caption{Practical algorithms approximate mirror geometry in \textsc{Text-Hanabi}.
(a--b)~Single-step improvements match the theoretical lower bounds.
(c)~Ratios concentrate near 1.0.
(d)~High cosine similarity to logit best response.
Shaded regions: $\pm 1$ std over 5~seeds.}
\label{fig:mirror_geometry}
\vspace{-14pt}
\end{figure}

Figure~\ref{fig:mirror_geometry} confirms that improvement-to-bound ratios concentrate near $1.0$ across seeds, with update directions maintaining high cosine similarity to the logit best response and small KL divergence from the exact mirror step.
These diagnostics indicate that the implementation remains faithful to the KL--mirror geometry that underlies our convergence guarantees, despite partial observability and function approximation.

\subsubsection{Average Best-Response gap as a stopping criterion}
\label{subsec:abr_stopping}

Large-scale training requires a convergence signal measurable without exact best responses and stable enough for early stopping.
The \emph{Average Best-Response gap} (ABR) is defined as
\[
\mathrm{ABR}_k \;:=\; \frac{1}{|\mathcal{B}_k|}\sum_{(h,t)\in\mathcal{B}_k} \max_{a_i} \widehat{Q}_i(h,a_i,t) - \widehat{Q}_i(h,a_i^k,t),
\]
where $\mathcal{B}_k$ is the set of information-state--timestep pairs visited in the replay buffer at iteration~$k$, $\widehat{Q}_i$ is the learned critic, and $a_i^k$ is the action taken by the current policy.
Under the conditions that the critic regression has converged (critic loss below $10^{-3}$) and the replay buffer covers at least 80\% of reachable information states, ABR upper-bounds the true optimality gap up to constants.

\begin{figure}[t!]
\centering
\includegraphics[width=\textwidth]{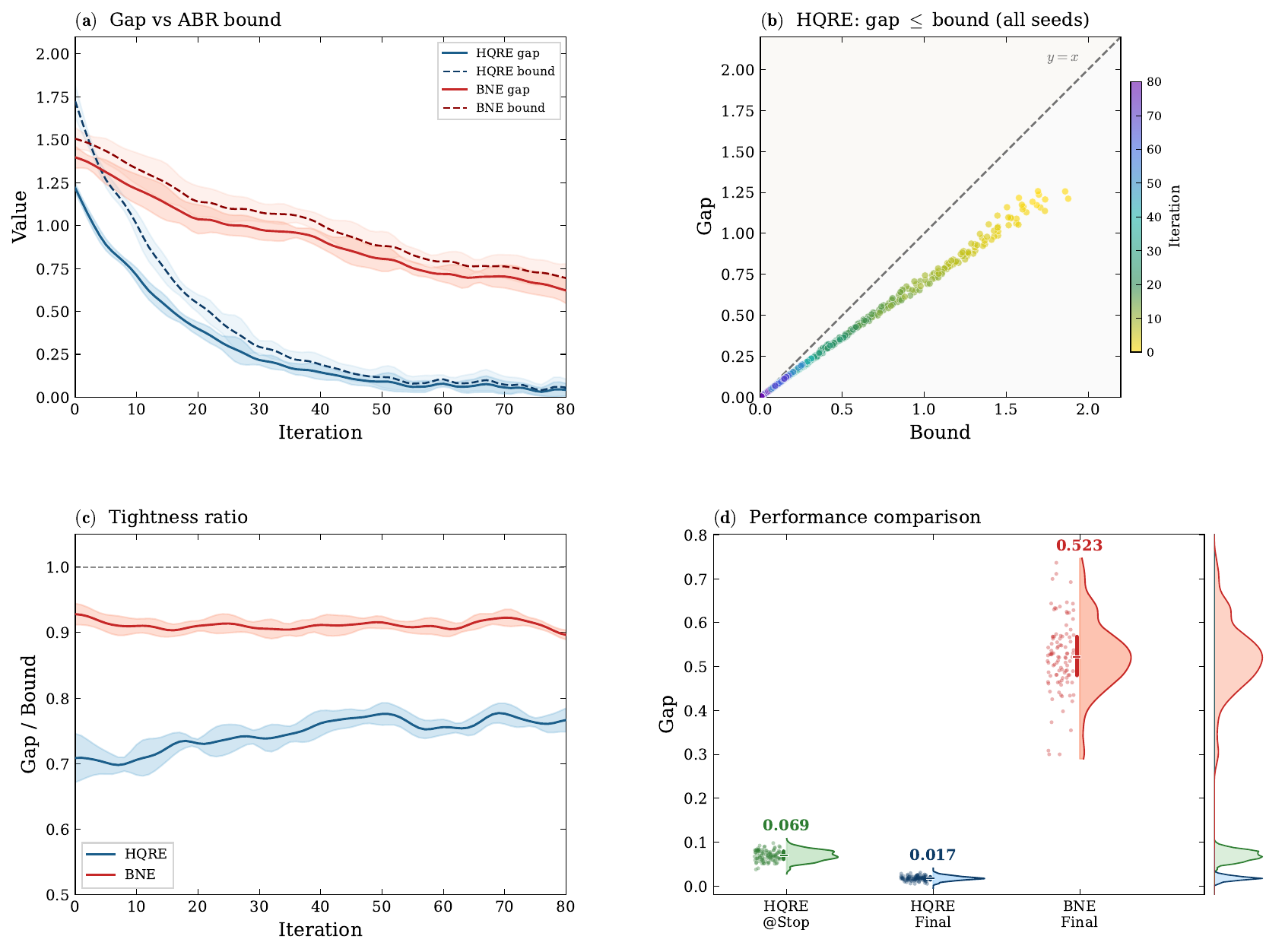}
\caption{ABR as a convergence metric. \textbf{(a)} ABR upper-bounds the optimality gap and is tight in practice. \textbf{(b)} Uniform validity (points below the diagonal). \textbf{(c)} Tightness ratios concentrate near 0.7--0.9 depending on the method. \textbf{(d)} Early-stopping with the rule $\mathrm{ABR}<0.1$ yields policies close to final convergence.}
\label{fig:abr}
\label{fig:abr_bound}
\vspace{-14pt}
\end{figure}

Figure~\ref{fig:abr_bound} shows that ABR consistently upper-bounds the optimality gap while remaining reasonably tight, with tightness ratios concentrating in the range $0.7$--$0.9$ across methods and seeds.
A simple threshold rule (ABR~$<0.1$) reliably identifies converged policies.
We therefore adopt ABR-based early stopping in all subsequent large-scale experiments.

\subsubsection{Hierarchical decomposition for scalability}
\label{subsec:hier_text_hanabi}

The uniqueness condition in Theorem~\ref{thm:hqre_unique_mono} can be conservative when treated as structure-agnostic, as the contraction threshold scales with agent count.
Section~\ref{subsec:hier_theory} predicts that introducing a Local--Global hierarchy yields stability margins governed by structural coupling rather than raw agent count.
We test this prediction by instantiating a two-level hierarchical HQRE variant in \textsc{Text-Hanabi}.
The hierarchical policy consists of a global controller that selects macro-intents via a softmax gate, and local heads that handle concrete arguments and decoding parameters for each intent type.
We match total trainable parameters and environment interactions to a flat baseline and use the same ABR-based early-stopping.

\begin{figure}[t!]
\centering
\includegraphics[width=\textwidth]{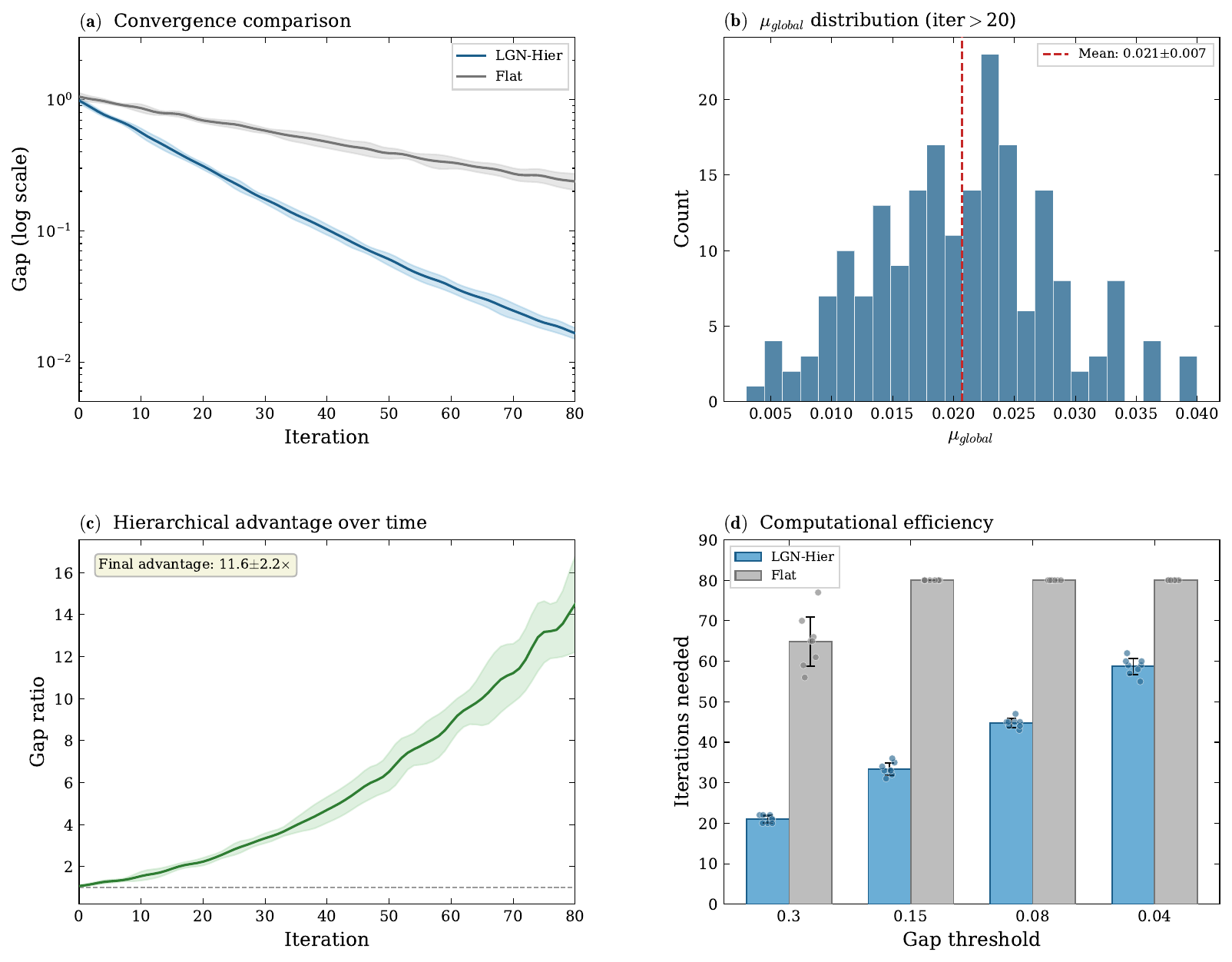}
\caption{Hierarchical scaling in \textsc{Text-Hanabi}. \textbf{(a)} Faster convergence for Local--Global HQRE versus a flat baseline. \textbf{(b)} Distribution of the global uniqueness margin. \textbf{(c)} Gap ratio showing up to $10\times$ improvement from hierarchical decomposition. \textbf{(d)} Iterations needed to reach varying gap thresholds. Shaded regions: $\pm 1$ std over 5 seeds.}
\label{fig:hierarchical}
\label{fig:hierarchical_scaling}
\vspace{-20pt}
\end{figure}

Figure~\ref{fig:hierarchical_scaling} shows that the hierarchical variant converges faster and reaches a substantially lower final optimality gap ($0.01$ versus $0.07$) compared to the flat baseline under matched budgets.
The global margin remains positive with probability one throughout training, consistent with the unique-equilibrium guarantee of the hierarchical theory.
The gap ratio between methods reaches ten-fold by convergence, confirming that structural decomposition improves coordination learning beyond flat architectures.

\subsection{Reasoning and Planning Tasks}
\label{subsec:endtask_reasoning_planning}

We evaluate whether these mechanism-level properties translate into end-task gains. In one-step episodes ($H=1$), a single public-stream token $\widetilde{y}^0=(m^0,y^0)$ encodes the Coordinator context and final output. In this regime, DICE improves coordination mainly via equilibrium selection under a fixed public-stream budget.

\subsubsection{General reasoning and planning}
\label{subsec:reasoning_planning}

\begin{table}[htbp]
\centering
\caption{Performance on general reasoning and planning benchmarks.
Prompt-control baselines use instruct-tuned models (Qwen3-Inst); fine-tuning baselines use base models (Qwen3-Base).
Entries marked~$^\diamond$ are taken from publicly reported results (papers or official leaderboards) where available, or from our reimplementations using official released code under our evaluation protocol; unmarked entries are our own runs.
Closed-source entries serve as uncontrolled reference points; see text for caveats.
Best result per column is \textbf{bold}; second-best is \underline{underlined}.
All DICE entries report mean over 5 seeds; standard deviations are ${<}\,0.5\%$ on all benchmarks and are omitted for space.
DICE-PC (3$\times$4B) generates approximately 4.2K total tokens per problem on AIME24 (across 3~agents $\times$ $G{=}8$ rollouts); single-model CoT baselines generate approximately 1.1K tokens per problem.}
\label{tab:overall}
\vspace{2pt}
\footnotesize
\setlength\tabcolsep{4pt}
\resizebox{\columnwidth}{!}{%
\begin{tabular}{@{}llS[table-format=2.1]S[table-format=2.1]S[table-format=2.1]S[table-format=2.1]S[table-format=2.1]S[table-format=3.1]@{}}
\toprule
\multirow{2}{*}{\textbf{Method}} & \multirow{2}{*}{\textbf{Size}}
  & \multicolumn{4}{c}{\textbf{General Reasoning (\%)}}
  & \multicolumn{2}{c}{\textbf{Planning (\%)}} \\
\cmidrule(lr){3-6}\cmidrule(lr){7-8}
& & {AIME24} & {AIME25} & {MATH-500} & {ZebraLogic} & {AutoLogic} & {PlanBench} \\
\midrule
\multicolumn{8}{@{}l}{\textit{Prompt-control baselines (inference-time, instruct LMs)}} \\
\addlinespace[2pt]
Qwen3-Inst (Zero-shot CoT)$^\diamond$ & 4B  & 60.7 & 49.7 & 74.0 & 68.0 & 63.5 & 29.5 \\
Qwen3-Inst (Few-shot CoT)$^\diamond$  & 4B  & 80.0 & 71.1 & 96.8 & 87.7 & 83.0 & 40.1 \\
Qwen3-Inst (SC@64)$^\diamond$         & 4B  & 75.4 & 65.6 & 88.2 & 82.1 & 78.1 & 36.3 \\
Qwen3-Inst (ToT)$^\diamond$           & 4B  & 75.1 & 65.5 & 87.9 & 81.9 & 77.8 & 37.0 \\
Qwen3-Inst (ReAct)$^\diamond$         & 4B  & 78.4 & 68.0 & 90.1 & 84.2 & 80.6 & 37.9 \\
Qwen3-Inst (SC@64)$^\diamond$         & 14B & 83.1 & 73.4 & 97.9 & 90.3 & 87.1 & 39.7 \\
\addlinespace[3pt]
Qwen3-Inst Debate (prompt)$^\diamond$       & 3$\times$4B & 65.8 & 57.6 & 94.1 & 83.7 & 84.2 & 37.2 \\
Qwen3-Inst Debate+Judge (prompt)$^\diamond$ & 3$\times$4B & 68.9 & 60.7 & 95.3 & 85.9 & 85.5 & 38.1 \\
Qwen3-Inst Role-play (prompt)$^\diamond$    & 3$\times$4B & 67.3 & 59.1 & 94.7 & 84.6 & 84.8 & 37.8 \\
\midrule
\multicolumn{8}{@{}l}{\textit{Single-agent fine-tuning baselines (base LMs)}} \\
\addlinespace[2pt]
Qwen3-Base-PPO$^\diamond$  & 8B  & 36.8 & 26.3 & 90.1 & 34.0 & 79.9 & 32.6 \\
Qwen3-Base-GRPO            & 8B  & 39.1 & 27.9 & 90.7 & 35.4 & 80.3 & 33.1 \\
Qwen3-Base-DAPO            & 8B  & 41.3 & 30.1 & 91.5 & 36.7 & 80.9 & 33.5 \\
Qwen3-Base-GRPO$^\diamond$ & 14B & 45.2 & 34.1 & 92.2 & 38.7 & 81.4 & 34.8 \\
Qwen3-Base-DAPO$^\diamond$ & 14B & 47.7 & 36.0 & 92.7 & 40.2 & 82.1 & 35.6 \\
\midrule
\multicolumn{8}{@{}l}{\textit{Multi-agent fine-tuning baselines}} \\
\addlinespace[2pt]
Qwen3-Base Debate (FT)$^\diamond$       & 3$\times$4B & 63.9 & 55.3 & 94.6 & 81.7 & 84.6 & 37.5 \\
Qwen3-Base Debate+Judge (FT)$^\diamond$ & 3$\times$4B & 66.8 & 58.2 & 95.2 & 83.8 & 85.3 & 38.4 \\
Qwen3-Base Role-play (FT)$^\diamond$    & 3$\times$4B & 65.1 & 56.7 & 95.0 & 82.6 & 85.0 & 37.9 \\
\addlinespace[2pt]
Qwen3-Base Debate (FT)       & 3$\times$8B & 68.7 & 59.1 & 95.2 & 84.1 & 85.1 & 38.3 \\
Qwen3-Base Debate+Judge (FT) & 3$\times$8B & 71.1 & 61.7 & 95.9 & 86.3 & 85.9 & 39.1 \\
Qwen3-Base Role-play (FT)    & 3$\times$8B & 69.9 & 60.5 & 95.7 & 85.1 & 85.5 & 38.7 \\
\midrule
\multicolumn{8}{@{}l}{\textit{Reference: base, thinking, and closed-source models}} \\
\addlinespace[2pt]
Qwen3-Base     & 14B & 31.7 & 23.3 & 90.1 & 33.1 & 79.1 & 29.7 \\
Qwen3-Base     & 32B & 31.0 & 20.2 & 88.6 & 29.2 & 78.5 & 34.3 \\
Qwen2.5-Base   & 72B & 18.9 & 15.0 & 83.6 & 26.6 & 76.7 & 33.7 \\
GPT-4o-mini    & --  &  8.1 &  8.8 & 78.2 & 20.1 & 62.5 & 34.7 \\
\addlinespace[2pt]
Qwen3-A3B              & 30B  & 80.4 & 70.9 & 98.0 & 89.5 & 88.1 & 39.1 \\
QwQ                     & 32B  & 79.5 & 69.5 & 98.0 & 76.8 & 86.3 & 37.9 \\
Qwen3 (thinking)        & 32B  & 81.4 & 72.9 & 97.2 & 88.8 & 87.3 & 40.3 \\
DeepSeek-R1 Distill     & 70B  & 70.0 & 56.3 & 94.5 & 71.3 & 83.5 & \textbf{99.1} \\
OpenAI o3-mini          & --   & 79.6 & 74.8 & 98.0 & 88.9 & 86.3 & 41.2 \\
Qwen3-235B-A22B$^\diamond$ & 235B & 84.9 & 76.1 & 98.8 & 91.7 & 88.9 & 42.5 \\
OpenAI o3$^\diamond$       & --   & \underline{85.8} & \underline{76.3} & \underline{98.9} & \underline{92.1} & \underline{89.0} & 42.1 \\
\midrule
\multicolumn{8}{@{}l}{\textit{DICE (ours)}} \\
\addlinespace[2pt]
\rowcolor{gray!8}
Llama 3.1-Base (DICE-PC)   & 3$\times$8B & 70.3 & 63.4 & 86.1 & 80.7 & 83.9 & 37.1 \\
\rowcolor{gray!8}
Qwen3-Base (DICE-PC)    & 3$\times$4B & \textbf{86.3} & \textbf{76.7} & \textbf{99.1} & \textbf{93.1} & \textbf{89.1} & \underline{42.7} \\
\rowcolor{gray!8}
Qwen3-Base (DICE-FT)$^\diamond$ & 3$\times$4B & 84.7 & 74.2 & 98.6 & 91.1 & 88.3 & 41.9 \\
\bottomrule
\end{tabular}%
}
\end{table}

Table~\ref{tab:overall} reports results across mathematical reasoning (AIME24/25, MATH-500), logical reasoning (ZebraLogic, AutoLogic), and planning (PlanBench), with methods grouped by class.
Within the prompt-control class, DICE-PC achieves the best or tied-best accuracy on most benchmarks without modifying Executor weights, demonstrating that coordination gains can arise from equilibrium selection over bounded control actions rather than backbone updates.
We caution that closed-source comparisons are not controlled.
DICE-PC uses $G{=}8$ rollouts per prompt and selects the best
aggregated outcome, providing a pass@$K$-like advantage over
single-pass evaluations such as the publicly reported o3 numbers.
Closed-source models may also use undocumented strategies (internal
verification, tool-augmented pipelines) that are not reflected in the
comparison.
Apparent advantages of DICE over these baselines should therefore be
interpreted with caution.

\begin{table}[t!]
\centering
\caption{Disentangling coordination quality from multi-rollout selection on general reasoning and planning benchmarks.
$G{=}1$ reports single-rollout accuracy without cross-rollout selection; $G{=}8{+}\text{select}$ follows the default DICE-PC inference protocol.
}
\label{tab:selection_disentangle}
\vspace{2pt}
\footnotesize
\setlength\tabcolsep{4pt}
\resizebox{\columnwidth}{!}{%
\begin{tabular}{@{}llS[table-format=2.1]S[table-format=2.1]S[table-format=2.1]S[table-format=2.1]S[table-format=2.1]S[table-format=2.1]S[table-format=2.1]@{}}
\toprule
\textbf{Method} & \textbf{Mode} & {\textbf{Avg.}} & {\textbf{AIME24}} & {\textbf{AIME25}} & {\textbf{MATH-500}} & {\textbf{ZebraLogic}} & {\textbf{AutoLogic}} & {\textbf{PlanBench}} \\
\midrule
Single-model (Qwen3-Inst few-shot CoT) & $G{=}1$ & 76.4 & 80.0 & 71.1 & 96.8 & 87.7 & 83.0 & 40.1 \\
Single-model best-of-$N$ (token-matched) & select & 77.9 & 82.4 & 72.6 & 97.6 & 89.1 & 84.6 & 40.9 \\
\addlinespace[2pt]
3-agent fixed-control & $G{=}1$ & 78.5 & 82.8 & 73.5 & 98.1 & 89.6 & 86.2 & 40.8 \\
3-agent fixed-control & $G{=}8{+}\text{select}$ & 79.4 & 84.0 & 74.5 & 98.5 & 90.6 & 87.0 & 41.5 \\
\addlinespace[2pt]
\rowcolor{gray!8}
DICE-PC & $G{=}1$ & 79.8 & 84.4 & 75.0 & 98.7 & 91.4 & 87.7 & 41.3 \\
\rowcolor{gray!8}
DICE-PC & $G{=}8{+}\text{select}$ & \textbf{81.2} & \textbf{86.3} & \textbf{76.7} & \textbf{99.1} & \textbf{93.1} & \textbf{89.1} & \textbf{42.7} \\
\bottomrule
\end{tabular}%
}
\end{table}
To disentangle coordination quality from multi-rollout selection, Table~\ref{tab:selection_disentangle} reports both single-rollout ($G{=}1$) and selected ($G{=}8{+}\text{select}$) performance for DICE-PC and matched controls. The key comparison is between DICE-PC and the 3-agent fixed-control baseline at $G{=}1$, which isolates the effect of learned HQRE-shaped coordination from the effect of multi-sample selection. The single-model best-of-$N$ row provides a token-matched search control. DICE-PC retains a clear advantage in single-rollout mode (+1.3~pp average over the fixed-control baseline at $G{=}1$), confirming that coordination gains persist after removing multi-rollout selection. Increasing $G$ from 1 to 8 yields a smaller additional gain (+1.4~pp).

Within the fine-tuning class, DICE-FT consistently outperforms single-agent (PPO, GRPO, DAPO) baselines at comparable or larger scales and remains competitive with multi-agent fine-tuning baselines that use greater total capacity.
Notably, DICE-PC slightly outperforms DICE-FT at the same Executor backbone on several reasoning tasks.
We attribute this to two factors: the optimization difficulty of jointly updating control components and token policies under HQRE constraints, and the fact that DICE-PC better preserves pretrained generalization by keeping the backbone frozen.
This pattern reverses on coordination-heavy benchmarks (Section~\ref{subsec:coord_bench}), where DICE-FT's parameter adaptation provides additional gains.

\subsubsection{Constraint-heavy planning on TravelPlanner}
\label{subsec:travelplanner}

TravelPlanner stresses constraint satisfaction in a combinatorial plan space where a single violated hard constraint invalidates an otherwise plausible plan.
Table~\ref{tab:travel-planner-results} reports results in both two-stage (delivery plus planning) and sole-planning settings.

\begin{table}[t!]
\centering
\caption{TravelPlanner results (condensed): Delivery Rate and Final Pass Rate, the two most informative summary metrics.
Best result per column within each setting is \textbf{bold}; DICE rows are highlighted.
Full per-constraint breakdowns (Commonsense and Hard Constraint micro/macro pass rates) are provided in Appendix Table~\ref{tab:travel-planner-full}.}
\label{tab:travel-planner-results}
\vspace{2pt}
\small
\setlength\tabcolsep{5pt}
\begin{tabular}{@{}l cc cc@{}}
\toprule
& \multicolumn{2}{c}{\textbf{Validation (\#180)}} & \multicolumn{2}{c}{\textbf{Test (\#1,000)}} \\
\cmidrule(lr){2-3}\cmidrule(lr){4-5}
& {Delivery} & {Final} & {Delivery} & {Final} \\
& {Rate} & {PR} & {Rate} & {PR} \\
\midrule
Greedy Search & 100 & 0 & 100 & 0 \\
\midrule
\addlinespace[4pt]
\multicolumn{5}{@{}l}{\textit{Two-stage}} \\
\addlinespace[2pt]
GPT-4-Turbo & 89.4 & 0.6 & 93.1 & 0.6 \\
GPT-o3 & 92.8 & 1.4 & 95.6 & 1.9 \\
Debate (GPT-4) @3round & 95.2 & 2.3 & 97.8 & 3.7 \\
MA-FT (Qwen3-32B) & 97.2 & 3.1 & 98.6 & 4.3 \\
MA-Debate (Qwen3-235B) & 96.7 & 3.8 & 98.9 & 5.7 \\
\addlinespace[2pt]
\rowcolor{gray!8} DICE (GPT-4)        & \textbf{100} & \textbf{7.2} & \textbf{100} & \textbf{9.3} \\
\rowcolor{gray!8} DICE-PC (Qwen3-32B) & 98.3 & 3.9 & 99.1 & 5.4 \\
\rowcolor{gray!8} DICE-FT (Qwen3-32B) & \textbf{100} & 5.6 & \textbf{100} & 7.1 \\
\midrule
\addlinespace[4pt]
\multicolumn{5}{@{}l}{\textit{Sole-planning}} \\
\addlinespace[2pt]
\midrule
Direct$_{\text{GPT-4-Turbo}}$      & \textbf{100} & 4.4  & \textbf{100} & 4.4 \\
Direct$_{\text{GPT-o3}}$           & \textbf{100} & 6.1  & \textbf{100} & 5.8 \\
Debate (GPT-4) @3round     & 97.7 & 6.7  & 98.2 & 7.1 \\
MA-FT (Qwen3-32B)          & \textbf{100} & 9.1  & \textbf{100} & 9.7 \\
MA-Debate (Qwen3-235B)     & 99.4 & 10.1 & 99.8 & 11.3\\
\addlinespace[2pt]
\rowcolor{gray!8} DICE-PC (GPT-4)       & \textbf{100} & \textbf{12.9} & \textbf{100} & \textbf{15.2} \\
\rowcolor{gray!8} DICE-PC (Qwen3-32B)& \textbf{100} & 8.9  & \textbf{100} & 10.3\\
\rowcolor{gray!8} DICE-FT (Qwen3-32B)& \textbf{100} & 10.6 & \textbf{100} & 12.7\\
\bottomrule
\end{tabular}
\end{table}

In the two-stage setting, DICE with GPT-4-Turbo maintains perfect
delivery rate and improves final pass rate relative to debate-style
coordination and multi-agent fine-tuning baselines.
The gains are especially pronounced on hard constraints.
DICE variants with Qwen3-32B similarly improve over comparable baselines.
In the sole-planning setting, DICE again improves final pass rate, and DICE-FT approaches GPT-4-level planning quality when paired with larger open models.
These improvements are consistent with the uniqueness analysis in Sections~\ref{subsec:toy_2x2}--\ref{subsec:text_hanabi_margin}: when HQRE stabilizes equilibrium selection, executors converge to a single coherent plan rather than oscillating between incompatible alternatives.

\subsection{Active and Interactive Tasks}
\label{subsec:endtask_active_coord}

We next consider multi-turn interactive episodes ($H>1$), where the public stream evolves across steps and coordination requires stable belief updating and long-horizon decision making under incomplete information.

\subsubsection{Active reasoning dynamics on ARBench}
\label{subsec:arbench}

Active reasoning tasks require iterative information gathering and hypothesis refinement, directly stress-testing stable long-horizon decision making under partial information.
We evaluate on ARBench comprising Detective Cases, Situation Puzzles, and Guessing Numbers, using the official scoring metrics.
Each interaction turn corresponds to an environment step~$t$ with one public-stream update~$\widetilde{y}^t$ under the coordinator-mediated protocol.

\begin{figure*}[t!]
\centering
\includegraphics[width=\textwidth]{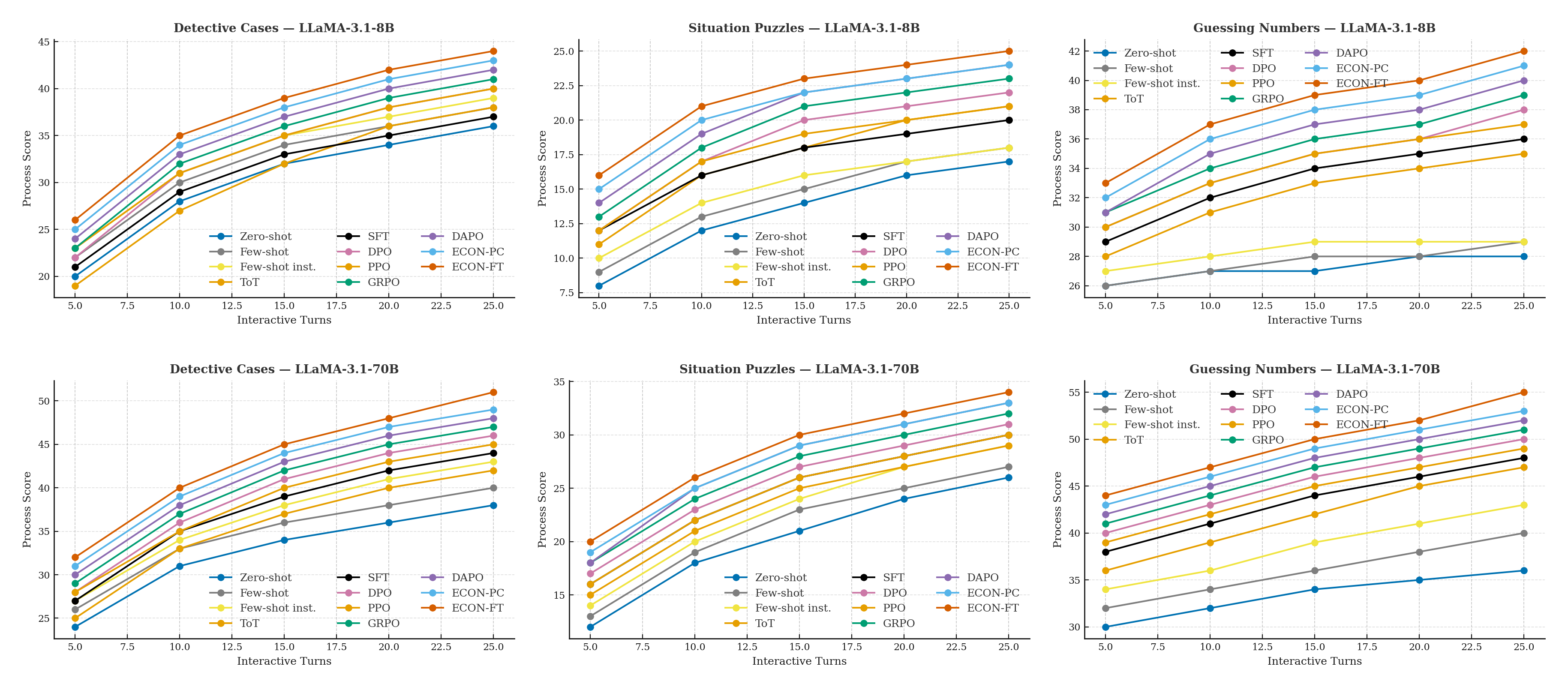}
\caption{Active reasoning trajectories on ARBench.
DICE variants reach high accuracy ($>{90}\%$ on Detective Cases,
$>{80}\%$ on Situation Puzzles) in fewer turns (20--25 vs.\ 35--40) than
single-model CoT and debate baselines, while using similar total token
budgets per episode.
Shaded regions: $\pm 1$ std over 5~seeds.}
\label{fig:arbench}
\vspace{-14pt}
\end{figure*}

Figure~\ref{fig:arbench} shows that DICE variants reach their accuracy plateau in substantially fewer turns (typically 20--25) than single-model and debate baselines (often 35--40), while using similar total token budgets.
Token usage increases only modestly (about 5--10\% more per episode than single-model baselines), and the Coordinator accounts for less than 3\% of total generated tokens.
Because the Coordinator is frozen and the public-stream budget is explicitly bounded, these gains reflect improved coordination policies rather than increased communication bandwidth or parametric capacity.
This faster convergence is consistent with the linear KL-mirror
convergence predicted by Theorem~\ref{thm:hqre_unique_mono}: stable
equilibrium selection reduces the number of coordination rounds needed
to lock into a productive convention.

\subsubsection{Multi-agent coordination benchmarks}
\label{subsec:coord_bench}

We evaluate DICE on the \textsc{LLM-Coordination-Bench} suite comprising Overcooked-AI, CollabEscape, CollabCapture, and Hanabi, emphasizing two coordination-relevant axes: ad-hoc teaming robustness and consensus cost under fixed public-stream budgets.

\paragraph{Ad-hoc teaming (Overcooked-AI).}
Ad-hoc teaming robustness measures whether learned policies transfer to novel partners.
We quantify this via the self-play versus cross-play gap, where lower values indicate better generalization (negative means Cross~$\ge$~Self).
Table~\ref{tab:overcooked_summary} shows that DICE maintains near-zero gaps at both 3$\times$8B and 3$\times$32B scales while operating under a bounded coordinator-mediated public stream.

\begin{table}[t!]
\centering
\caption{Overcooked-AI summary with robustness and efficiency diagnostics.
Self-/Cross-Play are means over five layouts (CR/AA/Ring/FC/CC); Gap~$= (\text{Self}-\text{Cross})/\text{Self}\%$ (lower is better; negative means Cross~$\ge$~Self).
Tok/Ep and Msg Rnds denote per-episode token and message budgets.}
\label{tab:overcooked_summary}
\vspace{2pt}
\footnotesize
\setlength\tabcolsep{3.5pt}
\resizebox{\columnwidth}{!}{%
\begin{tabular}{@{}l c cc c cccc cc@{}}
\toprule
\textbf{Method} & \textbf{Size}
  & \textbf{Self}$\uparrow$ & \textbf{Cross}$\uparrow$ & \textbf{Gap}$\downarrow$
  & \textbf{Coll.}$\downarrow$ & \textbf{Hand.}$\uparrow$ & \textbf{Idle\%}$\downarrow$ & \textbf{Post.~KL}$\downarrow$
  & \textbf{Tok/Ep} & \textbf{Msg Rnd} \\
\midrule
\multicolumn{11}{@{}l}{\textit{Single-model references}} \\
\addlinespace[2pt]
Qwen3-32B-Inst   & 1$\times$32B  & 95.7  & 88.3  & 7.7   & 8.9 & 12.4 & 18.3 & 0.47 & 2.8K & 3.2 \\
Qwen3-235B-Inst  & 1$\times$235B & 108.2 & 96.4  & 10.9  & 7.1 & 15.8 & 14.7 & 0.39 & 3.1K & 3.4 \\
GPT-o3           & --            & 156.3 & 127.8 & 18.2  & 5.2 & 21.7 & 9.8  & 0.31 & 3.5K & 3.8 \\
GPT-4-Turbo      & --            & \textbf{182.7} & \textbf{144.0} & 21.2 & 4.3 & 24.9 & 8.1 & 0.28 & 3.7K & 4.1 \\
\midrule
\multicolumn{11}{@{}l}{\textit{Prompt-only multi-agent}} \\
\addlinespace[2pt]
MAD              & 3$\times$8B   & 101.3 & 108.5 & $-$7.1 & 8.6 & 14.7 & 17.6 & 0.42 & 2.9K & 4.5 \\
Role-Play        & 3$\times$8B   & 98.4  & 104.2 & $-$5.9 & 9.1 & 13.9 & 18.9 & 0.44 & 2.8K & 4.3 \\
CAC (uniform)    & 3$\times$8B   & 103.7 & 107.1 & $-$3.3 & 8.3 & 15.2 & 16.8 & 0.40 & 2.9K & 4.4 \\
CAC (learned)    & 3$\times$8B   & 105.2 & 108.9 & $-$3.5 & 8.0 & 15.8 & 16.2 & 0.38 & 3.0K & 4.6 \\
MA-Debate (32B)  & 3$\times$32B  & 106.8 & 109.2 & $-$2.2 & 7.8 & 16.3 & 15.2 & 0.36 & 3.2K & 4.7 \\
Debate+Judge (GPT-4) & --        & 125.7 & 119.3 & 5.1   & 5.8 & 18.9 & 11.4 & 0.33 & 3.6K & 5.2 \\
MA-Debate (235B) & 3$\times$235B & 117.4 & 118.6 & $-$1.0 & 6.3 & 19.8 & 12.1 & 0.29 & 3.4K & 5.1 \\
rStar            & 3$\times$32B  & 112.3 & 106.8 & 4.9   & 6.9 & 17.8 & 13.7 & 0.35 & 3.3K & 5.8 \\
\midrule
\multicolumn{11}{@{}l}{\textit{Fine-tuned multi-agent}} \\
\addlinespace[2pt]
MA-FT (32B) & 3$\times$32B & 109.7 & 110.4 & $-$0.6 & 7.2 & 17.1 & 14.5 & 0.33 & 3.0K & 4.2 \\
\midrule
\multicolumn{11}{@{}l}{\textit{DICE (ours)}} \\
\addlinespace[2pt]
\rowcolor{gray!8} DICE-PC  & 3$\times$8B  & 110.4 & 110.6 & $-$0.2 & 7.5 & 17.8 & 14.1 & 0.32 & 2.8K & 3.9 \\
\rowcolor{gray!8} DICE-FT  & 3$\times$8B  & 114.4 & 114.9 & $-$0.5 & 6.9 & 18.9 & 13.2 & 0.29 & 2.7K & 3.6 \\
\rowcolor{gray!8} DICE-FT  & 3$\times$32B & 121.6 & 122.1 & $-$0.4 & 6.2 & 20.7 & 11.8 & 0.26 & 2.9K & 3.7 \\
\bottomrule
\end{tabular}%
}
\end{table}

Forced Coordination diagnostics (Table~\ref{tab:overcooked_fc_diag}) further show reduced collisions and idle time together with increased successful handoffs, consistent with more disciplined coordination rather than higher-bandwidth interaction.

\begin{table}[t!]
\centering
\begin{minipage}{0.92\linewidth}
\centering
\caption{Forced Coordination (FC) layout diagnostics.
$\Delta$~vs MAD reports relative change~(\%); lower Coll./Idle\%/Post.~KL and higher Handoffs indicate more partner-robust coordination.}
\label{tab:overcooked_fc_diag}
\vspace{2pt}
{%
\footnotesize
\setlength\tabcolsep{4pt}
\renewcommand{\arraystretch}{1.08}
\begin{tabular*}{\linewidth}{@{\extracolsep{\fill}} l cc cc cc c @{}}
\toprule
\textbf{Method}
  & \textbf{Self}$\uparrow$ & \textbf{Cross}$\uparrow$
  & \textbf{Coll.}$\downarrow$ & {$\Delta$Coll.}
  & \textbf{Hand.}$\uparrow$ & {$\Delta$Hand.}
  & \textbf{Post.~KL}$\downarrow$ \\
\midrule
MAD (3$\times$8B)     & 103.4 & 111.3 & 9.2  & --      & 13.8 & --      & 0.45 \\
MA-FT (3$\times$32B)  & 111.7 & 112.9 & 7.8  & $-$15\% & 16.4 & +19\%   & 0.35 \\
\addlinespace[2pt]
\rowcolor{gray!8} DICE-PC (3$\times$8B) & 116.3 & 115.2 & 8.1 & $-$12\% & 17.2 & +25\% & 0.34 \\
\rowcolor{gray!8} DICE-FT (3$\times$8B) & 120.4 & 119.5 & 7.4 & $-$20\% & 18.5 & +34\% & 0.31 \\
\rowcolor{gray!8} DICE-FT (3$\times$32B)& 132.8 & 132.1 & 6.1 & $-$34\% & 21.6 & +57\% & 0.25 \\
\bottomrule
\end{tabular*}
}%
\end{minipage}
\end{table}

\paragraph{Consensus cost (CollabEscape/CollabCapture).}
Consensus cost measures the number of public-stream updates (turns) required to reach agreement at a given success rate.
Table~\ref{tab:collab_summary} shows that DICE reaches comparable or higher success rates in fewer turns than debate baselines, while keeping tokens-per-turn and messages-per-turn comparable or lower.

\begin{table}[t!]
\centering
\begin{minipage}{0.96\linewidth}
\centering
\caption{CollabEscape and CollabCapture with consensus-cost diagnostics.
Success/Turns are mean over 5~seeds; $\Delta$Turns vs MAD is relative reduction~(\%).}
\label{tab:collab_summary}
\vspace{2pt}
{%
\footnotesize
\setlength\tabcolsep{4pt}
\renewcommand{\arraystretch}{1.08}
\begin{tabularx}{\linewidth}{@{}>{\raggedright\arraybackslash}X cc c cc c@{}}
\toprule
\textbf{Method}
  & \multicolumn{2}{c}{\textbf{Escape}}
  & \textbf{Tok/Turn}
  & \multicolumn{2}{c}{\textbf{Capture}}
  & \textbf{Msg/Turn} \\
\cmidrule(lr){2-3}\cmidrule(lr){5-6}
  & {Succ/Turns} & {$\Delta$Turns}
  &
  & {Succ/Turns} & {$\Delta$Turns}
  & \\
\midrule
GPT-4-Turbo           & \textbf{0.80}/4.6 & +4\%    & 520 & \textbf{1.00}/\textbf{3.5} & +49\% & 3.8 \\
GPT-o3                & 0.70/4.2          & +13\%   & 485 & 0.90/4.1                   & +40\% & 3.5 \\
Qwen3-32B-Inst        & 0.60/5.3          & $-$10\% & 410 & 0.80/7.5                   & $-$10\% & 3.2 \\
Qwen3-235B-Inst       & 0.70/4.9          & $-$2\%  & 465 & 0.90/6.2                   & +9\%  & 3.6 \\
\addlinespace[2pt]
MAD (3$\times$8B)     & 0.70/4.8          & --      & 395 & 0.90/6.8                   & --    & 3.1 \\
MA-Debate (32B)       & 0.70/4.5          & +6\%    & 425 & 0.90/5.9                   & +13\% & 3.4 \\
MA-FT (32B)           & 0.70/4.3          & +10\%   & 435 & 0.90/5.6                   & +18\% & 3.5 \\
MA-Debate (235B)      & \textbf{0.80}/4.4 & +8\%    & 480 & \textbf{1.00}/4.7          & +31\% & 3.7 \\
Debate+Judge (GPT-4)  & 0.70/4.7          & +2\%    & 545 & 0.90/5.3                   & +22\% & 4.0 \\
rStar                 & 0.60/5.1          & $-$6\%  & 470 & 0.80/6.5                   & +4\%  & 3.9 \\
\addlinespace[2pt]
\rowcolor{gray!8} DICE-PC (3$\times$8B) & 0.70/\underline{4.1} & +15\% & 380 & 0.90/\underline{5.2} & +24\% & 3.0 \\
\rowcolor{gray!8} DICE-FT (3$\times$8B) & \textbf{0.80}/3.9    & +19\% & 365 & \textbf{1.00}/4.8    & +29\% & 2.9 \\
\rowcolor{gray!8} DICE-FT (3$\times$32B)& \textbf{0.80}/\textbf{3.4} & \textbf{+29\%} & 445 & \textbf{1.00}/\textbf{3.8} & \textbf{+44\%} & 3.3 \\
\bottomrule
\end{tabularx}
}%
\end{minipage}
\end{table}

These results indicate that DICE reduces coordination friction under incomplete information without shifting cost into additional communication.

\paragraph{Partial observability (Hanabi).}
Table~\ref{tab:hanabi_summary} shows that DICE-FT achieves the strongest LLM-based scores while substantially reducing illegal move rates, supporting the role of heterogeneous regularization in stabilizing information-sensitive decisions and preventing drift-induced constraint violations under limited observability.
The remaining gap between DICE-FT (22.8) and non-LLM upper bounds (${\sim}$24) is likely attributable to LLM backbone limitations in precise card-state tracking: Hanabi requires exact enumeration of remaining cards, a task where specialized neural networks with discrete state representations have a structural advantage over autoregressive language models.

\begin{table}[t!]
\centering
\caption{Hanabi self-play results: DICE vs.\ Baselines. H.Eff: hint efficiency (higher is better); Ill.: illegal move rate (lower is better). Our methods achieve superior policy discipline and coordination efficiency, approaching non-LLM upper bounds.}
\label{tab:hanabi_summary}
\vspace{2pt}
{%
\footnotesize
\setlength\tabcolsep{0pt}
\begin{tabular*}{\textwidth}{@{\extracolsep{\fill}} l ccc | l ccc @{}}
\toprule
\textbf{Method} & \textbf{Score}$\uparrow$ & \textbf{H.Eff}$\uparrow$ & \textbf{Ill.}$\downarrow$ &
\textbf{Method} & \textbf{Score}$\uparrow$ & \textbf{H.Eff}$\uparrow$ & \textbf{Ill.}$\downarrow$ \\
\midrule
\rowcolor{gray!5} \multicolumn{4}{c|}{\textit{Non-LLM References (Upper Bounds)}} & \multicolumn{4}{c}{\textit{Single-Model Baselines}} \\
OBL (ref.) & 24.1$\pm$0.0 & -- & -- & GPT-4-Turbo & 13.3$\pm$0.9 & 1.82 & 8.7\% \\
SAD (ref.) & 24.0$\pm$0.0 & -- & -- & GPT-4o & 8.3$\pm$1.2 & 1.24 & 14.3\% \\
BAD (ref.) & 23.9$\pm$0.0 & -- & -- & GPT-o3 & 11.7$\pm$1.0 & 1.65 & 10.1\% \\
\midrule
\rowcolor{gray!5} \multicolumn{4}{c|}{\textit{Multi-Agent Baselines (Debate/FT)}} & \multicolumn{4}{c}{\textit{DICE (Ours)}} \\
Qwen3-32B-I & 9.2$\pm$1.1 & 1.43 & 12.8\% & \textbf{DICE-PC} (3$\times$8B) & \underline{19.8}$\pm$1.1 & 2.47 & 5.3\% \\
Qwen3-235B-I & 12.8$\pm$0.9 & 1.76 & 9.4\% & \textbf{DICE-FT} (3$\times$8B) & 20.4$\pm$0.8 & 2.58 & 4.9\% \\
MA-Deb (32B) & 15.7$\pm$1.0 & 2.04 & 7.2\% & \textbf{DICE-FT} (32B) & \textbf{22.8}$\pm$0.6 & \textbf{2.89} & \textbf{3.7\%} \\
MA-FT (32B) & 17.3$\pm$0.9 & 2.21 & 6.5\% & & & & \\
MA-Deb (235B) & 18.6$\pm$0.8 & 2.35 & 5.8\% & & & & \\
\bottomrule
\end{tabular*}
}%
\end{table}

\subsection{Communication Efficiency Analysis}
\label{subsec:efficiency_summary}

A central motivation for DICE is to keep the shared public stream small while still obtaining stable coordination through equilibrium selection.
Under the coordinator-mediated protocol (Section~\ref{sec:algorithm}), each environment step produces a single public-stream update $\widetilde{y}^t=(m^t,y^t)$.
The coordination-specific overhead is dominated by the Coordinator context~$m^t$, which is capped at 70~tokens, whereas Executor candidates remain private and are not appended to the shared stream.
In contrast, all-to-all debate-style protocols expand the public stream by repeatedly appending full peer transcripts, increasing shared bandwidth and context cost.

To quantify the scaling advantage, consider a three-Executor, five-step configuration.
DICE's coordination-specific shared input is bounded by $5\times 70=350$~tokens (Coordinator contexts only), whereas all-to-all debate requires each Executor to ingest $5\times 2\times 500\approx5{,}000$~peer tokens\footnote{Based on observed mean message lengths in our debate baselines. Executor candidate generation is private in both settings and excluded from this accounting.}---a $14\times$ increase in shared coordination context.

Empirically, the coordination benchmarks confirm this advantage.
In Overcooked-AI, DICE-FT (3$\times$8B) uses 2.7K~tokens per episode with 3.6~message rounds, compared to 2.9K~tokens and 4.5~rounds for MAD, a 20\% reduction in message rounds at comparable token cost (Table~\ref{tab:overcooked_summary}).
On CollabCapture, DICE-FT reduces turns to consensus (4.8 vs.\ 6.8) while also reducing messages per turn (2.9 vs.\ 3.1) at higher success (Table~\ref{tab:collab_summary}).
Together, these results indicate that DICE's gains stem from better equilibrium-induced coordination targets rather than from higher-bandwidth communication.

\subsection{Sensitivity and Ablations}
\label{subsec:ablations_sensitivity}

We now isolate each design choice's contribution through controlled
ablations (Table~\ref{tab:ablation_components}) and hyperparameter
sensitivity sweeps (Table~\ref{tab:sensitivity_sweeps}).
All experiments hold every factor fixed except the one under study
and report means $\pm$\,1\,std over 5 random seeds.

\begin{table}[t!]
\centering
\begin{minipage}{0.98\linewidth}
\centering
\caption{Component ablations.
\textsc{Text-Hanabi} uses two frozen Qwen3-4B agents;
AIME24 and ZebraLogic use DICE-PC (3$\times$4B).
Each row removes or replaces one design choice; all others remain at default.
$\mu_{\infty}$: converged uniqueness margin;
ABR: Average Best-Response gap at convergence.
$\Delta$ is relative change versus the DICE-PC default.
Best per column is \textbf{bold}.}
\label{tab:ablation_components}
\vspace{2pt}
{%
\footnotesize
\setlength\tabcolsep{3pt}
\renewcommand{\arraystretch}{1.08}
\begin{tabularx}{\linewidth}{@{}>{\raggedright\arraybackslash}X cc cc cc@{}}
\toprule
& \multicolumn{2}{c}{\textsc{Text-Hanabi}}
& \multicolumn{2}{c}{End-task (\%)}
& \multicolumn{2}{c}{Diagnostics} \\
\cmidrule(lr){2-3}\cmidrule(lr){4-5}\cmidrule(lr){6-7}
\textbf{Variant}
  & {Score}$\uparrow$ & {$\Delta$}
  & {AIME24} & {ZebraLogic}
  & {$\mu_{\infty}$}$\uparrow$ & {ABR}$\downarrow$ \\
\midrule
\rowcolor{gray!8}
DICE-PC (default)
  & \textbf{18.7$\pm$0.4} & --
  & \textbf{86.3} & \textbf{93.1}
  & \textbf{0.074$\pm$0.005} & \textbf{0.031$\pm$0.004} \\
DICE-FT (reference)
  & 18.4$\pm$0.5 & $-$1.6\%
  & 84.7 & 91.1
  & 0.071$\pm$0.006 & 0.034$\pm$0.005 \\
\addlinespace[4pt]
\multicolumn{7}{@{}l}{\textit{Equilibrium-selection mechanism}} \\
\addlinespace[2pt]
\quad Homogeneous HQRE (uniform $\alpha$)
  & 17.1$\pm$0.6 & $-$8.6\%
  & 84.1 & 90.7
  & 0.052$\pm$0.008 & 0.048$\pm$0.007 \\
\quad Unregularized best response
  & 14.3$\pm$1.2 & $-$23.5\%
  & 80.9 & 86.3
  & ${<}\,0$ & 0.127$\pm$0.031 \\
\quad Debate mirror (MAD 3-round)
  & 15.8$\pm$0.9 & $-$15.5\%
  & 82.4 & 88.1
  & -- & 0.089$\pm$0.019 \\
\addlinespace[4pt]
\multicolumn{7}{@{}l}{\textit{Token-type regularization}} \\
\addlinespace[2pt]
\quad Uniform token weighting
  & 17.4$\pm$0.5 & $-$7.0\%
  & 83.8 & 90.2
  & 0.058$\pm$0.007 & 0.044$\pm$0.006 \\
\quad Random token partition
  & 17.0$\pm$0.7 & $-$9.1\%
  & 83.1 & 89.5
  & 0.054$\pm$0.009 & 0.051$\pm$0.008 \\
\quad Gradient-norm partition
  & 18.2$\pm$0.5 & $-$2.7\%
  & 85.4 & 92.1
  & 0.068$\pm$0.006 & 0.035$\pm$0.005 \\
\addlinespace[4pt]
\multicolumn{7}{@{}l}{\textit{Architecture and scale}} \\
\addlinespace[2pt]
\quad Flat (no Local--Global hierarchy)
  & 16.9$\pm$0.7 & $-$9.6\%
  & 85.1 & 91.8
  & 0.049$\pm$0.010 & 0.053$\pm$0.009 \\
\quad No Coordinator (peer-only)
  & 15.2$\pm$1.0 & $-$18.7\%
  & 81.7 & 87.4
  & 0.031$\pm$0.012 & 0.078$\pm$0.015 \\
\quad 2 Executors
  & 17.1$\pm$0.6 & $-$8.6\%
  & 84.8 & 91.4
  & 0.059$\pm$0.008 & 0.046$\pm$0.007 \\
\quad 5 Executors
  & 19.1$\pm$0.4 & $+$2.1\%
  & 86.7 & 93.4
  & 0.079$\pm$0.005 & 0.028$\pm$0.004 \\
\bottomrule
\end{tabularx}
}%
\end{minipage}
\end{table}

\emph{Equilibrium-selection mechanism.}
Removing entropy regularization entirely causes the largest degradation:
the uniqueness margin turns negative, ABR quadruples, and \textsc{Text-Hanabi} score drops by 23.5\%
(Table~\ref{tab:ablation_components}, top block).
Replacing heterogeneous HQRE with debate-style mirror updates recovers part of the gap
but still incurs a 15.5\% score drop and produces no meaningful uniqueness margin,
consistent with the persistent defensive mixing reported in Figure~\ref{fig:defensive_mixing}.
Switching to homogeneous temperatures retains a positive margin but narrows it
from 0.074 to 0.052, reducing both \textsc{Text-Hanabi} score ($-$8.6\%)
and end-task accuracy.
The ordering (heterogeneous HQRE $\succ$ homogeneous HQRE $\succ$ debate $\succ$ unregularized) confirms
that heterogeneous temperature scheduling is the most impactful choice
after the decision to regularize at all.

\emph{Token-type regularization.}
The default entropy-based partition outperforms
uniform weighting ($-$7.0\%) and random partitioning ($-$9.1\%).
A gradient-norm partition, which classifies tokens
by running gradient magnitude, closes to within 2.7\% of the default.
This suggests that any \emph{informed} differentiation helps,
but entropy-based classification achieves comparable quality
with a simpler, gradient-free implementation.

\emph{Architecture and scale.}
Removing the Coordinator and relying on peer-only aggregation
produces the second-largest overall drop ($-$18.7\%),
confirming that the coordinator-mediated protocol
is essential for combining bounded public-stream cost with stable coordination.
Removing the Local--Global hierarchy costs 9.6\% on \textsc{Text-Hanabi}
and narrows the uniqueness margin by a third.
Reducing from 3 to 2 Executors costs 8.6\%; increasing to 5 gains 2.1\% on \textsc{Text-Hanabi}
and a marginal 0.4 points on AIME24, but at a 39\% increase in per-episode tokens
(Table~\ref{tab:sensitivity_sweeps}, Executor-count block).
The diminishing returns on reasoning tasks versus sustained gains on interactive benchmarks
(Overcooked cross-play: $+$2.9\%, Table~\ref{tab:sensitivity_sweeps})
are consistent with the theoretical observation that one-step coordination saturates in diversity
faster than multi-turn settings where additional viewpoints improve belief coverage.

\begin{table}[t!]
\centering
\caption{Hyperparameter sensitivity.
\textsc{Text-Hanabi} uses two frozen Qwen3-4B agents (DICE-PC);
Overcooked-AI uses DICE-PC (3$\times$8B).
Each row varies one hyperparameter from its default (highlighted);
all others are held fixed.}
\label{tab:sensitivity_sweeps}
\vspace{2pt}

{%
\footnotesize
\setlength\tabcolsep{3pt}
\renewcommand{\arraystretch}{1.06}

\begin{minipage}[t]{0.49\linewidth}
\centering
\vspace{0pt}
\textbf{(a) \textsc{Text-Hanabi}}\par\smallskip
\begin{tabular*}{\linewidth}{@{\extracolsep{\fill}} l c c @{}}
\toprule
\textbf{Setting} & \textbf{Score}$\uparrow$ & $\boldsymbol{\mu_{\infty}}$$\uparrow$ \\
\midrule
\multicolumn{3}{@{}l}{\textit{Temperature multiplier $\kappa$}} \\
$\kappa = 0.25$ & 16.1$\pm$0.9 & 0.032$\pm$0.011 \\
$\kappa = 0.5$  & 17.8$\pm$0.5 & 0.058$\pm$0.007 \\
\rowcolor{gray!8}\textbf{$\kappa = 1.0$ (default)} & \textbf{18.7$\pm$0.4} & 0.074$\pm$0.005 \\
$\kappa = 2.0$  & 18.0$\pm$0.5 & 0.091$\pm$0.006 \\
$\kappa = 4.0$  & 15.4$\pm$0.8 & 0.118$\pm$0.009 \\
\addlinespace[3pt]

\multicolumn{3}{@{}l}{\textit{Coordinator message cap (tokens/round)}} \\
30  & 17.3$\pm$0.6 & 0.061$\pm$0.008 \\
50  & 18.2$\pm$0.5 & 0.069$\pm$0.006 \\
\rowcolor{gray!8}\textbf{70 (default)} & \textbf{18.7$\pm$0.4} & 0.074$\pm$0.005 \\
100 & 18.9$\pm$0.4 & 0.076$\pm$0.005 \\
\addlinespace[3pt]

\multicolumn{3}{@{}l}{\textit{Number of Executors}} \\
2 & 17.1$\pm$0.6 & 0.059$\pm$0.008 \\
\rowcolor{gray!8}\textbf{3 (default)} & \textbf{18.7$\pm$0.4} & 0.074$\pm$0.005 \\
5 & 19.1$\pm$0.4 & 0.079$\pm$0.005 \\
\addlinespace[3pt]

\multicolumn{3}{@{}l}{\textit{Trust-region KL coefficient $\beta_{\mathrm{KL}}$}} \\
$\beta_{\mathrm{KL}} = 0.001$ & 16.8$\pm$0.8 & 0.047$\pm$0.010 \\
$\beta_{\mathrm{KL}} = 0.005$ & 18.1$\pm$0.5 & 0.065$\pm$0.007 \\
\rowcolor{gray!8}\textbf{$\beta_{\mathrm{KL}} = 0.01$ (default)} & \textbf{18.7$\pm$0.4} & 0.074$\pm$0.005 \\
$\beta_{\mathrm{KL}} = 0.05$ & 17.6$\pm$0.6 & 0.082$\pm$0.006 \\
$\beta_{\mathrm{KL}} = 0.1$  & 16.2$\pm$0.7 & 0.094$\pm$0.008 \\
\bottomrule
\end{tabular*}
\end{minipage}
\hfill
\begin{minipage}[t]{0.49\linewidth}
\centering
\vspace{0pt}
\textbf{(b) Overcooked-AI}\par\smallskip
\begin{tabular*}{\linewidth}{@{\extracolsep{\fill}} l c c @{}}
\toprule
\textbf{Setting} & \textbf{Cross-Play}$\uparrow$ & \textbf{Tok/Ep} \\
\midrule
\multicolumn{3}{@{}l}{\textit{Temperature multiplier $\kappa$}} \\
$\kappa = 0.25$ & 106.2 & 2.8K \\
$\kappa = 0.5$  & 109.1 & 2.8K \\
\rowcolor{gray!8}\textbf{$\kappa = 1.0$ (default)} & \textbf{110.6} & 2.8K \\
$\kappa = 2.0$  & 108.3 & 2.8K \\
$\kappa = 4.0$  & 103.7 & 2.9K \\
\addlinespace[3pt]

\multicolumn{3}{@{}l}{\textit{Coordinator message cap (tokens/round)}} \\
30  & 106.8 & 2.4K \\
50  & 109.1 & 2.6K \\
\rowcolor{gray!8}\textbf{70 (default)} & \textbf{110.6} & 2.8K \\
100 & 111.2 & 3.1K \\
\addlinespace[3pt]

\multicolumn{3}{@{}l}{\textit{Number of Executors}} \\
2 & 105.4 & 2.2K \\
\rowcolor{gray!8}\textbf{3 (default)} & \textbf{110.6} & 2.8K \\
5 & 113.8 & 3.9K \\
\addlinespace[3pt]

\multicolumn{3}{@{}l}{\textit{Trust-region KL coefficient $\beta_{\mathrm{KL}}$}} \\
$\beta_{\mathrm{KL}} = 0.001$ & 107.3 & 2.8K \\
$\beta_{\mathrm{KL}} = 0.005$ & 109.8 & 2.8K \\
\rowcolor{gray!8}\textbf{$\beta_{\mathrm{KL}} = 0.01$ (default)} & \textbf{110.6} & 2.8K \\
$\beta_{\mathrm{KL}} = 0.05$ & 107.9 & 2.8K \\
$\beta_{\mathrm{KL}} = 0.1$  & 104.1 & 2.9K \\
\bottomrule
\end{tabular*}
\end{minipage}
}%

\end{table}

\emph{Hyperparameter sensitivity.}
All three continuously tunable hyperparameters exhibit a shared pattern: performance peaks near the default, degrades symmetrically at both extremes, and remains within a broad stable band (Table~\ref{tab:sensitivity_sweeps}).
For the temperature multiplier~$\kappa$, the uniqueness margin increases monotonically (as expected from theory), but task performance is non-monotone: $\kappa{=}4.0$ yields margin~0.118 but degrades \textsc{Text-Hanabi} score by 17.6\% due to over-regularization, while $\kappa{=}0.25$ narrows the margin to 0.032 with sharply increased variance.
The stable range $\kappa\in[0.5,\,2.0]$ maintains $<$5\% performance variation, confirming the regularization--welfare trade-off of Figure~\ref{fig:panel_2x2}.
For the coordinator message cap, 70~tokens sits near the efficiency frontier: reducing to 30 tokens lowers scores by 3.4--7.5\%, while increasing to 100 tokens yields only $+$1.1\% at 10\% higher token cost.
For the trust-region coefficient, $\beta_{\mathrm{KL}}\in[0.005,\,0.05]$ keeps performance within 4\% of the default ($\beta_{\mathrm{KL}}{=}0.01$); values above 0.05 cause overshooting ($-$13.4\% on \textsc{Text-Hanabi}).

\emph{Summary of ablation findings.}
Three conclusions emerge.
First, entropy regularization is necessary and heterogeneous temperatures
yield a further 8.6\% gain over homogeneous alternatives;
this is the single most impactful design axis.
Second, the coordinator-mediated protocol and the Local--Global hierarchy
each contribute meaningfully ($-$18.7\% and $-$9.6\% respectively when removed),
while 3 Executors provide a favorable cost--performance balance.
Third, all continuously tunable hyperparameters ($\kappa$, message cap, $\beta_{\mathrm{KL}}$)
exhibit broad stable regions around their defaults,
and no setting requires benchmark-specific tuning.

\subsubsection{Headline Summary}
Across the six reasoning and planning benchmarks in Table~\ref{tab:overall}, DICE-PC (3$\times$4B) improves over the best within-class prompt-control baseline by an average of 4.3 percentage points (reasoning: +4.6~pp across AIME24/25, MATH-500, ZebraLogic, AutoLogic; planning: +2.6~pp on PlanBench).
DICE-FT (3$\times$4B) improves over the best within-class multi-agent fine-tuning baseline by an average of 8.5~pp across the same benchmarks, with the largest gains on AIME24 (+17.9~pp) and AIME25 (+16.0~pp).
On the interactive benchmarks, coordination gains are even larger: +22\% on Hanabi, +31\% on CollabEscape, and $>$90\% accuracy on ARBench Detective Cases versus $<$70\% for debate baselines.

\subsubsection{Failure Cases and Limitations}
DICE does not uniformly improve over all baselines on all benchmarks.
On PlanBench, DICE-PC (3$\times$4B) achieves 42.7\%, only marginally
above the strongest debate baseline (38.4\%) and below DeepSeek-R1
Distill (99.1\%), suggesting that equilibrium selection provides limited
benefit when the primary bottleneck is single-agent planning capacity
rather than coordination.
On TravelPlanner (two-stage setting), the uniqueness margin
$\mu_k$ for DICE-PC (Qwen3-32B) remained below 0.02 for the first 80
iterations before stabilizing, and two of five seeds showed
intermittent negative margins during early training, indicating that the
monotonicity condition is not always satisfied from initialization.
In these cases, the KL safeguard (rejection of high-drift updates)
prevented divergence but slowed convergence.
Qualitatively, failures on constraint-heavy planning tasks typically
manifested as agents converging to a consistent but suboptimal
convention (e.g., satisfying commonsense constraints while violating
hard constraints), rather than failing to converge entirely.
As a practical guideline, we recommend monitoring the uniqueness margin
during early training: if it remains negative past the first 50
iterations, the task may be capacity-bottlenecked rather than
coordination-bottlenecked, and single-model fine-tuning may be more
appropriate.
Based on our results, coordination benefits are most pronounced when (i)~the task requires multi-step information integration that exceeds a single model's context or reasoning capacity, and (ii)~individual executor capacity is sufficient to generate plausible candidates.
When executor capacity is the primary bottleneck (as in PlanBench with 4B models), coordination provides diminishing returns.
This suggests that larger executors (e.g., 32B or 70B) may alleviate the capacity bottleneck on PlanBench, but we do not evaluate this setting here.

\subsubsection{Coordination-vs-Capacity Regime}
Across our eleven benchmarks, a clear empirical pattern emerges for when coordination benefits dominate capacity scaling and vice versa.
Coordination benefits are largest on tasks requiring multi-step information integration under partial observability---Hanabi ($+$22\% over single-agent), ARBench ($>$90\% on Detective Cases vs.\ $<$70\% for debate baselines), CollabEscape ($+$31\%), and Overcooked-AI ($+$18\% cross-play)---where each agent holds private information that must be combined to solve the task.
In the IIMG framework, these are tasks with high information asymmetry: each agent's private history carries non-redundant signal, so the coordination problem (selecting compatible conventions for information exchange) dominates the capacity problem (generating individually good responses).
Conversely, coordination provides diminishing returns when the task is primarily capacity-bottlenecked: on PlanBench with 4B executors, DICE-PC achieves only 42.7\% because the bottleneck is single-agent planning quality, not inter-agent convention selection.
The theoretical framing suggests a diagnostic: tasks where the effective coupling constant $L_c$ is large relative to $\alpha_{\min}$ (i.e., agents' strategies are tightly interdependent) are coordination-bottlenecked and benefit most from DICE, while tasks where $L_c$ is small (agents operate nearly independently) are capacity-bottlenecked and benefit more from model scaling.
In practice, since $L_c$ is not always computable, the uniqueness margin and behavioral sensitivity to perturbations serve as accessible proxies: high perturbation sensitivity signals a coordination bottleneck, while low sensitivity with poor absolute performance signals a capacity bottleneck.

\subsubsection{Deployment Decision Checklist}
\label{par:deployment_checklist}
Before investing in DICE, assess the following:
\begin{enumerate}[leftmargin=2em]
\item \textbf{Perturbation sensitivity test.}
  Run the target task under 3--5 prompt/seed perturbations with a single-model baseline.
  If outputs change qualitatively (different solution strategies, not just numerical noise), the task is likely coordination-bottlenecked.
  If outputs are stable but low-accuracy, it is capacity-bottlenecked—prefer model scaling.
\item \textbf{Executor capacity check.}
  Verify that individual executors can generate plausible (though not necessarily correct) candidates.
  If single-agent pass@1 is near zero, coordination will not help---scale the backbone first.
  In our experiments, coordination benefits required executor pass@1 $\ge$ 20\% on reasoning benchmarks.
  This threshold (empirically calibrated from reasoning benchmarks; the effective threshold on interactive benchmarks was lower at ${\sim}15\%$) marks the regime below which mixer gradient signal becomes uninformative: when pass@1 is below ${\sim}20\%$, each agent's Q-value contribution to $Q_{\mathrm{tot}}$ (Eq.~\ref{eq:dice_mono_mixer}) falls below the noise floor, preventing meaningful coordination learning.
\item \textbf{Information asymmetry assessment.}
  Tasks with private information per agent (partial observability, role-specialized retrieval, tool-specific context) benefit most from DICE.
  Tasks where all agents see the same input benefit less.
  The largest gains (+22\% on Hanabi, +31\% on CollabEscape) occur under high information asymmetry.
\item \textbf{Cost tolerance.}
  DICE-PC incurs approximately 4$\times$ the token cost of single-model inference (with $G{=}8$), comparable to standard best-of-$N$ sampling with $N{=}4$--$8$ (see Appendix~\ref{additional_experiment} for the cost--accuracy trade-off at different group sizes).
  If this budget is acceptable, deploy DICE-PC first.
  If coordination-heavy tasks justify further investment, consider DICE-FT.
\item \textbf{Runtime monitoring.}
  Track the uniqueness margin $\mu_k$ (when computable) or KL diagnostics ($\widehat{\mathrm{KL}}_{\mathrm{old}}$, $\widehat{\mathrm{KL}}_{\mathrm{ref}}$).
  If $\mu_k$ remains negative past 50 iterations, reconsider the deployment regime.
  If $\widehat{\mathrm{KL}}_{\mathrm{old}} > 0.05$ or $\widehat{\mathrm{KL}}_{\mathrm{ref}} > 0.3$, investigate stability before continuing training.
  These thresholds were calibrated across both Text-Hanabi and MATH-500; the observed maxima during stable training were $\widehat{\mathrm{KL}}_{\mathrm{old}} < 0.02$ and $\widehat{\mathrm{KL}}_{\mathrm{ref}} < 0.15$, providing a ${\sim}2.5{\times}$ safety margin on both benchmarks.
\item \textbf{DICE-PC vs.\ DICE-FT selection.}
  Start with DICE-PC (no training required, ${\sim}4{\times}$ token cost).
  Upgrade to DICE-FT when: (a)~DICE-PC gain exceeds 2~pp on reasoning
  or 1~pp on interactive tasks, signaling actionable coordination;
  (b)~training data or environment interaction is available; and
  (c)~uniqueness margin $\mu > 0$ with residual field-error budget
  indicating room for improvement beyond prompt-control saturation.
  In our experiments, DICE-FT provided +4.2~pp over DICE-PC on average, with the largest gains on high-asymmetry tasks (AIME24/25, Hanabi).
  The 2~pp threshold is empirically calibrated: it corresponds roughly to where the DICE-PC gain exceeds the residual approximation error from the field-error budget (Proposition~\ref{prop:delta_bound}).
\end{enumerate}

\section{Further Discussions}
\label{sec:further_discussion}

\textbf{Key differences from the conference version.}
The conference version introduced ECON as a belief-driven
coordinator--executor framework whose objective was to approximate an
unregularized Bayesian Nash equilibrium (BNE). That formulation
established BNE as a principled target for cooperative multi-agent LLM
coordination. The present manuscript takes the next step: it shows that
an unregularized BNE target is not yet a well-posed learning objective
when several self-consistent equilibria coexist. In such cases, the
issue is not BNE existence, but BNE selection. The game specifies that
agents should be mutually optimal, but does not specify which
coordination convention should be selected, whether that convention is
unique, or whether learning will remain stable around it. This journal
version identifies this selection gap and addresses it through
entropy-regularized equilibrium selection. The main extensions are
organized around three questions.

\noindent\textbf{Gap 1: Why is an unregularized BNE target not enough?}
The conference version established BNE as a useful coordination target,
but did not treat the set-valued nature of unregularized BNE as a central
failure mode. Here we show that the problem is not the nonexistence of
equilibria: under standard assumptions, a behavioral BNE exists. The
problem is that existence alone does not determine which equilibrium
learning will select. We formalize this selection gap in a discounted
incomplete-information Markov game (IIMG) with private histories and a
public stream, and show that equilibrium multiplicity and persistent
mis-selection can induce linear Bayesian regret. Debate-style defensive
mixing is then interpreted as a related selection-instability pathology:
agents may converge, but to a mixed or defensive convention with a
non-vanishing welfare gap.

\noindent\textbf{Gap 2: How can the BNE selection problem be resolved?}
The conference objective targeted unregularized BNEs, whose best-response
correspondence can be set-valued and whose equilibria can be multiple.
This manuscript regularizes that target through HQRE, a heterogeneous
entropy-regularized equilibrium concept. Entropy regularization is not
used merely as an exploration bonus; it serves as an equilibrium-selection
mechanism that turns set-valued best responses into a single-valued logit
map. Under the monotonicity condition $\alpha_{\min}>L_c$, the selected
HQRE is unique; under the stated field-Lipschitz and local KL-quadratic
conditions, explicit KL-mirror updates converge linearly and yield
Bayesian regret bounded uniformly in the horizon. The zero-temperature
limit connects the new target back to the unregularized BNEs studied in
the conference version.

\noindent\textbf{Gap 3: How can regularized equilibrium selection be
implemented in LLM systems?}
The conference algorithm instantiated BNE-guided coordination primarily
as a prompt-control / decision-layer mechanism for frozen execution
LLMs. We retain this practical regime as DICE-PC, but reinterpret it as
finite-action HQRE logit response rather than unregularized BNE
approximation. We also introduce DICE-FT, a parameter-efficient
fine-tuning realization based on clipped KL-prox mirror updates. To
connect the exact HQRE theory to practical deep-learning optimization,
we add an inexact mirror recurrence and a field-error decomposition
covering clipping, sampling, representation, and optimization errors.
Experimentally, the evaluation expands from the conference version's
six reasoning and planning benchmarks to eleven benchmarks across
reasoning, planning, active information gathering, and multi-agent
coordination, with additional uniqueness-margin, KL-stability,
selection-vs-rollout, and communication-efficiency diagnostics.

\noindent\textbf{Discussions.} 
Our uniqueness guarantees rely on the monotonicity condition
$\alpha_{\min} > L_c$, whose verification at frontier LLM scale remains
open, and our experiments are concentrated in the 4B--32B regime.
An important next step is therefore to develop tighter empirical
estimates of coupling in realistic systems, adapt regularization and
trust-region budgets online, and extend the framework to richer action
spaces such as longer-horizon tool use, continuous control, and
multimodal agents.
More broadly, we view explicit equilibrium selection as a useful design
principle for scalable multi-agent LLM: stronger coordination mechanisms
can improve capability and reliability, but should be deployed together
with monitoring and safety constraints so that desirable conventions
remain stable under perturbation.

\section{Conclusion}
\label{sec:conclusion}

We introduced the discounted incomplete-information Markov game (IIMG)
framework and the Heterogeneous Quantal Response Equilibrium (HQRE) as
a principled lens for multi-agent LLM coordination, and instantiated
this perspective in the DICE-PC and DICE-FT algorithms.
Our central claim is that many failures of multi-agent LLM systems stem
not only from limited information sharing, but from ill-posed
equilibrium selection.
By making the convention selection explicit, DICE improves the
accuracy--cost--stability trade-off across eleven benchmarks and
provides theory-grounded diagnostics for when decentralized
coordination is likely to be stable or brittle.

\acks{The authors declare no competing interests.}

\newpage
\bibliography{sample}

\newpage
\etocdepthtag.toc{mtappendix}
\etocsettagdepth{mtchapter}{none}
\etocsettagdepth{mtappendix}{subsection}
\renewcommand{\contentsname}{Appendix}
\tableofcontents

\newpage
\appendix

\section{Preliminaries and proofs for Sec.~\ref{sec:dice_framework}}
\label{app:notation_prelim}

\subsection{Protocol embedding (Proof of Lem.~\ref{lem:protocol_embedding})}
\label{app:protocol_embedding}

\begin{proof}[Proof of Lem.~\ref{lem:protocol_embedding}]
We give a constructive representation of any explicit-communication protocol
(Def.~\ref{def:explicit_comm}) as a discounted IIMG.

\paragraph{View the protocol as a controlled stochastic process.}
An explicit-communication protocol specifies a joint process over
\[
\big( y^t,\ \mathbf o^t,\ \mathbf a^t,\ \mathbf r^t \big)_{t\ge 0},
\qquad \mathbf r^t=(r_1^t,\ldots,r_N^t),
\]
where each agent $i$ selects $a_i^t$ according to a behavioral policy
$\pi_i(\cdot\mid h_i^t,t)$ with information state $h_i^t$ as in
Def.~\ref{def:explicit_comm}, and $y^t$ is the publicly observed stream token.
All internal memory of the environment/coordinator/judge (if any) can be folded into a latent variable.

Formally, define an \emph{augmented latent state} $s^t$ to include:
(i) any latent environment state needed to generate observations/rewards,
(ii) any internal protocol memory (e.g., coordinator state, judge state),
and (iii) the time index $t$ (to absorb nonstationarity).
W.l.o.g., we may take $s^t$ to be the \emph{full} internal state of the protocol at time $t$,
so that given $(s^t,\mathbf a^t)$ the distribution of $(s^{t+1},\mathbf o^{t+1},\mathbf r^t,y^t)$
is well-defined and depends only on $(s^t,\mathbf a^t)$.

\paragraph{Define the IIMG components.}
We construct a discounted IIMG
$\langle \mathcal N, \allowbreak \mathcal S, \allowbreak \mathcal A, \allowbreak \mathcal O, \allowbreak \mathcal Y, \allowbreak \mathcal P, \allowbreak \Omega, \allowbreak G, \allowbreak (\phi_i)_i, \allowbreak \gamma, \allowbreak \mu_0\rangle$
as follows:
\begin{itemize}[leftmargin=1.25em,topsep=2pt,itemsep=2pt]
\item $\mathcal N$ is the same agent set as the protocol.
\item $\mathcal A_i$ is the set of agent-$i$ actions in the protocol; $\mathcal A=\prod_i\mathcal A_i$.
\item $\mathcal Y$ is the public-stream alphabet of the protocol.
Since only realized public tokens matter, we may take $\mathcal Y := \mathrm{Im}(G)\subseteq\mathcal Y$ without loss.
\item $\mathcal O_i$ is the private observation alphabet of the protocol; $\mathcal O=\prod_i \mathcal O_i$.
Any private messages available only to agent $i$ are included in $o_i^t$ exactly as in Def.~\ref{def:explicit_comm}.
\item $\mathcal S$ is the augmented latent-state space described above, with initial distribution $\mu_0$ matching the protocol.
\item The transition kernel $\mathcal P(\cdot\mid s,\mathbf a)$ is defined to match the protocol's latent update:
sample $s^{t+1}$ according to the protocol's internal dynamics given $(s^t=s,\mathbf a^t=\mathbf a)$.
\item The observation kernel $\Omega(\cdot\mid s',\mathbf a)$ outputs $\mathbf o^{t+1}$ exactly as the protocol does
given $(s^{t+1}=s',\mathbf a^t=\mathbf a)$.
\item The public-stream mapping $G:\mathcal A\to\mathcal Y$ is defined so that $y^t=G(\mathbf a^t)$ matches the protocol's
publicly observed token at time $t$.

\emph{Remark.} If the protocol's public token is generated by a publicly visible module
(e.g., a coordinator or judge), that module can be modeled as one of the agents in $\mathcal N$; then $y^t$
is a deterministic function of joint actions. This matches the IIMG normal form used in Sec.~\ref{subsec:notation}.
\item Finally, define the instantaneous reward functions by
\[
R_i(s,\mathbf a)\ :=\ \mathbb E[r_i^t\mid s^t=s,\mathbf a^t=\mathbf a],
\qquad \text{and set}\qquad
\phi_i(s,G(\mathbf a)):=R_i(s,\mathbf a),
\]
so that $R_i(s^t,\mathbf a^t)$ matches the protocol's stage reward in expectation.
(If the protocol reward is deterministic given $(s^t,\mathbf a^t)$, then this equality holds pointwise.)
\end{itemize}

\paragraph{Trajectory and return preservation.}
Fix any joint behavioral policy profile $\boldsymbol\pi$ in the protocol.
By construction, under $\boldsymbol\pi$ the IIMG generates the same conditional distributions at each time $t$:
given $(s^t,\mathbf a^t)$, the next latent state, private observations, rewards, and the public stream token
match those of the protocol. Therefore, the induced trajectory law over
$(s^t,y^t,\mathbf o^t,\mathbf a^t)_{t\ge 0}$ coincides with the protocol's law.
Since discounted returns are expectations of measurable functions of the trajectory, we have
$U_i^{\mathrm{IIMG}}(\boldsymbol\pi)=U_i^{\mathrm{protocol}}(\boldsymbol\pi)$ for all $i$.

\paragraph{Invariance of equilibrium notions and Bayesian regret.}
The mapping from protocol policies to IIMG policies is identity at the level of behavioral strategies
(both condition on the same information state $h_i^t$), and payoffs coincide for every profile.
Hence best responses, Nash/Bayesian Nash equilibrium conditions, and any welfare/regret functional built from
$U_i(\boldsymbol\pi)$ (including Eq.~\eqref{eq:unreg_return}--Eq.~\eqref{eq:bayes_regret_def}) are invariant under this embedding.
\end{proof}

\subsection{Policy-space topology and payoff continuity (Proof of Thm.~\ref{thm:topology})}
\label{app:topology_proof}

\begin{proof}[Proof sketch]
Under (A1), each per-state simplex $\Delta(\mathcal A_i)$ is compact
and metrizable. Since the index set $\mathcal I_i$ of information
states is countable, $\Pi_i = \prod_{(h_i^t,t) \in \mathcal I_i}
\Delta(\mathcal A_i)$ is compact by Tychonoff's theorem and
metrizable by the standard weighted product metric
(cf.\ Aliprantis \& Border, \textit{Infinite Dimensional Analysis}, Thm.~2.61). Convexity is
immediate. For continuity: the truncated payoff $U_i^{(\le H)}$
depends on finitely many coordinates and is therefore continuous;
bounded rewards (A2) give
$|U_i - U_i^{(\le H)}| \le \gamma^{H+1} R_{\max}/(1-\gamma)$
uniformly over $\Pi$, so $U_i$ is a uniform limit of continuous
functions and hence continuous.
\end{proof}

\subsection{BNE existence (Proof of Thm.~\ref{thm:bne_existence})}
\label{app:bne_existence}

\begin{proof}[Proof sketch]
For each horizon $H$, the $H$-truncated game has finite action and
information sets under (A1), hence admits a behavioral Nash
equilibrium $\boldsymbol\pi^{(H)}$ by Nash's theorem and Kuhn's
theorem (perfect recall, (A4)). Extend each
$\boldsymbol\pi^{(H)}$ to $\Pi$ by appending an arbitrary
continuation. By compactness of $\Pi$
(Thm.~\ref{thm:topology}), extract a convergent subsequence
$\boldsymbol\pi^{(H_m)} \to \bar{\boldsymbol\pi}$.

Suppose $\bar{\boldsymbol\pi}$ is not a BNE: some agent $i$ has a
deviation with improvement $\Delta > 0$. By continuity of $U_i$
(Thm.~\ref{thm:topology}), this improvement transfers to
$\boldsymbol\pi^{(H_m)}$ for large $m$, and by the uniform tail
bound $|U_i - U_i^{(\le H)}| \le \gamma^{H+1}R_{\max}/(1-\gamma)$
(A2), it transfers to the $H_m$-truncated payoff---contradicting the
Nash property of $\boldsymbol\pi^{(H_m)}$ in the truncated game.
\end{proof}

\subsection{The joint-policy TV metric and product-TV}
\label{app:metric_properties}

\begin{lemma}[Metric property of $\rho_{\Pi}$]\label{lem:metric_property_dpi}
Let
\[
d_{(h^t,t)}(\boldsymbol\pi,\boldsymbol\pi')
:=\mathrm{TV}\!\Big(\bigotimes_{i=1}^N \pi_i(\cdot\mid h_i^t,t),\ \bigotimes_{i=1}^N \pi_i'(\cdot\mid h_i^t,t)\Big),
\]
and define $\rho_{\Pi}(\boldsymbol\pi,\boldsymbol\pi'):=\sup_{(h^t,t)} d_{(h^t,t)}(\boldsymbol\pi,\boldsymbol\pi')$,
where the supremum is taken over all syntactically valid joint histories
$h^t\in\prod_{i=1}^N \mathcal H_i^t$ (not only those reachable under a given policy profile).
Then $\rho_{\Pi}$ is a metric on $\Pi$.
\end{lemma}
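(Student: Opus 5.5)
We verify the metric axioms for $\rho_\Pi$ directly from those of total variation, working one joint history at a time.

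First, finiteness and symmetry. Each $d_{(h^t,t)}$ is a total variation distance between two probability measures on the finite set $\mathcal A$ (by (A1)). Hence it takes values in $[0,1]$ and is symmetric, so $\rho_\Pi$ is a well-defined supremum in $[0,1]$ and is symmetric.

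Second, the triangle inequality. For each fixed $(h^t,t)$, TV satisfies the triangle inequality, so
\[
d_{(h^t,t)}(\boldsymbol\pi,\boldsymbol\pi'')\le d_{(h^t,t)}(\boldsymbol\pi,\boldsymbol\pi')+d_{(h^t,t)}(\boldsymbol\pi',\boldsymbol\pi'')\le \rho_\Pi(\boldsymbol\pi,\boldsymbol\pi')+\rho_\Pi(\boldsymbol\pi',\boldsymbol\pi'').
\]
Taking the supremum over $(h^t,t)$ on the left gives the inequality for $\rho_\Pi$.

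Third, identity of indiscernibles, which is the only step needing care and the main point of the proof. Clearly $\rho_\Pi(\boldsymbol\pi,\boldsymbol\pi)=0$. Conversely, suppose $\rho_\Pi(\boldsymbol\pi,\boldsymbol\pi')=0$. Then for every syntactically valid joint history the product measures $\bigotimes_i\pi_i(\cdot\mid h_i^t,t)$ and $\bigotimes_i\pi_i'(\cdot\mid h_i^t,t)$ coincide. Marginalizing onto coordinate $i$, i.e.\ summing over $a_{-i}$, the $i$-th marginals also coincide, so $\pi_i(\cdot\mid h_i^t,t)=\pi_i'(\cdot\mid h_i^t,t)$.

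To conclude $\pi_i=\pi_i'$ we need this at every individual information state $(h_i^t,t)$ of agent $i$. Here the convention that the supremum ranges over all of $\prod_j\mathcal H_j^t$, rather than only reachable histories, is essential. Any $h_i^t$ can be completed to a joint history $(h_i^t,h_{-i}^t)$ by choosing arbitrary $h_j^t\in\mathcal H_j^t$ for $j\neq i$, and these sets are nonempty. So the marginal identity holds at every $(h_i^t,t)$, giving $\pi_i=\pi_i'$ for all $i$ and hence $\boldsymbol\pi=\boldsymbol\pi'$ as elements of $\Pi$.

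The argument would fail if the supremum were restricted to reachable histories: policies differing only off-path would then be at distance zero, and $\rho_\Pi$ would be only a pseudometric. This is precisely why the lemma specifies the full product of history spaces.
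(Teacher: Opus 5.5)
Your proof is correct and follows essentially the same argument as the paper: the metric axioms of total variation at each fixed $(h^t,t)$ are preserved under the supremum, and identity of indiscernibles follows by marginalizing the coinciding product measures onto each coordinate $i$. Your remark that every $h_i^t$ extends to some joint history in $\prod_j\mathcal H_j^t$ makes explicit a step the paper leaves implicit.
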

\begin{proof}
For each fixed $(h^t,t)$, total variation distance is a metric on probability distributions over
$\mathcal A=\prod_i\mathcal A_i$. Taking a supremum preserves nonnegativity, symmetry, and the triangle
inequality. If $\rho_{\Pi}(\boldsymbol\pi,\boldsymbol\pi')=0$, then $d_{(h^t,t)}(\boldsymbol\pi,\boldsymbol\pi')=0$
for all $(h^t,t)$, so the joint product distributions coincide at every $(h^t,t)$. Taking the marginal on
coordinate $i$ yields $\pi_i(\cdot\mid h_i^t,t)=\pi_i'(\cdot\mid h_i^t,t)$ for all $(h_i^t,t)$, hence
$\boldsymbol\pi=\boldsymbol\pi'$.
\end{proof}

\begin{lemma}[Product-TV bound]\label{lem:product_tv}
For distributions $\{p_i,q_i\}_{i=1}^N$ on finite alphabets,
\[
\mathrm{TV}\!\Big(\bigotimes_{i=1}^N p_i,\ \bigotimes_{i=1}^N q_i\Big)
~\le~ \sum_{i=1}^N \mathrm{TV}(p_i,q_i).
\]
\end{lemma}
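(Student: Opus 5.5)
The plan is a hybrid (telescoping) argument that swaps one coordinate at a time. For $k=0,\dots,N$ define the intermediate product measures
\[
\mu_k := q_1\otimes\cdots\otimes q_k\otimes p_{k+1}\otimes\cdots\otimes p_N ,
\]
so that $\mu_0=\bigotimes_i p_i$ and $\mu_N=\bigotimes_i q_i$. Lemma~\ref{lem:metric_property_dpi} already records that total variation is a metric on distributions over a finite alphabet. The triangle inequality therefore gives
\[
\mathrm{TV}\Big(\bigotimes_i p_i,\bigotimes_i q_i\Big)\le\sum_{k=1}^N \mathrm{TV}(\mu_{k-1},\mu_k),
\]
and it remains only to bound each consecutive term.

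The measures $\mu_{k-1}$ and $\mu_k$ differ only in coordinate $k$: the first uses $p_k$ there and the second uses $q_k$, while both share the common factor $\nu:=q_1\otimes\cdots\otimes q_{k-1}\otimes p_{k+1}\otimes\cdots\otimes p_N$ on the other coordinates. I will use the convention $\mathrm{TV}(p,q)=\tfrac12\|p-q\|_1$. Writing $x=(x_k,x_{-k})$, we get
\[
2\,\mathrm{TV}(\mu_{k-1},\mu_k)=\sum_{x_{-k}}\nu(x_{-k})\sum_{x_k}|p_k(x_k)-q_k(x_k)|=\|p_k-q_k\|_1 ,
\]
because $\nu$ is a probability measure. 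Hence $\mathrm{TV}(\mu_{k-1},\mu_k)=\mathrm{TV}(p_k,q_k)$ exactly. Summing over $k$ yields the claim.

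There is no real obstacle here. The only point needing care is that the TV normalization is used consistently on both sides. The identity above holds equally for the sup-over-events definition, since the $\ell_1$ form and the sup form coincide on finite alphabets. Alternatively, one could give a coupling proof: independently couple each pair $(p_i,q_i)$ maximally, so that $\Pr(X_i\neq Y_i)=\mathrm{TV}(p_i,q_i)$. A union bound then gives $\Pr(X\ne Y)\le\sum_i\mathrm{TV}(p_i,q_i)$, and the coupling inequality finishes the argument. I will present the telescoping version since it is fully elementary.
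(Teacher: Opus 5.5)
Your proposal is correct and is essentially the paper's proof. The paper writes $\bigotimes_i p_i-\bigotimes_i q_i$ as the telescoping sum of the signed measures $\mu_{k-1}-\mu_k=\big(\bigotimes_{i<k}q_i\big)\otimes(p_k-q_k)\otimes\big(\bigotimes_{i>k}p_i\big)$ and then applies the $\ell_1$ triangle inequality with $\|r\otimes\nu\|_1=\|r\|_1\|\nu\|_1$. Your hybrid sequence is the same decomposition, phrased through the triangle inequality for the TV metric.
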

\begin{proof}
Write the difference of product measures as a telescoping sum:
\[
\bigotimes_{i=1}^N p_i - \bigotimes_{i=1}^N q_i
= \sum_{j=1}^N
\Big(\bigotimes_{i<j}q_i\Big)\otimes (p_j-q_j)\otimes\Big(\bigotimes_{i>j}p_i\Big).
\]
Using that $\|\cdot\|_1$ is a norm on signed measures and $\|r\otimes \nu\|_1=\|r\|_1\|\nu\|_1$ with
$\|\nu\|_1=1$ for probability measures, we obtain
\[
\Big\|\bigotimes_{i=1}^N p_i - \bigotimes_{i=1}^N q_i\Big\|_1
\le \sum_{j=1}^N \|p_j-q_j\|_1.
\]
Since $\mathrm{TV}(p,q)=\tfrac12\|p-q\|_1$, dividing by $2$ yields the claim.
\end{proof}

\subsection{\texorpdfstring{Occupancy sensitivity (proof of $C_{\mathrm{occ}}$)}{Occupancy sensitivity (proof of C\_occ)}}
\label{app:occ_sensitivity}

\begin{lemma}[Occupancy sensitivity]\label{lem:occ_sens}
Let $\delta:=\rho_{\Pi}(\boldsymbol\pi,\boldsymbol\pi')$.
Then the discounted state occupancy measures satisfy
\[
\|d^{\boldsymbol\pi}-d^{\boldsymbol\pi'}\|_1 \le \frac{2\gamma}{1-\gamma}\,\delta.
\]
\end{lemma}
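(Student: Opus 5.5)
The plan is a standard simulation-lemma argument: first bound the per-time marginal deviation, then sum the per-time bounds against the discount weights. Write $p_t:=\Pr_{\boldsymbol\pi}(s^t=\cdot)$ and $p_t':=\Pr_{\boldsymbol\pi'}(s^t=\cdot)$. Because $\mu_0$ is policy independent by (A2), $p_0=p_0'$. By the definition of $d^{\boldsymbol\pi}$ and the triangle inequality,
\[
\|d^{\boldsymbol\pi}-d^{\boldsymbol\pi'}\|_1\le (1-\gamma)\sum_{t\ge 1}\gamma^t\|p_t-p_t'\|_1 .
\]
The target is therefore a per-step bound of the form $\|p_t-p_t'\|_1\le 2t\delta$. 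Substituting it and using $(1-\gamma)\sum_{t\ge1}t\gamma^t=\gamma/(1-\gamma)$ gives exactly the constant $2\gamma/(1-\gamma)$.

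To obtain the per-step bound, I will not work with state marginals alone. The policies act on histories, so the state process is not Markov in $s$. Instead I will couple the two processes on the full trajectory prefix $\tau^t=(s^0,\mathbf o^0,\mathbf a^0,y^0,\ldots,s^t,\mathbf o^t)$, whose law I denote $\mathsf{Law}_{\boldsymbol\pi}(\tau^t)$. One step of the process consists of three pieces:
\begin{itemize}
\item the joint action is drawn from $\bigotimes_i\pi_i(\cdot\mid h_i^t,t)$, where each $h_i^t$ is a measurable function of $\tau^t$;
\item the public token is $y^t=G(\mathbf a^t)$;
\item the next state and observations are drawn as $s^{t+1}\sim\mathcal P(\cdot\mid s^t,\mathbf a^t)$ and $\mathbf o^{t+1}\sim\Omega(\cdot\mid s^{t+1},\mathbf a^t)$.
\end{itemize}
Only the action kernel differs between the two policies. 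Decomposing
\[
\mathsf{Law}_{\boldsymbol\pi}(\tau^{t+1})-\mathsf{Law}_{\boldsymbol\pi'}(\tau^{t+1})
\]
into a term that propagates $\mathsf{Law}_{\boldsymbol\pi}(\tau^t)-\mathsf{Law}_{\boldsymbol\pi'}(\tau^t)$ through the $\boldsymbol\pi$-kernel, plus a term that integrates the kernel difference against $\mathsf{Law}_{\boldsymbol\pi'}(\tau^t)$, yields
\[
\|\mathsf{Law}_{\boldsymbol\pi}(\tau^{t+1})-\mathsf{Law}_{\boldsymbol\pi'}(\tau^{t+1})\|_1\le \|\mathsf{Law}_{\boldsymbol\pi}(\tau^{t})-\mathsf{Law}_{\boldsymbol\pi'}(\tau^{t})\|_1+2\delta .
\]
Two facts justify this. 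First, Markov kernels are $\ell_1$-contractions. Second, by the definition of $\rho_\Pi$ in Eq.~\eqref{eq:policy_metric}, at every joint history $\|\bigotimes_i\pi_i-\bigotimes_i\pi_i'\|_1=2\,\mathrm{TV}\le 2\delta$, where the supremum ranges over all syntactically valid histories, as noted in Lemma~\ref{lem:metric_property_dpi}. Induction from $t=0$ gives $2t\delta$, and marginalizing onto $s^t$ can only shrink $\ell_1$ distance.

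I expect the main obstacle to be the bookkeeping in this recursion. The issue is that $\rho_\Pi$ is defined through the product of marginal policies evaluated at the joint history $h^t=(h_1^t,\ldots,h_N^t)$, and this history must be a deterministic function of the trajectory prefix so that the kernel-difference term is controlled uniformly. I will argue this from Definition~\ref{def:explicit_comm}: each $h_i^t$ is built from $o_i$, $a_i$, $y$, and $r_i$, and the rewards $r_i$ are functions of $(s,\mathbf a)$. I will also verify the geometric sum $(1-\gamma)\sum_{t\ge1}t\gamma^t=\gamma/(1-\gamma)$. A sharper per-step bound via $\min(2,2t\delta)$ is possible but is not needed for the stated constant.
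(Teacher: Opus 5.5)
Your proof is correct and is essentially the paper's argument. Both proofs establish the per-step bound $\|\mathsf{Law}(s^t)-\mathsf{Law}(s'^t)\|_1\le 2t\delta$ by observing that the two processes differ only through the action kernel, whose discrepancy at any joint history is at most $\delta$ in TV, and then sum against $(1-\gamma)\gamma^t$ to obtain $2\gamma\delta/(1-\gamma)$. The only difference is presentational: the paper uses step-wise maximal couplings with $\Pr(E_{t+1}^c)\le\Pr(E_t^c)+\delta$ on the divergence event, whereas you use the equivalent analytic telescoping of trajectory-prefix laws, which also makes explicit that $h^t$ is a measurable function of the prefix, a point the paper leaves implicit.
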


\begin{proof}
We construct a coupling of full trajectories under $\boldsymbol\pi$ and $\boldsymbol\pi'$ using maximal
couplings at each step.

Let $\tau=(s^0,o^0,\mathbf a^0,s^1,o^1,\mathbf a^1,\dots)$ be the trajectory under $\boldsymbol\pi$ and
$\tau'=(s'^0,o'^0,\mathbf a'^0,s'^1,o'^1,\mathbf a'^1,\dots)$ under $\boldsymbol\pi'$. Couple the initial
state as $s^0=s'^0$ a.s. since $\mu_0$ is policy-independent (A2), and couple the initial private
observations using the same kernel.

Inductively, suppose the coupled trajectories agree up to time $t$ (i.e., the joint histories $h^t$ and
$h'^t$ coincide). Then the joint action distributions at $(h^t,t)$ satisfy
\[
\mathrm{TV}\!\Big(\boldsymbol\pi(\cdot\mid h^t,t),\boldsymbol\pi'(\cdot\mid h^t,t)\Big)\le \delta.
\]
By maximal coupling, we can sample $(\mathbf a^t,\mathbf a'^t)$ such that
$\Pr(\mathbf a^t\neq \mathbf a'^t\mid h^t)=\mathrm{TV}(\cdot,\cdot)\le \delta$.
If $\mathbf a^t=\mathbf a'^t$, couple the next state and observations using the same randomness so that
$(s^{t+1},o^{t+1})=(s'^{t+1},o'^{t+1})$ almost surely. If $\mathbf a^t\neq \mathbf a'^t$, let the two
processes evolve arbitrarily thereafter.

Let $E_t$ be the event that the coupled trajectories have not diverged up to time $t$. Then $\Pr(E_0)=1$ and
\[
\Pr(E_{t+1}^c)\le \Pr(E_t^c) + \Pr(E_t)\cdot \delta \le \Pr(E_t^c)+\delta,
\]
so by induction $\Pr(E_t^c)\le t\delta$. Under any coupling,
$\mathrm{TV}(\mathsf{Law}(s^t),\mathsf{Law}(s'^t))\le \Pr(s^t\neq s'^t)\le \Pr(E_t^c)$, hence
$\|\mathsf{Law}(s^t)-\mathsf{Law}(s'^t)\|_1 \le 2t\delta$.

Finally,
\[
\|d^{\boldsymbol\pi}-d^{\boldsymbol\pi'}\|_1
=(1-\gamma)\sum_{t\ge 0}\gamma^t\,\|\mathsf{Law}(s^t)-\mathsf{Law}(s'^t)\|_1
\le (1-\gamma)\sum_{t\ge 0}\gamma^t\,2t\delta
= 2\delta\cdot \frac{\gamma}{1-\gamma}.
\]
\end{proof}

\subsection{Performance difference and TV sensitivity (Proof of Cor.~\ref{cor:regret})}
\label{app:welfare_sensitivity}

\begin{lemma}[Performance difference]\label{lem:pdl_full}
For any joint policy $\boldsymbol\pi$, define the history-action value functions
\[
Q_i^{\boldsymbol\pi}(h^t,\mathbf a,t)
:= \mathbb E\Big[\sum_{u=t}^\infty \gamma^{u-t} R_i(s^u,\mathbf a^u)\ \Big|\ h^t,\ \mathbf a^t\leftarrow \mathbf a,\ \boldsymbol\pi\ \text{thereafter}\Big],
\]
\[
V_i^{\boldsymbol\pi}(h^t,t)
:=\mathbb E_{\mathbf a\sim\boldsymbol\pi(\cdot\mid h^t,t)}\big[Q_i^{\boldsymbol\pi}(h^t,\mathbf a,t)\big].
\]
Here $\mathbf a^t\leftarrow \mathbf a$ denotes an \emph{intervention}: at time $t$ we force joint action $\mathbf a$
and then follow $\boldsymbol\pi$ from time $t+1$ onward. (This avoids conditioning on a potentially zero-probability event.)
Let $Q_{\mathrm{tot}}^{\boldsymbol\pi}:=\sum_{i=1}^N Q_i^{\boldsymbol\pi}$ and
$V_{\mathrm{tot}}^{\boldsymbol\pi}:=\sum_{i=1}^N V_i^{\boldsymbol\pi}$. Then for any
$\boldsymbol\pi',\boldsymbol\pi\in\Pi$,
\[
W(\boldsymbol\pi')-W(\boldsymbol\pi)
=\frac{1}{1-\gamma}\,
\mathbb E_{(h^t,t)\sim \mathsf H_{\mathrm{disc}}^{\boldsymbol\pi'}}
\Big[
\mathbb E_{\mathbf a\sim \boldsymbol\pi'(\cdot\mid h^t,t)}
\big(Q_{\mathrm{tot}}^{\boldsymbol\pi}(h^t,\mathbf a,t)-V_{\mathrm{tot}}^{\boldsymbol\pi}(h^t,t)\big)
\Big].
\]
\end{lemma}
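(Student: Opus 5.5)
The plan is the standard telescoping (hybrid-policy) argument, carried out at the level of joint histories. Under (A2) rewards are bounded, so $Q_{\mathrm{tot}}^{\boldsymbol\pi}$ and $V_{\mathrm{tot}}^{\boldsymbol\pi}$ are bounded by $NR_{\max}/(1-\gamma)$. All series below therefore converge absolutely, and we may interchange sums and expectations freely. Since $W$, $Q_{\mathrm{tot}}$ and $V_{\mathrm{tot}}$ are all linear in $i$, it suffices to prove the identity for a single agent $i$ and then sum over $i$.

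\textbf{Step 1: the Bellman identity.} First I will record the one-step Bellman identity for the intervened value:
\[
Q_i^{\boldsymbol\pi}(h^t,\mathbf a,t)=\mathbb E\big[R_i(s^t,\mathbf a)+\gamma V_i^{\boldsymbol\pi}(h^{t+1},t+1)\,\big|\,h^t,\mathbf a^t\leftarrow\mathbf a\big].
\]
The expectation is over $s^t$ given $h^t$ and over the transition $s^{t+1}\sim\mathcal P$, the observations $\mathbf o^{t+1}\sim\Omega$, and the public token $y^t=G(\mathbf a)$. These generate $h^{t+1}$. The key point is that this conditional law of $(s^t,h^{t+1})$ given $(h^t,\mathbf a)$ does not depend on which policy produced $h^t$. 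It is determined by the posterior over $s^t$ given the joint history, together with the kernels. The posterior over $s^t$ given the full joint history is policy-independent because actions are history-measurable. This is the step I expect to be the main obstacle: I must argue via Bayes' rule that the policy factors cancel in the posterior. The intervention convention is what sidesteps the zero-probability conditioning issue.

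\textbf{Step 2: telescoping along $\boldsymbol\pi'$.} Next, run $\boldsymbol\pi'$ and write
\[
U_i(\boldsymbol\pi')-U_i(\boldsymbol\pi)=\mathbb E_{\boldsymbol\pi'}\Big[\sum_{t\ge0}\gamma^t R_i(s^t,\mathbf a^t)\Big]-\mathbb E[V_i^{\boldsymbol\pi}(h^0,0)].
\]
Here $h^0$ has the same law under both policies, because $\mu_0$ and $\Omega$ are policy-independent. Add and subtract $\sum_{t\ge1}\gamma^tV_i^{\boldsymbol\pi}(h^t,t)$ along the $\boldsymbol\pi'$-trajectory; this is legitimate by the boundedness noted above. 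The tail term $\gamma^tV$ vanishes as $t\to\infty$. This gives
\[
\sum_{t\ge0}\gamma^t\,\mathbb E_{\boldsymbol\pi'}\big[R_i(s^t,\mathbf a^t)+\gamma V_i^{\boldsymbol\pi}(h^{t+1},t+1)-V_i^{\boldsymbol\pi}(h^t,t)\big].
\]

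\textbf{Step 3: identifying advantages.} Condition on $(h^t,\mathbf a^t)$ and apply Step 1. The bracket becomes $Q_i^{\boldsymbol\pi}(h^t,\mathbf a^t,t)-V_i^{\boldsymbol\pi}(h^t,t)$, with $\mathbf a^t\sim\boldsymbol\pi'(\cdot\mid h^t,t)$. This uses the fact that under $\boldsymbol\pi'$ the law of $(s^t,h^{t+1})$ given $(h^t,\mathbf a^t)$ matches the intervened law.

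\textbf{Step 4: rewriting with the occupancy.} Finally, express $\sum_t\gamma^t\mathbb E_{h^t\sim\mathsf{Law}(\cdot\mid\boldsymbol\pi')}[\cdot]$ as $\frac{1}{1-\gamma}\mathbb E_{(h^t,t)\sim\mathsf H_{\mathrm{disc}}^{\boldsymbol\pi'}}[\cdot]$, using the definition $\mathsf H_{\mathrm{disc}}^{\boldsymbol\pi'}=(1-\gamma)\gamma^t\mathsf{Law}(h^t\mid\boldsymbol\pi')$. Summing over $i$ then yields the stated formula.
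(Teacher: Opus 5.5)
Your proposal is correct and follows essentially the same route as the paper. Both arguments use a one-step Bellman identity under the intervention semantics, telescope $\gamma^t V^{\boldsymbol\pi}$ along $\boldsymbol\pi'$-trajectories (with the vanishing boundary term and the policy-independent law of $h^0$), and then rewrite the sum with the discounted history occupancy. You differ only in working agent by agent before summing, and in making explicit that the posterior over $s^t$ given the joint history is policy-independent (the history-measurable policy factors cancel in Bayes' rule), which the paper uses implicitly when it invokes ``the Bellman expectation equation under the intervention semantics.''
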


\begin{proof}
Define the (total) advantage of $\boldsymbol\pi$ at $(h^t,t)$ for joint action $\mathbf a$:
\[
A_{\mathrm{tot}}^{\boldsymbol\pi}(h^t,\mathbf a,t)
:= Q_{\mathrm{tot}}^{\boldsymbol\pi}(h^t,\mathbf a,t)-V_{\mathrm{tot}}^{\boldsymbol\pi}(h^t,t).
\]
By the Bellman expectation equation under the intervention semantics,
\[
Q_{\mathrm{tot}}^{\boldsymbol\pi}(h^t,\mathbf a,t)
= \mathbb E\!\left[R_{\mathrm{tot}}(s^t,\mathbf a)\mid h^t\right]
+\gamma\,\mathbb E\!\left[V_{\mathrm{tot}}^{\boldsymbol\pi}(h^{t+1},t+1)\mid h^t,\ \mathbf a^t\leftarrow \mathbf a\right],
\]
where $R_{\mathrm{tot}}=\sum_i R_i$. Therefore,
\[
A_{\mathrm{tot}}^{\boldsymbol\pi}(h^t,\mathbf a,t)
= \mathbb E\!\left[R_{\mathrm{tot}}(s^t,\mathbf a)\mid h^t\right]
+\gamma\,\mathbb E\!\left[V_{\mathrm{tot}}^{\boldsymbol\pi}(h^{t+1},t+1)\mid h^t,\ \mathbf a^t\leftarrow \mathbf a\right]
- V_{\mathrm{tot}}^{\boldsymbol\pi}(h^t,t).
\]

Take expectation under trajectories induced by $\boldsymbol\pi'$ and sum over $t\ge 0$:
\begin{align*}
\mathbb E_{\boldsymbol\pi'}\Big[\sum_{t=0}^\infty \gamma^t A_{\mathrm{tot}}^{\boldsymbol\pi}(h^t,\mathbf a^t,t)\Big]
&=
\mathbb E_{\boldsymbol\pi'}\Big[\sum_{t=0}^\infty \gamma^t R_{\mathrm{tot}}(s^t,\mathbf a^t)\Big]
-\mathbb E\big[V_{\mathrm{tot}}^{\boldsymbol\pi}(h^0,0)\big],
\end{align*}
where the telescoping uses
\[
\sum_{t=0}^T \gamma^t\big(\gamma V_{\mathrm{tot}}^{\boldsymbol\pi}(h^{t+1},t+1)-V_{\mathrm{tot}}^{\boldsymbol\pi}(h^t,t)\big)
= -V_{\mathrm{tot}}^{\boldsymbol\pi}(h^0,0) + \gamma^{T+1}V_{\mathrm{tot}}^{\boldsymbol\pi}(h^{T+1},T+1),
\]
and the boundary term vanishes as $T\to\infty$ since
$|V_{\mathrm{tot}}^{\boldsymbol\pi}|\le \frac{N R_{\max}}{1-\gamma}$ implies
$\gamma^{T+1}|V_{\mathrm{tot}}^{\boldsymbol\pi}(h^{T+1},T+1)|\to 0$.
The last equality holds because the distribution of $h^0$ is policy-independent (A2), hence
$\mathbb E[V_{\mathrm{tot}}^{\boldsymbol\pi}(h^0,0)]=W(\boldsymbol\pi)$.
Thus
\[
\mathbb E_{\boldsymbol\pi'}\Big[\sum_{t=0}^\infty \gamma^t A_{\mathrm{tot}}^{\boldsymbol\pi}(h^t,\mathbf a^t,t)\Big]
= W(\boldsymbol\pi') - W(\boldsymbol\pi).
\]

Finally, rewrite the discounted sum using the discounted history occupancy:
\[
\mathbb E_{\boldsymbol\pi'}\Big[\sum_{t=0}^\infty \gamma^t f(h^t,t)\Big]
=\frac{1}{1-\gamma}\,
\mathbb E_{(h^t,t)\sim \mathsf H_{\mathrm{disc}}^{\boldsymbol\pi'}}[f(h^t,t)].
\]
Apply this with
$f(h^t,t)=\mathbb E_{\mathbf a\sim \boldsymbol\pi'(\cdot\mid h^t,t)}[A_{\mathrm{tot}}^{\boldsymbol\pi}(h^t,\mathbf a,t)]$
to obtain the stated identity.
\end{proof}

\begin{lemma}[Occupancy-averaged TV sensitivity]\label{lem:tv_sensitivity}
For any agent $i$ and any joint policies $\boldsymbol\pi,\boldsymbol\pi^\star$,
\[
\big|U_i(\boldsymbol\pi^\star)-U_i(\boldsymbol\pi)\big|
\le \frac{2R_{\max}}{(1-\gamma)^2}\,
\mathbb{E}_{(h^t,t)\sim \mathsf H_{\mathrm{disc}}^{\boldsymbol{\pi}^\star}}
\!\Big[\mathrm{TV}\big(\boldsymbol{\pi}(\cdot\mid h^t,t),\,\boldsymbol{\pi}^\star(\cdot\mid h^t,t)\big)\Big].
\]
Consequently,
\[
\big|W(\boldsymbol\pi^\star)-W(\boldsymbol\pi)\big|
\le \frac{2N R_{\max}}{(1-\gamma)^2}\,
\mathbb{E}_{(h^t,t)\sim \mathsf H_{\mathrm{disc}}^{\boldsymbol{\pi}^\star}}
\!\Big[\mathrm{TV}\big(\boldsymbol{\pi}(\cdot\mid h^t,t),\,\boldsymbol{\pi}^\star(\cdot\mid h^t,t)\big)\Big].
\]
\end{lemma}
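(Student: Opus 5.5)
We apply Lemma~\ref{lem:pdl_full} with the roles chosen so that the occupancy is the one under $\boldsymbol\pi^\star$. We work agent by agent. The single-agent version of the performance-difference identity follows verbatim from the proof of Lemma~\ref{lem:pdl_full}: replace $Q_{\mathrm{tot}},V_{\mathrm{tot}}$ by $Q_i^{\boldsymbol\pi},V_i^{\boldsymbol\pi}$, since the telescoping argument never used the sum over agents. This gives
\[
U_i(\boldsymbol\pi^\star)-U_i(\boldsymbol\pi)
=\frac{1}{1-\gamma}\,
\mathbb E_{(h^t,t)\sim \mathsf H_{\mathrm{disc}}^{\boldsymbol\pi^\star}}
\Big[\mathbb E_{\mathbf a\sim\boldsymbol\pi^\star(\cdot\mid h^t,t)}Q_i^{\boldsymbol\pi}(h^t,\mathbf a,t)
-\mathbb E_{\mathbf a\sim\boldsymbol\pi(\cdot\mid h^t,t)}Q_i^{\boldsymbol\pi}(h^t,\mathbf a,t)\Big],
\]
where we used $V_i^{\boldsymbol\pi}(h^t,t)=\mathbb E_{\mathbf a\sim\boldsymbol\pi}Q_i^{\boldsymbol\pi}(h^t,\mathbf a,t)$.

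The inner bracket is the difference of expectations of the same bounded function $Q_i^{\boldsymbol\pi}(h^t,\cdot,t)$ under two joint action distributions on $\mathcal A$. By (A2), $|R_i|\le R_{\max}$, so $|Q_i^{\boldsymbol\pi}|\le R_{\max}/(1-\gamma)$. We then use the standard inequality $|\mathbb E_p f-\mathbb E_q f|\le \|f\|_\infty\|p-q\|_1 = 2\|f\|_\infty\mathrm{TV}(p,q)$. This bounds the bracket by $\frac{2R_{\max}}{1-\gamma}\mathrm{TV}(\boldsymbol\pi(\cdot\mid h^t,t),\boldsymbol\pi^\star(\cdot\mid h^t,t))$. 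One could tighten this by centering $f$ to get $\|f\|_{\mathrm{osc}}\mathrm{TV}$, but the stated constant does not require it. Taking absolute values inside the occupancy expectation and multiplying by the prefactor $\frac1{1-\gamma}$ yields the first claim with constant $\frac{2R_{\max}}{(1-\gamma)^2}$. Summing over $i=1,\dots,N$ with the triangle inequality gives the welfare bound with the extra factor $N$.

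The only delicate point is the validity of the per-agent performance-difference identity under the intervention semantics and history-dependent (nonstationary) policies. We need that $h^0$ has a policy-independent law, so that $\mathbb E[V_i^{\boldsymbol\pi}(h^0,0)]=U_i(\boldsymbol\pi)$, and that the boundary term vanishes because of the uniform bound on $V_i^{\boldsymbol\pi}$. Both are already checked in the proof of Lemma~\ref{lem:pdl_full} using (A2), so we expect this step to reduce to citing that argument. A secondary subtlety is that $Q_i^{\boldsymbol\pi}$ here is the joint-history, joint-action value, not the per-agent-information $Q$ of Eq.~\eqref{eq:logit_def}. This is harmless because both expectations are taken over joint histories drawn from $\mathsf H_{\mathrm{disc}}^{\boldsymbol\pi^\star}$.
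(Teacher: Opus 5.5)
Your proposal is correct and follows the paper's proof essentially step for step. The paper likewise applies the per-agent version of Lemma~\ref{lem:pdl_full} under the occupancy $\mathsf H_{\mathrm{disc}}^{\boldsymbol\pi^\star}$, rewrites the inner term as $\mathbb E_{\boldsymbol\pi^\star}Q_i^{\boldsymbol\pi}-\mathbb E_{\boldsymbol\pi}Q_i^{\boldsymbol\pi}$, bounds it by $2\|Q_i^{\boldsymbol\pi}\|_\infty\,\mathrm{TV}$ using $\|Q_i^{\boldsymbol\pi}\|_\infty\le R_{\max}/(1-\gamma)$, and then sums over agents.
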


\begin{proof}
Apply Lem.~\ref{lem:pdl_full} to the single-agent payoff $U_i$ (the same derivation holds with
$Q_i^{\boldsymbol\pi}$ and $V_i^{\boldsymbol\pi}$):
\[
U_i(\boldsymbol\pi^\star)-U_i(\boldsymbol\pi)
=\frac{1}{1-\gamma}
\mathbb E_{(h^t,t)\sim \mathsf H_{\mathrm{disc}}^{\boldsymbol\pi^\star}}
\Big[
\mathbb E_{\mathbf a\sim \boldsymbol\pi^\star(\cdot\mid h^t,t)}
\big(Q_i^{\boldsymbol\pi}(h^t,\mathbf a,t)-V_i^{\boldsymbol\pi}(h^t,t)\big)
\Big].
\]
Since $V_i^{\boldsymbol\pi}(h^t,t)=\mathbb E_{\mathbf a\sim\boldsymbol\pi(\cdot\mid h^t,t)}[Q_i^{\boldsymbol\pi}(h^t,\mathbf a,t)]$,
the inner term equals
\[
\mathbb E_{\mathbf a\sim \boldsymbol\pi^\star(\cdot\mid h^t,t)}[Q_i^{\boldsymbol\pi}(h^t,\mathbf a,t)]
-\mathbb E_{\mathbf a\sim \boldsymbol\pi(\cdot\mid h^t,t)}[Q_i^{\boldsymbol\pi}(h^t,\mathbf a,t)].
\]
For any bounded function $f$ and distributions $p,q$,
$|\mathbb E_p[f]-\mathbb E_q[f]|\le 2\|f\|_\infty\,\mathrm{TV}(p,q)$.
With $f(\mathbf a)=Q_i^{\boldsymbol\pi}(h^t,\mathbf a,t)$ and
$\|Q_i^{\boldsymbol\pi}\|_\infty\le R_{\max}/(1-\gamma)$, we obtain
\[
\big|U_i(\boldsymbol\pi^\star)-U_i(\boldsymbol\pi)\big|
\le \frac{1}{1-\gamma}
\mathbb E_{(h^t,t)\sim \mathsf H_{\mathrm{disc}}^{\boldsymbol\pi^\star}}
\Big[2\cdot \frac{R_{\max}}{1-\gamma}\,
\mathrm{TV}\big(\boldsymbol\pi(\cdot\mid h^t,t),\boldsymbol\pi^\star(\cdot\mid h^t,t)\big)\Big],
\]
which yields the stated bound. Summing over $i$ gives the welfare bound.
\end{proof}

\begin{proof}[Proof of Cor.~\ref{cor:regret}]
By Lem.~\ref{lem:tv_sensitivity}, for each learning iteration $k$,
\[
W(\boldsymbol\pi^\star)-W(\boldsymbol\pi^k)
\le \big|W(\boldsymbol\pi^\star)-W(\boldsymbol\pi^k)\big|
\le \frac{2N R_{\max}}{(1-\gamma)^2}\,\bar\Delta_k,
\]
where $\bar\Delta_k$ is defined in Sec.~\ref{subsec:bne}. Taking expectation and summing over
$k=1,\dots,T$ yields
\[
\mathrm{Regret}(T)
\le \frac{2N R_{\max}}{(1-\gamma)^2}\sum_{k=1}^T \mathbb E[\bar\Delta_k].
\]
If $\mathbb{E}[\bar\Delta_k]\le C_\Delta\,k^{-\alpha}$, then
\[
\mathrm{Regret}(T)\le \frac{2N R_{\max} C_\Delta}{(1-\gamma)^2}\sum_{k=1}^T k^{-\alpha}.
\]
For $\alpha=\tfrac12$, using $\sum_{k=1}^T k^{-1/2}\le 2\sqrt{T}$ gives
$\mathrm{Regret}(T)=O\!\big(\frac{N\sqrt{T}}{(1-\gamma)^2}\big)$.
\end{proof}

\subsection{Multiplicity can induce linear Bayesian regret (Proof of Prop.~\ref{prop:multi_linear})}
\label{app:multi_linear}

\begin{proof}[Proof of Prop.~\ref{prop:multi_linear}]
By definition,
\[
\mathrm{Regret}(T)
=\mathbb E\Big[\sum_{k=1}^T \big(W(\boldsymbol\pi^{(1)})-W(\boldsymbol\pi^k)\big)\Big]
=\sum_{k=1}^T \mathbb E\big[W(\boldsymbol\pi^{(1)})-W(\boldsymbol\pi^k)\big].
\]
Since $\boldsymbol\pi^k\in\{\boldsymbol\pi^{(1)},\boldsymbol\pi^{(2)}\}$ for all $k$,
we have $W(\boldsymbol\pi^{(1)})-W(\boldsymbol\pi^k)=0$ if $\boldsymbol\pi^k=\boldsymbol\pi^{(1)}$,
and $W(\boldsymbol\pi^{(1)})-W(\boldsymbol\pi^k)=\Delta_W$ if $\boldsymbol\pi^k=\boldsymbol\pi^{(2)}$.
Therefore, for all $k\ge k_0$,
\[
\mathbb E\big[W(\boldsymbol\pi^{(1)})-W(\boldsymbol\pi^k)\big]
=\Pr(\boldsymbol\pi^k=\boldsymbol\pi^{(2)})\,\Delta_W
\ \ge\ p_0\,\Delta_W.
\]
Summing over $k=k_0,\ldots,T$ yields
\[
\mathrm{Regret}(T)
\ge \sum_{k=k_0}^T p_0\,\Delta_W
= p_0\,\Delta_W\,(T-k_0+1)
=\Omega(\Delta_W\,T).
\]
\end{proof}

\subsection{Debate-style dynamics and linear regret (Proofs for Sec.~\ref{subsec:mad})}
\label{appendix:comparison_debate}

\subsubsection{Persistent suboptimality under defensive mixing (Proof of Lem.~\ref{lem:debate_suboptimality_main})}

We formalize a sufficient condition under which defensive mixing induces a persistent one-step
advantage gap. The statement is deliberately worst-case: it isolates the mechanism (full-support mixing +
nondegenerate gaps) and does not require the debate game to be zero-sum.

\paragraph{Modeling note.}
In this subsection, each learning iteration $k$ corresponds to a \emph{single debate instance} (one-shot stage),
with public instance/state $s_k$ (e.g., prompt/context) and joint action $\mathbf a^k=(a_i^k,a_{-i}^k)$.
Accordingly, $\pi_i^k(\cdot\mid s_k)$ is a mixed action (possibly depending on the public instance).

We interpret $Q_i^\star(s,a_i,a_{-i})$ as the \emph{true} one-step action-value in the debate instance
(equivalently the expected immediate payoff/score), with $\star$ indicating that this is evaluated under the
fixed reference environment defining the regret comparator (cf.\ Cor.~\ref{cor:mad_linear}).

\begin{assumption}[Nondegenerate action-value gaps]\label{ass:debate_gap}
There exist constants $\Delta_{\mathrm{gap}}>0$ and $q_{\mathrm{gap}}>0$, and an iteration $k_0$, such that
for all $k\ge k_0$,
\[
\Pr\Big(\max_{a_i} Q_i^\star(s_k,a_i,a_{-i}^k) - \min_{a_i} Q_i^\star(s_k,a_i,a_{-i}^k)\ \ge\ \Delta_{\mathrm{gap}}\Big)\ \ge\ q_{\mathrm{gap}}.
\]
\end{assumption}

\begin{assumption}[Persistent mixing]\label{ass:debate_mixing}
There exist $p_{\min}>0$ and an iteration $k_0$ such that for all $k\ge k_0$ and all debate instances $s_k$,
\[
\pi_i^k(a_i\mid s_k)\ge p_{\min}\qquad\forall a_i\in\mathcal A_i,
\]
i.e., the learner maintains full support with a uniform lower bound (e.g., due to entropy regularization or
forced exploration).
\end{assumption}

\begin{proof}[Proof of Lem.~\ref{lem:debate_suboptimality_main}]
Fix $k\ge k_0$ and condition on $(s_k,a_{-i}^k)$. Let
\[
M_k := \max_{a_i} Q_i^\star(s_k,a_i,a_{-i}^k).
\]
Then, with $a_i^k\sim\pi_i^k(\cdot\mid s_k)$,
\[
\mathbb E\!\left[M_k - Q_i^\star(s_k,a_i^k,a_{-i}^k)\ \big|\ s_k,a_{-i}^k\right]
= M_k - \sum_{a_i}\pi_i^k(a_i\mid s_k)\,Q_i^\star(s_k,a_i,a_{-i}^k).
\]
On the event in Assumption~\ref{ass:debate_gap}, there exists an action $a_i^{-}$ such that
$Q_i^\star(s_k,a_i^{-},a_{-i}^k)\le M_k-\Delta_{\mathrm{gap}}$.
By Assumption~\ref{ass:debate_mixing}, $\pi_i^k(a_i^-\mid s_k)\ge p_{\min}$, hence
\[
\sum_{a_i}\pi_i^k(a_i\mid s_k)\,Q_i^\star(s_k,a_i,a_{-i}^k)
\le (1-p_{\min})\cdot M_k + p_{\min}\cdot (M_k-\Delta_{\mathrm{gap}})
= M_k - p_{\min}\Delta_{\mathrm{gap}}.
\]
Therefore,
\[
\mathbb E\!\left[M_k - Q_i^\star(s_k,a_i^k,a_{-i}^k)\ \big|\ s_k,a_{-i}^k\right]
\ge p_{\min}\Delta_{\mathrm{gap}}
\quad\text{on the nondegenerate-gap event}.
\]
Taking expectation over $(s_k,a_{-i}^k)$ yields
\[
\mathbb{E}\!\left[\max_{a_i} Q_i^{\star}(s_k,a_i,a_{-i}^k)\ -\ Q_i^{\star}(s_k,a_i^k,a_{-i}^k)\right]
\ge p_{\min}\Delta_{\mathrm{gap}}\cdot q_{\mathrm{gap}}.
\]
Setting $\delta_{\min}:=p_{\min}\Delta_{\mathrm{gap}}\,q_{\mathrm{gap}}>0$ concludes the proof.
\end{proof}

\subsubsection{Linear Bayesian regret for MAD (Proof of Cor.~\ref{cor:mad_linear})}

The lemma above lower-bounds an \emph{ex-post} advantage gap.
To translate it into the welfare-based Bayesian regret in Eq.~\eqref{eq:bayes_regret_def},
we require a fixed comparator policy profile and a welfare objective aligned with the gap.

\begin{assumption}[Stationary opponents after burn-in]\label{ass:debate_stationary_opponents}
There exists $k_0$ and a fixed opponent policy $\pi_{-i}^\star$ such that for all $k\ge k_0$,
the opponent action $a_{-i}^k$ is distributed according to $\pi_{-i}^\star(\cdot\mid s_k)$.
Equivalently, $\pi_{-i}^k\equiv \pi_{-i}^\star$ for all $k\ge k_0$ on the debate instances.
\end{assumption}

\begin{assumption}[Welfare alignment for the regret comparator]\label{ass:debate_welfare_alignment}
The welfare functional used in Eq.~\eqref{eq:bayes_regret_def} is aligned with agent $i$'s payoff in the debate instances:
there exists a constant $\kappa>0$ such that for any two profiles that differ only in agent $i$ (with opponents fixed),
\[
W(\pi_i',\pi_{-i}^\star)-W(\pi_i,\pi_{-i}^\star)
\ \ge\ \kappa\cdot\big(U_i(\pi_i',\pi_{-i}^\star)-U_i(\pi_i,\pi_{-i}^\star)\big).
\]
A sufficient special case is a team/common-payoff debate objective where all agents share the same reward,
in which case $\kappa=N$.
\end{assumption}

\begin{assumption}[A fixed comparator attains the per-instance maximizer]\label{ass:debate_fixed_best}
There exists a (possibly state-dependent) policy $\pi_i^\star(\cdot\mid s)$ such that for every debate instance $s$
and every opponent action $a_{-i}$ in the support of $\pi_{-i}^\star(\cdot\mid s)$,
\[
\mathbb E_{a_i\sim \pi_i^\star(\cdot\mid s)}\big[Q_i^\star(s,a_i,a_{-i})\big]
\ =\ \max_{a_i} Q_i^\star(s,a_i,a_{-i}).
\]
In particular, this holds if there exists a measurable selector $a_i^\star(s)$ that is a maximizer for all such $a_{-i}$,
and $\pi_i^\star(\cdot\mid s)$ is the point mass at $a_i^\star(s)$.
\end{assumption}

\begin{proof}[Proof of Cor.~\ref{cor:mad_linear}]
Let $\boldsymbol\pi^\star := (\pi_i^\star,\pi_{-i}^\star)$ be the fixed comparator profile from
Assumptions~\ref{ass:debate_stationary_opponents}--\ref{ass:debate_fixed_best}.
By Assumption~\ref{ass:debate_fixed_best} and the definition of $Q_i^\star$, for all $k\ge k_0$,
\[
U_i(\boldsymbol\pi^\star)-U_i(\boldsymbol\pi^k)
\ \ge\
\mathbb{E}\!\left[\max_{a_i} Q_i^{\star}(s_k,a_i,a_{-i}^k)\ -\ Q_i^{\star}(s_k,a_i^k,a_{-i}^k)\right].
\]
By Lem.~\ref{lem:debate_suboptimality_main}, the right-hand side is at least $\delta_{\min}>0$ for all $k\ge k_0$.
Applying Assumption~\ref{ass:debate_welfare_alignment} yields
\[
W(\boldsymbol\pi^\star)-W(\boldsymbol\pi^k)
\ \ge\ \kappa\cdot\big(U_i(\boldsymbol\pi^\star)-U_i(\boldsymbol\pi^k)\big)
\ \ge\ \kappa\,\delta_{\min}
\qquad\forall k\ge k_0.
\]

Finally, under standard $O(1/\sqrt{k})$ estimation errors, there exists $k_1$ such that for all $k\ge k_1$
the additional loss attributable to estimation error is at most $\kappa\delta_{\min}/2$.
(Equivalently, after $k_1$ the policy is accurate enough that the persistent loss is dominated by the enforced mixing,
not by estimation error.)
Hence for all $k\ge k_\star:=\max\{k_0,k_1\}$,
\[
W(\boldsymbol\pi^\star)-W(\boldsymbol\pi^k)\ \ge\ \frac{\kappa\delta_{\min}}{2}\ =:c>0.
\]
Therefore, for all $T\ge k_\star$,
\[
\mathrm{Regret}_{\mathrm{MAD}}(T)
=\mathbb E\Big[\sum_{k=1}^T \big(W(\boldsymbol\pi^\star)-W(\boldsymbol\pi^k)\big)\Big]
\ \ge\ \sum_{k=k_\star}^T c
\ =\ \Omega(T).
\]
\end{proof}

\section{Regularized equilibrium and scalability proofs for Sec.~\ref{sec:hqre_theory}}
\label{app:equilibria_hqre}

\subsection{Proof of Lem.~\ref{lem:reg_gap_pointwise} (regularization gap)}
\label{app:hqre_reg_gap}

\begin{proof}[Proof of Lem.~\ref{lem:reg_gap_pointwise}]
Fix any joint policy $\boldsymbol\pi\in\Pi$ and agent $i$.
By definition,
\[
J_i^{\boldsymbol\alpha}(\boldsymbol\pi)-U_i(\boldsymbol\pi)
=\mathbb{E}_{\boldsymbol\pi}\Big[\sum_{t\ge 0}\gamma^t\,\alpha_{i,t}(h_i^t)\,\mathcal H(\pi_i(\cdot\mid h_i^t,t))\Big].
\]
Since $\alpha_{i,t}(h_i^t)\ge 0$ and $\mathcal H(\cdot)\ge 0$, the difference is nonnegative.
Moreover, $\alpha_{i,t}(h_i^t)\le \alpha_{\max}$ and for any distribution on a finite alphabet,
$\mathcal H(\pi_i(\cdot\mid h_i^t,t))\le \log|\mathcal A_i|$, hence
\[
J_i^{\boldsymbol\alpha}(\boldsymbol\pi)-U_i(\boldsymbol\pi)
\le \sum_{t\ge 0}\gamma^t\,\alpha_{\max}\log|\mathcal A_i|
= \frac{\alpha_{\max}\log|\mathcal A_i|}{1-\gamma}.
\]
Summing the inequality over $i=1,\dots,N$ yields the claimed bound for
$\sum_i J_i^{\boldsymbol\alpha}(\boldsymbol\pi)-W(\boldsymbol\pi)$.
\end{proof}

\subsection{Weighted geometry: norm, Pinsker, and entropy curvature}
\label{app:hqre_geometry}

\begin{lemma}[Weighted block norm is a norm]\label{lem:weighted_norm_is_norm}
Assume weights $w_i(h_i^t,t)>0$ for all $(i,h_i^t,t)$ and $\sum_{(h_i^t,t)} w_i(h_i^t,t)=1$ for each $i$.
Then
\[
\|\boldsymbol\pi-\boldsymbol\pi'\|_{1,2;w}
=\Big(\sum_{i}\sum_{(h_i^t,t)} w_i(h_i^t,t)\,\|\pi_i(\cdot\mid h_i^t,t)-\pi_i'(\cdot\mid h_i^t,t)\|_1^2\Big)^{1/2}
\]
is a norm on $\Pi$.
\end{lemma}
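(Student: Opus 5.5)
The plan is to recognize $\|\cdot\|_{1,2;w}$ as a weighted $\ell^2$-sum of per-information-state $\ell^1$ norms. The three norm axioms then follow from the corresponding properties of $\|\cdot\|_1$ on $\mathbb R^{|\mathcal A_i|}$, combined with Minkowski's inequality in a weighted $\ell^2$ space.

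Strictly, the expression is defined on differences $\boldsymbol\pi-\boldsymbol\pi'$. I will therefore work on the linear space $V$ of families $\mathbf x=(x_i(\cdot\mid h_i^t,t))$ with each block in $\mathbb R^{|\mathcal A_i|}$ and $\|\mathbf x\|_{1,2;w}<\infty$. This space contains all differences of elements of $\Pi$: each block of a difference has $\ell^1$ norm at most $2$, and $\sum_{(h_i^t,t)} w_i=1$, so $\|\boldsymbol\pi-\boldsymbol\pi'\|_{1,2;w}^2\le 4N$. The induced quantity $\|\boldsymbol\pi-\boldsymbol\pi'\|_{1,2;w}$ is then a norm-induced metric on $\Pi$, which is what the later results use.

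\textbf{Absolute homogeneity.} For a scalar $c$, each block satisfies $\|c\,x_i(\cdot\mid h_i^t,t)\|_1=|c|\,\|x_i(\cdot\mid h_i^t,t)\|_1$. Squaring, summing with the weights, and taking the square root pulls out $|c|$.

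\textbf{Definiteness.} Suppose $\|\mathbf x\|_{1,2;w}=0$. Every summand is nonnegative and every weight $w_i(h_i^t,t)$ is strictly positive, so each block $\ell^1$ norm vanishes. Hence $\mathbf x=0$. For differences of policies this gives $\boldsymbol\pi=\boldsymbol\pi'$ coordinatewise, in the same way as in Lemma~\ref{lem:metric_property_dpi}. This is the only place strict positivity of the weights is used.

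\textbf{Triangle inequality.} This is the only non-routine step. Fix $\mathbf x,\mathbf y\in V$ and set
\[
a_{i,h}:=\sqrt{w_i(h_i^t,t)}\,\|x_i(\cdot\mid h_i^t,t)\|_1,\qquad b_{i,h}:=\sqrt{w_i(h_i^t,t)}\,\|y_i(\cdot\mid h_i^t,t)\|_1,
\]
viewed as elements of $\ell^2$ over the countable index set $\{(i,(h_i^t,t))\}$; countability follows from (A1). The blockwise $\ell^1$ triangle inequality gives $\sqrt{w_i}\,\|x_i+y_i\|_1\le a_{i,h}+b_{i,h}$ entrywise. Since the $\ell^2$ norm is monotone on nonnegative sequences, $\|\mathbf x+\mathbf y\|_{1,2;w}\le\|a+b\|_{\ell^2}$. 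Minkowski's inequality in $\ell^2$, which is valid for countable sums and also shows that $V$ is closed under addition, then yields $\|a+b\|_{\ell^2}\le\|a\|_{\ell^2}+\|b\|_{\ell^2}=\|\mathbf x\|_{1,2;w}+\|\mathbf y\|_{1,2;w}$.

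The only care needed is the countable index set. Here I will note that the sums converge by the bound $4N$ above, so Minkowski applies without modification.
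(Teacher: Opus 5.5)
Your proof is correct and follows the same route as the paper: homogeneity is immediate, definiteness uses strict positivity of the weights, and the triangle inequality comes from Minkowski's inequality applied to the weighted $\ell^2$ vector of blockwise $\ell^1$ norms. Your version is somewhat more careful than the paper's in two respects: you state explicitly the blockwise $\ell^1$ triangle inequality and the monotonicity step that must precede Minkowski, and you define the norm on a genuine linear space $V$ of differences rather than on the convex set $\Pi$ itself.
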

\begin{proof}
Nonnegativity and homogeneity are immediate.
If $\|\boldsymbol\pi-\boldsymbol\pi'\|_{1,2;w}=0$, then every summand is zero, hence
$\pi_i(\cdot\mid h_i^t,t)=\pi_i'(\cdot\mid h_i^t,t)$ for all $(i,h_i^t,t)$, so $\boldsymbol\pi=\boldsymbol\pi'$.
For the triangle inequality, let $x_{i,u}:=\|\pi_i(\cdot\mid u)-\pi_i'(\cdot\mid u)\|_1$ with $u=(h_i^t,t)$.
Then $\|\boldsymbol\pi-\boldsymbol\pi'\|_{1,2;w}$ is the $\ell_2$-norm (with weights) of the block vector $(x_{i,u})$,
and Minkowski's inequality yields the triangle inequality.
\end{proof}

\begin{lemma}[Sup-to-block control]\label{lem:sup_to_block}
If $w_i(h_i^t,t)\ge w_{\min}>0$ and $\sum_{(h_i^t,t)} w_i(h_i^t,t)=1$, then for each fixed $i$,
\[
\sup_{(h_i^t,t)}\|\pi_i(\cdot\mid h_i^t,t)-\pi_i'(\cdot\mid h_i^t,t)\|_1
\le w_{\min}^{-1/2}\ \|\pi_i-\pi_i'\|_{1,2;w}.
\]
\end{lemma}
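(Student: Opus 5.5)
The plan is to bound each coordinate of the block norm from below by a single term of the weighted sum, and then take a supremum. Fix the agent $i$ and an arbitrary information-state/time pair $u_0=(h_i^t,t)$. Write $x_u:=\|\pi_i(\cdot\mid u)-\pi_i'(\cdot\mid u)\|_1\ge 0$ for every index $u$ of agent $i$.

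Since every weight is positive and every summand is nonnegative, dropping all terms except the one at $u_0$ gives
\[
\|\pi_i-\pi_i'\|_{1,2;w}^2=\sum_{u} w_i(u)\,x_u^2\ \ge\ w_i(u_0)\,x_{u_0}^2\ \ge\ w_{\min}\,x_{u_0}^2 .
\]
Here $\|\pi_i-\pi_i'\|_{1,2;w}$ denotes the agent-$i$ block of the weighted norm, i.e.\ the $i$-th summand of the definition before the square root. Dividing by $w_{\min}>0$ and taking square roots yields $x_{u_0}\le w_{\min}^{-1/2}\|\pi_i-\pi_i'\|_{1,2;w}$.

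The right-hand side does not depend on $u_0$, so taking the supremum over $u_0$ gives the claim. The normalization $\sum_u w_i(u)=1$ is not actually needed. It only implies that $w_{\min}>0$ forces the index set to be finite, with at most $1/w_{\min}$ elements, so the supremum is a maximum.

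There is no real obstacle. The only point needing care is notation: the left side involves agent $i$ alone, so I would read $\|\pi_i-\pi_i'\|_{1,2;w}$ as the agent-$i$ block. The bound then also holds a fortiori with the full joint norm $\|\boldsymbol\pi-\boldsymbol\pi'\|_{1,2;w}$, since that norm includes the other agents' nonnegative blocks.
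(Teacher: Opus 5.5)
Your argument is correct and matches the paper's proof: both lower-bound the weighted sum $\sum_u w_i(u)x_u^2$ by $w_{\min}x_{u_0}^2$ (the paper writes this directly as $w_{\min}\sup_u x_u^2$) and then take square roots. Your side remark is also correct: the normalization $\sum_u w_i(u)=1$ is not needed for the inequality, and together with $w_{\min}>0$ it forces at most $1/w_{\min}$ indices.
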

\begin{proof}
Let $x_u:=\|\pi_i(\cdot\mid u)-\pi_i'(\cdot\mid u)\|_1$ with $u=(h_i^t,t)$.
Then $\sum_u w_i(u)x_u^2 \ge w_{\min}\sup_u x_u^2$, hence
$\sup_u x_u \le w_{\min}^{-1/2}(\sum_u w_i(u)x_u^2)^{1/2}$.
\end{proof}

\begin{lemma}[Weighted Pinsker]\label{lem:weighted_pinsker}
For distributions $p,q$ on a finite alphabet and any $\alpha>0$,
\[
\alpha\,\mathrm{KL}(p\|q)\ \ge\ \frac{\alpha}{2}\,\|p-q\|_1^2,
\]
with the convention that $\mathrm{KL}(p\|q)=+\infty$ if $q(a)=0$ for some $a$ with $p(a)>0$ (in which case the inequality is trivial).
Hence if $\alpha_{i,t}(h)\ge \alpha_{\min}>0$,
\[
D_\Psi(\boldsymbol\pi,\boldsymbol\pi')
=\sum_{i,t,h}\alpha_{i,t}(h)\,w_i(h,t)\,\mathrm{KL}(\pi_i\|\pi_i')
\ \ge\ \frac{\alpha_{\min}}{2}\,\|\boldsymbol\pi-\boldsymbol\pi'\|_{1,2;w}^2.
\]
\end{lemma}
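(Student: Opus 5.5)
The plan has two parts: prove the scalar Pinsker inequality pointwise for each $\alpha$ and pair of distributions, then sum with the nonnegative weights $\alpha_{i,t}(h)\,w_i(h,t)$.

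For the scalar inequality, first dispose of the degenerate case. If $q(a)=0$ for some $a$ with $p(a)>0$, the left side is $+\infty$ and nothing is to be shown. Otherwise $\mathrm{KL}(p\|q)$ is finite, and we invoke the classical Pinsker inequality $\mathrm{KL}(p\|q)\ge 2\,\mathrm{TV}(p,q)^2=\tfrac12\|p-q\|_1^2$. Multiplying by $\alpha>0$ preserves the inequality.

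If a self-contained argument is preferred, we use the standard two-step route. The first step is the data-processing inequality applied to the binary coarsening $A=\{a:p(a)\ge q(a)\}$. This reduces the claim to Bernoulli distributions with parameters $x=p(A)$ and $y=q(A)$, where $x-y=\mathrm{TV}(p,q)$. The second step verifies $x\log\frac{x}{y}+(1-x)\log\frac{1-x}{1-y}\ge 2(x-y)^2$. To do this, fix $x$ and set $g(y)$ equal to the difference of the two sides. Then $g(x)=0$, and
\[
g'(y)=(y-x)\Big(\tfrac{1}{y(1-y)}-4\Big).
\]
Since $y(1-y)\le\frac14$, the second factor is nonnegative, so $g$ is nonincreasing on $y\le x$ and nondecreasing on $y\ge x$. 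Hence $g\ge 0$. This calculus step is the only real obstacle, and it is routine.

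For the weighted consequence, apply the scalar bound at each block $(i,h,t)$ with $\alpha=\alpha_{i,t}(h)$, $p=\pi_i(\cdot\mid h,t)$ and $q=\pi_i'(\cdot\mid h,t)$. Multiply by $w_i(h,t)\ge 0$ and sum. Terms with infinite KL make the left side $+\infty$, so the bound holds trivially in that case. For the finite terms, use $\alpha_{i,t}(h)\ge\alpha_{\min}$ to lower bound each summand by $\frac{\alpha_{\min}}{2}w_i(h,t)\|\pi_i-\pi_i'\|_1^2$. Summing these gives $\frac{\alpha_{\min}}{2}\|\boldsymbol\pi-\boldsymbol\pi'\|_{1,2;w}^2$ by the definition of the weighted block norm, and the identification of the left side with $D_\Psi$ is Eq.~\eqref{eq:weighted_bregman}. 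All sums have nonnegative terms, so rearranging a countable sum is harmless.
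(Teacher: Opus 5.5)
Your proposal is correct and takes the same route as the paper. Both cite the classical Pinsker inequality $\mathrm{KL}(p\|q)\ge 2\,\mathrm{TV}(p,q)^2=\tfrac12\|p-q\|_1^2$, multiply by $\alpha$, and sum over blocks using $\alpha_{i,t}(h)w_i(h,t)\ge\alpha_{\min}w_i(h,t)$. Your optional self-contained derivation via the binary coarsening and $g'(y)=(y-x)\bigl(\tfrac{1}{y(1-y)}-4\bigr)$ is also valid, since absolute continuity forces $x=y$ when $y\in\{0,1\}$; the paper simply cites Pinsker.
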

\begin{proof}
Pinsker's inequality states $\mathrm{TV}(p,q)^2\le \frac12\mathrm{KL}(p\|q)$.
Since $\mathrm{TV}(p,q)=\frac12\|p-q\|_1$, we obtain $\mathrm{KL}(p\|q)\ge \frac12\|p-q\|_1^2$.
Multiplying by $\alpha$ yields the scalar inequality.
Summing with nonnegative weights $\alpha_{i,t}(h)w_i(h,t)\ge \alpha_{\min}w_i(h,t)$ gives the stated bound for $D_\Psi$.
\end{proof}

\begin{lemma}[Two-sided entropy curvature]\label{lem:two_sided_entropy}
Let $p,q$ be distributions on a finite alphabet.
If $p,q$ have full support (i.e.\ lie in the relative interior of the simplex), then
\[
\big\langle \nabla(-\mathcal H)(p)-\nabla(-\mathcal H)(q),\,p-q\big\rangle
= \mathrm{KL}(p\|q)+\mathrm{KL}(q\|p)\ \ge\ \|p-q\|_1^2.
\]
In general (allowing zeros), the inequality $\mathrm{KL}(p\|q)+\mathrm{KL}(q\|p)\ge \|p-q\|_1^2$ remains valid
(with $+\infty$ interpreted in the usual way).
Consequently,
\[
\sum_{i,t,h}\alpha_{i,t}(h)\,w_i(h,t)\,
\big(\mathrm{KL}(\pi_i\|\pi_i')+\mathrm{KL}(\pi_i'\|\pi_i)\big)
\ \ge\ \alpha_{\min}\,\|\boldsymbol\pi-\boldsymbol\pi'\|_{1,2;w}^2.
\]
\end{lemma}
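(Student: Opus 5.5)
The plan is to reduce the first claim to an explicit computation of the Bregman divergence of negative entropy, and then to prove the scalar inequality $\mathrm{KL}(p\|q)+\mathrm{KL}(q\|p)\ge\|p-q\|_1^2$ from Pinsker's inequality (Lemma~\ref{lem:weighted_pinsker}). The weighted statement then follows by summation.

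\textbf{Identity for full-support $p,q$.} Write $-\mathcal H(p)=\sum_a p(a)\log p(a)$, so that $\nabla(-\mathcal H)(p)=(\log p(a)+1)_a$. The constant $+1$ contributes $\sum_a(p(a)-q(a))=0$ to the inner product, because both $p$ and $q$ sum to one. What remains is
\[
\big\langle \nabla(-\mathcal H)(p)-\nabla(-\mathcal H)(q),\,p-q\big\rangle=\sum_a\big(p(a)-q(a)\big)\log\frac{p(a)}{q(a)}.
\]
Splitting this sum into $\sum_a p(a)\log\frac{p(a)}{q(a)}+\sum_a q(a)\log\frac{q(a)}{p(a)}$ gives exactly $\mathrm{KL}(p\|q)+\mathrm{KL}(q\|p)$. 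This step is routine.

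\textbf{Scalar inequality.} Apply Pinsker in the form of Lemma~\ref{lem:weighted_pinsker} to each direction separately:
\[
\mathrm{KL}(p\|q)\ge\tfrac12\|p-q\|_1^2,\qquad \mathrm{KL}(q\|p)\ge\tfrac12\|p-q\|_1^2.
\]
Adding the two bounds gives $\|p-q\|_1^2$. When zeros are allowed, two cases arise. If the supports differ, at least one of the two KL terms is $+\infty$ and the inequality holds trivially. If the supports coincide, restrict to the common support, on which both distributions have full support, and repeat the argument. No curvature or Hessian argument is needed, so I do not expect a real obstacle here. The only subtle point is the $+\infty$ convention, which must be invoked before subtracting or manipulating infinite quantities.

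\textbf{Weighted consequence.} Apply the scalar inequality at each index $(i,h_i^t,t)$ with $p=\pi_i(\cdot\mid h_i^t,t)$ and $q=\pi_i'(\cdot\mid h_i^t,t)$. Multiply by the nonnegative weight $\alpha_{i,t}(h)\,w_i(h,t)$ and use $\alpha_{i,t}(h)\ge\alpha_{\min}$. Summing over all indices then gives
\[
\sum_{i,t,h}\alpha_{i,t}(h)\,w_i(h,t)\big(\mathrm{KL}(\pi_i\|\pi_i')+\mathrm{KL}(\pi_i'\|\pi_i)\big)\ \ge\ \alpha_{\min}\sum_{i,t,h}w_i(h,t)\,\|\pi_i-\pi_i'\|_1^2,
\]
and the right-hand side is $\alpha_{\min}\|\boldsymbol\pi-\boldsymbol\pi'\|_{1,2;w}^2$ by definition of the weighted block norm.

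The sum may range over countably many information states. All terms are nonnegative, possibly infinite, so rearranging and comparing the series is justified by Tonelli's theorem for nonnegative series.
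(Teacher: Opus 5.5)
Your proposal is correct and follows essentially the same route as the paper: compute $\nabla(-\mathcal H)=\log p+\mathbf 1$ to obtain the symmetrized-KL identity, apply Pinsker to each direction and add, then sum with weights $\alpha_{i,t}(h)\,w_i(h,t)\ge\alpha_{\min}w_i(h,t)$. Your handling of zeros, splitting into differing versus coinciding supports, is more careful than the paper's one-line remark, though a case split is not needed because Pinsker holds for arbitrary pairs under the $+\infty$ convention; note also that the $+\mathbf 1$ terms already cancel in the gradient difference, so you never need $\sum_a(p(a)-q(a))=0$.
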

\begin{proof}
For full-support $p,q$, $\nabla(-\mathcal H)(p)=\log p + \mathbf 1$ and the Bregman identity gives
$\langle \nabla(-\mathcal H)(p)-\nabla(-\mathcal H)(q),p-q\rangle=\mathrm{KL}(p\|q)+\mathrm{KL}(q\|p)$.
Applying Pinsker to both $\mathrm{KL}(p\|q)$ and $\mathrm{KL}(q\|p)$ yields
$\mathrm{KL}(p\|q)+\mathrm{KL}(q\|p)\ge \|p-q\|_1^2$.
When zeros occur, at least one KL term may be $+\infty$, so the inequality still holds.
Summing with weights $\alpha_{i,t}(h)w_i(h,t)\ge \alpha_{\min}w_i(h,t)$ gives the stated bound.
\end{proof}

\subsection{\texorpdfstring{$Q$-sensitivity and a usable constant $C_Q$}{Q-sensitivity and a usable constant CQ}}
\label{app:q_sensitivity_hqre}

\begin{lemma}[$Q$-sensitivity w.r.t.\ policy]\label{lem:q_cont}
For all $i\in\mathcal N$ and joint policies $\boldsymbol\pi,\tilde{\boldsymbol\pi}$,
\[
\|Q_i^{\boldsymbol\pi}-Q_i^{\tilde{\boldsymbol\pi}}\|_\infty
:=\sup_{(h_i^t,t),a_i}\big|Q_i^{\boldsymbol\pi}(h_i^t,a_i,t)-Q_i^{\tilde{\boldsymbol\pi}}(h_i^t,a_i,t)\big|
\le C_Q\,\rho_{\Pi}(\boldsymbol\pi,\tilde{\boldsymbol\pi}),
\qquad
C_Q:=\frac{2R_{\max}}{(1-\gamma)^2}.
\]
\end{lemma}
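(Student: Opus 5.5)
Fix an agent $i$, an information state $(h_i^t,t)$ and an action $a_i$. I would write both $Q$-values as discounted sums of expected rewards under the intervention semantics of Lemma~\ref{lem:pdl_full}: force $a_i$ at time $t$, draw the other agents' time-$t$ actions from the respective policy, then follow the policy from $t+1$ onward. This gives
\[
Q_i^{\boldsymbol\pi}(h_i^t,a_i,t)-Q_i^{\tilde{\boldsymbol\pi}}(h_i^t,a_i,t)=\sum_{u\ge t}\gamma^{u-t}\Big(\mathbb E_{\boldsymbol\pi}[R_i(s^u,\mathbf a^u)]-\mathbb E_{\tilde{\boldsymbol\pi}}[R_i(s^u,\mathbf a^u)]\Big).
\]
Both processes start from the same conditional law of the latent state and of the other agents' histories given $h_i^t$. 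That law is a posterior induced by the policies before time $t$, so I would take $Q_i$ as defined on the joint-history level, as in Lemma~\ref{lem:pdl_full}, and obtain agent-level values by conditioning. The aim is that the two processes share the same starting distribution.

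Next I would reuse the maximal-coupling construction from the proof of Lemma~\ref{lem:occ_sens}, started at time $t$ from the shared joint history. At each step $u\ge t$ I couple the joint actions so that they disagree with probability at most $\delta:=\rho_\Pi(\boldsymbol\pi,\tilde{\boldsymbol\pi})$. This bound is uniform over all syntactically valid histories, which is exactly why $\rho_\Pi$ uses a supremum. At step $t$ only the opponents' components are random, and their disagreement is still bounded by $\delta$ by marginalizing the product TV. Once the actions agree, the same randomness is used for states and observations. The divergence event $E^c_u$ then satisfies $\Pr(E^c_u)\le (u-t+1)\delta$.

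Since $|R_i|\le R_{\max}$, each term in the sum is at most $2R_{\max}\Pr(E^c_u)$, because the reward difference vanishes on the agreement event. Summing gives
\[
|Q_i^{\boldsymbol\pi}-Q_i^{\tilde{\boldsymbol\pi}}|\le 2R_{\max}\delta\sum_{m\ge0}(m+1)\gamma^m=\frac{2R_{\max}}{(1-\gamma)^2}\,\delta,
\]
which is exactly $C_Q$. Taking the supremum over $(h_i^t,t,a_i)$ completes the argument. A slightly sharper bookkeeping, where time $t$ contributes only through the opponents and the reward at the diverging step still counts, would recover the $(2-\gamma)R_{\max}/(1-\gamma)^2$ variant quoted in Theorem~\ref{thm:unique_hqre_joint}. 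That refinement is not needed here.

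\textbf{Main obstacle.} The hard step is the conditioning on the agent-level history $h_i^t$. The posterior over $(s^t,h_{-i}^t)$ given $h_i^t$ depends on the policies that generated the past, so the two $Q$-values need not start from the same distribution. I would first handle this by adopting the joint-history (intervention) convention of Lemma~\ref{lem:pdl_full}, under which the starting point is identical. If an agent-level statement is needed, I would alternatively define $Q_i$ with a fixed, policy-independent belief over $h_{-i}^t$, which is implicit in the sup-over-histories definition of $\rho_\Pi$. The remaining steps are routine.
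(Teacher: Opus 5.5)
Your argument is the paper's own: a common time-$t$ starting configuration, step-by-step maximal couplings of the joint actions, divergence by the $m$-th future step with probability at most $(m+1)\delta$, a per-step reward gap of at most $2R_{\max}(m+1)\delta$, and summation to $2R_{\max}\delta/(1-\gamma)^2$. The conditioning issue you raise is also genuine, and the paper's proof passes over it by positing ``the same time-$t$ joint configuration consistent with $h_i^t$'', which is your fixed-belief convention; note, however, that your other route (recovering agent-level values by conditioning the joint-history bound on $h_i^t$ through the policy-induced posteriors) would not work, because Bayes' rule can shift those posteriors over $(s^t,h_{-i}^t)$ by an amount not controlled by $\delta$.

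Drop the aside about recovering $(2-\gamma)R_{\max}/(1-\gamma)^2$. Take two absorbing states with rewards $\pm R_{\max}$, and an opponent who leaves the good state with probability $\delta$ under $\tilde{\boldsymbol\pi}$ only, with agent $i$'s own action irrelevant. The gap at the good state is $\frac{2\gamma R_{\max}\delta}{(1-\gamma)(1-\gamma+\gamma\delta)}$, which exceeds $(2-\gamma)R_{\max}\delta/(1-\gamma)^2$ for $\gamma>2/3$ and small $\delta$. So $2R_{\max}/(1-\gamma)^2$ is the constant to keep.
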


\begin{proof}
Let $\delta:=\rho_{\Pi}(\boldsymbol\pi,\tilde{\boldsymbol\pi})$.
Fix $(i,h_i^t,t,a_i)$.
Couple two future trajectories starting from the same time-$t$ joint configuration consistent with $h_i^t$ and taking the same prescribed $a_i$ for agent $i$ at time $t$.
Thereafter, couple the joint action draws step-by-step using maximal couplings of the joint action distributions induced by $\boldsymbol\pi$ and $\tilde{\boldsymbol\pi}$.
By definition of $\mathsf d_\Pi$, at every step the probability that the coupled joint actions differ is at most $\delta$.

Let $A_m$ be the event that at the $m$-th future step (i.e.\ time $t+m$) the coupled joint actions differ.
Then $\Pr(A_m)\le \delta$ for all $m\ge 0$.
Let $E_m:=\bigcup_{j=0}^m A_j$ be the event that a disagreement has occurred at or before step $m$.
By a union bound, $\Pr(E_m)\le \sum_{j=0}^m \Pr(A_j)\le (m+1)\delta$.

If no disagreement has occurred up to step $m$, then the coupled trajectories have identical states and joint actions at that step, hence identical rewards.
Since rewards are bounded by $R_{\max}$ in absolute value, the step-$m$ reward expectation difference is at most
\[
\big|\mathbb E[r_{t+m}]-\mathbb E[\tilde r_{t+m}]\big|
\le 2R_{\max}\Pr(E_m)\le 2R_{\max}(m+1)\delta.
\]
Therefore,
\[
\big|Q_i^{\boldsymbol\pi}(h_i^t,a_i,t)-Q_i^{\tilde{\boldsymbol\pi}}(h_i^t,a_i,t)\big|
\le \sum_{m=0}^\infty \gamma^m \cdot 2R_{\max}(m+1)\delta
=2R_{\max}\delta\sum_{m=0}^\infty (m+1)\gamma^m
=\frac{2R_{\max}}{(1-\gamma)^2}\,\delta.
\]
Taking the supremum over $(h_i^t,t),a_i$ yields the claim.
\end{proof}

\subsection{Existence of HQRE (Proof of Thm.~\ref{thm:hqre_existence})}
\label{app:hqre_existence}

\begin{lemma}[Logit map: well-definedness and continuity]\label{lem:logit_single_cont}
Assume $\alpha_{\min}>0$.
For any joint policy $\boldsymbol\pi$ and any $(i,h_i^t,t)$, the logit map
$\mathcal B_i^{\boldsymbol{\alpha}}(\boldsymbol{\pi})(\cdot\mid h_i^t,t)$ in
Eq.~\eqref{eq:logit_def} is a well-defined probability distribution on $\mathcal A_i$.
Moreover, as a function of $\boldsymbol\pi$, each $\mathcal B_i^{\boldsymbol{\alpha}}$ is continuous, and hence the joint logit map
$\mathcal B^{\boldsymbol{\alpha}}:\Pi\to\Pi$ is continuous on $\Pi$.
\end{lemma}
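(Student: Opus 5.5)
The argument splits into two parts: well-definedness at each information state, and continuity of the whole map in the product topology of Thm.~\ref{thm:topology}. Well-definedness is elementary. Fix $(i,h_i^t,t)$. By (A2), $|Q_i^{\boldsymbol\pi}(h_i^t,a_i,t)|\le R_{\max}/(1-\gamma)$ is finite, and $\alpha_{i,t}(h_i^t)\ge\alpha_{\min}>0$. Hence each exponent $Q_i^{\boldsymbol\pi}/\alpha_{i,t}$ is a finite real number with absolute value at most $R_{\max}/((1-\gamma)\alpha_{\min})$. The numerators in Eq.~\eqref{eq:logit_def} are therefore strictly positive and finite. The denominator is a finite sum over $\mathcal A_i$ (A1), so it is positive and finite. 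The ratio is nonnegative and sums to one, so it is a genuine probability distribution on $\mathcal A_i$. In fact every coordinate is bounded below by $|\mathcal A_i|^{-1}\exp(-2R_{\max}/((1-\gamma)\alpha_{\min}))$, which puts $\mathcal B^{\boldsymbol\alpha}(\boldsymbol\pi)$ in $\Pi$ with full support.

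For continuity, it suffices to check each coordinate map $\boldsymbol\pi\mapsto\mathcal B_i^{\boldsymbol\alpha}(\boldsymbol\pi)(a_i\mid h_i^t,t)$, because $\Pi$ carries the product topology. The softmax $x\mapsto e^{x_a}/\sum_b e^{x_b}$ is continuous on $\mathbb R^{|\mathcal A_i|}$, and the temperature is a fixed positive constant at each state. So coordinate continuity reduces to continuity of $\boldsymbol\pi\mapsto Q_i^{\boldsymbol\pi}(h_i^t,a_i,t)$ for fixed $(h_i^t,a_i,t)$ in the product topology.

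The one subtle point is that the Lipschitz bound of Lemma~\ref{lem:q_cont} is stated in the sup-metric $\rho_\Pi$, which is stronger than the product topology. It therefore does not directly give product-topology continuity. I would handle this by truncation, as in the proof of Thm.~\ref{thm:topology}. Split $Q_i^{\boldsymbol\pi}(h_i^t,a_i,t)=Q_i^{\boldsymbol\pi,(\le H)}+\text{tail}$, where the truncated part sums rewards over steps $t,\dots,t+H$. The tail is bounded uniformly over $\Pi$ by $\gamma^{H+1}R_{\max}/(1-\gamma)$. The truncated part is a finite sum of products of policy coordinates $\pi_j(a_j\mid h_j^u,u)$ for $u\le t+H$, weighted by transition, observation, and reward terms that do not depend on the policy. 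This needs (A1) and finiteness of histories up to time $t+H$, so that only finitely many coordinates are involved. The posterior over the others' histories given $h_i^t$ is a ratio of such polynomials. I would adopt the intervention/occupancy convention of Lemma~\ref{lem:pdl_full} for zero-probability $h_i^t$, so that the map is continuous wherever the denominator is positive. Where it is not, I would use a fixed default conditional, which is constant in $\boldsymbol\pi$ on that set. I expect this to be the main obstacle, and I would resolve it with a policy-independent prior, as the paper implicitly does.

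Once that is settled, $Q_i^{\boldsymbol\pi}$ is a uniform limit of continuous functions and hence continuous. Composing with softmax gives continuity of each coordinate. Product continuity of $\mathcal B_i^{\boldsymbol\alpha}$ and of $\mathcal B^{\boldsymbol\alpha}=(\mathcal B_1^{\boldsymbol\alpha},\dots,\mathcal B_N^{\boldsymbol\alpha})$ follows.
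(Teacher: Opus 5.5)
Your objection to the paper's route is correct. The paper takes the $\rho_\Pi$-Lipschitz bound of Lemma~\ref{lem:q_cont}, composes it with the smooth softmax, and then asserts continuity in the product topology. That transfer is unjustified, because $\rho_\Pi$ generates a finer topology than the product topology. Schauder in Theorem~\ref{thm:hqre_existence} needs the product topology, since $\Pi$ is not $\rho_\Pi$-compact. Your truncation argument is the right repair wherever the conditioning is nondegenerate: the tail is uniformly $O(\gamma^{H})$, and the truncated value is a ratio of polynomials in finitely many coordinates, with agent $i$'s own past-action factors cancelling.

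The gap is the zero-probability step. Let $Z:=\{\boldsymbol\pi:\Pr_{\boldsymbol\pi}(h_i^t)=0\}$. A default conditional that is constant on $Z$ says nothing about continuity at points of $Z$ that are limits of policies outside $Z$, and at such points the posterior has direction-dependent limits. For a concrete case:
\begin{itemize}
\item let agent~2 choose $a_2^0\in\{A,B,C\}$;
\item let $G$ reveal $a_1$ but only whether $a_2\in\{A,B\}$;
\item let $s^1$ record $a_2^0$;
\item let agent~1 earn $1$ at $t=1$ for playing $x$ against $A$ or $y$ against $B$, and $0$ otherwise.
\end{itemize}
Take the history $h_1^1$ whose $y^0$ reports $\{A,B\}$. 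Then $Q_1^{\boldsymbol\pi}(h_1^1,x,1)=\pi_2(A)/(\pi_2(A)+\pi_2(B))$. This tends to $1$ along $\pi_2=(1/n,1/n^2,\cdot)$ and to $0$ along $\pi_2=(1/n^2,1/n,\cdot)$. Both sequences converge to $\delta_C$, even in $\rho_\Pi$. So $\mathcal B_1^{\boldsymbol\alpha}$ is discontinuous at $\delta_C$ whatever value you assign there.

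Hence, with $Q$ defined as the $\boldsymbol\pi$-posterior conditional expectation in Eq.~\eqref{eq:logit_def}, continuity on all of $\Pi$ is false in general. The paper's proof does not notice this because the coupling in Lemma~\ref{lem:q_cont} starts both processes ``from the same time-$t$ joint configuration'', which silently freezes the posterior. Your ``policy-independent prior'' is therefore a change of definition rather than a proof, and you should state it as one. If you adopt it explicitly, the truncated $Q$ becomes a polynomial in finitely many future coordinates, and your argument gives continuity on all of $\Pi$.

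Alternatively, you can keep the stated $Q$ and use the full-support bound you already derived. $\mathcal B^{\boldsymbol\alpha}$ maps $\Pi$ into the compact convex set $\Pi_\nu$ of policies with every coordinate at least $\nu:=\min_j|\mathcal A_j|^{-1}e^{-2R_{\max}/((1-\gamma)\alpha_{\min})}$. On $\Pi_\nu$ the reduced denominator is either identically zero or at least $\nu^{(N-1)t}$ times a policy-independent environment mass. Your truncation argument then yields product-topology continuity of $\mathcal B^{\boldsymbol\alpha}$ on $\Pi_\nu$. That is exactly what Schauder needs for Theorem~\ref{thm:hqre_existence}, though it is weaker than the lemma as stated.
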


\begin{proof}
Fix $(i,h_i^t,t)$. Since $\alpha_{i,t}(h_i^t)\ge \alpha_{\min}>0$ and $\mathcal A_i$ is finite, the
denominator in Eq.~\eqref{eq:logit_def} is finite and strictly positive, so
$\mathcal B_i^{\boldsymbol{\alpha}}(\boldsymbol{\pi})(\cdot\mid h_i^t,t)$ is a valid distribution.

For continuity, Lemma~\ref{lem:q_cont} implies that
$Q_i^{\boldsymbol\pi}(h_i^t,\cdot,t)$ is Lipschitz (hence continuous) in $\boldsymbol\pi$ under
$\rho_{\Pi}$. For fixed $\alpha_{i,t}(h_i^t)>0$, the softmax map is smooth on $\mathbb R^{|\mathcal A_i|}$.
Composition yields continuity of each coordinate map $\mathcal B_i^{\boldsymbol\alpha}$, and therefore of
$\mathcal B^{\boldsymbol\alpha}$ under the product topology on $\Pi$.
\end{proof}

\begin{proof}[Proof of Thm.~\ref{thm:hqre_existence}]
By Thm.~\ref{thm:topology}, $\Pi$ is nonempty, convex, compact, and metrizable.
By Lemma~\ref{lem:logit_single_cont}, the joint logit map
$\mathcal B^{\boldsymbol{\alpha}}:\Pi\to\Pi$ is continuous.
Schauder’s fixed-point theorem therefore yields a fixed point of $\mathcal B^{\boldsymbol\alpha}$ in $\Pi$,
which is an HQRE by definition.
\end{proof}

\subsection{Uniqueness via contraction (Proof of Thm.~\ref{thm:unique_hqre_joint})}
\label{app:hqre_contraction}

\begin{lemma}[Logit is TV-Lipschitz in $Q$]\label{lem:logit_lip}
Fix $(i,h_i^t,t)$ and $\alpha:=\alpha_{i,t}(h_i^t)>0$.
For any $x,y\in\mathbb R^{|\mathcal A_i|}$,
\[
\mathrm{TV}\!\big(\mathrm{softmax}(x/\alpha),\mathrm{softmax}(y/\alpha)\big)
~\le~ \frac{1}{2\alpha}\,\|x-y\|_\infty.
\]
\end{lemma}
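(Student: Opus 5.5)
We reduce to $\alpha=1$ by rescaling: set $u:=x/\alpha$ and $v:=y/\alpha$. Then $\|u-v\|_\infty=\|x-y\|_\infty/\alpha$, and it suffices to show $\mathrm{TV}(\mathrm{softmax}(u),\mathrm{softmax}(v))\le \tfrac12\|u-v\|_\infty$. We interpolate along the segment $z(s):=v+s(u-v)$ for $s\in[0,1]$ and write $p(s):=\mathrm{softmax}(z(s))$. Since softmax is smooth, the fundamental theorem of calculus gives
\[
\|p(1)-p(0)\|_1\le \int_0^1 \big\|J(z(s))\,(u-v)\big\|_1\,ds,
\]
where $J(z)=\mathrm{diag}(p)-pp^\top$ is the Jacobian of softmax at $z$ and $p=\mathrm{softmax}(z)$.

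The key step is the operator-norm bound $\|J(z)\,d\|_1\le \|d\|_\infty$ for every $z$ and $d$. Componentwise, $(Jd)_a=p_a(d_a-\bar d)$ with $\bar d:=\sum_b p_b d_b$. Hence $\|Jd\|_1=\mathbb E_p|d-\bar d|$, the mean absolute deviation of the random variable $d_A$ with $A\sim p$. Because $\bar d$ is the mean, the positive and negative parts of $d-\bar d$ have equal $p$-mass, so
\[
\mathbb E_p|d-\bar d| = 2\,\mathbb E_p (d-\bar d)_+ .
\]
Let $q$ be the $p$-mass of the set where $d>\bar d$. Then $2\,\mathbb E_p (d-\bar d)_+ \le 2q(\max d-\bar d)$. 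The same equal-mass identity bounds the negative part by $(1-q)(\bar d-\min d)$, which gives $\mathbb E|d-\bar d|\le 2q(1-q)(\max d-\min d)/\ldots$. Instead of chasing that constant, we use the cleaner bound: a variable supported in an interval of length $r=\max d-\min d$ has mean absolute deviation at most $r/2$. We prove this via $\mathbb E|X-\mathbb EX|\le \sqrt{\mathrm{Var}X}\le r/2$, using Popoviciu's inequality for the last step. Since $r\le 2\|d\|_\infty$, this yields $\|Jd\|_1\le \|d\|_\infty$.

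The delicate step is this constant. A naive bound $\sum_a p_a(|d_a|+|\bar d|)\le 2\|d\|_\infty$ loses a factor of $2$. So the centering argument (mean absolute deviation $\le$ half the range) is needed to get exactly $\tfrac{1}{2\alpha}$.

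Integrating over $s$ gives $\|p(1)-p(0)\|_1\le \|u-v\|_\infty$. Halving converts this to TV, which gives $\mathrm{TV}\le \tfrac12\|u-v\|_\infty=\tfrac{1}{2\alpha}\|x-y\|_\infty$, as claimed.
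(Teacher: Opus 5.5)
Your proof is correct and follows the paper's route: interpolate along the segment, use the softmax Jacobian $\mathrm{Diag}(p)-pp^\top$, and show its $\infty\to 1$ operator norm is at most $1$. The only difference is how you prove that norm bound. The paper restricts to sign vectors $v\in\{-1,1\}^d$ and computes $\|(\mathrm{Diag}(p)-pp^\top)v\|_1=4p_S(1-p_S)\le 1$ exactly, whereas you bound the mean absolute deviation by $\sqrt{\mathrm{Var}}$ and then apply Popoviciu. Your abandoned intermediate estimate $2q(1-q)(\max d-\min d)\le\tfrac12(\max d-\min d)$ is valid, because $\min\{X,Y\}\le (1-q)X+qY$. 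It would also have closed the argument and is essentially the paper's computation, so you should either complete that dangling line or delete it.
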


\begin{proof}
Let $p=\mathrm{softmax}(x/\alpha)$, $q=\mathrm{softmax}(y/\alpha)$, and
$\phi(\theta)=\mathrm{softmax}((y+\theta(x-y))/\alpha)$ for $\theta\in[0,1]$.
Then
\[
\|p-q\|_1 \le \sup_{\theta\in[0,1]}\|J_\phi(\theta)\|_{\infty\to 1}\,\|x-y\|_\infty,
\]
where the Jacobian of $\mathrm{softmax}(\cdot/\alpha)$ at a probability vector $r$ is
$J(r)=\tfrac{1}{\alpha}(\mathrm{Diag}(r)-r r^\top)$.
It suffices to show $\|\mathrm{Diag}(r)-r r^\top\|_{\infty\to 1}\le 1$ for all $r$.
The $\|\cdot\|_{\infty\to 1}$ norm is attained on $v\in\{-1,1\}^d$.
Let $S=\{j:v_j=1\}$ and $p_S=\sum_{j\in S}r_j$ so that $\mu=\sum_j r_j v_j=2p_S-1$.
Then
\[
\|(\mathrm{Diag}(r)-r r^\top)v\|_1
=\sum_j r_j|v_j-\mu|
=4p_S(1-p_S)\le 1.
\]
Thus $\|p-q\|_1\le \tfrac{1}{\alpha}\|x-y\|_\infty$, and dividing by $2$ yields the TV bound.
\end{proof}

\begin{proof}[Proof of Thm.~\ref{thm:unique_hqre_joint}]
Fix $\boldsymbol\pi,\tilde{\boldsymbol\pi}\in\Pi$ and any joint history $(h^t,t)$.
Write
$p_i:=\mathcal B_i^{\boldsymbol{\alpha}}(\boldsymbol{\pi})(\cdot\mid h_i^t,t)$ and
$\tilde p_i:=\mathcal B_i^{\boldsymbol{\alpha}}(\tilde{\boldsymbol{\pi}})(\cdot\mid h_i^t,t)$.
Lemma~\ref{lem:logit_lip} and $\alpha_{i,t}(h_i^t)\ge\alpha_{\min}$ give
\[
\mathrm{TV}(p_i,\tilde p_i)
\le \frac{1}{2\alpha_{\min}}
\|Q_i^{\boldsymbol\pi}(h_i^t,\cdot,t)-Q_i^{\tilde{\boldsymbol\pi}}(h_i^t,\cdot,t)\|_\infty.
\]
By Lemma~\ref{lem:q_cont}, the $Q$-difference is bounded by $C_Q\,\rho_{\Pi}(\boldsymbol\pi,\tilde{\boldsymbol\pi})$, hence
\[
\mathrm{TV}(p_i,\tilde p_i)\le \frac{C_Q}{2\alpha_{\min}}\,\rho_{\Pi}(\boldsymbol\pi,\tilde{\boldsymbol\pi}).
\]
Applying the product-TV inequality (Lemma~\ref{lem:product_tv} in the Sec.~2 Appendix) yields
\[
\mathrm{TV}\!\Big(\bigotimes_i p_i,\bigotimes_i \tilde p_i\Big)
\le \sum_i \mathrm{TV}(p_i,\tilde p_i)
\le \frac{N C_Q}{2\alpha_{\min}}\,\rho_{\Pi}(\boldsymbol\pi,\tilde{\boldsymbol\pi}).
\]
Taking the supremum over $(h^t,t)$ gives
\[
\rho_{\Pi}\big(\mathcal B^{\boldsymbol\alpha}(\boldsymbol\pi),\mathcal B^{\boldsymbol\alpha}(\tilde{\boldsymbol\pi})\big)
\le \frac{N C_Q}{2\alpha_{\min}}\,\rho_{\Pi}(\boldsymbol\pi,\tilde{\boldsymbol\pi}),
\]
which is the claimed Lipschitz bound.

If $\alpha_{\min}>\frac{N C_Q}{2}$, the constant is strictly less than $1$, so $\mathcal B^{\boldsymbol\alpha}$
is a contraction on the complete metric space $(\Pi,\rho_{\Pi})$.
Banach's fixed-point theorem implies existence of a unique fixed point (the unique HQRE) and geometric convergence of the synchronous iteration
$\boldsymbol\pi^{k+1}=\mathcal B^{\boldsymbol\alpha}(\boldsymbol\pi^k)$.
\end{proof}

\subsection{Uniqueness via strong monotonicity and explicit KL--prox linear rate (Proof of Thm.~\ref{thm:hqre_unique_mono})}
\label{app:hqre_mono}

\paragraph{Coupling constant.}
We restate the hypomonotonicity condition used in the main text: $L_c$ is any constant such that
\begin{equation}
\label{eq:coupling_summary_weighted_app}
-\sum_{i=1}^N\big\langle \nabla_{\pi_i}U_i(\boldsymbol\pi)-\nabla_{\pi_i}U_i(\boldsymbol\pi'),\,\pi_i-\pi_i'\big\rangle
\ \ge\ -L_c\,\|\boldsymbol\pi-\boldsymbol\pi'\|_{1,2;w}^2
\qquad \forall \boldsymbol\pi,\boldsymbol\pi'\in\Pi.
\end{equation}

\paragraph{A conservative computable upper bound (optional).}
The inequality Eq.~\eqref{eq:coupling_summary_weighted_app} is an \emph{assumption/definition} for $L_c$.
When the index set $(h_i^t,t)$ is finite (e.g.\ under finite-horizon truncation or a finite information-state reduction)
and weights have a uniform lower bound $w_{\min}>0$, one can obtain an explicit conservative bound purely from $Q$-sensitivity.

\begin{proposition}[A crude computable bound for $L_c$]\label{prop:Lc_bound}
Assume $w_i(h_i^t,t)\ge w_{\min}>0$ and that for each agent $i$ the index set $\{(h_i^t,t)\}$ is finite.
Assume further that the unregularized field satisfies the componentwise sensitivity
\begin{equation}\label{eq:grad_sens_assump}
\sup_{i,(h_i^t,t)}\|\nabla_{\pi_i}U_i(\boldsymbol\pi)-\nabla_{\pi_i}U_i(\boldsymbol\pi')\|_\infty
\le \|Q_i^{\boldsymbol\pi}-Q_i^{\boldsymbol\pi'}\|_\infty
\end{equation}
(which holds in the standard agent-form / behavioral-gradient representation in finite spaces).
Then a sufficient choice is
\[
L_c \ \le\ \frac{N}{2\sqrt{w_{\min}}}\,C_Q
\ =\ \frac{N}{2\sqrt{w_{\min}}}\cdot \frac{2R_{\max}}{(1-\gamma)^2}.
\]
\end{proposition}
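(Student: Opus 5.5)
We bound the left-hand side of Eq.~\eqref{eq:coupling_summary_weighted_app} blockwise. The chain is: Cauchy--Schwarz pairing at each information state, the sensitivity hypothesis Eq.~\eqref{eq:grad_sens_assump} and Lemma~\ref{lem:q_cont}, the product-TV inequality to pass from $\rho_\Pi$ to block norms, and Lemma~\ref{lem:sup_to_block} to trade suprema for the weighted norm at the cost of $w_{\min}^{-1/2}$.

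\emph{Pairing step.} Finiteness of the index set makes the inner product a finite sum. We use the weighted pairing matching $\|\cdot\|_{1,2;w}$, i.e.\ $\langle g,x\rangle=\sum_{(h_i^t,t)} w_i(h_i^t,t)\langle g(\cdot\mid h_i^t,t),x(\cdot\mid h_i^t,t)\rangle$. Write $u=(h_i^t,t)$ and $\delta_i(u):=\|\pi_i(\cdot\mid u)-\pi_i'(\cdot\mid u)\|_1$. The $\ell_\infty/\ell_1$ duality on each simplex, followed by Eq.~\eqref{eq:grad_sens_assump}, gives
\[
\Big|\sum_i\langle \nabla_{\pi_i}U_i(\boldsymbol\pi)-\nabla_{\pi_i}U_i(\boldsymbol\pi'),\pi_i-\pi_i'\rangle\Big|
\le \sum_i \|Q_i^{\boldsymbol\pi}-Q_i^{\boldsymbol\pi'}\|_\infty \sum_u w_i(u)\,\delta_i(u).
\]
Since $\sum_u w_i(u)=1$, Jensen gives $\sum_u w_i(u)\delta_i(u)\le \|\pi_i-\pi_i'\|_{1,2;w}$.

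\emph{Sensitivity step.} By Lemma~\ref{lem:q_cont}, $\|Q_i^{\boldsymbol\pi}-Q_i^{\boldsymbol\pi'}\|_\infty\le C_Q\,\rho_\Pi(\boldsymbol\pi,\boldsymbol\pi')$. Lemma~\ref{lem:product_tv} gives $\rho_\Pi\le \sum_j \sup_u \tfrac12\delta_j(u)$, and Lemma~\ref{lem:sup_to_block} gives $\sup_u\delta_j(u)\le w_{\min}^{-1/2}\|\pi_j-\pi_j'\|_{1,2;w}$. Together,
\[
\rho_\Pi(\boldsymbol\pi,\boldsymbol\pi')\le \frac{1}{2\sqrt{w_{\min}}}\sum_j\|\pi_j-\pi_j'\|_{1,2;w}.
\]
Substituting into the pairing bound,
\[
\text{LHS}\le \frac{C_Q}{2\sqrt{w_{\min}}}\Big(\sum_i\|\pi_i-\pi_i'\|_{1,2;w}\Big)^2\le \frac{N C_Q}{2\sqrt{w_{\min}}}\,\|\boldsymbol\pi-\boldsymbol\pi'\|_{1,2;w}^2,
\]
where the last inequality is Cauchy--Schwarz, $(\sum_i x_i)^2\le N\sum_i x_i^2$, applied to the $N$ blocks. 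This gives the $-L_c$ lower bound with $L_c=NC_Q/(2\sqrt{w_{\min}})$. Substituting $C_Q=2R_{\max}/(1-\gamma)^2$ from Lemma~\ref{lem:q_cont} gives the explicit form.

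\emph{Main obstacle.} The delicate step is fixing the pairing convention so that duality with $\|\cdot\|_{1,2;w}$ is exact. If the inner product is instead taken unweighted, the factor $w_i(u)$ must be absorbed into the gradient. The claimed constant relies on the componentwise hypothesis Eq.~\eqref{eq:grad_sens_assump} being stated for the field in the weighted pairing, so I would state this convention explicitly. I would also check the factor $\tfrac12$ from TV versus $\ell_1$, since it is what produces the $2$ in the denominator. The rest is bookkeeping.
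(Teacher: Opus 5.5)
Your proposal is correct and follows essentially the same route as the paper's proof: a weighted H\"older pairing combined with the sensitivity hypothesis, Jensen to pass to per-agent block norms, and then Lemma~\ref{lem:q_cont}, the product-TV bound and Lemma~\ref{lem:sup_to_block} to control $\rho_\Pi$ by $\frac{1}{2\sqrt{w_{\min}}}\sum_j\|\pi_j-\pi_j'\|_{1,2;w}$. The only cosmetic difference is that the paper first pulls out a uniform supremum over agents and applies Cauchy--Schwarz over the $N$ blocks separately in each factor (getting $\sqrt{N}\cdot\sqrt{N}$), whereas you apply it once to $\big(\sum_i\|\pi_i-\pi_i'\|_{1,2;w}\big)^2$; both give the same constant $NC_Q/(2\sqrt{w_{\min}})$.
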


\begin{proof}
Fix $\boldsymbol\pi,\boldsymbol\pi'\in\Pi$.
By Hölder (blockwise $\ell_\infty$--$\ell_1$) and Cauchy--Schwarz in the outer weighted $\ell_2$ structure,
\[
\begin{aligned}
\sum_{i=1}^N & \big|\langle \nabla_{\pi_i}U_i(\boldsymbol\pi)-\nabla_{\pi_i}U_i(\boldsymbol\pi'),\,\pi_i-\pi_i'\rangle\big|
\\
&\le \sum_{i=1}^N\sum_{(h_i^t,t)} w_i(h_i^t,t)\,
\|\nabla_{\pi_i}U_i(\boldsymbol\pi)-\nabla_{\pi_i}U_i(\boldsymbol\pi')\|_\infty\,
\|\pi_i-\pi_i'\|_1\\
&\le \Big(\sup_{i,(h_i^t,t)}\|\nabla_{\pi_i}U_i(\boldsymbol\pi)-\nabla_{\pi_i}U_i(\boldsymbol\pi')\|_\infty\Big)
\sum_{i=1}^N\sum_{(h_i^t,t)} w_i(h_i^t,t)\,\|\pi_i-\pi_i'\|_1\\
&\le \Big(\sup_{i,(h_i^t,t)}\|\nabla_{\pi_i}U_i(\boldsymbol\pi)-\nabla_{\pi_i}U_i(\boldsymbol\pi')\|_\infty\Big)
\sum_{i=1}^N \Big(\sum_{(h_i^t,t)} w_i(h_i^t,t)\,\|\pi_i-\pi_i'\|_1^2\Big)^{1/2}\\
&\le \Big(\sup_{i,(h_i^t,t)}\|\nabla_{\pi_i}U_i(\boldsymbol\pi)-\nabla_{\pi_i}U_i(\boldsymbol\pi')\|_\infty\Big)
\sqrt{N}\,\|\boldsymbol\pi-\boldsymbol\pi'\|_{1,2;w}.
\end{aligned}
\]
Next, Lemma~\ref{lem:sup_to_block} implies
$\sup_{(h_i^t,t)}\|\pi_i-\pi_i'\|_1\le w_{\min}^{-1/2}\|\pi_i-\pi_i'\|_{1,2;w}$, hence
\[
\rho_{\Pi}(\boldsymbol\pi,\boldsymbol\pi')
\le \frac12\sum_{i=1}^N \sup_{(h_i^t,t)}\|\pi_i-\pi_i'\|_1
\le \frac{1}{2\sqrt{w_{\min}}}\sum_{i=1}^N\|\pi_i-\pi_i'\|_{1,2;w}
\le \frac{\sqrt{N}}{2\sqrt{w_{\min}}}\,\|\boldsymbol\pi-\boldsymbol\pi'\|_{1,2;w}.
\]
Combining with Eq.~\eqref{eq:grad_sens_assump} and Lemma~\ref{lem:q_cont} yields
\[
\sup_{i,(h_i^t,t)}\|\nabla_{\pi_i}U_i(\boldsymbol\pi)-\nabla_{\pi_i}U_i(\boldsymbol\pi')\|_\infty
\le \|Q_i^{\boldsymbol\pi}-Q_i^{\boldsymbol\pi'}\|_\infty
\le C_Q\,\rho_{\Pi}(\boldsymbol\pi,\boldsymbol\pi')
\le \frac{C_Q\sqrt{N}}{2\sqrt{w_{\min}}}\,\|\boldsymbol\pi-\boldsymbol\pi'\|_{1,2;w}.
\]
Substituting into the first bound gives
\[
\sum_{i=1}^N \big|\langle \nabla_{\pi_i}U_i(\boldsymbol\pi)-\nabla_{\pi_i}U_i(\boldsymbol\pi'),\,\pi_i-\pi_i'\rangle\big|
\le \frac{C_Q N}{2\sqrt{w_{\min}}}\,\|\boldsymbol\pi-\boldsymbol\pi'\|_{1,2;w}^2,
\]
which implies Eq.~\eqref{eq:coupling_summary_weighted_app} holds with $L_c=\frac{C_Q N}{2\sqrt{w_{\min}}}$.
\end{proof}

\paragraph{HQRE as a VI solution (standard in regularized games).}
For completeness we record the usual variational-inequality characterization.

\begin{lemma}[HQRE $\Leftrightarrow$ VI solution]\label{lem:hqre_vi_equiv}
Assume each $J_i^{\boldsymbol\alpha}(\cdot,\boldsymbol\pi_{-i})$ is concave in $\pi_i$ on $\Pi_i$ and differentiable, and that
$\alpha_{\min}>0$ so the regularized best response is single-valued.
Then $\boldsymbol\pi^\star$ is an HQRE (fixed point of $\mathcal B^{\boldsymbol\alpha}$) if and only if it solves the VI:
\[
\text{find }\boldsymbol\pi^\star\in\Pi \text{ s.t. }\ \big\langle F_{\mathrm{reg}}(\boldsymbol\pi^\star),\ \boldsymbol\pi-\boldsymbol\pi^\star\big\rangle\ge 0,\quad \forall \boldsymbol\pi\in\Pi,
\]
where $F_{\mathrm{reg}}(\boldsymbol\pi)=(-\nabla_{\pi_1}J_1^{\boldsymbol\alpha}(\boldsymbol\pi),\ldots,-\nabla_{\pi_N}J_N^{\boldsymbol\alpha}(\boldsymbol\pi))$.
\end{lemma}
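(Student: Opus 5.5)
The plan is to prove both directions through the first-order optimality conditions of each agent's regularized problem, using the block-product structure of $\Pi$.

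\textbf{Step 1: HQRE as a Nash equilibrium of the regularized game.} Fix $\boldsymbol\pi_{-i}$. Under $\alpha_{\min}>0$, the regularized best response $\arg\max_{\pi_i\in\Pi_i} J_i^{\boldsymbol\alpha}(\pi_i,\boldsymbol\pi_{-i})$ is exactly the logit map $\mathcal B_i^{\boldsymbol\alpha}(\boldsymbol\pi)$ of Eq.~\eqref{eq:logit_def}, and it is single-valued. To see this, argue pointwise in $(h_i^t,t)$. The Gibbs variational principle shows that $\max_{p\in\Delta(\mathcal A_i)}\{\langle p,Q\rangle+\alpha\mathcal H(p)\}$ is uniquely attained at $\mathrm{softmax}(Q/\alpha)$. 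A soft-Bellman / performance-difference argument (Lemma~\ref{lem:pdl_full} applied to $J_i^{\boldsymbol\alpha}$) then lifts this to the full behavioral policy, once $Q_i$ is read as the soft action-value in that argument. Consequently, $\boldsymbol\pi^\star=\mathcal B^{\boldsymbol\alpha}(\boldsymbol\pi^\star)$ holds if and only if each $\pi_i^\star$ maximizes $J_i^{\boldsymbol\alpha}(\cdot,\boldsymbol\pi_{-i}^\star)$ over $\Pi_i$.

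\textbf{Step 2: Per-agent optimality $\Leftrightarrow$ per-agent VI.} By the assumed concavity and differentiability of $J_i^{\boldsymbol\alpha}(\cdot,\boldsymbol\pi_{-i}^\star)$ on the convex set $\Pi_i$, $\pi_i^\star$ is a maximizer if and only if
\[
\langle \nabla_{\pi_i}J_i^{\boldsymbol\alpha}(\boldsymbol\pi^\star),\pi_i-\pi_i^\star\rangle\le 0 \quad\text{for all } \pi_i\in\Pi_i.
\]
\emph{Necessity} follows from the directional derivative along $\pi_i^\star+\epsilon(\pi_i-\pi_i^\star)$. \emph{Sufficiency} follows from the concavity inequality
\[
J_i(\pi_i)\le J_i(\pi_i^\star)+\langle\nabla J_i(\pi_i^\star),\pi_i-\pi_i^\star\rangle .
\]

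\textbf{Step 3: Assemble the joint VI.} For the forward direction, sum the $N$ per-agent inequalities. Since $\langle F_{\mathrm{reg}}(\boldsymbol\pi^\star),\boldsymbol\pi-\boldsymbol\pi^\star\rangle=-\sum_i\langle\nabla_{\pi_i}J_i^{\boldsymbol\alpha},\pi_i-\pi_i^\star\rangle$, the sum gives $\langle F_{\mathrm{reg}}(\boldsymbol\pi^\star),\boldsymbol\pi-\boldsymbol\pi^\star\rangle\ge0$. For the reverse direction, test the joint VI with $\boldsymbol\pi=(\pi_i,\boldsymbol\pi_{-i}^\star)$. This choice lies in $\Pi$ because $\Pi=\prod_i\Pi_i$, and all blocks except $i$ vanish, leaving exactly the per-agent VI. Steps 2 and 1 then return the fixed-point property.

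\textbf{Main obstacle.} Differentiability at the simplex boundary is the delicate point, because $\nabla\mathcal H$ blows up there. I would first show that every maximizer has full support: the entropy gradient diverges to $+\infty$ as any coordinate tends to $0$, so the logit form forces interior points. Both the HQRE and any VI solution therefore lie in the relative interior. There the gradients are finite and the VI reduces to stationarity up to the normalization multiplier. Sums of infinitely many information states are controlled by the weights and the discounting, through the bounded-$Q$ estimates of Lemma~\ref{lem:q_cont}.
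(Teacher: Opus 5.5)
Your proposal is correct and follows essentially the same route as the paper, which also characterizes Nash equilibria of the regularized game by per-agent first-order conditions (via concavity), sums these to obtain the joint VI, and uses the fact that the unique regularized best response is the logit map to identify those equilibria with HQRE. Your version is somewhat more complete: you prove the converse explicitly by testing the VI with unilateral deviations $(\pi_i,\boldsymbol\pi_{-i}^\star)\in\Pi=\prod_i\Pi_i$, and you rightly flag that $Q_i$ must be read as the soft (entropy-including) action-value for the best-response identification, a point the paper's three-line proof passes over silently.
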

\begin{proof}
For concave differentiable games, a profile is a Nash equilibrium if and only if each player satisfies the first-order optimality condition
$\langle -\nabla_{\pi_i}J_i^{\boldsymbol\alpha}(\boldsymbol\pi^\star),\ \pi_i-\pi_i^\star\rangle\ge 0$ for all $\pi_i\in\Pi_i$.
Summing over $i$ yields the stated VI condition.
With $\alpha_{\min}>0$, each regularized best response is unique and is exactly the logit map $\mathcal B_i^{\boldsymbol\alpha}$,
so Nash equilibrium is equivalent to the fixed-point condition defining HQRE.
\end{proof}

\begin{proof}[Proof of Thm.~\ref{thm:hqre_unique_mono}]
Recall $F_{\mathrm{reg}}(\boldsymbol\pi)=(-\nabla_{\pi_1}J_1^{\boldsymbol\alpha}(\boldsymbol\pi),\ldots,-\nabla_{\pi_N}J_N^{\boldsymbol\alpha}(\boldsymbol\pi))$.
Decompose each regularized payoff as
$J_i^{\boldsymbol\alpha}=U_i+\mathrm{Ent}_{i,\boldsymbol\alpha}$, where
\[
\mathrm{Ent}_{i,\boldsymbol\alpha}(\boldsymbol\pi)
=\mathbb{E}_{\boldsymbol\pi}\Big[\sum_{t\ge 0}\gamma^t\,\alpha_{i,t}(h_i^t)\,\mathcal H(\pi_i(\cdot\mid h_i^t,t))\Big].
\]
For the monotonicity calculation we only need the pointwise curvature of the negative-entropy term in each simplex component;
this is captured by Lemma~\ref{lem:two_sided_entropy} and does not require any additional smoothness.

\paragraph{Step 1: unregularized part is $L_c$-hypomonotone.}
By definition of $L_c$ (Eq.~\eqref{eq:coupling_summary_weighted_app}),
\[
-\sum_{i=1}^N\big\langle \nabla_{\pi_i}U_i(\boldsymbol\pi)-\nabla_{\pi_i}U_i(\boldsymbol\pi'),\,
\pi_i-\pi_i'\big\rangle
\ \ge\ -\,L_c\,\|\boldsymbol\pi-\boldsymbol\pi'\|_{1,2;w}^2.
\]

\paragraph{Step 2: entropy curvature contributes at least $\alpha_{\min}$.}
For every coordinate simplex (each $(i,h_i^t,t)$), the negative entropy has two-sided curvature
$\mathrm{KL}(\pi_i\|\pi_i')+\mathrm{KL}(\pi_i'\|\pi_i)\ge \|\pi_i-\pi_i'\|_1^2$ (Lemma~\ref{lem:two_sided_entropy}).
Summing across coordinates with weights $\alpha_{i,t}(h_i^t)w_i(h_i^t,t)\ge \alpha_{\min}w_i(h_i^t,t)$ yields
\[
-\sum_{i=1}^N\big\langle \nabla_{\pi_i}\mathrm{Ent}_{i,\boldsymbol\alpha}(\boldsymbol\pi)
-\nabla_{\pi_i}\mathrm{Ent}_{i,\boldsymbol\alpha}(\boldsymbol\pi'),\,
\pi_i-\pi_i'\big\rangle
\ \ge\ \alpha_{\min}\,\|\boldsymbol\pi-\boldsymbol\pi'\|_{1,2;w}^2.
\]

\paragraph{Step 3: combine.}
Adding the two bounds gives
\[
\big\langle F_{\mathrm{reg}}(\boldsymbol\pi)-F_{\mathrm{reg}}(\boldsymbol\pi'),\,\boldsymbol\pi-\boldsymbol\pi'\big\rangle
\ge (\alpha_{\min}-L_c)\,\|\boldsymbol\pi-\boldsymbol\pi'\|_{1,2;w}^2,
\]
which is exactly the strong monotonicity claim.

If $\alpha_{\min}>L_c$, then $F_{\mathrm{reg}}$ is strongly monotone on $\Pi$ and the VI solution is unique.
By Lemma~\ref{lem:hqre_vi_equiv}, this unique VI solution is exactly the unique HQRE.

\paragraph{Linear convergence of explicit mirror steps.}
Let $\mu:=\alpha_{\min}-L_c>0$.
Under the additional field-Lipschitz and local-quadratic assumptions
stated in Theorem~\ref{thm:hqre_unique_mono}, the linear-rate claim
follows by applying Appendix Theorem~\ref{thm:linear_rate} with
$F=F_{\mathrm{reg}}$.
This yields
\[
D_\Psi(\boldsymbol\pi^\star,\boldsymbol\pi^{k+1})
\le \rho\,D_\Psi(\boldsymbol\pi^\star,\boldsymbol\pi^k),
\qquad
\rho = 1-\eta\,\frac{\mu}{L_\Psi},
\]
for
\[
0<\eta\le
\min\Big\{\frac{\mu\,\alpha_{\min}^2}{L_F^2L_\Psi},\,
\frac{\alpha_{\min}}{L_F}\Big\}.
\]
This is the exact rate quoted in
Theorem~\ref{thm:hqre_unique_mono}.
\end{proof}

\subsection{Zero-temperature limit (Proof of Lem.~\ref{lem:zero_temp_full})}
\label{app:zero_temp}

We first record two standard auxiliary inequalities that make the proof fully explicit.

\begin{lemma}[Log-sum-exp (softmax) optimality gap]\label{lem:lse_gap}
Let $x\in\mathbb R^{d}$ and $\alpha>0$, and let $p=\mathrm{softmax}(x/\alpha)\in\Delta^{d}$.
Then for any $q\in\Delta^{d}$,
\[
\langle q,x\rangle-\langle p,x\rangle \ \le\ \alpha\log d.
\]
\end{lemma}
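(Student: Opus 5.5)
The plan is the Gibbs variational principle: softmax is the exact maximizer of the entropy-regularized linear objective over the simplex. Concretely, I would first establish that for $p=\mathrm{softmax}(x/\alpha)$,
\[
p \in \arg\max_{r\in\Delta^d}\Big\{\langle r,x\rangle+\alpha\,\mathcal H(r)\Big\},
\qquad
\max_{r\in\Delta^d}\Big\{\langle r,x\rangle+\alpha\,\mathcal H(r)\Big\}=\alpha\log\sum_{j}e^{x_j/\alpha}.
\]
This follows from a one-line identity. For any $r\in\Delta^d$,
\[
\langle r,x\rangle+\alpha\mathcal H(r)=\alpha\log\sum_j e^{x_j/\alpha}-\alpha\,\mathrm{KL}(r\|p),
\]
which comes from substituting $\log p_j = x_j/\alpha-\log\sum_k e^{x_k/\alpha}$ into $\mathrm{KL}(r\|p)=\sum_j r_j\log r_j-\sum_j r_j\log p_j$. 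Nonnegativity of KL then gives optimality, with equality exactly at $r=p$.

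Next, I would apply the optimality inequality with comparator $r=q$. This gives $\langle q,x\rangle+\alpha\mathcal H(q)\le\langle p,x\rangle+\alpha\mathcal H(p)$, and rearranging yields
\[
\langle q,x\rangle-\langle p,x\rangle\le \alpha\big(\mathcal H(p)-\mathcal H(q)\big)\le \alpha\,\mathcal H(p)\le\alpha\log d.
\]
The last two steps use $\mathcal H(q)\ge0$ and the fact that entropy on a $d$-letter alphabet is at most $\log d$. The latter is the same bound already used in Lemma~\ref{lem:reg_gap_pointwise}.

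As an alternative check, I could bound directly via log-sum-exp. Since $\langle q,x\rangle\le\max_j x_j\le\alpha\log\sum_j e^{x_j/\alpha}$, this reduces to showing $\alpha\log\sum_j e^{x_j/\alpha}-\langle p,x\rangle=\alpha\mathcal H(p)$, which is the identity above evaluated at $r=p$.

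No step is a real obstacle. The only point needing care is the KL identity when $q$ has zeros, and the $0\log0=0$ convention handles that. Because $p$ always has full support, every $\log p_j$ is finite and nothing diverges.
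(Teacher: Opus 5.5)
Your proposal is correct and follows the paper's proof. Both arguments use the fact that $\mathrm{softmax}(x/\alpha)$ maximizes $\langle r,x\rangle+\alpha\mathcal H(r)$ over $\Delta^d$, then rearrange using $\mathcal H(q)\ge 0$ and $\mathcal H(p)\le\log d$; you additionally justify the maximization step through the KL identity, which the paper only asserts here (it proves the same identity later in Lemma~\ref{lem:elbo_one_step}).
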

\begin{proof}
The softmax distribution $p$ is the unique maximizer of $\langle r,x\rangle+\alpha \mathcal H(r)$ over $r\in\Delta^d$.
Thus for any $q$,
\[
\langle q,x\rangle+\alpha\mathcal H(q)\le \langle p,x\rangle+\alpha\mathcal H(p).
\]
Rearranging and using $\mathcal H(p)\le \log d$ and $\mathcal H(q)\ge 0$ yields $\langle q,x\rangle-\langle p,x\rangle\le \alpha\log d$.
\end{proof}

\begin{lemma}[Discounted performance difference (one-player deviation)]\label{lem:perf_diff}
Fix $i$ and joint policies $\boldsymbol\pi=(\pi_i,\boldsymbol\pi_{-i})$ and $\tilde{\boldsymbol\pi}=(\tilde\pi_i,\boldsymbol\pi_{-i})$.
Then
\[
U_i(\tilde\pi_i,\boldsymbol\pi_{-i})-U_i(\boldsymbol\pi)
= \frac{1}{1-\gamma}\,
\mathbb E_{(h_i^t,t)\sim \mathsf H_{\mathrm{disc}}^{\tilde\pi_i,\boldsymbol\pi_{-i}}}
\Big[\big\langle \tilde\pi_i(\cdot\mid h_i^t,t)-\pi_i(\cdot\mid h_i^t,t),\ Q_i^{\boldsymbol\pi}(h_i^t,\cdot,t)\big\rangle\Big].
\]
\end{lemma}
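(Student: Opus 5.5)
The plan is to obtain the identity as a specialization of the telescoping argument already used for Lemma~\ref{lem:pdl_full}, this time from agent~$i$'s perspective, with the opponents $\boldsymbol\pi_{-i}$ held fixed. I will define the one-player value $V_i^{\boldsymbol\pi}(h_i^t,t):=\sum_{a_i}\pi_i(a_i\mid h_i^t,t)\,Q_i^{\boldsymbol\pi}(h_i^t,a_i,t)$. Here $Q_i^{\boldsymbol\pi}(h_i^t,a_i,t)$ is the intervention-style action value: force $a_i^t\leftarrow a_i$, take the expectation over the latent state and the opponents' histories and actions under the posterior induced by $\boldsymbol\pi$ given $h_i^t$, and follow $\boldsymbol\pi$ thereafter. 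I then introduce the per-step advantage
\[
A_i^{\boldsymbol\pi}(h_i^t,a_i,t):=Q_i^{\boldsymbol\pi}(h_i^t,a_i,t)-V_i^{\boldsymbol\pi}(h_i^t,t).
\]

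The first step is a one-step Bellman identity:
\[
Q_i^{\boldsymbol\pi}(h_i^t,a_i,t)=\mathbb E\big[R_i(s^t,\mathbf a^t)+\gamma V_i^{\boldsymbol\pi}(h_i^{t+1},t+1)\,\big|\,h_i^t,\ a_i^t\leftarrow a_i\big].
\]
The expectation here is over $s^t$, $\mathbf a_{-i}^t$ and the next observation, reward and public token, all taken under the conditional law given $h_i^t$.

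The second step is the key point, and I expect it to be the main obstacle. I must check that this conditional law given $(h_i^t,a_i)$ is the same whether the trajectory is generated by $\boldsymbol\pi$ or by $\tilde{\boldsymbol\pi}=(\tilde\pi_i,\boldsymbol\pi_{-i})$. This holds because, by Bayes' rule, agent~$i$'s own action probabilities enter the likelihood of $h_i^t$ as factors that do not depend on the hidden variables, so they cancel from the posterior over $(s^t,h_{-i}^t)$. This is exactly where perfect recall (A4) and the fact that only agent~$i$ deviates are used. I would make it precise by writing the joint density of $(s^{0:t},h^t)$ as a product of the policy factors and the kernel factors $\mathcal P,\Omega$, and then observing that the $\pi_i$ factors depend only on $h_i^t$. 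A technical caveat is that $h_i^t$ reachable under $\tilde{\boldsymbol\pi}$ may have zero probability under $\boldsymbol\pi$. For such histories I would define $Q_i^{\boldsymbol\pi}$ through this same policy-independent posterior, which is well defined whenever the history is reachable under $\tilde{\boldsymbol\pi}$ for some $\pi_i$.

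With the second step in hand, the third step takes the expectation under $\tilde{\boldsymbol\pi}$ of $\sum_t\gamma^t A_i^{\boldsymbol\pi}(h_i^t,a_i^t,t)$, with $a_i^t\sim\tilde\pi_i$. The Bellman identity converts the $Q$ terms into realized rewards plus $\gamma V_i^{\boldsymbol\pi}(h_i^{t+1},t+1)$. The $V$ terms then telescope to $-\mathbb E[V_i^{\boldsymbol\pi}(h_i^0,0)]=-U_i(\boldsymbol\pi)$, using that $h_i^0$ is policy-independent by (A2). The boundary term vanishes because $|V_i^{\boldsymbol\pi}|\le R_{\max}/(1-\gamma)$. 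What remains is $U_i(\tilde{\boldsymbol\pi})$, so this expectation equals $U_i(\tilde{\boldsymbol\pi})-U_i(\boldsymbol\pi)$.

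The last step conditions on $h_i^t$ and integrates $a_i\sim\tilde\pi_i(\cdot\mid h_i^t,t)$. This gives
\[
\mathbb E_{a_i\sim\tilde\pi_i}\big[A_i^{\boldsymbol\pi}(h_i^t,a_i,t)\big]=\big\langle\tilde\pi_i-\pi_i,\ Q_i^{\boldsymbol\pi}(h_i^t,\cdot,t)\big\rangle,
\]
since $V_i^{\boldsymbol\pi}=\langle\pi_i,Q_i^{\boldsymbol\pi}\rangle$ and $\sum_{a_i}(\tilde\pi_i-\pi_i)(a_i)=0$. Finally, I convert the discounted sum into an expectation under $\mathsf H_{\mathrm{disc}}^{\tilde{\boldsymbol\pi}}$ marginalized to agent~$i$'s history, which introduces the factor $1/(1-\gamma)$, exactly as at the end of the proof of Lemma~\ref{lem:pdl_full}.
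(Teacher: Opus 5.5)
Your proposal is correct and takes the same route as the paper's proof. Both arguments write the one-step advantage of $\boldsymbol\pi$ at agent $i$'s information state $(h_i^t,t)$, telescope the discounted sum of these advantages along trajectories of $(\tilde\pi_i,\boldsymbol\pi_{-i})$ (using that the law of $h_i^0$ is policy-independent), and rewrite the result through the discounted occupancy. The paper only sketches this; your explicit checks supply exactly the details it leaves implicit. These are that the posterior over $(s^t,h_{-i}^t)$ given $h_i^t$ does not depend on $\pi_i$ (by perfect recall the $\pi_i$ factors cancel in Bayes' rule), and that the intervention-style definition of $Q_i^{\boldsymbol\pi}$ covers actions and histories that have zero probability under $\pi_i$.
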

\begin{proof}
This is the standard discounted performance-difference identity obtained by unrolling
$U_i$ along the discounted visitation distribution of $(\tilde\pi_i,\boldsymbol\pi_{-i})$ and expressing the per-step advantage
in terms of $Q_i^{\boldsymbol\pi}$.
A proof for finite spaces follows by conditioning on $(h_i^t,t)$, writing the one-step deviation advantage at $(h_i^t,t)$,
and summing the resulting telescoping series under the geometric discount.
\end{proof}

\begin{proof}[Proof of Lem.~\ref{lem:zero_temp_full}]
By Thm.~\ref{thm:topology}, $\Pi$ is compact, so $\{\boldsymbol\pi^{(n)}\}$ admits a convergent subsequence
(not relabeled) with limit $\bar{\boldsymbol\pi}$.

Fix $i\in\mathcal N$ and any deviation $\tilde\pi_i\in\Pi_i$.
Because $\boldsymbol\pi^{(n)}$ is an HQRE, for every information state $(h_i^t,t)$ we have
\[
\pi_i^{(n)}(\cdot\mid h_i^t,t)=\mathrm{softmax}\!\Big(Q_i^{\boldsymbol\pi^{(n)}}(h_i^t,\cdot,t)\big/\alpha_{i,t}^{(n)}(h_i^t)\Big).
\]
Apply Lemma~\ref{lem:lse_gap} with $x(a)=Q_i^{\boldsymbol\pi^{(n)}}(h_i^t,a,t)$, $\alpha=\alpha_{i,t}^{(n)}(h_i^t)$ and
$q=\tilde\pi_i(\cdot\mid h_i^t,t)$ to obtain, for all $(h_i^t,t)$,
\[
\big\langle \tilde\pi_i(\cdot\mid h_i^t,t)-\pi_i^{(n)}(\cdot\mid h_i^t,t),\ Q_i^{\boldsymbol\pi^{(n)}}(h_i^t,\cdot,t)\big\rangle
\le \alpha_{i,t}^{(n)}(h_i^t)\log|\mathcal A_i|
\le \alpha_{\max}^{(n)}\log|\mathcal A_i|.
\]
Using the performance difference Lemma~\ref{lem:perf_diff} with $\boldsymbol\pi=\boldsymbol\pi^{(n)}$ gives
\[
U_i(\tilde\pi_i,\boldsymbol\pi_{-i}^{(n)})-U_i(\boldsymbol\pi^{(n)})
\le \frac{1}{1-\gamma}\,\alpha_{\max}^{(n)}\log|\mathcal A_i|.
\]
Since $\alpha_{\max}^{(n)}\downarrow 0$, the right-hand side vanishes. Taking $\limsup$ along the convergent subsequence and
using continuity of $U_i$ (Thm.~\ref{thm:topology}) yields
\[
U_i(\tilde\pi_i,\bar{\boldsymbol\pi}_{-i})-U_i(\bar{\boldsymbol\pi})\le 0.
\]
Because $\tilde\pi_i$ was arbitrary, $\bar\pi_i$ is a best response to $\bar{\boldsymbol\pi}_{-i}$ for every $i$, i.e.,
$\bar{\boldsymbol\pi}$ is a BNE of the unregularized game.
\end{proof}

\subsection{Hierarchical definitions and composite KL geometry}
\label{app:hier}

We make the Local--Global HQRE construction explicit and fix notation to match the main text ($z$ is the public context).

\paragraph{Spaces.}
Let $\Pi_{C_k}=\prod_{i\in C_k}\Pi_i$ and $\Pi_{\mathrm{clu}}=\prod_{k=1}^K \Pi_{C_k}$.
Let $\mathcal G=\prod_{z\in\mathcal Z}\Delta^K$ denote the gate space.
Under (A1)--(A2), $\Pi_{\mathrm{clu}}$ and $\mathcal G$ are compact, convex, and metrizable (finite/countable products of simplices).

\paragraph{Gate objective and softmax update.}
Given $\boldsymbol\pi$, define the context-conditional cluster score
\[
\widetilde Q_{\mathrm{loc},c}^{\boldsymbol\pi}(z)
:=\mathbb E_{t\sim\mathrm{Geom}(1-\gamma)}\,
\mathbb E_{\mathbf a_{C_c}\sim \boldsymbol\pi_{C_c}(\cdot\mid z,t)}\big[\bar Q_{\mathrm{loc},c}(z,\mathbf a_{C_c},t)\big],
\]
where $\bar Q_{\mathrm{loc},c}$ is as in Eq.~\eqref{eq:hier_local}.
For each $z$, the gate solves the strictly concave problem
\[
\max_{g(z)\in\Delta^K}\ \sum_{c=1}^K g_c(z)\,\widetilde Q_{\mathrm{loc},c}^{\boldsymbol\pi}(z)\ +\ \tau_g\,\mathcal H(g(\cdot\mid z)).
\]
Its unique maximizer is
\[
g_c^\star(z)\ \propto\ \exp\!\big(\widetilde Q_{\mathrm{loc},c}^{\boldsymbol\pi}(z)/\tau_g\big),
\]
and the map $\boldsymbol\pi\mapsto g^\star$ is continuous as a composition of continuous maps.

\paragraph{Composite divergence and norm.}
Define the composite KL divergence
\[
D_{\Psi}^{\mathrm{hier}}((\boldsymbol\pi,g),(\boldsymbol\pi',g'))
:= D_{\Psi}(\boldsymbol\pi,\boldsymbol\pi')\ +\ \tau_g\sum_{z\in\mathcal Z}\mathrm{KL}(g(\cdot\mid z)\,\|\,g'(\cdot\mid z)).
\]
Define the associated composite $\ell_{1,2}$-type norm
\[
\|(\boldsymbol\pi,g)-(\boldsymbol\pi',g')\|_{\mathrm{hier}}^2
:= \|\boldsymbol\pi-\boldsymbol\pi'\|_{1,2;w}^2\ +\ \sum_{z\in\mathcal Z}\|g(\cdot\mid z)-g'(\cdot\mid z)\|_1^2.
\]
By Lemma~\ref{lem:weighted_pinsker} (for $\boldsymbol\pi$) and Pinsker for the gate component,
\begin{equation}\label{eq:pinsker_hier}
D_{\Psi}^{\mathrm{hier}}((\boldsymbol\pi,g),(\boldsymbol\pi',g'))
\ \ge\ \frac{\min\{\alpha_{\min},\tau_g\}}{2}\ \|(\boldsymbol\pi,g)-(\boldsymbol\pi',g')\|_{\mathrm{hier}}^2.
\end{equation}

\subsection{Hierarchical existence (Proof of Thm.~\ref{thm:hier_existence})}
\label{app:hier_exist}

\begin{proof}[Proof of Thm.~\ref{thm:hier_existence}]
Under (A1)--(A2), the space $\Pi_{\mathrm{clu}}\times\mathcal G$ is nonempty, convex, compact, and metrizable.
Assume $\alpha_{\min}>0$ and $\tau_g>0$.

For execution agents, define the block logit map on $\Pi_{\mathrm{clu}}$ as the product of the per-agent logit responses
Eq.~\eqref{eq:logit_def}; by Lemma~\ref{lem:logit_single_cont} it is continuous and maps $\Pi_{\mathrm{clu}}$ to itself.
For the gate, Appendix~\ref{app:hier} shows that for each $\boldsymbol\pi$ the best-response gate $g^\star(\boldsymbol\pi)$
is single-valued and continuous.

Define the joint map
\[
\mathcal T(\boldsymbol\pi,g)
:=\big(\mathcal B^{\boldsymbol\alpha}(\boldsymbol\pi),\ g^\star(\boldsymbol\pi)\big),
\qquad (\boldsymbol\pi,g)\in \Pi_{\mathrm{clu}}\times\mathcal G.
\]
Then $\mathcal T$ is continuous and maps the compact convex set $\Pi_{\mathrm{clu}}\times\mathcal G$ to itself.
By Schauder's fixed-point theorem, $\mathcal T$ has a fixed point $(\boldsymbol\pi^\star,g^\star)$, which is by definition a Local--Global HQRE.
\end{proof}

\subsection{Hierarchical coupling bound (Proof of Prop.~\ref{prop:hier_coupling_bound})}
\label{app:hier_coupling}

We restate the explicit sufficient assumptions under which the bound in Prop.~\ref{prop:hier_coupling_bound} follows.

\begin{assumption}[Bounded local weights]\label{ass:Wloc}
There exists $W_{\max}^{\mathrm{loc}}$ such that for all clusters $c$ and contexts $z$,
$\sum_{i\in C_c}|w_i^{(c)}(z)|\le W_{\max}^{\mathrm{loc}}$.
\end{assumption}

\begin{assumption}[Distributional Lipschitzness]\label{ass:dist_lip}
There exist constants $L_g^{\mathrm{dist}}$ and $L_w^{\mathrm{dist}}$ such that, for any two induced context occupancies
$d^{\boldsymbol\pi},d^{\boldsymbol\pi'}$,
\[
\sum_{z\in\mathcal Z}\|g^\star_{\boldsymbol\pi}(\cdot\mid z)-g^\star_{\boldsymbol\pi'}(\cdot\mid z)\|_1
\le L_g^{\mathrm{dist}}\,\|d^{\boldsymbol\pi}-d^{\boldsymbol\pi'}\|_1,
\]
and
\[
\sum_{z\in\mathcal Z}\sum_{c=1}^K\sum_{i\in C_c}|w_i^{(c)}(z;d^{\boldsymbol\pi})-w_i^{(c)}(z;d^{\boldsymbol\pi'})|
\le L_w^{\mathrm{dist}}\,\|d^{\boldsymbol\pi}-d^{\boldsymbol\pi'}\|_1.
\]
\end{assumption}

\begin{proof}[Proof of Prop.~\ref{prop:hier_coupling_bound}]
By Lemma~\ref{lem:q_cont}, the state-/context-conditional values $\bar Q_i^{\boldsymbol\pi}(z,a_i,t)$ are Lipschitz in $\boldsymbol\pi$
with constant $C_Q$.
By Assumption~\ref{ass:Wloc}, the local mixtures Eq.~\eqref{eq:hier_local} inherit the bound
\[
\sup_{z,t,\mathbf a_{C_c}}\big|\bar Q_{\mathrm{loc},c}^{\boldsymbol\pi}(z,\mathbf a_{C_c},t)-\bar Q_{\mathrm{loc},c}^{\boldsymbol\pi'}(z,\mathbf a_{C_c},t)\big|
\le W_{\max}^{\mathrm{loc}}\,C_Q\,\rho_{\Pi}(\boldsymbol\pi,\boldsymbol\pi').
\]
Next, changes in the induced occupancy can be controlled by occupancy sensitivity (Lemma~\ref{lem:occ_sens} in the Sec.~2 Appendix):
\[
\|d^{\boldsymbol\pi}-d^{\boldsymbol\pi'}\|_1\ \le\ C_{\mathrm{occ}}\,\rho_{\Pi}(\boldsymbol\pi,\boldsymbol\pi').
\]
Combining with Assumption~\ref{ass:dist_lip} and bounded rewards yields additional cross-block variation terms of size
$C_{\mathrm{occ}}(W_{\max}^{\mathrm{loc}}L_g^{\mathrm{dist}}R_{\max}+L_w^{\mathrm{dist}}R_{\max})\rho_{\Pi}(\boldsymbol\pi,\boldsymbol\pi')$.
Collecting the direct $Q$-sensitivity term and the distributional terms yields
\[
L_{\mathrm{hier}}
\le W_{\max}^{\mathrm{loc}}\,C_Q
+ C_{\mathrm{occ}}\big(W_{\max}^{\mathrm{loc}} L_g^{\mathrm{dist}} R_{\max} + L_w^{\mathrm{dist}} R_{\max}\big),
\]
as stated.
\end{proof}

\subsection{Hierarchical strong monotonicity and explicit KL--prox linear rate (Proof of Thm.~\ref{thm:hier_unique})}
\label{app:hier_mono_linear}

\begin{proof}[Proof of Thm.~\ref{thm:hier_unique}]
Let the hierarchical variable be $\mathbf z=(\boldsymbol\pi,g)\in \Pi_{\mathrm{clu}}\times\mathcal G$ and use the composite
divergence $D_{\Psi}^{\mathrm{hier}}$ from Appendix~\ref{app:hier}.
The hierarchical field combines the execution-field $F_{\mathrm{reg}}(\boldsymbol\pi)$ and the gate-field induced by the strictly concave per-context gate objective.
The gate field inherits strong monotonicity margin $\mu_g=\tau_g$ by the same negative-entropy curvature argument as Lemma~\ref{lem:two_sided_entropy}
(applied on each simplex $\Delta^K$).

Within each cluster, the same argument as in Thm.~\ref{thm:hqre_unique_mono} gives strong monotonicity margin
$\mu_c=\alpha_{\min}-L_c^{(c)}$ for block $c$.
Cross-block coupling contributes at most $L_{\mathrm{hier}}$ in the composite geometry (Prop.~\ref{prop:hier_coupling_bound}).
Therefore, for all $\mathbf z,\mathbf z'$,
\[
\big\langle F_{\mathrm{hier}}(\mathbf z)-F_{\mathrm{hier}}(\mathbf z'),\ \mathbf z-\mathbf z'\big\rangle
\ \ge\ \Big(\min_c(\alpha_{\min}-L_c^{(c)})+\tau_g-L_{\mathrm{hier}}\Big)\,\|\mathbf z-\mathbf z'\|_{\mathrm{hier}}^2
\ =\ \mu_{\mathrm{hier}}\,\|\mathbf z-\mathbf z'\|_{\mathrm{hier}}^2,
\]
i.e., $F_{\mathrm{hier}}$ is $\mu_{\mathrm{hier}}$-strongly monotone when $\mu_{\mathrm{hier}}>0$.
This implies uniqueness of the solution to the hierarchical VI, hence uniqueness of the Local--Global HQRE.

\paragraph{Linear convergence of explicit hierarchical mirror steps.}
Under the hierarchical Lipschitz and local-quadratic assumptions stated
in Theorem~\ref{thm:hier_unique}, the linear-rate claim follows from
Appendix Theorem~\ref{thm:hier_linear} applied to the composite field
$F_{\mathrm{hier}}$ and composite KL divergence
$D_\Psi^{\mathrm{hier}}$.
This yields
\[
D_\Psi^{\mathrm{hier}}(\mathbf z^\star,\mathbf z^{k+1})
\le \Big(1-\eta\,\frac{\mu_{\mathrm{hier}}}{L_\Psi^{\mathrm{hier}}}\Big)
D_\Psi^{\mathrm{hier}}(\mathbf z^\star,\mathbf z^k),
\]
for sufficiently small $\eta$.
\end{proof}

\section{Learning dynamics and mirror ascent for Sec.~\ref{sec:algorithm}}
\label{app:learning}

\begin{table}[t!]
\centering
\fontsize{8}{9}\selectfont
\setlength\tabcolsep{2pt}
\caption{
Summary of the main theoretical results. 
Each row lists the guarantee, its formal statement, and where the proof appears.
Constants: $C_Q\le\frac{(2-\gamma)R_{\max}}{(1-\gamma)^2}$; $L_c$ is defined in Eq.~\eqref{eq:coupling_summary_weighted_app} with computable bounds in Prop.~\ref{prop:Lc_bound}; 
the joint-policy metric is $\rho_{\Pi}$ from Eq.~\eqref{eq:policy_metric}.
}
\label{tab:main_results}
\vspace{-2mm}
\begin{tabular}{p{3.2cm}p{8.0cm}p{3.8cm}}
\toprule
\textbf{Result} & \textbf{Statement} & \textbf{Proof} \\
\midrule
\multicolumn{3}{l}{\textit{Existence and convergence (unregularized BNE)}} \\
Policy-space foundation 
& $\Pi$ compact/metrizable; each $U_i$ continuous 
& App.~\ref{app:topology_proof} / Thm.~\ref{thm:topology} \\
BNE existence 
& Behavioral BNE exists 
& App.~\ref{app:bne_existence} / Thm.~\ref{thm:bne_existence} \\
Welfare regret bound 
& $R(T):=\sum_{k=1}^T\!\big(W(\boldsymbol\pi^\star)-W(\boldsymbol\pi^k)\big)
\le \frac{2N R_{\max}}{(1-\gamma)^2}\sum_{k=1}^T \bar\Delta_k$ 
& App.~\ref{app:welfare_sensitivity} / Cor.~\ref{cor:regret} \\
\midrule
\multicolumn{3}{l}{\textit{Regularized equilibria (HQRE)}} \\
HQRE existence 
& Fixed point of the joint logit map exists 
& App.~\ref{app:hqre_existence} / Thm.~\ref{thm:hqre_existence} \\
Uniqueness (contraction) 
& Unique HQRE; rate $\kappa=\frac{NC_Q}{2\alpha_{\min}}$ 
& App.~\ref{app:hqre_contraction} / Thm.~\ref{thm:unique_hqre_joint} \\
Uniqueness (strong monotonicity) 
& Unique HQRE via VI for $F_{\mathrm{reg}}$ 
& App.~\ref{app:hqre_mono} / Thm.~\ref{thm:hqre_unique_mono} \\
Zero-temperature limit 
& $\mathrm{HQRE}(\boldsymbol\alpha^{(n)})\to \mathrm{BNE}$ (accumulation points) 
& App.~\ref{app:zero_temp} / Lem.~\ref{lem:zero_temp_full} \\
\midrule
\multicolumn{3}{l}{\textit{Learning dynamics}} \\
ELBO / BR-gap identity 
& Regularized BR gap equals discounted $\alpha$-weighted KL to logit BR 
& App.~\ref{app:elbo_equiv_proof} / Prop.~\ref{prop:elbo_equiv} \\
Monotone improvement 
& $J_{\mathrm{tot}}^{\boldsymbol\alpha}$ increases per step 
& App.~\ref{app:mono_improve_proof} / Thm.~\ref{thm:mono_improve} \\
Linear convergence (exact mirror)
& $D_\Psi(\boldsymbol\pi^{k+1},\boldsymbol\pi^\star)
  \le (1-\eta\mu/L_\Psi)
  D_\Psi(\boldsymbol\pi^{k},\boldsymbol\pi^\star)$
& App.~\ref{app:linear_rate_proof} / Thm.~\ref{thm:linear_rate} \\
\midrule
\multicolumn{3}{l}{\textit{Hierarchical scalability (Local--Global HQRE)}} \\
Existence 
& Nested fixed point $(\{\boldsymbol\pi_{C_k}\},g)$ exists 
& App.~\ref{app:hier_exist} / Thm.~\ref{thm:hier_existence} \\
Hierarchical coupling bound 
& $L_{\mathrm{hier}} \le W_{\max}^{\mathrm{loc}}C_Q + C_{\mathrm{occ}}(\cdots)$ 
& App.~\ref{app:hier_coupling} / Prop.~\ref{prop:hier_coupling_bound} \\
Uniqueness 
& Strong monotonicity with margin $\mu_{\mathrm{hier}}$ 
& App.~\ref{app:hier_mono_linear} / Thm.~\ref{thm:hier_unique} \\
Linear convergence (hierarchical exact mirror)
& $D_\Psi^{\mathrm{hier}}(\mathbf z^{k+1},\mathbf z^\star)
  \le (1-\eta\mu_{\mathrm{hier}}/L_\Psi^{\mathrm{hier}})
  D_\Psi^{\mathrm{hier}}(\mathbf z^{k},\mathbf z^\star)$
& App.~\ref{app:hier_linear_proof} / Thm.~\ref{thm:hier_linear} \\
\bottomrule
\end{tabular}
\end{table}

This appendix collects proofs for the learning--dynamics results summarized in Table~\ref{tab:main_results}:
the ELBO / best-response-gap equivalence (Prop.~\ref{prop:elbo_equiv}),
monotone progress of KL--mirror steps (Thm.~\ref{thm:mono_improve}),
linear convergence of KL--mirror dynamics in the strongly monotone regime (Thm.~\ref{thm:linear_rate}),
and their hierarchical counterparts (Thm.~\ref{thm:hier_linear}).
We also prove the soft-value factorization and weighted-advantage identities
(Prop.~\ref{prop:soft_value_factorization}, Lem.~\ref{lem:weighted_advantage})
used in Sec.~\ref{subsec:dice_ft}.

\paragraph{Local discounted occupancy over information states.}
For a joint policy $\boldsymbol\pi$, define the discounted occupancy measure on player $i$'s information states
$\mathcal I_i=\{(h_i^t,t)\}$ by
\[
\mathsf H_{i,\mathrm{disc}}^{\boldsymbol\pi}(h_i^t,t)
:=(1-\gamma)\gamma^t\,\Pr_{\boldsymbol\pi}(h_i^t),
\]
i.e., $\mathsf H_{i,\mathrm{disc}}^{\boldsymbol\pi}$ is the marginal of the joint discounted history occupancy
$\mathsf H_{\mathrm{disc}}^{\boldsymbol\pi}$ on the $i$-th coordinate.
Thus, for any bounded $f$ on $\mathcal I_i$,
\[
\sum_{t\ge 0}\gamma^t\,\mathbb E_{\boldsymbol\pi}[f(h_i^t,t)]
=(1-\gamma)^{-1}\,\mathbb E_{(h_i^t,t)\sim \mathsf H_{i,\mathrm{disc}}^{\boldsymbol\pi}}[f(h_i^t,t)].
\]

\subsection{ELBO decomposition and best-response gaps (Prop.~\ref{prop:elbo_equiv})}
\label{app:elbo_equiv_proof}

\begin{proposition}[ELBO decomposition and best-response-gap equivalence]
\label{prop:elbo_equiv}
Fix a player $i$ and opponents $\boldsymbol\pi_{-i}$, and assume $\alpha_{i,t}(h_i^t)>0$ for all $(h_i^t,t)$.
Let $\pi_i^\star$ denote an optimal policy for the \emph{regularized} best-response problem
\[
\pi_i^\star \in \arg\max_{\pi_i\in\Pi_i} J_i^{\boldsymbol\alpha}(\pi_i,\boldsymbol\pi_{-i}),
\]
where $J_i^{\boldsymbol\alpha}$ is defined in Eq.~\eqref{eq:J_i_def}. Then $\pi_i^\star$ is unique and satisfies the logit form
\[
\pi_i^\star(\cdot\mid h_i^t,t)\ \propto\ \exp\!\big(Q_i^{(\pi_i^\star,\boldsymbol\pi_{-i})}(h_i^t,\cdot,t)/\alpha_{i,t}(h_i^t)\big),
\]
which is consistent with Eq.~\eqref{eq:logit_def} specialized to the fixed-opponent problem.
Moreover, for any $\pi_i\in\Pi_i$,
\begin{equation}
\label{eq:elbo_global_correct}
J_i^{\boldsymbol\alpha}(\pi_i^\star,\boldsymbol\pi_{-i})
- J_i^{\boldsymbol\alpha}(\pi_i,\boldsymbol\pi_{-i})
= \frac{1}{1-\gamma}\,
\mathbb E_{(h_i^t,t)\sim\mathsf H_{i,\mathrm{disc}}^{(\pi_i,\boldsymbol\pi_{-i})}}
\Big[\alpha_{i,t}(h_i^t)\,
\mathrm{KL}\!\big(\pi_i(\cdot\mid h_i^t,t)\,\|\,\pi_i^\star(\cdot\mid h_i^t,t)\big)\Big].
\end{equation}
In particular, $\pi_i^\star$ is the unique maximizer of $J_i^{\boldsymbol\alpha}(\cdot,\boldsymbol\pi_{-i})$, and the joint fixed points of the logit map $\mathcal B^{\boldsymbol\alpha}$ coincide with Nash equilibria of the regularized game with payoffs $\{J_i^{\boldsymbol\alpha}\}_i$.
\end{proposition}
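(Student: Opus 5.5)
The approach treats the fixed-opponent problem as a single-agent, entropy-regularized control problem on agent~$i$'s information states. I then run a soft performance-difference argument.

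\textbf{Step 1 (the soft Bellman object).} Fix $\boldsymbol\pi_{-i}$. By (A1) and perfect recall, agent~$i$'s problem is a discounted MDP over $(h_i^t,t)$: the opponents and the latent state are absorbed into the transition law of $h_i^{t+1}$ given $(h_i^t,a_i)$. Define the soft value as the unique bounded fixed point of the soft Bellman operator
\[
V^\star(h_i^t,t)=\alpha_{i,t}\log\sum_{a_i}\exp\big(Q^\star(h_i^t,a_i,t)/\alpha_{i,t}\big),
\]
where $Q^\star(h_i^t,a_i,t)=\mathbb E[R_i\mid h_i^t,a_i]+\gamma\,\mathbb E[V^\star(h_i^{t+1},t+1)\mid h_i^t,a_i]$. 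The operator is a $\gamma$-contraction in sup norm, because log-sum-exp is $1$-Lipschitz in $\|\cdot\|_\infty$ and the entropy bonus is bounded by Lemma~\ref{lem:reg_gap_pointwise}. Let $\pi_i^\star$ be the logit policy built from $Q^\star$. Then $V^\star$ is the regularized value of $\pi_i^\star$. Here one should note that the $Q_i^{(\pi_i^\star,\boldsymbol\pi_{-i})}$ in the logit form has to be read as the soft continuation value $Q^\star$, which includes future entropy bonuses. I would state this interpretation explicitly, since the unregularized $Q$ of Eq.~\eqref{eq:logit_def} differs from it by the discounted future entropy.

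\textbf{Step 2 (the pointwise Gibbs identity).} For any $p\in\Delta(\mathcal A_i)$ and $x=Q^\star(h_i^t,\cdot,t)$, with $\alpha=\alpha_{i,t}$ and $p^\star=\mathrm{softmax}(x/\alpha)$, a direct expansion gives
\[
\alpha\log\sum_a e^{x(a)/\alpha}-\big(\langle p,x\rangle+\alpha\mathcal H(p)\big)=\alpha\,\mathrm{KL}(p\|p^\star).
\]
This is the sharpened version of Lemma~\ref{lem:lse_gap}.

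\textbf{Step 3 (telescoping).} Run $(\pi_i,\boldsymbol\pi_{-i})$ and telescope $\gamma^t V^\star(h_i^t,t)$ along its trajectory, exactly as in the proof of Lemma~\ref{lem:pdl_full}. The boundary term vanishes because $V^\star$ is bounded. Each step contributes $V^\star-(\text{expected }R_i+\alpha\mathcal H(\pi_i)+\gamma V^\star_{\text{next}})$, which by Step~2 equals $\alpha\,\mathrm{KL}(\pi_i\|\pi_i^\star)$. Since $\mathbb E[V^\star(h_i^0,0)]=J_i^{\boldsymbol\alpha}(\pi_i^\star,\boldsymbol\pi_{-i})$, the law of $h_i^0$ being policy-independent by (A2), summing and converting to $\mathsf H_{i,\mathrm{disc}}$ with the $(1-\gamma)^{-1}$ factor yields Eq.~\eqref{eq:elbo_global_correct}.

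\textbf{Step 4 (uniqueness and the fixed-point correspondence).} The right-hand side of Eq.~\eqref{eq:elbo_global_correct} is nonnegative, so $\pi_i^\star$ is a maximizer. If $\pi_i$ is another maximizer, the right-hand side is zero, so $\pi_i=\pi_i^\star$ on every information state with positive occupancy under $(\pi_i,\boldsymbol\pi_{-i})$. I would take uniqueness to hold on reachable information states, modulo states of zero probability, and then identify the two policies. Alternatively, one can define $\pi_i^\star$ everywhere via the soft Bellman solution and note that any maximizer must agree with it wherever that matters. I expect this reachability caveat to be the main obstacle to stating uniqueness cleanly, and I would handle it by the modulo-null-states convention.

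Applying this blockwise for every $i$, a profile is a Nash equilibrium of $\{J_i^{\boldsymbol\alpha}\}$ iff each $\pi_i$ equals its own soft-logit response. This is the fixed-point condition of $\mathcal B^{\boldsymbol\alpha}$, with $Q$ interpreted as in Step~1.
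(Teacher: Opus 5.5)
Your proposal is correct and follows essentially the same route as the paper's proof. Both use the soft Bellman contraction to get the optimal soft value and the logit form, then the one-step log-sum-exp/Gibbs identity (the paper's Lemma~\ref{lem:elbo_one_step}), then telescoping $\gamma^t V^\star$ along trajectories of $(\pi_i,\boldsymbol\pi_{-i})$, and finally nonnegativity of the KL term for uniqueness and for the blockwise Nash/fixed-point correspondence.

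The two caveats you flag are genuine, and the paper glosses over both. First, its proof also uses the soft $Q$ that includes future entropy bonuses, so the link to $\mathcal B^{\boldsymbol\alpha}$ as written in Eq.~\eqref{eq:logit_def} holds only under that reading. Second, its argument only yields agreement $\mathsf H_{i,\mathrm{disc}}$-a.e., so uniqueness holds only up to null information states.
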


\paragraph{Soft Bellman objects for the playerwise regularized problem.}
Fix opponents $\boldsymbol\pi_{-i}$ and consider player $i$'s regularized objective
$J_i^{\boldsymbol\alpha}(\pi_i,\boldsymbol\pi_{-i})$ in Eq.~\eqref{eq:J_i_def}.
For any $\pi_i$, define the (regularized) value and action-value functions over information states $(h_i^t,t)$:
\begin{align*}
V_i^{\pi_i}(h_i^t,t)
&:=\mathbb E\Big[\sum_{u=t}^{\infty}\gamma^{\,u-t}
\big(R_i(s^u,\mathbf a^u)+\alpha_{i,u}(h_i^u)\,\mathcal H(\pi_i(\cdot\mid h_i^u,u))\big)\,\Big|\,h_i^t,\,\pi_i,\,\boldsymbol\pi_{-i}\Big],\\
Q_i^{\pi_i}(h_i^t,a_i,t)
&:=\mathbb E\Big[R_i(s^t,\mathbf a^t)+\gamma\,V_i^{\pi_i}(h_i^{t+1},t+1)\,\Big|\,h_i^t,\,a_i^t=a_i,\,\pi_i,\,\boldsymbol\pi_{-i}\Big].
\end{align*}
With these definitions,
\begin{equation}
\label{eq:soft_bellman_eval}
V_i^{\pi_i}(h_i^t,t)
=\mathbb E_{a_i\sim\pi_i(\cdot\mid h_i^t,t)}\!\big[Q_i^{\pi_i}(h_i^t,a_i,t)\big]
+\alpha_{i,t}(h_i^t)\,\mathcal H\big(\pi_i(\cdot\mid h_i^t,t)\big).
\end{equation}

\begin{lemma}[One-step ELBO identity]
\label{lem:elbo_one_step}
Fix an information state $(h_i^t,t)$ and a temperature $\alpha:=\alpha_{i,t}(h_i^t)>0$.
Let $q(a)=Q_i^{\pi_i^\star}(h_i^t,a,t)$ and let
$p^\star(\cdot\mid h_i^t,t)=\mathrm{softmax}(q(\cdot)/\alpha)$, i.e.,
\[
p^\star(a_i\mid h_i^t,t)
\propto \exp\!\big(Q_i^{\pi_i^\star}(h_i^t,a_i,t)/\alpha_{i,t}(h_i^t)\big).
\]
Then for any $p\in\Delta(\mathcal A_i)$,
\[
\sum_{a_i} p(a_i)\,q(a_i)+\alpha\,\mathcal H(p)
=
\alpha\log\sum_{a_i}\exp\!\big(q(a_i)/\alpha\big)
-\alpha\,\mathrm{KL}(p\|p^\star).
\]
\end{lemma}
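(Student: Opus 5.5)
The identity is a purely finite-dimensional computation at a single information state, so I will verify it by expanding the KL term directly; no dynamics or fixed-point arguments are needed. Write $Z:=\sum_{a_i}\exp(q(a_i)/\alpha)$, which is finite and strictly positive because $\mathcal A_i$ is finite (A1) and $\alpha>0$. Then $p^\star(a_i)=\exp(q(a_i)/\alpha)/Z$ has full support, and taking logarithms gives
\[
\log p^\star(a_i)=\frac{q(a_i)}{\alpha}-\log Z .
\]

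The key step is to substitute this expression into the definition of $\mathrm{KL}(p\|p^\star)$. For any $p\in\Delta(\mathcal A_i)$, using the convention $0\log 0=0$ and the fact that $p^\star$ has full support (so the KL is finite), we get
\[
\mathrm{KL}(p\|p^\star)=\sum_{a_i}p(a_i)\log p(a_i)-\sum_{a_i}p(a_i)\log p^\star(a_i)
=-\mathcal H(p)-\frac{1}{\alpha}\sum_{a_i}p(a_i)q(a_i)+\log Z .
\]
The constant $\log Z$ comes out unchanged because $\sum_{a_i}p(a_i)=1$. Multiplying by $\alpha$ and rearranging gives
\[
\sum_{a_i}p(a_i)q(a_i)+\alpha\mathcal H(p)=\alpha\log Z-\alpha\,\mathrm{KL}(p\|p^\star),
\]
which is the claimed identity.

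There is no real obstacle. The only points to handle carefully are the following.
\begin{itemize}
\item The KL must be finite. This holds because the softmax $p^\star$ is strictly positive.
\item Zeros of $p$ must be treated consistently. They contribute nothing to either side under the convention $0\log 0=0$.
\item The identity needs nothing about $q$ being a fixed-point $Q$-function. It holds for any vector $q$, and the specific choice $q=Q_i^{\pi_i^\star}(h_i^t,\cdot,t)$ matters only later, when the lemma is summed along the discounted occupancy to prove Eq.~\eqref{eq:elbo_global_correct}.
\end{itemize}

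As a corollary I will also record that the right-hand side is maximized uniquely at $p=p^\star$, since $\mathrm{KL}\ge 0$ with equality only when $p=p^\star$. This recovers the variational characterization used in Lemma~\ref{lem:lse_gap}.
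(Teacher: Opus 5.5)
Your proof is correct and is essentially the paper's argument: set $p^\star=\exp(q/\alpha)/Z$, substitute $\log p^\star=q/\alpha-\log Z$ into $\mathrm{KL}(p\|p^\star)$, and rearrange. Your expansion is also more careful than the paper's one-line display, which writes $+\alpha\sum_a p(a)\log p(a)$ on the left-hand side; the statement, and your computation, correctly require $-\alpha\sum_a p(a)\log p(a)=\alpha\,\mathcal H(p)$ there.
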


\begin{proof}
This is the standard log-sum-exp / convex conjugacy identity.
Let $\alpha=\alpha_{i,t}(h_i^t)$, $q(a)=Q_i^{\pi_i^\star}(h_i^t,a,t)$, and
$Z=\sum_{a}\exp(q(a)/\alpha)$.
Define $p^\star(a)=\exp(q(a)/\alpha)/Z$. Then
\[
\sum_a p(a)q(a)+\alpha\sum_a p(a)\log p(a)
=\alpha\log Z-\alpha\,\mathrm{KL}(p\|p^\star).
\]
\end{proof}

\begin{proof}[Proof of Prop.~\ref{prop:elbo_equiv}]
\emph{Uniqueness and logit form.}
For fixed $\boldsymbol\pi_{-i}$ and $\alpha_{i,t}(h)>0$, the entropy-regularized control problem for player $i$
admits a unique optimal soft value function (the corresponding soft Bellman optimality operator is a $\gamma$-contraction
in the finite discounted setting), hence the induced optimal policy is unique.
The optimal policy satisfies the pointwise logit form with the optimal continuation values
$Q_i^{(\pi_i^\star,\boldsymbol\pi_{-i})}$, as stated.

\emph{pointwise identity at each information state.}
Fix $(h_i^t,t)$. By Eq.~\eqref{eq:soft_bellman_eval} evaluated at $\pi_i^\star$,
\[
V_i^{\pi_i^\star}(h_i^t,t)
=\mathbb E_{a_i\sim\pi_i^\star(\cdot\mid h_i^t,t)}\!\big[Q_i^{\pi_i^\star}(h_i^t,a_i,t)\big]
+\alpha_{i,t}(h_i^t)\,\mathcal H\big(\pi_i^\star(\cdot\mid h_i^t,t)\big).
\]
Applying Lem.~\ref{lem:elbo_one_step} with $p=\pi_i(\cdot\mid h_i^t,t)$ and $p^\star=\pi_i^\star(\cdot\mid h_i^t,t)$ gives
\begin{equation}
\label{eq:elbo_pointwise}
\begin{split}
V_i^{\pi_i^\star}(h_i^t,t) 
&= \mathbb E_{a_i\sim\pi_i(\cdot\mid h_i^t,t)}\!\big[Q_i^{\pi_i^\star}(h_i^t,a_i,t)\big] \\
&\quad +\alpha_{i,t}(h_i^t)\,\mathcal H\big(\pi_i(\cdot\mid h_i^t,t)\big) \\
&\quad +\alpha_{i,t}(h_i^t)\,\mathrm{KL}\!\big(\pi_i(\cdot\mid h_i^t,t)\,\|\,\pi_i^\star(\cdot\mid h_i^t,t)\big).
\end{split}
\end{equation}

\emph{telescope along trajectories under $(\pi_i,\boldsymbol\pi_{-i})$.}
Under fixed opponents $\boldsymbol\pi_{-i}$, $Q_i^{\pi_i^\star}$ satisfies the (unregularized) one-step recursion
\[
Q_i^{\pi_i^\star}(h_i^t,a_i,t)
=\mathbb E\big[R_i(s^t,\mathbf a^t)+\gamma\,V_i^{\pi_i^\star}(h_i^{t+1},t+1)\,\big|\,h_i^t,\,a_i^t=a_i,\,\pi_i^\star,\,\boldsymbol\pi_{-i}\big],
\]
which depends on $a_i^t$ and the environment/opponents at time $t$ but not on the \emph{behavior} policy used to sample $a_i^t$.
Therefore, taking $a_i^t\sim\pi_i(\cdot\mid h_i^t,t)$ and then unconditional expectation under $(\pi_i,\boldsymbol\pi_{-i})$ yields
\[
\mathbb E\Big[\mathbb E_{a_i\sim\pi_i(\cdot\mid h_i^t,t)}Q_i^{\pi_i^\star}(h_i^t,a_i,t)\Big]
=
\mathbb E\big[R_i(s^t,\mathbf a^t)+\gamma\,V_i^{\pi_i^\star}(h_i^{t+1},t+1)\big],
\]
where expectations are under $(\pi_i,\boldsymbol\pi_{-i})$.

Substitute this into Eq.~\eqref{eq:elbo_pointwise}, multiply by $\gamma^t$, and sum over $t\ge 0$.
The value terms telescope:
\[
\sum_{t\ge 0}\gamma^t\mathbb E\big[V_i^{\pi_i^\star}(h_i^t,t)\big]
-\sum_{t\ge 0}\gamma^{t+1}\mathbb E\big[V_i^{\pi_i^\star}(h_i^{t+1},t+1)\big]
=\mathbb E\big[V_i^{\pi_i^\star}(h_i^0,0)\big].
\]
Hence
\begin{align*}
\mathbb E\big[V_i^{\pi_i^\star}(h_i^0,0)\big]
&=
\sum_{t\ge 0}\gamma^t
\mathbb E\big[R_i(s^t,\mathbf a^t)+\alpha_{i,t}(h_i^t)\mathcal H(\pi_i(\cdot\mid h_i^t,t))\big]\\
&\qquad
+\sum_{t\ge 0}\gamma^t
\mathbb E\big[\alpha_{i,t}(h_i^t)\mathrm{KL}(\pi_i(\cdot\mid h_i^t,t)\|\pi_i^\star(\cdot\mid h_i^t,t))\big].
\end{align*}
The first sum equals $J_i^{\boldsymbol\alpha}(\pi_i,\boldsymbol\pi_{-i})$ by definition.
Also $\mathbb E[V_i^{\pi_i^\star}(h_i^0,0)]=J_i^{\boldsymbol\alpha}(\pi_i^\star,\boldsymbol\pi_{-i})$.
Rearranging and rewriting the discounted sum as an expectation under
$\mathsf H_{i,\mathrm{disc}}^{(\pi_i,\boldsymbol\pi_{-i})}$ yields Eq.~\eqref{eq:elbo_global_correct}.

\emph{Consequences.}
The RHS in Eq.~\eqref{eq:elbo_global_correct} is nonnegative and equals $0$ iff
$\pi_i(\cdot\mid h_i^t,t)=\pi_i^\star(\cdot\mid h_i^t,t)$ for
$\mathsf H_{i,\mathrm{disc}}^{(\pi_i,\boldsymbol\pi_{-i})}$-a.e.\ information state,
so $\pi_i^\star$ is the unique maximizer.
Finally, a joint profile is a Nash equilibrium of the regularized game iff each player is at its unique
regularized best response, which is equivalent to being a joint fixed point of the logit map $\mathcal B^{\boldsymbol\alpha}$.
\end{proof}

\subsection{Soft-value factorization and weighted advantages}
\label{app:soft_values_proof}

\begin{proposition}[Soft-value factorization]
\label{prop:soft_value_factorization}
Assume a linear monotone mixer of the form
\[
Q_{\mathrm{tot}}(s^t,\mathbf a^t,t)
= \sum_{i=1}^N w_i(s^t)\,Q_i(b_i^t,a_i^t,t)+b(s^t),
\qquad w_i(s^t)\ge 0,
\]
and a scalar temperature $\alpha_t>0$. Define the exact total soft value
\[
V_{\mathrm{tot}}^{\mathrm{exact}}(s^t,t)
:= \alpha_t \log \sum_{\mathbf a^t}\exp\big(Q_{\mathrm{tot}}(s^t,\mathbf a^t,t)/\alpha_t\big).
\]
Then it factorizes as
\begin{equation}
\label{eq:diceft_soft_values}
V_{\mathrm{tot}}^{\mathrm{exact}}(s^t,t)
= b(s^t) + \alpha_t \sum_{i=1}^N \log \sum_{a_i\in\mathcal A_i}
\exp\Big(\tfrac{w_i(s^t)}{\alpha_t}\,Q_i(b_i^t,a_i,t)\Big).
\end{equation}
This is the algebraic identity instantiated as Eq.~\eqref{eq:diceft_soft_values} in Sec.~\ref{subsec:dice_ft}
(with $a_i$ replaced by vocabulary tokens and $t$ replaced by token positions $\ell$).
\end{proposition}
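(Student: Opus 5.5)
The plan is a direct algebraic computation exploiting that the joint action space is a Cartesian product $\mathcal A=\prod_i\mathcal A_i$ and that $Q_{\mathrm{tot}}$ is affine and additively separable across agents once $s^t$ (hence $w_i(s^t)$, $b(s^t)$, and the beliefs $b_i^t$) is fixed. No earlier result is needed beyond the definition of the linear monotone mixer. Nonnegativity $w_i\ge 0$ plays no role in the identity itself; only linearity and separability matter.

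First, substitute the mixer into the exponent. This gives
\[
\exp\!\big(Q_{\mathrm{tot}}(s^t,\mathbf a^t,t)/\alpha_t\big)=e^{b(s^t)/\alpha_t}\prod_{i=1}^N \exp\!\Big(\tfrac{w_i(s^t)}{\alpha_t}Q_i(b_i^t,a_i^t,t)\Big).
\]
Second, sum over $\mathbf a^t\in\prod_i\mathcal A_i$. Because the $i$-th factor depends only on $a_i^t$, the sum of products factorizes into the product of sums:
\[
\sum_{\mathbf a^t}\prod_i f_i(a_i^t)=\prod_i\sum_{a_i\in\mathcal A_i}f_i(a_i).
\]
This is the distributive law on finite sets, valid under (A1). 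Third, take $\alpha_t\log$ of the result. The logarithm of a product becomes a sum, and the constant factor contributes $\alpha_t\cdot b(s^t)/\alpha_t=b(s^t)$. That yields Eq.~\eqref{eq:diceft_soft_values} exactly.

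The main point to check is that every quantity treated as constant in the sum over $\mathbf a^t$ really is independent of the joint action: $w_i(s^t)$, $b(s^t)$, $\alpha_t$, and the belief embeddings $b_i^t$. Each $b_i^t$ is a function of $h_i^t$, which contains actions only up to $t-1$, so this holds. A secondary point is that the temperature must be a common scalar $\alpha_t$ shared across agents, so that it can be pulled out uniformly. The per-agent factor then carries the effective temperature $\alpha_t/w_i(s^t)$.

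I would add one remark: if $w_i(s^t)=0$, the corresponding factor equals $|\mathcal A_i|$, and the formula remains valid.
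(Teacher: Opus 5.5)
Your proof is correct and matches the paper's: substitute the linear mixer into the exponent, factor out $e^{b(s^t)/\alpha_t}$, use the distributive law over the product set $\prod_i\mathcal A_i$ to turn the joint sum into a product of per-agent sums, and apply $\alpha_t\log$. Your side remarks are accurate details the paper leaves implicit: $w_i\ge 0$ is not needed, the beliefs $b_i^t$ do not depend on the current joint action, and the identity still holds when $w_i(s^t)=0$.
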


\begin{proof}
By definition,
\[
\frac{Q_{\mathrm{tot}}(s^t,\mathbf a^t,t)}{\alpha_t}
= \frac{b(s^t)}{\alpha_t} + \sum_{i=1}^N \frac{w_i(s^t)}{\alpha_t}\,Q_i(b_i^t,a_i^t,t).
\]
Hence
\begin{align*}
\sum_{\mathbf a^t}\exp\!\Big(\frac{Q_{\mathrm{tot}}(s^t,\mathbf a^t,t)}{\alpha_t}\Big)
&= \exp\!\big(b(s^t)/\alpha_t\big)\,
\sum_{a_1}\cdots\sum_{a_N}
\exp\!\Big(\sum_{i=1}^N \tfrac{w_i(s^t)}{\alpha_t}\,Q_i(b_i^t,a_i^t,t)\Big)\\
&= \exp\!\big(b(s^t)/\alpha_t\big)\,
\prod_{i=1}^N\sum_{a_i}\exp\!\Big(\tfrac{w_i(s^t)}{\alpha_t}\,Q_i(b_i^t,a_i,t)\Big),
\end{align*}
since the exponent separates across $i$. Taking $\alpha_t\log(\cdot)$ yields the claim.
\end{proof}

\begin{lemma}[Weighted local soft values and soft advantages]
\label{lem:weighted_advantage}
Assume $\alpha_t>0$ and $w_i(s^t)>0$ for the indices $i$ under consideration.\footnote{
If $w_i(s^t)=0$ then the $i$-th term does not contribute to $Q_{\mathrm{tot}}$; one can define $w_i(s^t)V_i^{\mathrm{soft}}:=\alpha_t\log\sum_{a_i}\exp((w_i/\alpha_t)Q_i)$ and all identities remain valid.}
Define the local soft value
\[
V_i^{\mathrm{soft}}(s^t,b_i^t,t)
:=\frac{\alpha_t}{w_i(s^t)}\log\sum_{a_i}\exp\!\Big(\tfrac{w_i(s^t)}{\alpha_t}\,Q_i(b_i^t,a_i,t)\Big),
\]
and the corresponding weighted soft advantage
\[
A_i^{(w)}(s^t,b_i^t,a_i^t,t)
:=w_i(s^t)\Big(Q_i(b_i^t,a_i^t,t)-V_i^{\mathrm{soft}}(s^t,b_i^t,t)\Big).
\]
Then:
\begin{enumerate}[leftmargin=1.5em,itemsep=2pt]
\item[(i)] $V_{\mathrm{tot}}^{\mathrm{exact}}(s^t,t)
=b(s^t)+\sum_{i=1}^N w_i(s^t)V_i^{\mathrm{soft}}(s^t,b_i^t,t)$.
\item[(ii)] For the softmax policy
$\pi_i^{\mathrm{soft}}(a_i\mid b_i^t,t)\propto\exp(\tfrac{w_i(s^t)}{\alpha_t} Q_i(b_i^t,a_i,t))$,
the advantage admits the pointwise identity
\[
A_i^{(w)}(s^t,b_i^t,a_i^t,t)
=\alpha_t\,\log \pi_i^{\mathrm{soft}}(a_i^t\mid b_i^t,t),
\]
and therefore
\[
\mathbb E_{a_i\sim\pi_i^{\mathrm{soft}}}\!\big[A_i^{(w)}(s^t,b_i^t,a_i,t)\big]
=-\alpha_t\,\mathcal H\big(\pi_i^{\mathrm{soft}}(\cdot\mid b_i^t,t)\big).
\]
\end{enumerate}
\end{lemma}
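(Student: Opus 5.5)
The plan is to prove both parts by direct algebra, relying on the factorization already established in Proposition~\ref{prop:soft_value_factorization}; no limiting or analytic argument is needed.

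\emph{Part (i).} Start from Eq.~\eqref{eq:diceft_soft_values}. By the definition of $V_i^{\mathrm{soft}}$, each summand $\alpha_t\log\sum_{a_i}\exp\!\big(\tfrac{w_i(s^t)}{\alpha_t}Q_i(b_i^t,a_i,t)\big)$ equals $w_i(s^t)\,V_i^{\mathrm{soft}}(s^t,b_i^t,t)$: multiplying the definition by $w_i(s^t)>0$ cancels the prefactor $\alpha_t/w_i(s^t)$. Substituting term by term gives (i).

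If some $w_i(s^t)=0$, adopt the footnote convention. Then $w_iV_i^{\mathrm{soft}}$ is defined directly as the log-sum-exp term, which equals $\alpha_t\log|\mathcal A_i|$. This is consistent with the factorization, since the corresponding factor in the product is $|\mathcal A_i|$.

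\emph{Part (ii).} Write $Z_i:=\sum_{a_i}\exp\!\big(\tfrac{w_i}{\alpha_t}Q_i(b_i^t,a_i,t)\big)$. By definition of the softmax,
\[
\log\pi_i^{\mathrm{soft}}(a_i^t\mid b_i^t,t)=\tfrac{w_i}{\alpha_t}Q_i(b_i^t,a_i^t,t)-\log Z_i .
\]
Multiplying by $\alpha_t$ gives $w_iQ_i(b_i^t,a_i^t,t)-\alpha_t\log Z_i$. The second term is exactly $w_iV_i^{\mathrm{soft}}(s^t,b_i^t,t)$, so the right-hand side is $A_i^{(w)}(s^t,b_i^t,a_i^t,t)$. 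This proves the pointwise identity.

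Taking the expectation of this identity over $a_i\sim\pi_i^{\mathrm{soft}}(\cdot\mid b_i^t,t)$ gives
\[
\alpha_t\sum_{a_i}\pi_i^{\mathrm{soft}}(a_i\mid b_i^t,t)\log\pi_i^{\mathrm{soft}}(a_i\mid b_i^t,t)=-\alpha_t\,\mathcal H\big(\pi_i^{\mathrm{soft}}(\cdot\mid b_i^t,t)\big),
\]
using the definition of Shannon entropy. The softmax has full support on the finite set $\mathcal A_i$, so every logarithm is finite.

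The only step needing care is the $w_i=0$ edge case in (ii). There $\pi_i^{\mathrm{soft}}$ is uniform and the weighted advantage must be read through the same footnote convention, with $A_i^{(w)}:=w_iQ_i-w_iV_i^{\mathrm{soft}}=-\alpha_t\log|\mathcal A_i|$. This equals $\alpha_t\log(1/|\mathcal A_i|)$, so the identity still holds. With that case checked, the rest of the argument is routine bookkeeping.
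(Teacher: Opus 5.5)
Your proposal is correct and follows essentially the same route as the paper. Part (i) comes from substituting the definition of $V_i^{\mathrm{soft}}$ into the factorization of Proposition~\ref{prop:soft_value_factorization}, and part (ii) from taking $\alpha_t\log$ of the softmax, identifying $\alpha_t\log Z_i=w_iV_i^{\mathrm{soft}}$, and averaging to obtain $-\alpha_t\mathcal H$. Your explicit check of the $w_i=0$ case (uniform policy, factor $|\mathcal A_i|$, advantage $-\alpha_t\log|\mathcal A_i|$) is a small addition the paper only asserts in its footnote.
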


\begin{proof}
(i) follows by substituting the definition of $V_i^{\mathrm{soft}}$ into Prop.~\ref{prop:soft_value_factorization}.

(ii) Let $Z_i:=\sum_{a_i}\exp(\tfrac{w_i}{\alpha_t} Q_i(b_i^t,a_i,t))$. Then
\[
\log \pi_i^{\mathrm{soft}}(a_i\mid b_i^t,t)=\tfrac{w_i(s^t)}{\alpha_t} Q_i(b_i^t,a_i,t)-\log Z_i.
\]
Multiplying by $\alpha_t$ gives
\[
\alpha_t\log \pi_i^{\mathrm{soft}}(a_i\mid b_i^t,t)
= w_i(s^t)Q_i(b_i^t,a_i,t)-\alpha_t\log Z_i
= w_i(s^t)\big(Q_i(b_i^t,a_i,t)-V_i^{\mathrm{soft}}(s^t,b_i^t,t)\big),
\]
which is exactly $A_i^{(w)}$. Taking expectation under $\pi_i^{\mathrm{soft}}$ yields
$\mathbb E[\log\pi_i^{\mathrm{soft}}]=-\mathcal H(\pi_i^{\mathrm{soft}})$ and hence the last identity.
\end{proof}

\subsection{Monotone progress for KL--mirror steps (Thm.~\ref{thm:mono_improve})}
\label{app:mono_improve_proof}

We provide a clean mirror-ascent progress bound in KL geometry. 

\begin{theorem}[Monotone progress of KL--mirror ascent]
\label{thm:mono_improve}
Let $J:\Pi\to\mathbb R$ be differentiable and $L_J$-smooth in the $\|\cdot\|_{1,2;w}$ norm, i.e., for all
$\boldsymbol\pi,\boldsymbol\pi'\in\Pi$,
\[
J(\boldsymbol\pi')\ \ge\
J(\boldsymbol\pi)+\langle \nabla J(\boldsymbol\pi),\boldsymbol\pi'-\boldsymbol\pi\rangle
-\frac{L_J}{2}\|\boldsymbol\pi'-\boldsymbol\pi\|_{1,2;w}^2.
\]
Assume the KL--Bregman divergence $D_\Psi$ (Eq.~\eqref{eq:weighted_bregman}) satisfies the weighted Pinsker lower bound
$D_\Psi(\boldsymbol\pi,\boldsymbol\pi')\ge \frac{\alpha_{\min}}{2}\|\boldsymbol\pi-\boldsymbol\pi'\|_{1,2;w}^2$
(as in Lem.~\ref{lem:weighted_pinsker}).
Consider the KL--mirror ascent step
\begin{equation}
\label{eq:mirror_update_appC}
\boldsymbol\pi^{k+1}
=\arg\max_{\tilde{\boldsymbol\pi}\in\Pi}
\Big\{\langle \nabla J(\boldsymbol\pi^k),\tilde{\boldsymbol\pi}-\boldsymbol\pi^k\rangle
-\tfrac{1}{\eta_k}D_\Psi(\tilde{\boldsymbol\pi},\boldsymbol\pi^k)\Big\}.
\end{equation}
If $\eta_k\le \alpha_{\min}/(2L_J)$, then
\[
J(\boldsymbol\pi^{k+1})-J(\boldsymbol\pi^k)\ \ge\ \frac{1}{2\eta_k}\,D_\Psi(\boldsymbol\pi^{k+1},\boldsymbol\pi^k)\ \ge\ 0.
\]
\end{theorem}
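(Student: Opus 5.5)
The plan is the standard three-point argument for Bregman proximal steps, combined with the smoothness lower bound and the weighted Pinsker inequality (Lemma~\ref{lem:weighted_pinsker}). Write $g:=\nabla J(\boldsymbol\pi^k)$ and $D_k:=D_\Psi(\boldsymbol\pi^{k+1},\boldsymbol\pi^k)$.

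The first step extracts a linear-gain lower bound from the optimality of $\boldsymbol\pi^{k+1}$ in Eq.~\eqref{eq:mirror_update_appC}. The objective $\tilde{\boldsymbol\pi}\mapsto\langle g,\tilde{\boldsymbol\pi}-\boldsymbol\pi^k\rangle-\tfrac1{\eta_k}D_\Psi(\tilde{\boldsymbol\pi},\boldsymbol\pi^k)$ is concave. We compare it at $\boldsymbol\pi^{k+1}$ with its value at the feasible point $\boldsymbol\pi^k$, where it equals $0$. This immediately gives
\[
\langle g,\boldsymbol\pi^{k+1}-\boldsymbol\pi^k\rangle\ \ge\ \tfrac1{\eta_k}D_k.
\]
A sharper version is available from the first-order optimality condition together with the three-point identity $D_\Psi(\boldsymbol\pi^k,\boldsymbol\pi^{k+1})+D_\Psi(\boldsymbol\pi^{k+1},\boldsymbol\pi^k)=\langle\nabla\Psi(\boldsymbol\pi^{k+1})-\nabla\Psi(\boldsymbol\pi^k),\boldsymbol\pi^{k+1}-\boldsymbol\pi^k\rangle$. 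That version yields $\langle g,\boldsymbol\pi^{k+1}-\boldsymbol\pi^k\rangle\ge\tfrac1{\eta_k}(D_k+D_\Psi(\boldsymbol\pi^k,\boldsymbol\pi^{k+1}))$, but the crude comparison bound already suffices. One technical point needs care here. Because $\Psi$ is entropic, the minimizer lies in the relative interior of each simplex whenever $\boldsymbol\pi^k$ does, so $D_k$ is finite. In general we should note that if $D_\Psi(\cdot,\boldsymbol\pi^k)=+\infty$ off the support of $\boldsymbol\pi^k$, then the argmax automatically stays in the support, and the comparison remains valid.

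The second step applies the smoothness hypothesis with $\boldsymbol\pi=\boldsymbol\pi^k$ and $\boldsymbol\pi'=\boldsymbol\pi^{k+1}$:
\[
J(\boldsymbol\pi^{k+1})-J(\boldsymbol\pi^k)\ \ge\ \langle g,\boldsymbol\pi^{k+1}-\boldsymbol\pi^k\rangle-\tfrac{L_J}{2}\|\boldsymbol\pi^{k+1}-\boldsymbol\pi^k\|_{1,2;w}^2.
\]
Weighted Pinsker gives $\|\boldsymbol\pi^{k+1}-\boldsymbol\pi^k\|_{1,2;w}^2\le \tfrac{2}{\alpha_{\min}}D_k$, so the quadratic penalty is at most $\tfrac{L_J}{\alpha_{\min}}D_k$. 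Combining this with the first step,
\[
J(\boldsymbol\pi^{k+1})-J(\boldsymbol\pi^k)\ \ge\ \Big(\tfrac1{\eta_k}-\tfrac{L_J}{\alpha_{\min}}\Big)D_k.
\]

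The third step uses the step-size condition. Since $\eta_k\le\alpha_{\min}/(2L_J)$, we have $L_J/\alpha_{\min}\le 1/(2\eta_k)$, so the coefficient above is at least $\tfrac1{2\eta_k}$. This gives the claimed bound, and nonnegativity follows from $D_k\ge0$.

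The only delicate point is the first step: we must make sure the argmax exists and is attained at a point where $D_\Psi$ is finite, and that comparing against the feasible point $\boldsymbol\pi^k$ is legitimate. This holds by compactness of $\Pi$ (Thm.~\ref{thm:topology}), upper semicontinuity of the concave objective, and $D_\Psi(\boldsymbol\pi^k,\boldsymbol\pi^k)=0$. The remaining steps are routine algebra.
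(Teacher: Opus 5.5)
Your proposal is correct and takes essentially the same route as the paper. You compare the mirror objective at $\boldsymbol\pi^{k+1}$ with the feasible point $\boldsymbol\pi^k$ to obtain $\langle \nabla J(\boldsymbol\pi^k),\boldsymbol\pi^{k+1}-\boldsymbol\pi^k\rangle\ge \eta_k^{-1}D_\Psi(\boldsymbol\pi^{k+1},\boldsymbol\pi^k)$, then combine this with the smoothness lower bound, weighted Pinsker, and the step-size condition, exactly as the paper does. Your remarks on attainment of the argmax and finiteness of $D_\Psi$ add rigor the paper omits, and your sharper three-point bound is valid but not needed.
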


\begin{lemma}[Mirror optimality inequality]
\label{lem:mirror_optimality}
For the update Eq.~\eqref{eq:mirror_update_appC},
\[
\langle \nabla J(\boldsymbol\pi^k),\boldsymbol\pi^{k+1}-\boldsymbol\pi^k\rangle
\ge \frac{1}{\eta_k}D_\Psi(\boldsymbol\pi^{k+1},\boldsymbol\pi^k).
\]
\end{lemma}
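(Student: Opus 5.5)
The plan is to compare the value of the mirror objective at its maximizer $\boldsymbol\pi^{k+1}$ with its value at the feasible point $\boldsymbol\pi^k$. This only gives a bound with constant $0$, so I will then sharpen it using the three-point identity for Bregman divergences. Write
\[
\Phi(\tilde{\boldsymbol\pi}):=\langle \nabla J(\boldsymbol\pi^k),\tilde{\boldsymbol\pi}-\boldsymbol\pi^k\rangle-\tfrac{1}{\eta_k}D_\Psi(\tilde{\boldsymbol\pi},\boldsymbol\pi^k).
\]
Note that $\Phi$ is concave on $\Pi$, since the first term is linear in $\tilde{\boldsymbol\pi}$ and $D_\Psi(\cdot,\boldsymbol\pi^k)$ is $\Psi$ minus an affine function, with $\Psi$ convex as a weighted sum of negative entropies.

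\textbf{Step 1 (first-order optimality).} Since $\boldsymbol\pi^{k+1}$ maximizes the concave $\Phi$ over the convex set $\Pi$, the first-order condition gives, for every $\boldsymbol\pi\in\Pi$,
\[
\big\langle \nabla J(\boldsymbol\pi^k)-\tfrac{1}{\eta_k}\big(\nabla\Psi(\boldsymbol\pi^{k+1})-\nabla\Psi(\boldsymbol\pi^k)\big),\ \boldsymbol\pi-\boldsymbol\pi^{k+1}\big\rangle\le 0 .
\]
Here I use $\nabla_{\tilde{\boldsymbol\pi}}D_\Psi(\tilde{\boldsymbol\pi},\boldsymbol\pi^k)=\nabla\Psi(\tilde{\boldsymbol\pi})-\nabla\Psi(\boldsymbol\pi^k)$.

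\textbf{Step 2 (three-point identity at $\boldsymbol\pi=\boldsymbol\pi^k$).} I will use the three-point identity
\[
\langle \nabla\Psi(\boldsymbol\pi^{k+1})-\nabla\Psi(\boldsymbol\pi^k),\ \boldsymbol\pi^{k+1}-\boldsymbol\pi^k\rangle = D_\Psi(\boldsymbol\pi^{k+1},\boldsymbol\pi^k)+D_\Psi(\boldsymbol\pi^k,\boldsymbol\pi^{k+1}).
\]
Substituting $\boldsymbol\pi=\boldsymbol\pi^k$ into Step 1 and rearranging gives
\[
\langle \nabla J(\boldsymbol\pi^k),\boldsymbol\pi^{k+1}-\boldsymbol\pi^k\rangle\ \ge\ \tfrac{1}{\eta_k}\big(D_\Psi(\boldsymbol\pi^{k+1},\boldsymbol\pi^k)+D_\Psi(\boldsymbol\pi^k,\boldsymbol\pi^{k+1})\big)\ \ge\ \tfrac{1}{\eta_k}D_\Psi(\boldsymbol\pi^{k+1},\boldsymbol\pi^k),
\]
where the last step uses nonnegativity of KL. This is the claimed inequality, in fact with a factor-two slack available.

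\textbf{Main obstacle (interiority).} The delicate point is justifying the gradient-based optimality condition, because $\Psi$ has unbounded gradient at the simplex boundary. I would argue that the KL-prox step has the closed form
\[
\pi_i^{k+1}(\cdot\mid h)\propto \pi_i^k(\cdot\mid h)\exp\!\big(\eta_k\,[\nabla J]_{i,h}/(\alpha_{i,t}w_i)\big),
\]
so $\boldsymbol\pi^{k+1}$ has full support whenever $\boldsymbol\pi^k$ does. With full support, $\nabla\Psi(\boldsymbol\pi^{k+1})$ is finite and the variational inequality holds with $\nabla\Psi$ taken modulo constants on each simplex, which is harmless because $\boldsymbol\pi-\boldsymbol\pi^{k+1}$ sums to zero blockwise.

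\textbf{Fallback.} If $\boldsymbol\pi^k$ lies on the boundary, I would instead work on the face spanned by its support, where $D_\Psi(\cdot,\boldsymbol\pi^k)$ is finite only on that face and the same argument applies. As a cruder alternative, the zeroth-order comparison $\Phi(\boldsymbol\pi^{k+1})\ge\Phi(\boldsymbol\pi^k)=0$ already gives $\langle\nabla J(\boldsymbol\pi^k),\boldsymbol\pi^{k+1}-\boldsymbol\pi^k\rangle\ge \frac{1}{\eta_k}D_\Psi(\boldsymbol\pi^{k+1},\boldsymbol\pi^k)$ with no regularity assumptions at all. I would in fact lead with this comparison as the primary proof, since it yields exactly the stated constant, and keep Steps 1--2 as a remark that a sharper bound holds.
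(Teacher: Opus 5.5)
Your declared primary argument, comparing the prox objective at the maximizer $\boldsymbol\pi^{k+1}$ with its value $0$ at the feasible point $\boldsymbol\pi^k$, is exactly the paper's proof, and it is complete as stated without any interiority assumption. Two corrections: your opening claim that this comparison ``only gives a bound with constant $0$'' is wrong, since it yields precisely the $\tfrac{1}{\eta_k}D_\Psi(\boldsymbol\pi^{k+1},\boldsymbol\pi^k)$ term; and the first-order route in Steps 1--2, while correct under full support, gives the symmetrized divergence $D_\Psi(\boldsymbol\pi^{k+1},\boldsymbol\pi^k)+D_\Psi(\boldsymbol\pi^k,\boldsymbol\pi^{k+1})$, not literally a factor-two improvement.
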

\begin{proof}
Since $\tilde{\boldsymbol\pi}=\boldsymbol\pi^k$ is feasible, optimality of $\boldsymbol\pi^{k+1}$ implies
\[
\langle \nabla J(\boldsymbol\pi^k),\boldsymbol\pi^{k+1}-\boldsymbol\pi^k\rangle
-\tfrac{1}{\eta_k}D_\Psi(\boldsymbol\pi^{k+1},\boldsymbol\pi^k)\ \ge\ 0,
\]
which yields the claim.
\end{proof}

\begin{proof}[Proof of Thm.~\ref{thm:mono_improve}]
By $L_J$-smoothness applied with $\boldsymbol\pi=\boldsymbol\pi^k$ and $\boldsymbol\pi'=\boldsymbol\pi^{k+1}$,
\[
J(\boldsymbol\pi^{k+1})-J(\boldsymbol\pi^k)
\ge \langle \nabla J(\boldsymbol\pi^k),\boldsymbol\pi^{k+1}-\boldsymbol\pi^k\rangle
-\frac{L_J}{2}\|\boldsymbol\pi^{k+1}-\boldsymbol\pi^k\|_{1,2;w}^2.
\]
Use Lem.~\ref{lem:mirror_optimality} and the Pinsker lower bound to obtain
\[
J(\boldsymbol\pi^{k+1})-J(\boldsymbol\pi^k)
\ge \frac{1}{\eta_k}D_\Psi(\boldsymbol\pi^{k+1},\boldsymbol\pi^k)
-\frac{L_J}{\alpha_{\min}}D_\Psi(\boldsymbol\pi^{k+1},\boldsymbol\pi^k)
=\Big(\frac{1}{\eta_k}-\frac{L_J}{\alpha_{\min}}\Big)D_\Psi(\boldsymbol\pi^{k+1},\boldsymbol\pi^k).
\]
If $\eta_k\le \alpha_{\min}/(2L_J)$, then
$\frac{1}{\eta_k}-\frac{L_J}{\alpha_{\min}}\ge \frac{1}{2\eta_k}$,
hence
\[
J(\boldsymbol\pi^{k+1})-J(\boldsymbol\pi^k)\ \ge\ \frac{1}{2\eta_k}\,D_\Psi(\boldsymbol\pi^{k+1},\boldsymbol\pi^k)\ \ge\ 0.
\]
\end{proof}

\subsection{Linear convergence in KL geometry (Thm.~\ref{thm:linear_rate})}
\label{app:linear_rate_proof}

We now give a self-contained linear-rate bound for KL--mirror dynamics in the strongly monotone regime.
The argument is standard but we spell out the key inequalities and constants needed for a publishable statement.

\begin{theorem}[Linear convergence of KL--mirror dynamics]
\label{thm:linear_rate}
Let $F:\Pi\to\mathcal X$ be a field and suppose the variational inequality $\mathrm{VI}(F,\Pi)$ has a unique solution
$\boldsymbol\pi^\star$ (e.g., when $F$ is strongly monotone).
Assume:
\begin{enumerate}[leftmargin=1.25em,itemsep=2pt]
\item[(i)] (\emph{Strong monotonicity}) There exists $\mu>0$ such that for all $\boldsymbol\pi,\boldsymbol\pi'\in\Pi$,
\[
\langle F(\boldsymbol\pi)-F(\boldsymbol\pi'),\boldsymbol\pi-\boldsymbol\pi'\rangle
\ge \mu\|\boldsymbol\pi-\boldsymbol\pi'\|_{1,2;w}^2.
\]
\item[(ii)] (\emph{Lipschitzness}) There exists $L_F<\infty$ such that
\[
\|F(\boldsymbol\pi)-F(\boldsymbol\pi')\|_{\ast}\le L_F\|\boldsymbol\pi-\boldsymbol\pi'\|_{1,2;w},
\]
where $\|\cdot\|_\ast$ is the dual norm of $\|\cdot\|_{1,2;w}$.
\item[(iii)] (\emph{Local quadratic KL geometry}) There exists
$L_\Psi<\infty$ such that for all
$\boldsymbol\pi,\boldsymbol\pi'$ in a neighborhood containing
$\boldsymbol\pi^\star$ and the iterates,
\[
\frac{\alpha_{\min}}{2}\|\boldsymbol\pi-\boldsymbol\pi'\|_{1,2;w}^2
\ \le\ D_\Psi(\boldsymbol\pi,\boldsymbol\pi')\ \le\ \frac{L_\Psi}{2}\|\boldsymbol\pi-\boldsymbol\pi'\|_{1,2;w}^2.
\]
A sufficient condition is iterate-interiority:
if every local action probability in that region is bounded below by
$\nu_{\min}>0$, then one may take
$L_\Psi = \alpha_{\max}/\nu_{\min}$.
\end{enumerate}
Consider the mirror step for $\mathrm{VI}(F,\Pi)$:
\begin{equation}
\label{eq:vi_mirror_step}
\boldsymbol\pi^{k+1}
=\arg\min_{\tilde{\boldsymbol\pi}\in\Pi}
\Big\{\langle F(\boldsymbol\pi^k),\tilde{\boldsymbol\pi}-\boldsymbol\pi^k\rangle
+\tfrac{1}{\eta}D_\Psi(\tilde{\boldsymbol\pi},\boldsymbol\pi^k)\Big\}.
\end{equation}
If the stepsize satisfies
\[
0<\eta \ \le\ \min\Big\{\frac{\mu\,\alpha_{\min}^2}{L_F^2\,L_\Psi},\ \frac{\alpha_{\min}}{L_F}\Big\},
\]
then the iterates converge linearly in KL geometry:
\[
D_\Psi(\boldsymbol\pi^\star,\boldsymbol\pi^{k+1})
\le \rho\,D_\Psi(\boldsymbol\pi^\star,\boldsymbol\pi^{k}),
\qquad
\rho:=1-\eta\,\frac{\mu}{L_\Psi}\ \in(0,1).
\]
\end{theorem}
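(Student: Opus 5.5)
I will prove Theorem~\ref{thm:linear_rate} with the standard three-point (Bregman) argument for mirror steps on a strongly monotone variational inequality, carried out in the weighted KL geometry. I write $D(\cdot,\cdot)=D_\Psi(\cdot,\cdot)$ and $\|\cdot\|=\|\cdot\|_{1,2;w}$.

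\textbf{Step 1: optimality and three-point identity.} The mirror step \eqref{eq:vi_mirror_step} has first-order optimality condition
\[
\langle \eta F(\boldsymbol\pi^k)+\nabla\Psi(\boldsymbol\pi^{k+1})-\nabla\Psi(\boldsymbol\pi^k),\,\boldsymbol\pi-\boldsymbol\pi^{k+1}\rangle\ge 0
\qquad\text{for all }\boldsymbol\pi\in\Pi.
\]
Set $\boldsymbol\pi=\boldsymbol\pi^\star$ and apply the three-point identity for Bregman divergences. This gives
\[
D(\boldsymbol\pi^\star,\boldsymbol\pi^{k+1})\le D(\boldsymbol\pi^\star,\boldsymbol\pi^k)-D(\boldsymbol\pi^{k+1},\boldsymbol\pi^k)-\eta\langle F(\boldsymbol\pi^k),\boldsymbol\pi^{k+1}-\boldsymbol\pi^\star\rangle.
\]

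\textbf{Step 2: split the field term.} Write
\[
F(\boldsymbol\pi^k)=F(\boldsymbol\pi^{k+1})+\big(F(\boldsymbol\pi^k)-F(\boldsymbol\pi^{k+1})\big).
\]
The $F(\boldsymbol\pi^{k+1})$ piece is handled by the VI condition at $\boldsymbol\pi^\star$, which gives $\langle F(\boldsymbol\pi^\star),\boldsymbol\pi^{k+1}-\boldsymbol\pi^\star\rangle\ge0$. Combined with strong monotonicity (i),
\[
\langle F(\boldsymbol\pi^{k+1}),\boldsymbol\pi^{k+1}-\boldsymbol\pi^\star\rangle\ge\mu\|\boldsymbol\pi^{k+1}-\boldsymbol\pi^\star\|^2.
\]
The difference piece is bounded using Lipschitzness (ii) and Young's inequality:
\[
\eta\,|\langle F(\boldsymbol\pi^k)-F(\boldsymbol\pi^{k+1}),\boldsymbol\pi^{k+1}-\boldsymbol\pi^\star\rangle|\le \eta L_F\|\boldsymbol\pi^k-\boldsymbol\pi^{k+1}\|\,\|\boldsymbol\pi^{k+1}-\boldsymbol\pi^\star\|.
\]
I split this product as
\[
\tfrac{\eta\mu}{2}\|\boldsymbol\pi^{k+1}-\boldsymbol\pi^\star\|^2+\tfrac{\eta L_F^2}{2\mu}\|\boldsymbol\pi^{k+1}-\boldsymbol\pi^k\|^2.
\]
The last term must be absorbed by $D(\boldsymbol\pi^{k+1},\boldsymbol\pi^k)\ge\frac{\alpha_{\min}}2\|\boldsymbol\pi^{k+1}-\boldsymbol\pi^k\|^2$, which comes from the Pinsker lower bound in Lemma~\ref{lem:weighted_pinsker}. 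Absorption requires $\eta L_F^2/\mu\le\alpha_{\min}$. The prescribed stepsize condition $\eta\le \mu\alpha_{\min}^2/(L_F^2L_\Psi)$ implies this, because $L_\Psi\ge\alpha_{\min}$ by (iii), so $\mu\alpha_{\min}^2/(L_F^2L_\Psi)\le \mu\alpha_{\min}/L_F^2$. The extra condition $\eta\le\alpha_{\min}/L_F$ is kept only as slack. After absorption,
\[
D(\boldsymbol\pi^\star,\boldsymbol\pi^{k+1})\le D(\boldsymbol\pi^\star,\boldsymbol\pi^k)-\tfrac{\eta\mu}{2}\|\boldsymbol\pi^{k+1}-\boldsymbol\pi^\star\|^2.
\]

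\textbf{Step 3: convert the norm back to KL.} The local quadratic upper bound in (iii) gives
\[
\|\boldsymbol\pi^{k+1}-\boldsymbol\pi^\star\|^2\ge\tfrac{2}{L_\Psi}D(\boldsymbol\pi^\star,\boldsymbol\pi^{k+1}).
\]
Substituting yields $(1+\eta\mu/L_\Psi)D^{k+1}\le D^k$, so
\[
D^{k+1}\le(1+\eta\mu/L_\Psi)^{-1}D^k\le(1-\eta\mu/L_\Psi+\cdots)D^k.
\]
This gives a contraction at rate $(1+\eta\mu/L_\Psi)^{-1}$, which is sharper than the stated $\rho=1-\eta\mu/L_\Psi$ up to second order but is \emph{not} itself $\le 1-\eta\mu/L_\Psi$, since $(1+x)^{-1}\ge 1-x$. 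So an argument using only the $\tfrac{\eta\mu}{2}$ remainder does not reach the stated $\rho$.

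\textbf{Where the factor of two is lost, and how I would recover it.} The loss comes from spending half of $\mu$ in the Young split. To get exactly $\rho=1-\eta\mu/L_\Psi$, I would keep the full $\mu\|\cdot\|^2$ term and instead pay for the Lipschitz cross-term via $\|\boldsymbol\pi^{k+1}-\boldsymbol\pi^k\|^2\le\frac2{\alpha_{\min}}D(\boldsymbol\pi^{k+1},\boldsymbol\pi^k)$, using the stronger condition $\eta^2L_F^2L_\Psi\le\mu\alpha_{\min}^2$. Alternatively, I would measure strong monotonicity at $\boldsymbol\pi^k$ rather than $\boldsymbol\pi^{k+1}$. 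In that variant the error is $\eta^2\|F(\boldsymbol\pi^k)-F(\boldsymbol\pi^\star)\|_\ast^2/\alpha_{\min}\le\eta^2L_F^2\|\boldsymbol\pi^k-\boldsymbol\pi^\star\|^2/\alpha_{\min}$, and the stepsize condition makes it at most half of $\eta\mu\|\boldsymbol\pi^k-\boldsymbol\pi^\star\|^2$. That yields a contraction of the form $1-\eta\mu/L_\Psi$ directly.

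\textbf{Main obstacle.} The hardest part is this constant bookkeeping: choosing which iterate to evaluate monotonicity at, and checking that the stated stepsize bound closes the recursion with exactly the stated $\rho$. A secondary issue is justifying that the iterates stay in the neighborhood where (iii) holds. I treat this as part of the hypothesis ("a neighborhood containing $\boldsymbol\pi^\star$ and the iterates"). The sufficient condition $L_\Psi=\alpha_{\max}/\nu_{\min}$ follows from $\mathrm{KL}(p\|q)\le\chi^2(p\|q)\le\|p-q\|_1^2/\nu_{\min}$ when $q\ge\nu_{\min}$ entrywise, applied blockwise with weights $\alpha_{i,t}w_i$.
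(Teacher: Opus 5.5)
\paragraph{Where the argument has to change.}
Your Steps~2--3 do not establish the stated rate, and you diagnose this correctly. Once strong monotonicity is applied at $\boldsymbol\pi^{k+1}$, the quadratic upper bound in (iii) must be applied to $D_\Psi(\boldsymbol\pi^\star,\boldsymbol\pi^{k+1})$. That can only yield a factor of the form $(1+y)^{-1}$.

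Your first repair option does not escape this. The cross term $\eta L_F\|\boldsymbol\pi^{k+1}-\boldsymbol\pi^k\|\,\|\boldsymbol\pi^{k+1}-\boldsymbol\pi^\star\|$ is bilinear, so $D_\Psi(\boldsymbol\pi^{k+1},\boldsymbol\pi^k)$ cannot absorb it without spending part of $\mu\|\boldsymbol\pi^{k+1}-\boldsymbol\pi^\star\|^2$. Even with the optimal split you get $D^{k+1}\le D^k-\eta\big(\mu-\frac{\eta L_F^2}{2\alpha_{\min}}\big)\|\boldsymbol\pi^{k+1}-\boldsymbol\pi^\star\|^2$. The conversion still happens at $\boldsymbol\pi^{k+1}$. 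When $L_\Psi$ is close to $\alpha_{\min}$ and $\eta$ is at the top of its allowed range, this is essentially $(1+\eta\mu/L_\Psi)^{-1}$ again, which exceeds $1-\eta\mu/L_\Psi$. Discard that option.

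\paragraph{The working proof.}
The proof is your second option, and it is precisely the paper's argument. Split $\boldsymbol\pi^{k+1}-\boldsymbol\pi^\star=(\boldsymbol\pi^k-\boldsymbol\pi^\star)+(\boldsymbol\pi^{k+1}-\boldsymbol\pi^k)$ inside $\langle F(\boldsymbol\pi^k)-F(\boldsymbol\pi^\star),\boldsymbol\pi^{k+1}-\boldsymbol\pi^\star\rangle$. With $a=\|\boldsymbol\pi^k-\boldsymbol\pi^\star\|_{1,2;w}$ and $b=\|\boldsymbol\pi^{k+1}-\boldsymbol\pi^k\|_{1,2;w}$ this gives
\[
D_\Psi(\boldsymbol\pi^\star,\boldsymbol\pi^{k+1})
\le D_\Psi(\boldsymbol\pi^\star,\boldsymbol\pi^{k})-\eta\mu a^2+\eta L_F ab-\tfrac{\alpha_{\min}}{2}b^2
\le D_\Psi(\boldsymbol\pi^\star,\boldsymbol\pi^{k})-\eta\Big(\mu-\tfrac{\eta L_F^2}{2\alpha_{\min}}\Big)a^2 .
\]
Now the upper bound is applied to $D_\Psi(\boldsymbol\pi^\star,\boldsymbol\pi^{k})\le\frac{L_\Psi}{2}a^2$. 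That is what produces $1-\eta\mu/L_\Psi$ rather than $(1+\eta\mu/L_\Psi)^{-1}$.

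Be careful with the constant. The maximum over $b$ of $\eta L_F ab-\frac{\alpha_{\min}}{2}b^2$ is $\frac{\eta^2L_F^2}{2\alpha_{\min}}a^2$, not $\frac{\eta^2L_F^2}{\alpha_{\min}}a^2$ as you wrote. With your constant, ``at most half of $\eta\mu a^2$'' requires $\eta\le\mu\alpha_{\min}/(2L_F^2)$. The stated stepsize implies this only if $L_\Psi\ge 2\alpha_{\min}$, which can fail. For example, with two actions and homogeneous temperature, Pinsker is asymptotically tight near the uniform distribution, so $L_\Psi$ can be arbitrarily close to $\alpha_{\min}$ on a small neighborhood.

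With the sharp constant you only need $\eta\le\mu\alpha_{\min}/L_F^2$. This follows from $\eta\le\mu\alpha_{\min}^2/(L_F^2L_\Psi)$ and $\alpha_{\min}\le L_\Psi$, which is exactly how the paper closes the recursion. The paper does not use the condition $\eta\le\alpha_{\min}/L_F$ either.

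\paragraph{The sufficient condition.}
Your justification yields $L_\Psi=2\alpha_{\max}/\nu_{\min}$, not the stated $\alpha_{\max}/\nu_{\min}$: the chain $\mathrm{KL}\le\chi^2\le\|p-q\|_1^2/\nu_{\min}$ loses a factor of $2$. You can recover it in either of two ways. Use $\|p-q\|_2^2\le\frac12\|p-q\|_1^2$, which holds because $p-q$ sums to zero. Alternatively, as in the paper's remark, use a second-order Taylor bound on the negative entropy, whose Hessian is at most $1/\nu_{\min}$ coordinatewise along the segment between two $\nu_{\min}$-interior points.
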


\begin{remark}[Why the KL upper quadratic bound is local]
\label{rem:kl_quadratic}
For entropy / KL mirror maps, the lower quadratic bound is global but
the upper quadratic bound is only local on the simplex.
If all policies in the iterate region satisfy
$\pi_i(a\mid h,t)\ge \nu_{\min}>0$ for every relevant coordinate,
then the Hessian of the negative-entropy mirror map is bounded above by
$1/\nu_{\min}$ coordinatewise, yielding
\[
D_\Psi(\boldsymbol\pi,\boldsymbol\pi')
\le \frac{\alpha_{\max}}{2\nu_{\min}}
\|\boldsymbol\pi-\boldsymbol\pi'\|_{1,2;w}^2.
\]
Practically, DICE-FT promotes this interior regime via positive
entropies, KL trust regions, and step rejection.
\end{remark}

\begin{lemma}[Three-point inequality for VI mirror steps]
\label{lem:vi_three_point}
Let $\boldsymbol\pi^{k+1}$ be defined by Eq.~\eqref{eq:vi_mirror_step}. Then for any $\boldsymbol\pi\in\Pi$,
\[
\langle F(\boldsymbol\pi^k),\boldsymbol\pi^{k+1}-\boldsymbol\pi\rangle
\le \tfrac{1}{\eta}\Big(D_\Psi(\boldsymbol\pi,\boldsymbol\pi^k)-D_\Psi(\boldsymbol\pi,\boldsymbol\pi^{k+1})-D_\Psi(\boldsymbol\pi^{k+1},\boldsymbol\pi^k)\Big).
\]
\end{lemma}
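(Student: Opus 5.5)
The plan is the standard Bregman three-point argument applied to the first-order optimality condition of the strictly convex subproblem in Eq.~\eqref{eq:vi_mirror_step}. Write $\Phi(\tilde{\boldsymbol\pi}):=\langle F(\boldsymbol\pi^k),\tilde{\boldsymbol\pi}-\boldsymbol\pi^k\rangle+\tfrac1\eta D_\Psi(\tilde{\boldsymbol\pi},\boldsymbol\pi^k)$. Since $\Psi$ is a positively weighted negative entropy, $\Phi$ is convex on $\Pi$, and $D_\Psi(\cdot,\boldsymbol\pi^k)$ has gradient $\nabla\Psi(\cdot)-\nabla\Psi(\boldsymbol\pi^k)$.

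First I will show that the minimizer $\boldsymbol\pi^{k+1}$ lies in the relative interior of every local simplex. This holds because the entropy gradient blows up at the boundary; concretely, the minimizer has the closed form $\pi_i^{k+1}(a\mid h,t)\propto \pi_i^k(a\mid h,t)\exp(-\eta F_{i,h,t,a}(\boldsymbol\pi^k)/(w_i\alpha_{i,t}))$. The variational optimality condition then reads
\[
\big\langle F(\boldsymbol\pi^k)+\tfrac1\eta\big(\nabla\Psi(\boldsymbol\pi^{k+1})-\nabla\Psi(\boldsymbol\pi^k)\big),\ \boldsymbol\pi-\boldsymbol\pi^{k+1}\big\rangle\ \ge\ 0\qquad\forall\boldsymbol\pi\in\Pi .
\]
The constant shifts in the gradient (the $+\mathbf 1$ terms) are harmless, since $\boldsymbol\pi-\boldsymbol\pi^{k+1}$ sums to zero on each simplex block.

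Next I will invoke the three-point identity for Bregman divergences,
\[
\big\langle \nabla\Psi(\boldsymbol\pi^{k+1})-\nabla\Psi(\boldsymbol\pi^k),\ \boldsymbol\pi-\boldsymbol\pi^{k+1}\big\rangle
= D_\Psi(\boldsymbol\pi,\boldsymbol\pi^k)-D_\Psi(\boldsymbol\pi,\boldsymbol\pi^{k+1})-D_\Psi(\boldsymbol\pi^{k+1},\boldsymbol\pi^k).
\]
This identity is verified directly by expanding the definition in Eq.~\eqref{eq:weighted_bregman}, since the $\Psi$ terms cancel. Substituting it into the optimality condition, multiplying by $\eta$, and rearranging gives exactly the stated inequality for $\langle F(\boldsymbol\pi^k),\boldsymbol\pi^{k+1}-\boldsymbol\pi\rangle$.

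The only delicate step is handling the boundary. The comparison point $\boldsymbol\pi$ may have zero coordinates, and the information-state index set may be countably infinite, so finiteness of the sums needs checking. I would first argue via interiority of $\boldsymbol\pi^{k+1}$ and $\boldsymbol\pi^k$ (taking $\boldsymbol\pi^0$ interior, so by induction all iterates are interior) that the gradients at the iterates are finite. I would then note that the identity only uses $\nabla\Psi$ at $\boldsymbol\pi^k$ and $\boldsymbol\pi^{k+1}$, with $0\log 0=0$ for $\boldsymbol\pi$. If $D_\Psi(\boldsymbol\pi,\boldsymbol\pi^k)=+\infty$, the inequality is trivial. For the infinite index set, I would apply the identity blockwise and sum, using weights summing to one and bounded $\alpha$ to justify the interchange.
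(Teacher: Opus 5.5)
Your proposal is correct and follows the same route as the paper, which justifies the lemma in one line by citing the first-order optimality condition of the mirror subproblem together with the Bregman three-point identity; your extra care about interiority of $\boldsymbol\pi^{k+1}$ (via the multiplicative closed form), the harmless $+\mathbf 1$ shifts, and the countable index set fills in details the paper leaves implicit. One point to phrase carefully: when $D_\Psi(\boldsymbol\pi,\boldsymbol\pi^k)=+\infty$, the claim is trivial only in the rearranged form $\eta\langle F(\boldsymbol\pi^k),\boldsymbol\pi^{k+1}-\boldsymbol\pi\rangle+D_\Psi(\boldsymbol\pi,\boldsymbol\pi^{k+1})+D_\Psi(\boldsymbol\pi^{k+1},\boldsymbol\pi^k)\le D_\Psi(\boldsymbol\pi,\boldsymbol\pi^k)$, because as stated the right-hand side can then be of the form $\infty-\infty$.
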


\begin{proof}
This is the standard Bregman three-point inequality for proximal/mirror steps.
It follows from the first-order optimality conditions of Eq.~\eqref{eq:vi_mirror_step} and the Bregman identity associated with $\Psi$.
\end{proof}

\begin{proof}[Proof of Thm.~\ref{thm:linear_rate}]
Apply Lem.~\ref{lem:vi_three_point} with $\boldsymbol\pi=\boldsymbol\pi^\star$:
\begin{equation}
\label{eq:rate_start}
\eta\,\langle F(\boldsymbol\pi^k),\boldsymbol\pi^{k+1}-\boldsymbol\pi^\star\rangle
\le D_\Psi(\boldsymbol\pi^\star,\boldsymbol\pi^k)-D_\Psi(\boldsymbol\pi^\star,\boldsymbol\pi^{k+1})-D_\Psi(\boldsymbol\pi^{k+1},\boldsymbol\pi^k).
\end{equation}
Since $\boldsymbol\pi^\star$ solves $\mathrm{VI}(F,\Pi)$, we have $\langle F(\boldsymbol\pi^\star),\boldsymbol\pi^{k+1}-\boldsymbol\pi^\star\rangle\ge 0$.
Therefore,
\[
\langle F(\boldsymbol\pi^k),\boldsymbol\pi^{k+1}-\boldsymbol\pi^\star\rangle
\ge \langle F(\boldsymbol\pi^k)-F(\boldsymbol\pi^\star),\boldsymbol\pi^{k+1}-\boldsymbol\pi^\star\rangle.
\]
Decompose:
\begin{align*}
\langle F(\boldsymbol\pi^k)-F(\boldsymbol\pi^\star),\boldsymbol\pi^{k+1}-\boldsymbol\pi^\star\rangle
&=\langle F(\boldsymbol\pi^k)-F(\boldsymbol\pi^\star),\boldsymbol\pi^k-\boldsymbol\pi^\star\rangle
+\langle F(\boldsymbol\pi^k)-F(\boldsymbol\pi^\star),\boldsymbol\pi^{k+1}-\boldsymbol\pi^k\rangle\\
&\ge \mu\|\boldsymbol\pi^k-\boldsymbol\pi^\star\|_{1,2;w}^2
-\|F(\boldsymbol\pi^k)-F(\boldsymbol\pi^\star)\|_\ast\,\|\boldsymbol\pi^{k+1}-\boldsymbol\pi^k\|_{1,2;w}\\
&\ge \mu\|\boldsymbol\pi^k-\boldsymbol\pi^\star\|_{1,2;w}^2
- L_F\|\boldsymbol\pi^k-\boldsymbol\pi^\star\|_{1,2;w}\,\|\boldsymbol\pi^{k+1}-\boldsymbol\pi^k\|_{1,2;w},
\end{align*}
using strong monotonicity and Lipschitzness plus Hölder's inequality.

Insert this lower bound into Eq.~\eqref{eq:rate_start} and use the lower quadratic bound on $D_\Psi(\boldsymbol\pi^{k+1},\boldsymbol\pi^k)$:
\[
D_\Psi(\boldsymbol\pi^{k+1},\boldsymbol\pi^k)\ \ge\ \frac{\alpha_{\min}}{2}\|\boldsymbol\pi^{k+1}-\boldsymbol\pi^k\|_{1,2;w}^2.
\]
We obtain
\begin{align*}
D_\Psi(\boldsymbol\pi^\star,\boldsymbol\pi^{k+1})
&\le D_\Psi(\boldsymbol\pi^\star,\boldsymbol\pi^k)
-\eta\mu\|\boldsymbol\pi^k-\boldsymbol\pi^\star\|_{1,2;w}^2 \\
&\quad +\eta L_F\|\boldsymbol\pi^k-\boldsymbol\pi^\star\|_{1,2;w}\,\|\boldsymbol\pi^{k+1}-\boldsymbol\pi^k\|_{1,2;w} \\
&\quad -\frac{\alpha_{\min}}{2}\|\boldsymbol\pi^{k+1}-\boldsymbol\pi^k\|_{1,2;w}^2.
\end{align*}
For fixed $a:=\|\boldsymbol\pi^k-\boldsymbol\pi^\star\|_{1,2;w}$, the expression
$\eta L_F a\,b-\frac{\alpha_{\min}}{2}b^2$ in $b:=\|\boldsymbol\pi^{k+1}-\boldsymbol\pi^k\|_{1,2;w}$ is maximized at
$b^\star=\eta L_F a/\alpha_{\min}$ with maximum value $\frac{\eta^2L_F^2}{2\alpha_{\min}}a^2$.
Thus,
\[
D_\Psi(\boldsymbol\pi^\star,\boldsymbol\pi^{k+1})
\le D_\Psi(\boldsymbol\pi^\star,\boldsymbol\pi^k)
-\eta\Big(\mu-\frac{\eta L_F^2}{2\alpha_{\min}}\Big)\|\boldsymbol\pi^k-\boldsymbol\pi^\star\|_{1,2;w}^2.
\]
Using the upper quadratic bound $D_\Psi(\boldsymbol\pi^\star,\boldsymbol\pi^k)\le \frac{L_\Psi}{2}\|\boldsymbol\pi^k-\boldsymbol\pi^\star\|_{1,2;w}^2$
implies
\[
\|\boldsymbol\pi^k-\boldsymbol\pi^\star\|_{1,2;w}^2\ \ge\ \frac{2}{L_\Psi}\,D_\Psi(\boldsymbol\pi^\star,\boldsymbol\pi^k).
\]
Therefore,
\[
D_\Psi(\boldsymbol\pi^\star,\boldsymbol\pi^{k+1})
\le \Big(1-\eta\,\frac{2}{L_\Psi}\Big(\mu-\frac{\eta L_F^2}{2\alpha_{\min}}\Big)\Big)\,D_\Psi(\boldsymbol\pi^\star,\boldsymbol\pi^k).
\]
If $\eta\le \mu\alpha_{\min}^2/(L_F^2L_\Psi)$, then $\frac{\eta L_F^2}{2\alpha_{\min}}\le \frac{\mu}{2}\cdot\frac{\alpha_{\min}}{L_\Psi}\le \frac{\mu}{2}$,
and in particular
\[
\eta\,\frac{2}{L_\Psi}\Big(\mu-\frac{\eta L_F^2}{2\alpha_{\min}}\Big)\ \ge\ \eta\,\frac{\mu}{L_\Psi}.
\]
Hence,
\[
D_\Psi(\boldsymbol\pi^\star,\boldsymbol\pi^{k+1})
\le \Big(1-\eta\,\frac{\mu}{L_\Psi}\Big)\,D_\Psi(\boldsymbol\pi^\star,\boldsymbol\pi^k),
\]
which is the claimed linear contraction with $\rho=1-\eta\mu/L_\Psi\in(0,1)$.
\end{proof}

\begin{theorem}[Inexact linear recurrence for mirror steps]
\label{thm:inexact_linear_rate}
Assume the conditions of Theorem~\ref{thm:linear_rate}.
Let the approximate iterate be defined by
\begin{equation}
\label{eq:vi_mirror_step_inexact}
\boldsymbol\pi^{k+1}
=\arg\min_{\tilde{\boldsymbol\pi}\in\Pi}
\Big\{\langle F(\boldsymbol\pi^k)+e^k,
\tilde{\boldsymbol\pi}-\boldsymbol\pi^k\rangle
+\tfrac{1}{\eta}D_\Psi(\tilde{\boldsymbol\pi},\boldsymbol\pi^k)\Big\}.
\end{equation}
If
\[
0<\eta\le
\min\Big\{\frac{\mu\,\alpha_{\min}^2}{L_F^2L_\Psi},\,
\frac{\alpha_{\min}}{L_F},\,
\frac{\alpha_{\min}\mu}{4L_F^2}\Big\},
\]
then
\[
D_\Psi(\boldsymbol\pi^\star,\boldsymbol\pi^{k+1})
\le \rho\,D_\Psi(\boldsymbol\pi^\star,\boldsymbol\pi^k)
+ C_{\mathrm{err}}\,\|e^k\|_\ast^2,
\]
where
\[
\rho:=1-\eta\,\frac{\mu}{L_\Psi},
\qquad
C_{\mathrm{err}}:=\frac{\eta}{\mu}+\frac{\eta^2}{\alpha_{\min}}.
\]
\end{theorem}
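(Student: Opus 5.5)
The plan is to rerun the proof of Theorem~\ref{thm:linear_rate}, treating the inexact step as an exact mirror step driven by the perturbed field $F(\boldsymbol\pi^k)+e^k$, and to absorb the extra error term with Young's inequality.

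\textbf{Step 1 (three-point inequality).} Lemma~\ref{lem:vi_three_point} uses only the first-order optimality condition of the prox step, so it holds with $F(\boldsymbol\pi^k)$ replaced by $F(\boldsymbol\pi^k)+e^k$. Taking $\boldsymbol\pi=\boldsymbol\pi^\star$ gives
\[
\eta\langle F(\boldsymbol\pi^k)+e^k,\boldsymbol\pi^{k+1}-\boldsymbol\pi^\star\rangle
\le D_\Psi(\boldsymbol\pi^\star,\boldsymbol\pi^k)-D_\Psi(\boldsymbol\pi^\star,\boldsymbol\pi^{k+1})-D_\Psi(\boldsymbol\pi^{k+1},\boldsymbol\pi^k).
\]

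\textbf{Step 2 (the exact part).} As in the exact proof, I lower-bound $\langle F(\boldsymbol\pi^k),\boldsymbol\pi^{k+1}-\boldsymbol\pi^\star\rangle$ using the VI property of $\boldsymbol\pi^\star$, strong monotonicity, and Lipschitzness. This yields the terms $\mu a^2-L_F ab$, where $a=\|\boldsymbol\pi^k-\boldsymbol\pi^\star\|_{1,2;w}$ and $b=\|\boldsymbol\pi^{k+1}-\boldsymbol\pi^k\|_{1,2;w}$.

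\textbf{Step 3 (the error part).} Split the error term as
\[
\langle e^k,\boldsymbol\pi^{k+1}-\boldsymbol\pi^\star\rangle=\langle e^k,\boldsymbol\pi^k-\boldsymbol\pi^\star\rangle+\langle e^k,\boldsymbol\pi^{k+1}-\boldsymbol\pi^k\rangle,
\]
and bound each piece by Hölder and Young:
\begin{itemize}
\item $|\langle e^k,\boldsymbol\pi^k-\boldsymbol\pi^\star\rangle|\le \frac{\mu}{4}a^2+\frac{1}{\mu}\|e^k\|_\ast^2$;
\item $|\langle e^k,\boldsymbol\pi^{k+1}-\boldsymbol\pi^k\rangle|\le \frac{\alpha_{\min}}{4\eta}b^2+\frac{\eta}{\alpha_{\min}}\|e^k\|_\ast^2$.
\end{itemize}
After multiplying by $\eta$, these produce exactly the constant $C_{\mathrm{err}}=\eta/\mu+\eta^2/\alpha_{\min}$. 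The Pinsker lower bound gives $D_\Psi(\boldsymbol\pi^{k+1},\boldsymbol\pi^k)\ge\frac{\alpha_{\min}}{2}b^2$. Half of this, $\frac{\alpha_{\min}}{4}b^2$, absorbs the Young term in $b$. The other half, $\frac{\alpha_{\min}}{4}b^2$, is what remains to absorb the cross term $\eta L_F ab$.

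\textbf{Step 4 (closing the recurrence; main obstacle).} Maximizing $\eta L_F ab-\frac{\alpha_{\min}}{4}b^2$ over $b$ gives at most $\frac{\eta^2L_F^2}{\alpha_{\min}}a^2$. The quadratic coefficient on $a^2$ is then $-\eta\big(\frac{3\mu}{4}-\frac{\eta L_F^2}{\alpha_{\min}}\big)$. The new condition $\eta\le\alpha_{\min}\mu/(4L_F^2)$ makes this at most $-\eta\mu/2$. The upper quadratic bound gives $a^2\ge\frac{2}{L_\Psi}D_\Psi(\boldsymbol\pi^\star,\boldsymbol\pi^k)$, so the coefficient on $D_\Psi(\boldsymbol\pi^\star,\boldsymbol\pi^k)$ becomes $1-\eta\mu/L_\Psi=\rho$. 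Positivity of $\rho$ follows from the first stepsize bound, exactly as in Theorem~\ref{thm:linear_rate}. This bookkeeping is the only real obstacle: the Young splits must be chosen so that both absorptions, the $\mu a^2$ budget and the $b^2$ budget, are feasible simultaneously. I would fix the split fractions ($\mu/4$ and $\alpha_{\min}/4\eta$) first and then check the stepsize conditions.

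\textbf{Step 5 (unrolling).} Taking conditional expectations given $\mathcal F_k$ and then total expectations, and unrolling the recurrence, gives the geometric-sum bound. Summing the geometric series under $\sup_k\mathbb E[\bar{\mathcal E}_k]\le\bar{\mathcal E}$ gives the $\limsup$ bound $C_{\mathrm{err}}\bar{\mathcal E}/(1-\rho)$. The corollary then follows verbatim.
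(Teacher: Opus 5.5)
Your proposal is correct and follows the paper's proof essentially step for step. Both use the three-point inequality for the perturbed step, bound the error term by $\eta\|e^k\|_\ast(a+b)$ with identical Young constants to get $C_{\mathrm{err}}=\eta/\mu+\eta^2/\alpha_{\min}$, and then apply $\eta\le\alpha_{\min}\mu/(4L_F^2)$ and the upper quadratic bound to reach $\rho=1-\eta\mu/L_\Psi$; the only cosmetic difference is that you absorb the cross term by maximizing $\eta L_F ab-\frac{\alpha_{\min}}{4}b^2$ over $b$, whereas the paper applies Young directly as $\eta L_F ab\le\frac{\eta\mu}{4}a^2+\frac{\alpha_{\min}}{4}b^2$, which is equivalent under that stepsize condition.
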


\begin{proof}
Apply Lem.~\ref{lem:vi_three_point} to the inexact step
Eq.~\eqref{eq:vi_mirror_step_inexact} with
$\boldsymbol\pi=\boldsymbol\pi^\star$:
\[
\eta\,\langle F(\boldsymbol\pi^k)+e^k,
\boldsymbol\pi^{k+1}-\boldsymbol\pi^\star\rangle
\le
D_\Psi(\boldsymbol\pi^\star,\boldsymbol\pi^k)
-D_\Psi(\boldsymbol\pi^\star,\boldsymbol\pi^{k+1})
-D_\Psi(\boldsymbol\pi^{k+1},\boldsymbol\pi^k).
\]
Since $\boldsymbol\pi^\star$ solves $\mathrm{VI}(F,\Pi)$,
\[
\langle F(\boldsymbol\pi^\star),
\boldsymbol\pi^{k+1}-\boldsymbol\pi^\star\rangle\ge 0.
\]
Proceeding as in the proof of Theorem~\ref{thm:linear_rate} yields
\[
D_\Psi(\boldsymbol\pi^\star,\boldsymbol\pi^{k+1})
\le D_\Psi(\boldsymbol\pi^\star,\boldsymbol\pi^k)
-\eta\mu a^2
+\eta L_Fab
+\eta\|e^k\|_\ast(a+b)
-\frac{\alpha_{\min}}{2}b^2,
\]
where
$a:=\|\boldsymbol\pi^k-\boldsymbol\pi^\star\|_{1,2;w}$ and
$b:=\|\boldsymbol\pi^{k+1}-\boldsymbol\pi^k\|_{1,2;w}$.
Use Young's inequality in the form
\[
\eta L_Fab \le \frac{\eta\mu}{4}a^2 + \frac{\alpha_{\min}}{4}b^2,
\]
which is valid under
$\eta\le \alpha_{\min}\mu/(4L_F^2)$,
and
\[
\eta\|e^k\|_\ast a
\le \frac{\eta\mu}{4}a^2 + \frac{\eta}{\mu}\|e^k\|_\ast^2,
\qquad
\eta\|e^k\|_\ast b
\le \frac{\alpha_{\min}}{4}b^2 + \frac{\eta^2}{\alpha_{\min}}\|e^k\|_\ast^2.
\]
Substituting gives
\[
D_\Psi(\boldsymbol\pi^\star,\boldsymbol\pi^{k+1})
\le D_\Psi(\boldsymbol\pi^\star,\boldsymbol\pi^k)
-\frac{\eta\mu}{2}a^2
+\Big(\frac{\eta}{\mu}+\frac{\eta^2}{\alpha_{\min}}\Big)
\|e^k\|_\ast^2.
\]
Finally, the upper quadratic bound gives
$a^2\ge 2D_\Psi(\boldsymbol\pi^\star,\boldsymbol\pi^k)/L_\Psi$,
so
\[
D_\Psi(\boldsymbol\pi^\star,\boldsymbol\pi^{k+1})
\le \Big(1-\eta\,\frac{\mu}{L_\Psi}\Big)
D_\Psi(\boldsymbol\pi^\star,\boldsymbol\pi^k)
+\Big(\frac{\eta}{\mu}+\frac{\eta^2}{\alpha_{\min}}\Big)
\|e^k\|_\ast^2,
\]
which is the claim.
\end{proof}

\subsection{Hierarchical mirror dynamics (Thm.~\ref{thm:hier_linear})}
\label{app:hier_linear_proof}

\begin{theorem}[Hierarchical linear convergence]
\label{thm:hier_linear}
Let $\mathbf z=(\boldsymbol\pi,g)$ be the hierarchical variable with composite divergence
$D_\Psi^{\mathrm{hier}}$ (as defined in Appendix~\ref{app:hier}).
Assume the hierarchical field $F_{\mathrm{hier}}$ is $\mu_{\mathrm{hier}}$-strongly monotone and $L_{\mathrm{hier}}$-Lipschitz
in a compatible block norm, and that $D_\Psi^{\mathrm{hier}}$ is locally quadratic on the iterates
(upper/lower quadratic bounds analogous to Thm.~\ref{thm:linear_rate} with constants $\alpha_{\min}^{\mathrm{hier}}$ and $L_\Psi^{\mathrm{hier}}$).
Then the blockwise KL--mirror step
\[
\mathbf z^{k+1}
=\arg\min_{\tilde{\mathbf z}\in\mathcal Z}
\Big\{\langle F_{\mathrm{hier}}(\mathbf z^k),\tilde{\mathbf z}-\mathbf z^k\rangle
+\tfrac{1}{\eta}D_\Psi^{\mathrm{hier}}(\tilde{\mathbf z},\mathbf z^k)\Big\}
\]
converges linearly:
\[
D_\Psi^{\mathrm{hier}}(\mathbf z^\star,\mathbf z^{k+1})
\le \Big(1-\eta\,\frac{\mu_{\mathrm{hier}}}{L_\Psi^{\mathrm{hier}}}\Big)\,D_\Psi^{\mathrm{hier}}(\mathbf z^\star,\mathbf z^{k}),
\]
for stepsizes $\eta$ chosen analogously to Thm.~\ref{thm:linear_rate}.
\end{theorem}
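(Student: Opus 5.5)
The plan is to reduce Theorem~\ref{thm:hier_linear} to the flat argument of Theorem~\ref{thm:linear_rate}, applied verbatim to the stacked variable $\mathbf z=(\boldsymbol\pi,g)$. Nothing in that proof used the product-of-simplices structure of $\Pi$ beyond three facts: the Bregman three-point inequality for the mirror step, strong monotonicity and Lipschitzness of the field in a norm with dual $\|\cdot\|_\ast$, and two-sided quadratic control of the divergence.

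\emph{Step 1: Bregman structure.} First I would check that $D_\Psi^{\mathrm{hier}}$ is a genuine Bregman divergence on the convex compact set $\Pi_{\mathrm{clu}}\times\mathcal G$. It is generated by the separable potential
\[
\Psi^{\mathrm{hier}}(\boldsymbol\pi,g)=\Psi(\boldsymbol\pi)+\tau_g\sum_{z}\sum_c g_c(z)\log g_c(z).
\]
For this potential, the three-point inequality of Lemma~\ref{lem:vi_three_point} holds with $D_\Psi$ replaced by $D_\Psi^{\mathrm{hier}}$. The proof is the same first-order optimality argument, done blockwise, since the feasible set is a product and the potential is separable.

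\emph{Step 2: quadratic bounds.} The lower quadratic bound is Eq.~\eqref{eq:pinsker_hier}, with $\alpha_{\min}^{\mathrm{hier}}=\min\{\alpha_{\min},\tau_g\}$. The upper bound $L_\Psi^{\mathrm{hier}}$ is assumed. One could instantiate it, following Remark~\ref{rem:kl_quadratic}, as $\max\{\alpha_{\max},\tau_g\}/\nu_{\min}$ under interiority of both the policy and the gate iterates.

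\emph{Step 3: rerun the flat proof.} I would repeat the chain of inequalities in the proof of Theorem~\ref{thm:linear_rate} with $\mathbf z^\star$ in place of $\boldsymbol\pi^\star$. The ingredients map as follows:
\begin{itemize}
\item the VI property of $\mathbf z^\star$ drops the $F_{\mathrm{hier}}(\mathbf z^\star)$ term;
\item strong monotonicity gives $-\eta\mu_{\mathrm{hier}}a^2$;
\item Lipschitzness plus Hölder in the composite dual norm gives $\eta L a b$;
\item maximizing over $b$ against $-\tfrac{\alpha_{\min}^{\mathrm{hier}}}{2}b^2$ handles the cross term;
\item the upper bound converts $a^2$ into $2D_\Psi^{\mathrm{hier}}/L_\Psi^{\mathrm{hier}}$.
\end{itemize}
The admissible stepsize is then $\eta\le\min\{\mu_{\mathrm{hier}}(\alpha_{\min}^{\mathrm{hier}})^2/(L^2L_\Psi^{\mathrm{hier}}),\,\alpha_{\min}^{\mathrm{hier}}/L\}$, where $L$ is the Lipschitz constant of $F_{\mathrm{hier}}$. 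This is what "chosen analogously" means in the statement, and it yields the contraction factor $1-\eta\mu_{\mathrm{hier}}/L_\Psi^{\mathrm{hier}}$.

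\emph{Main obstacle.} The main obstacle is notational rather than analytic: the symbol $L_{\mathrm{hier}}$ is overloaded. In Prop.~\ref{prop:hier_coupling_bound} it denotes a coupling constant, while here it is used as a Lipschitz constant. I would introduce a separate symbol for the Lipschitz constant of $F_{\mathrm{hier}}$ in the proof.

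A second point to handle is the dual norm of $\|\cdot\|_{\mathrm{hier}}$. Because the composite norm is an $\ell_2$-sum of the two block norms, its dual is the $\ell_2$-sum of the block duals. This is what makes Hölder's inequality apply with constant one across the policy and gate blocks.

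Finally, the iterates must remain in the region where the upper quadratic bound holds. As in the flat case, I would take this as part of the hypothesis rather than prove it.
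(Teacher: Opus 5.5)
Your proposal is correct and follows the paper's route: the paper's proof states that the argument of Theorem~\ref{thm:linear_rate} goes through verbatim after substituting $(F_{\mathrm{hier}},\mathcal Z,D_\Psi^{\mathrm{hier}},\|\cdot\|_{\mathrm{hier}})$, with the local quadratic bounds justified on interior sets as in Remark~\ref{rem:kl_quadratic}. Your additional checks fill in details the paper leaves implicit: the separable Bregman potential and blockwise three-point inequality, the $\ell_2$-sum dual norm, the explicit constants $\min\{\alpha_{\min},\tau_g\}$ and $\max\{\alpha_{\max},\tau_g\}/\nu_{\min}$, and a separate symbol for the Lipschitz constant rather than reusing the coupling constant $L_{\mathrm{hier}}$.
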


\begin{proof}
The proof is identical to Thm.~\ref{thm:linear_rate} after replacing $(F,\Pi,D_\Psi,\|\cdot\|_{1,2;w})$
with the hierarchical counterparts $(F_{\mathrm{hier}},\mathcal Z,D_\Psi^{\mathrm{hier}},\|\cdot\|_{\mathrm{hier}})$.
Strong monotonicity uses the margin $\mu_{\mathrm{hier}}$ (Sec.~\ref{subsec:hier_theory}, Eq.~\eqref{eq:mu_hier}),
while Lipschitzness uses the structural coupling bounds (Prop.~\ref{prop:hier_coupling_bound} and Appendix~\ref{app:hier_coupling}).
Local quadratic bounds for the composite KL divergence hold on interior compact subsets, similarly to Remark~\ref{rem:kl_quadratic}.
\end{proof}

\subsection{Additional experiment results}
\label{additional_experiment}
\begin{table*}[htbp]
\centering
\caption{TravelPlanner full results with selected leaderboard baselines.
Delivery Rate measures format compliance; Commonsense and Hard Constraint pass rates (PR) are reported at both micro (Mi, per-constraint) and macro (Ma, per-plan) granularity; Final PR requires all constraints satisfied simultaneously.
Best result per column within each setting is \textbf{bold}.
DICE rows are highlighted.}
\label{tab:travel-planner-full}
\vspace{2pt}
\footnotesize
\setlength\tabcolsep{3.5pt}
\resizebox{\textwidth}{!}{%
\begin{tabular}{@{}l ccccc c ccccc c@{}}
\toprule
& \multicolumn{6}{c}{\textbf{Validation (\#180)}} & \multicolumn{6}{c}{\textbf{Test (\#1,000)}} \\
\cmidrule(lr){2-7}\cmidrule(lr){8-13}
& \multirow{2}{*}{\begin{tabular}[c]{@{}c@{}}Delivery\\Rate\end{tabular}}
& \multicolumn{2}{c}{\begin{tabular}[c]{@{}c@{}}Commonsense\\PR\end{tabular}}
& \multicolumn{2}{c}{\begin{tabular}[c]{@{}c@{}}Hard Constr.\\PR\end{tabular}}
& \multirow{2}{*}{\begin{tabular}[c]{@{}c@{}}Final\\PR\end{tabular}}
& \multirow{2}{*}{\begin{tabular}[c]{@{}c@{}}Delivery\\Rate\end{tabular}}
& \multicolumn{2}{c}{\begin{tabular}[c]{@{}c@{}}Commonsense\\PR\end{tabular}}
& \multicolumn{2}{c}{\begin{tabular}[c]{@{}c@{}}Hard Constr.\\PR\end{tabular}}
& \multirow{2}{*}{\begin{tabular}[c]{@{}c@{}}Final\\PR\end{tabular}} \\
\cmidrule(lr){3-4}\cmidrule(lr){5-6}\cmidrule(lr){9-10}\cmidrule(lr){11-12}
& & {Mi} & {Ma} & {Mi} & {Ma} & & & {Mi} & {Ma} & {Mi} & {Ma} & \\
\midrule
Greedy Search & 100 & 74.4 & 0 & 60.8 & 37.8 & 0 & 100 & 72.0 & 0 & 52.4 & 31.8 & 0 \\
\midrule
\addlinespace[4pt]
\multicolumn{13}{@{}l}{\textit{Two-stage}} \\
\addlinespace[2pt]
Mixtral-8x7B-MoE & 49.4 & 30.0 & 0 & 1.2 & 0.6 & 0 & 51.2 & 32.2 & 0.2 & 0.7 & 0.4 & 0 \\
Gemini Pro & 28.9 & 18.9 & 0 & 0.5 & 0.6 & 0 & 39.1 & 24.9 & 0 & 0.6 & 0.1 & 0 \\
GPT-3.5-Turbo & 86.7 & 54.0 & 0 & 0 & 0 & 0 & 91.8 & 57.9 & 0 & 0.5 & 0.6 & 0 \\
GPT-4-Turbo & 89.4 & 61.1 & 2.8 & 15.2 & 10.6 & 0.6 & 93.1 & 63.3 & 2.0 & 10.5 & 5.5 & 0.6 \\
GPT-o3 & 92.8 & 64.7 & 4.2 & 18.9 & 12.8 & 1.4 & 95.6 & 68.9 & 5.1 & 14.3 & 8.7 & 1.9 \\
\addlinespace[2pt]
Debate (GPT-4) @3round & 95.2 & 67.3 & 6.7 & 22.7 & 13.1 & 2.3 & 97.8 & 72.4 & 11.3 & 17.4 & 12.1 & 3.7 \\
MA-Debate (Qwen3-32B) & 94.4 & 65.8 & 5.1 & 19.8 & 11.3 & 1.7 & 96.7 & 70.3 & 8.2 & 16.2 & 9.8 & 2.8 \\
MA-FT (Qwen3-32B) & 97.2 & 67.9 & 7.3 & 23.4 & 14.6 & 3.1 & 98.6 & 73.1 & 12.7 & 21.8 & 13.2 & 4.3 \\
Qwen3-235B-Inst & 91.1 & 63.4 & 3.6 & 17.3 & 11.9 & 1.1 & 94.3 & 66.2 & 3.8 & 12.7 & 7.1 & 1.3 \\
MA-Debate (Qwen3-235B) & 96.7 & 69.8 & 9.4 & 27.6 & 16.8 & 3.8 & 98.9 & 76.7 & 16.2 & 24.1 & 14.9 & 5.7 \\
\addlinespace[2pt]
\rowcolor{gray!8} DICE (GPT-4)        & \textbf{100} & \textbf{71.4} & \textbf{15.6} & \textbf{32.1} & \textbf{25.7} & \textbf{7.2} & \textbf{100} & \textbf{82.1} & \textbf{26.6} & \textbf{32.4} & \textbf{17.6} & \textbf{9.3} \\
\rowcolor{gray!8} DICE-PC (Qwen3-32B) & 98.3 & 68.2 & 8.9 & 26.3 & 16.7 & 3.9 & 99.1 & 75.8 & 15.1 & 24.7 & 14.3 & 5.4 \\
\rowcolor{gray!8} DICE-FT (Qwen3-32B) & \textbf{100} & 70.1 & 11.7 & 29.4 & 19.8 & 5.6 & \textbf{100} & 78.6 & 18.9 & 28.1 & 15.9 & 7.1 \\
\midrule
\addlinespace[4pt]
\multicolumn{13}{@{}l}{\textit{Sole-planning}} \\
\addlinespace[2pt]
\midrule
Direct$_{\text{GPT-3.5-Turbo}}$    & 100 & 60.2 & 4.4 & 11.0 & 2.8 & 0   & 100 & 59.5 & 2.7 & 9.5 & 4.4 & 0.6 \\
CoT$_{\text{GPT-3.5-Turbo}}$       & 100 & 66.3 & 3.3 & 11.9 & 5.0 & 0   & 100 & 64.4 & 2.3 & 9.8 & 3.8 & 0.4 \\
ReAct$_{\text{GPT-3.5-Turbo}}$     & 82.2& 47.6 & 3.9 & 11.4 & 6.7 & 0.6 & 81.6& 45.9 & 2.5 & 10.7& 3.1 & 0.7 \\
Reflexion$_{\text{GPT-3.5-Turbo}}$ & 93.9& 53.8 & 2.8 & 11.0 & 2.8 & 0   & 92.1& 52.1 & 2.2 & 9.9 & 3.8 & 0.6 \\
Direct$_{\text{Mixtral-8x7B}}$     & 100 & 68.1 & 5.0 & 3.3  & 1.1 & 0   & 99.3& 67.0 & 3.7 & 3.9 & 1.6 & 0.7 \\
Direct$_{\text{Gemini Pro}}$        & 93.9& 65.0 & 8.3 & 9.3  & 4.4 & 0.6 & 93.7& 64.7 & 7.9 & 10.6& 4.7 & 2.1 \\
Direct$_{\text{GPT-4-Turbo}}$      & \textbf{100}& 80.4& 17.2& 47.1& 22.2& 4.4  & \textbf{100}& 80.6& 15.2& 44.3& 23.1& 4.4 \\
Direct$_{\text{GPT-o3}}$           & \textbf{100}& 82.1& 19.7& 49.8& 24.6& 6.1  & \textbf{100}& 82.9& 17.8& 46.7& 25.3& 5.8 \\
\addlinespace[2pt]
Debate (GPT-4) @3round     & 97.7& 78.9& 15.6& 43.3& 20.6& 6.7  & 98.2& 79.5& 18.8& 41.7& 22.9& 7.1 \\
MA-Debate (Qwen3-32B)      & 98.9& 79.6& 16.7& 42.7& 21.3& 7.8  & 99.3& 80.3& 18.1& 41.2& 23.4& 8.4 \\
MA-FT (Qwen3-32B)          & \textbf{100}& 80.8& 17.9& 45.1& 22.8& 9.1  & \textbf{100}& 81.4& 19.3& 43.8& 24.7& 9.7 \\
Qwen3-235B-Inst             & \textbf{100}& 81.3& 18.9& 48.7& 23.9& 6.7  & \textbf{100}& 81.9& 17.1& 45.9& 24.5& 6.3 \\
MA-Debate (Qwen3-235B)     & 99.4& 81.7& 19.4& 48.9& 25.1& 10.1 & 99.8& 82.6& 20.8& 47.3& 26.7& 11.3\\
\addlinespace[2pt]
\rowcolor{gray!8} DICE-PC (GPT-4)       & \textbf{100}& \textbf{83.3}& \textbf{22.2}& \textbf{51.7}& \textbf{27.8}& \textbf{12.9} & \textbf{100}& \textbf{84.2}& \textbf{23.5}& \textbf{49.8}& \textbf{28.7}& \textbf{15.2} \\
\rowcolor{gray!8} DICE-PC (Qwen3-32B)& \textbf{100}& 81.7& 18.3& 45.8& 23.4& 8.9  & \textbf{100}& 82.1& 19.7& 43.6& 24.8& 10.3\\
\rowcolor{gray!8} DICE-FT (Qwen3-32B)& \textbf{100}& 82.5& 20.1& 48.3& 25.6& 10.6 & \textbf{100}& 83.4& 21.4& 46.9& 26.5& 12.7\\
\bottomrule
\end{tabular}%
}
\end{table*}

\begin{figure*}[t!]
\centering

\begin{subfigure}[t]{0.32\textwidth}
\centering
\begin{tikzpicture}
\begin{axis}[
  width=\textwidth, height=4.5cm,
  xlabel={Training iteration},
  ylabel={Clipping hit rate (\%)},
  xmin=0, xmax=100, ymin=0, ymax=40,
  grid=major, grid style={dashed,gray!30},
  tick label style={font=\scriptsize},
  label style={font=\scriptsize},
  legend style={font=\tiny, at={(0.97,0.97)}, anchor=north east},
]
\addplot[only marks, blue!80!black, mark=*, mark size=2.5pt] coordinates {
  (1,35) (50,4.8) (100,3.4)
};
\draw[dashed,red!70!black,thick] (axis cs:50,0) -- (axis cs:50,40)
  node[pos=0.92,right,font=\tiny] {critic warm-up};
\end{axis}
\end{tikzpicture}
\caption{Text-Hanabi: clipping boundary hit rate decays from ${\sim}35\%$ to $<5\%$ after critic warm-up (${\sim}50$ iterations), consistent with $O(L\varepsilon^2)$ bias diminishing as policies stabilize.}
\label{fig:diag_hanabi_clip}
\end{subfigure}\hfill
\begin{subfigure}[t]{0.32\textwidth}
\centering
\begin{tikzpicture}
\begin{axis}[
  width=\textwidth, height=4.5cm,
  xlabel={Training iteration},
  ylabel={Relative gradient std},
  xmin=0, xmax=100, ymin=0, ymax=0.35,
  grid=major, grid style={dashed,gray!30},
  tick label style={font=\scriptsize},
  label style={font=\scriptsize},
  legend style={font=\tiny, at={(0.97,0.97)}, anchor=north east},
]
\addplot[only marks, red!80!black, mark=triangle*, mark size=2.5pt] coordinates {(1,0.30) (50,0.24) (100,0.24)};
\addlegendentry{$G{=}4$}
\addplot[only marks, blue!80!black, mark=square*, mark size=2.2pt] coordinates {(1,0.16) (50,0.12) (100,0.12)};
\addlegendentry{$G{=}8$}
\addplot[only marks, green!60!black, mark=o, mark size=2.2pt] coordinates {(1,0.08) (50,0.06) (100,0.06)};
\addlegendentry{$G{=}16$}
\end{axis}
\end{tikzpicture}
\caption{Text-Hanabi: within-group gradient relative std stabilizes at 0.24 ($G{=}4$), 0.12 ($G{=}8$), 0.06 ($G{=}16$), confirming $O(1/G)$ scaling.}
\label{fig:diag_hanabi_var}
\end{subfigure}\hfill
\begin{subfigure}[t]{0.32\textwidth}
\centering
\begin{tikzpicture}
\begin{axis}[
  width=\textwidth, height=4.5cm,
  xlabel={Training iteration},
  ylabel={Reconstruction loss},
  xmin=0, xmax=100, ymin=0, ymax=0.12,
  grid=major, grid style={dashed,gray!30},
  tick label style={font=\scriptsize},
  label style={font=\scriptsize},
  legend style={font=\tiny, at={(0.97,0.97)}, anchor=north east},
]
\addplot[only marks, blue!80!black, mark=*, mark size=2.5pt] coordinates {(1,0.10) (50,0.030) (100,0.030)};
\addlegendentry{recon.\ loss}
\addplot[thick, red!70!black, dashed, forget plot] coordinates {
  (0,0.07) (100,0.07)
};
\node[font=\tiny, red!70!black] at (axis cs:75,0.078) {$\mu \approx 0.07$};
\end{axis}
\end{tikzpicture}
\caption{Text-Hanabi: reconstruction loss plateaus at ${\sim}0.03$ after 50 iterations, well below the uniqueness margin $\mu \approx 0.07$ (dashed line).}
\label{fig:diag_hanabi_recon}
\end{subfigure}

\vspace{1em}

\begin{subfigure}[t]{0.32\textwidth}
\centering
\begin{tikzpicture}
\begin{axis}[
  width=\textwidth, height=4.5cm,
  xlabel={Training iteration},
  ylabel={Clipping hit rate (\%)},
  xmin=0, xmax=120, ymin=0, ymax=35,
  grid=major, grid style={dashed,gray!30},
  tick label style={font=\scriptsize},
  label style={font=\scriptsize},
  legend style={font=\tiny, at={(0.97,0.97)}, anchor=north east},
]
\addplot[only marks, blue!80!black, mark=*, mark size=2.5pt] coordinates {(1,28) (50,12) (100,7.2) (120,6.8)};
\end{axis}
\end{tikzpicture}
\caption{MATH-500: clipping boundary hit rate decays from ${\sim}28\%$ to $<8\%$ after ${\sim}80$ iterations, mirroring the Text-Hanabi trajectory on the full LLM backbone.}
\label{fig:diag_math_clip}
\end{subfigure}\hfill
\begin{subfigure}[t]{0.32\textwidth}
\centering
\begin{tikzpicture}
\begin{axis}[
  width=\textwidth, height=4.5cm,
  xlabel={Training iteration},
  ylabel={Relative gradient std},
  xmin=0, xmax=120, ymin=0, ymax=0.35,
  grid=major, grid style={dashed,gray!30},
  tick label style={font=\scriptsize},
  label style={font=\scriptsize},
  legend style={font=\tiny, at={(0.97,0.97)}, anchor=north east},
]
\addplot[only marks, red!80!black, mark=triangle*, mark size=2.5pt] coordinates {(1,0.29) (50,0.23) (100,0.23) (120,0.23)};
\addlegendentry{$G{=}4$}
\addplot[only marks, blue!80!black, mark=square*, mark size=2.2pt] coordinates {(1,0.15) (50,0.11) (100,0.11) (120,0.11)};
\addlegendentry{$G{=}8$}
\end{axis}
\end{tikzpicture}
\caption{MATH-500: within-group gradient relative std at $G{=}4$ (0.23) and $G{=}8$ (0.11), consistent with $O(1/G)$ scaling and matching Text-Hanabi values.}
\label{fig:diag_math_var}
\end{subfigure}\hfill
\begin{subfigure}[t]{0.32\textwidth}
\centering
\begin{tikzpicture}
\begin{axis}[
  width=\textwidth, height=4.5cm,
  xlabel={Training iteration},
  ylabel={KL divergence},
  xmin=0, xmax=120, ymin=0, ymax=0.35,
  grid=major, grid style={dashed,gray!30},
  tick label style={font=\scriptsize},
  label style={font=\scriptsize},
  legend style={font=\tiny, at={(0.97,0.97)}, anchor=north east},
]
\addplot[only marks, blue!80!black, mark=*, mark size=2.5pt] coordinates {(1,0.015) (50,0.014) (100,0.011) (120,0.011)};
\addlegendentry{$\widehat{\mathrm{KL}}_{\mathrm{old}}$}
\addplot[only marks, orange!80!black, mark=square*, mark size=2.2pt] coordinates {(1,0.02) (50,0.10) (100,0.14) (120,0.14)};
\addlegendentry{$\widehat{\mathrm{KL}}_{\mathrm{ref}}$}
\addplot[thick, blue!50!black, dashed, forget plot] coordinates {
  (0,0.05) (120,0.05)
};
\node[font=\tiny, blue!50!black] at (axis cs:95,0.065) {alarm: 0.05};
\addplot[thick, orange!50!black, dashed, forget plot] coordinates {
  (0,0.3) (120,0.3)
};
\node[font=\tiny, orange!50!black] at (axis cs:95,0.315) {alarm: 0.3};
\end{axis}
\end{tikzpicture}
\caption{MATH-500: $\widehat{\mathrm{KL}}_{\mathrm{old}}$ remains below 0.02 and $\widehat{\mathrm{KL}}_{\mathrm{ref}}$ below 0.15 throughout training, both well under their respective alarm thresholds (dashed lines).}
\label{fig:diag_math_kl}
\end{subfigure}

\caption{Empirical error diagnostic time series across training iterations.
Row~1: Text-Hanabi (lightweight decision layer, all three error components directly measurable).
Row~2: MATH-500 (full LLM backbone, two of three components directly measurable; KL diagnostics serve as indirect evidence for representation error).
These time series correspond to the summary statistics reported in Section~\ref{sec:algorithm} and provide visual confirmation that the measurable components of the DICE-FT field-error budget remain controlled during training.
All plotted points are exact checkpoint measurements, taken at iterations $\{1, 50, 100\}$ (Text-Hanabi) and $\{1, 50, 100, 120\}$ (MATH-500).
The qualitative trends (decreasing clip-overshoot activity, decreasing gradient variance, stable critic / representation proxy) are consistent with the inexact-mirror regime assumed by Corollary*~\ref{cor:approx_convergence}.}
\label{fig:error_diagnostics}
\end{figure*}

The revised field-error budget isolates the explicit $G$-dependence in
the sampling term $\sigma_g^2/G$.
Empirically, the within-group gradient relative standard deviation on
Text-Hanabi is approximately $0.24$ at $G{=}4$, $0.12$ at $G{=}8$, and
$0.06$ at $G{=}16$, consistent with the expected $1/\sqrt{G}$ trend.
This explains the diminishing returns beyond $G{=}8$: once the sampling
term falls below the representation / optimization residuals, further
increases in $G$ produce limited gains while linearly increasing rollout
cost.

\subsection{Field-error budget: seed-level analysis}
\label{app:neighborhood_detail}

Under the revised theory, the relevant measurable quantities are the
components of the field-error budget in
Eq.~\eqref{eq:prop2_field_budget} rather than a single scalar
$\delta$.
Across seeds on Text-Hanabi, the clip-overshoot term decreases during
training, the sampling term exhibits the predicted $1/G$ behavior, and
the critic / representation proxy remains stable after warm-up.
We therefore use these curves as diagnostics of the inexact-mirror
regime rather than claiming a universally calibrated neighborhood radius
in weighted KL units.

\subsection{Algorithmic details deferred from Section~\ref{sec:algorithm}}
\label{app:error_diagnostics_detail}

\paragraph{Heterogeneous temperature specification.}
Let $H^{\mathrm{ref}}_{i,t,\ell}$ denote the entropy of the frozen reference model $\pi_{\mathrm{ref}}(\cdot\mid b_i^t,\ell)$ and fix a threshold $h_0$; then
\[
\alpha_{i,t,\ell}=\alpha_{\mathrm{hi}}
\ \text{if}\ H^{\mathrm{ref}}_{i,t,\ell}\ge h_0,
\qquad
\alpha_{i,t,\ell}=\alpha_{\mathrm{lo}}\ \text{otherwise},
\]
with $0<\alpha_{\mathrm{lo}}\le \alpha_{\mathrm{hi}}$.
This schedule is driven only by information available at the time of generation and requires no additional modeling assumptions.

\paragraph{Group-relative control variate derivation.}
For each prompt, we sample a group of $G$ joint rollouts under $\pi^{\mathrm{old}}$ and define the group return $R^{(g)} := \frac{1}{N}\sum_{i} r_i^{(g)}$.
We compute a leave-one-out standardized return
\[
\bar R_{-g}:=\frac{1}{G-1}\sum_{h\neq g}R^{(h)},
\qquad
s_{-g}:=\sqrt{\frac{1}{G-1}\sum_{h\neq g}
  (R^{(h)}-\bar R_{-g})^2+\epsilon},
\qquad
\widehat A_{\mathrm{grp}}^{(g)}
:=\frac{R^{(g)}-\bar R_{-g}}{s_{-g}}.
\]
The group-relative term primarily reduces variance in early training when critic estimates are noisy; the critic-based term dominates for well-trained critics.

\paragraph{Detailed per-benchmark error diagnostic measurements.}

This section provides the full per-benchmark measurements summarized in Section~\ref{sec:algorithm}.

\paragraph{Text-Hanabi.}
The lightweight decision layer permits direct measurement of all three error components.
(a)~The fraction of importance ratios hitting the clip boundary ($|\hat r - 1| \ge \varepsilon$) decreases from ${\sim}35\%$ in early training to $<5\%$ after critic warm-up, consistent with $O(L\varepsilon^2)$ clipping bias diminishing as policies stabilize.
(b)~The within-group gradient variance scales as $O(1/G)$ empirically ($G{=}4$: relative std $0.24$; $G{=}8$: $0.12$; $G{=}16$: $0.06$), confirming the predicted scaling.
(c)~The belief encoder's reconstruction loss (measured as the gap between encoder-based and full-history Q-values on a held-out set of 500 episodes) plateaus at ${\sim}0.03$ after 50 iterations, indicating that representation error is small relative to the uniqueness margin ($\mu \approx 0.07$).

\paragraph{MATH-500.}
The full LLM backbone makes direct Q-value comparisons prohibitive, but two of the three components are directly measurable.
The clipping boundary hit rate follows a similar trajectory to Text-Hanabi: starting at ${\sim}28\%$ and declining to $<8\%$ after 80 iterations, consistent with the $O(L\varepsilon^2)$ scaling as policies stabilize under the trust region.
The within-group gradient variance at $G{=}8$ (relative std $0.11$) is comparable to the Text-Hanabi measurement at the same group size ($0.12$), and at $G{=}4$ it rises to $0.23$, confirming the $O(1/G)$ scaling on a real LLM backbone.
Representation error cannot be measured directly (it would require comparing encoder-based and full-history Q-values through the LLM), but the KL diagnostics ($\widehat{\mathrm{KL}}_{\mathrm{old}} < 0.02$ and $\widehat{\mathrm{KL}}_{\mathrm{ref}} < 0.15$ throughout training) provide indirect evidence that the cumulative approximation error remains controlled.
Specifically, $\widehat{\mathrm{KL}}_{\mathrm{old}} < 0.02$ bounds the per-update policy shift, ensuring that importance-ratio-based estimates remain accurate; $\widehat{\mathrm{KL}}_{\mathrm{ref}} < 0.15$ bounds cumulative drift from the pretrained model, limiting the regime in which representation error could compound.

\paragraph{Belief encoder architecture details.}
\label{app:belief_encoder_detail}
For reasoning benchmarks (AIME, MATH-500, ZebraLogic, AutoLogic, PlanBench), we use a two-layer Transformer architecture with 4~attention heads, hidden dimension~256, and $d_B{=}256$, with a causal attention mask that processes the concatenated public stream and private observation tokens.
Variable-length histories are handled by truncating to the most recent 2048 tokens and applying a learned positional encoding.
For interactive benchmarks with longer horizons (Overcooked-AI, Hanabi, CollabEscape/Capture), the same two-layer Transformer architecture is used with $d_B$ increased to~512; all other architectural choices (4~heads, causal mask, 2048-token truncation) remain identical.
The belief encoder is trained jointly with the critic and mixer using the same Adam optimizer and learning rate schedule (constant lr $= 3\times10^{-4}$ for DICE-PC, $1\times10^{-5}$ for DICE-FT).
No separate pre-training phase is used.
Encoder gradients are derived from the critic regression loss; encoder parameters typically stabilize within the first 30--50 training iterations based on the critic loss plateau.

\paragraph{Critic regression loss equations.}
\label{app:critic_loss_detail}
We train critics by squared error on sampled tokens:
\[
\mathcal L_{Q_i}
:= \mathbb E_{g,\ell}\big[
  \big(Q_{\psi_i}(b_i^{(g)},x_{i,\ell+1}^{(g)},\ell)
  -G_{i,\ell}^{(g)}\big)^2\big],
\qquad
\mathcal L_{V_i}
:= \mathbb E_{g,\ell}\big[
  \big(V_{\nu_i}^{\mathrm{soft}}(z^{(g)},b_i^{(g)},\ell)
  -G_{i,\ell}^{(g)}\big)^2\big],
\]
with the mixer trained jointly through its dependence on the same targets.

\subsection{Proposition~\ref{prop:delta_bound}: proof sketch and empirical validation}
\label{app:prop2_details}

\paragraph{What is being bounded.}
The practical DICE-FT update is analyzed through the field perturbation
$e^k$ in Eq.~\eqref{eq:inexact_mirror_main}, not through a KL distance to
the best-response map.
The role of Proposition~\ref{prop:delta_bound} is to control the
conditional second moment of $\|e^k\|_\ast$.
Once this is available, Corollary*~\ref{cor:approx_convergence} follows
from Theorem~\ref{thm:inexact_linear_rate}.

\paragraph{Proof sketch for Proposition~\ref{prop:delta_bound}.}
Decompose
\[
e^k = e_{\mathrm{clip}}^k + e_{\mathrm{samp}}^k + e_{\mathrm{repr}}^k + e_{\mathrm{opt}}^k.
\]
By the elementary inequality
\[
\|a+b+c+d\|_\ast^2
\le 4\big(\|a\|_\ast^2+\|b\|_\ast^2+\|c\|_\ast^2+\|d\|_\ast^2\big),
\]
we obtain
\[
\mathbb E[\|e^k\|_\ast^2\mid \mathcal F_k]
\le 4\sum_{u\in\{\mathrm{clip},\mathrm{samp},\mathrm{repr},\mathrm{opt}\}}
\mathbb E[\|e_u^k\|_\ast^2\mid \mathcal F_k].
\]
For the clipping term, the update perturbation is proportional to the
ratio overshoot
$\hat r-\mathrm{clip}_{\varepsilon}(\hat r)$ multiplied by the bounded
update feature $s(\tau)$.
Thus
\[
\mathbb E[\|e_{\mathrm{clip}}^k\|_\ast^2\mid \mathcal F_k]
\le G_{\max}^2
\mathbb E[(\hat r-\mathrm{clip}_{\varepsilon}(\hat r))^2\mid \mathcal F_k]
=G_{\max}^2\Delta_{\mathrm{clip}}^k.
\]
The sampling term is bounded directly by
\[
\mathbb E[\|e_{\mathrm{samp}}^k\|_\ast^2\mid \mathcal F_k]
\le \sigma_g^2/G.
\]
The representation term is transferred through the nonnegative monotone
mixer. By Appendix Lemma~\ref{lem:mixer_transfer}, local perturbations
of the per-agent critics produce a joint-field perturbation controlled by
$C_{\mathrm{mix}}$, giving
\[
\mathbb E[\|e_{\mathrm{repr}}^k\|_\ast^2\mid \mathcal F_k]
\le C_{\mathrm{mix}}^2(L_{\mathrm{repr}}^k)^2.
\]
The optimization term is controlled by the chosen residual proxy:
\[
\mathbb E[\|e_{\mathrm{opt}}^k\|_\ast^2\mid \mathcal F_k]
\le (L_{\mathrm{opt}}^k)^2.
\]
Combining the four bounds gives Eq.~\eqref{eq:prop2_field_budget}.
If ratio overshoots satisfy
$|\hat r-\mathrm{clip}_{\varepsilon}(\hat r)|\le c_{\mathrm{ov}}\varepsilon$
on the clipped event, then
\[
\Delta_{\mathrm{clip}}^k
\le p_{\mathrm{clip}}^k c_{\mathrm{ov}}^2\varepsilon^2,
\]
which yields the stated $O(p_{\mathrm{clip}}^k\varepsilon^2)$ scaling.

\paragraph{Empirical validation.}
The revised proposition predicts a controlled inexact-mirror regime when
(i) the clip-overshoot term remains small, (ii) the sampling term decays
with $G$, and (iii) the representation and optimization residuals remain
stable after warm-up.
The current empirical diagnostics already track the first two directly
and track the critic / representation proxy throughout training.
We therefore interpret Proposition~\ref{prop:delta_bound} as a
field-error budget rather than as a source of a single universally
calibrated neighborhood radius.

\subsection{Representation term: monotone-mixer transfer and architectural interpretation}
\label{app:repr_bound}

\begin{lemma}[Monotone mixer transfer]
\label{lem:mixer_transfer}
Let
$Q_{\mathrm{tot}} = f_{\mathrm{mix}}(z,\{Q_i\}_{i=1}^N)$
with $f_{\mathrm{mix}}$ differentiable in its value inputs.
Assume the mixer is monotone,
$\partial f_{\mathrm{mix}}/\partial Q_i \ge 0$ for all $i$,
and that its local Jacobian satisfies
\[
\|\nabla_Q f_{\mathrm{mix}}(z,\{Q_i\})\|_1 \le W_{\mathrm{mix}}
\]
on the iterate region.
Then for any perturbations $\{\Delta Q_i\}_{i=1}^N$,
\[
\big|f_{\mathrm{mix}}(z,\{Q_i+\Delta Q_i\})
      -f_{\mathrm{mix}}(z,\{Q_i\})\big|
\le W_{\mathrm{mix}}\,\max_i |\Delta Q_i|.
\]
Consequently, if the per-agent critic approximation error is bounded by
$L_{\mathrm{repr}}^k$ in sup norm, then the induced joint-value / field
perturbation is bounded by a local constant
$C_{\mathrm{mix}}\,L_{\mathrm{repr}}^k$ with
$C_{\mathrm{mix}}$ proportional to $W_{\mathrm{mix}}$.
\end{lemma}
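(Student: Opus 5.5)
The plan is to reduce the claim to a one-dimensional mean-value estimate along the segment joining the unperturbed and perturbed value vectors. Fix the public context $z$ and set $Q=(Q_1,\ldots,Q_N)$, $\Delta Q=(\Delta Q_1,\ldots,\Delta Q_N)$. Define the scalar path
\[
\phi(s):=f_{\mathrm{mix}}\big(z,\{Q_i+s\,\Delta Q_i\}\big),\qquad s\in[0,1].
\]
Differentiability of $f_{\mathrm{mix}}$ in its value inputs makes $\phi$ differentiable. The fundamental theorem of calculus (or the mean value theorem) then gives
\[
\phi(1)-\phi(0)=\int_0^1\big\langle\nabla_Q f_{\mathrm{mix}}(z,\{Q_i+s\Delta Q_i\}),\,\Delta Q\big\rangle\,ds .
\]

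Next, bound the integrand pointwise by H\"older's inequality in the $\ell_1$--$\ell_\infty$ pairing:
\[
\big|\langle\nabla_Q f_{\mathrm{mix}},\Delta Q\rangle\big|\le\|\nabla_Q f_{\mathrm{mix}}\|_1\max_i|\Delta Q_i|\le W_{\mathrm{mix}}\max_i|\Delta Q_i| .
\]
Integrating over $s\in[0,1]$ gives the stated inequality. Monotonicity, $\partial f_{\mathrm{mix}}/\partial Q_i\ge0$, is not needed for the Lipschitz bound itself. Its role is interpretive: it makes $\|\nabla_Q f_{\mathrm{mix}}\|_1=\sum_i\partial f_{\mathrm{mix}}/\partial Q_i$, which for the linear mixer Eq.~\eqref{eq:diceft_linear_mixer} equals $\sum_i w_i^{\mathrm{mix}}(z)$. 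In that case one may take $W_{\mathrm{mix}}=\sup_z\sum_i w_i^{\mathrm{mix}}(z)$ exactly, and the mean-value step is an identity.

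The consequence follows by applying the bound with $\Delta Q_i$ equal to the critic error, so that $\max_i|\Delta Q_i|\le L_{\mathrm{repr}}^k$ in sup norm. This gives a joint-value error of at most $W_{\mathrm{mix}}L_{\mathrm{repr}}^k$ uniformly over $(z,\mathbf a)$. To pass from a value perturbation to a field perturbation, note that the field components used by DICE-FT are the weighted advantages Eq.~\eqref{eq:diceft_weighted_adv}. These are the values times the mixer weights minus soft baselines, and the soft baseline, being a log-sum-exp, is $1$-Lipschitz in sup norm. Each field coordinate therefore moves by at most a constant multiple of $W_{\mathrm{mix}}L_{\mathrm{repr}}^k$, and taking the dual norm of $\|\cdot\|_{1,2;w}$ yields $C_{\mathrm{mix}}\propto W_{\mathrm{mix}}$.

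The main obstacle is this last transfer. It requires the segment between $Q$ and $Q+\Delta Q$ to stay inside the iterate region where the Jacobian bound holds, which the proof can simply assume as part of the local hypothesis. It also requires the dual-norm constant to be made explicit, and that constant depends on the block weights $w_i$. I would state the consequence only up to this proportionality constant rather than computing it.
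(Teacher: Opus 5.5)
Your argument is correct and is essentially the paper's proof: the mean value theorem along the segment from $Q$ to $Q+\Delta Q$, followed by the $\ell_1$--$\ell_\infty$ H\"older bound with the Jacobian estimate, where the intermediate point is implicitly assumed to lie in the iterate region. You are also right that monotonicity is not needed for the inequality; the paper invokes it to avoid ``sign cancellation,'' but H\"older holds for gradients of any sign, and your sketch of the field-transfer consequence goes further than the paper, which only asserts that step.
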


\begin{proof}
Apply the mean-value theorem in the value inputs.
Because all mixer partial derivatives are nonnegative, the local
$\ell_1$ norm of the Jacobian controls the perturbation without
sign cancellation:
\[
|f_{\mathrm{mix}}(Q+\Delta Q)-f_{\mathrm{mix}}(Q)|
\le \|\nabla_Q f_{\mathrm{mix}}(\xi)\|_1\,\|\Delta Q\|_\infty
\le W_{\mathrm{mix}}\,\|\Delta Q\|_\infty,
\]
for some intermediate point $\xi$.
\end{proof}


\end{document}